\documentclass{article} 

\usepackage[utf8]{inputenc}
\usepackage{amsmath}
\usepackage{amssymb}
\usepackage{amsthm}
\usepackage{graphicx}
\usepackage{bm,bbm}

\usepackage{listings}
\usepackage[dvipsnames]{xcolor}
\usepackage[colorlinks = true,
            linkcolor = blue,
            urlcolor  = blue,
            citecolor = OliveGreen,
            anchorcolor = blue]{hyperref}
\usepackage{cleveref}
\usepackage{mleftright}
\usepackage{enumitem}
\usepackage{url}
\usepackage{algorithmic}
\usepackage[bottom,flushmargin]{footmisc}

\usepackage{titlesec}
\titlespacing*{\paragraph}
  {0pt}      
  {6pt}     
  {1em}      

\usepackage{caption}
\usepackage{subcaption}

\usepackage{natbib}

\usepackage{comment}

\usepackage{ifthen}
\newboolean{icmlformat}
\setboolean{icmlformat}{false}

\usepackage{geometry}
\definecolor{RoyalBlue}{RGB}{0,100,170}
\definecolor{peach}{rgb}{1, 0.56, 0.56}
\definecolor{midgray}{RGB}{150,150,150}
\definecolor{EasternBlue}{RGB}{37,150,190}
\definecolor{sand}{RGB}{250,150,120}
\definecolor{grass}{RGB}{120, 190, 50}
\definecolor{sky}{RGB}{50,150,250}
\definecolor{Orange}{RGB}{250,150,50}
\definecolor{Cerulean}{RGB}{80,150,220}
\definecolor{Emerald}{RGB}{62,156,94}
\definecolor{Rouge}{RGB}{250,95,95}
\definecolor{coral}{RGB}{240,128,128}
\definecolor{violet}{RGB}{139, 96, 240}

\definecolor{ColorDef}{RGB}{80, 180, 150}
\definecolor{RevisionRed}{RGB}{240,35,35}
\definecolor{RevisionBlue}{RGB}{80,180,250}
\definecolor{TODO}{RGB}{255,150,50}
\include{defs}
\def\shownotes{1}  
\ifnum\shownotes=1
\newcommand{\authnote}[2]{{$\ll$\textsf{\footnotesize #1 notes: #2}$\gg$}}
\else
\newcommand{\authnote}[2]{}
\fi

\newcommand{\highlight}[1]{{\color{orange}{{#1}}}}

\usepackage{environ}

\newif\ifsolution
\solutiontrue
\newcounter{solnequation}
\newcounter{solneqtmp}
\NewEnviron{soln}{
    \setcounter{solneqtmp}{\value{equation}}
    \setcounter{equation}{\value{solnequation}}
    \renewcommand\theequation{S\arabic{equation}}
    \ifsolution\color{red}\expandafter\textbf{Solution} \BODY\fi%
    \setcounter{solnequation}{\value{equation}}
    \setcounter{equation}{\value{solneqtmp}}
    \renewcommand\theequation{\arabic{equation}}}

\renewcommand{\Pr}{\mathop{\bf Pr\/}}               

\newtheorem{theorem}{Theorem}
\newtheorem*{namedtheorem}{\theoremname}
\newcommand{\theoremname}{testing}

\newtheorem*{theorem*}{Theorem}
\newtheorem{lemma}{Lemma}
\newtheorem*{lemma*}{Lemma}

\newtheorem{corollary}[theorem]{Corollary}

\newtheorem*{question*}{Question}

\theoremstyle{definition}

\newtheorem*{definition*}{Definition}

\newtheorem{remark}[theorem]{Remark}
\newtheorem*{remark*}{Remark}

\theoremstyle{plain}

\newcommand{\defeq}{:=}

\def\eqref#1{equation~\ref{#1}}

\def\1{\bm{1}}

\def\mI{{\bm{I}}}

\def\mM{{\bm{M}}}

\DeclareMathAlphabet{\mathsfit}{\encodingdefault}{\sfdefault}{m}{sl}
\SetMathAlphabet{\mathsfit}{bold}{\encodingdefault}{\sfdefault}{bx}{n}

\def\gN{{\mathcal{N}}}

\def\gS{{\mathcal{S}}}

\newcommand{\E}{\mathbb{E}}

\newcommand{\R}{\mathbb{R}}

\newcommand{\lr}{\alpha}

\newcommand{\Var}{\mathrm{Var}}

\DeclareMathOperator{\sign}{sign}

\def \ind {\mathbbm{1}}

\def \nsamples {N}

\def \dim {d}
\def \dimsparse {k}
\def \support {S}
\def \nprime {p}
\def \range {N} 
\def \base {b}
\def \link {\phi}

\def \He {\text{He}}

\def \width {m} 

\def \Unif {\text{Unif}}
\def \Dsubset {\gS}
\def \nstages {T}

\def \lr {\eta} 
\def \Wq {W_q}
\def \Wk {W_k}
\def \Wv {W_v}

\newcommand{\wt}[1][]{w^{(#1)}}

\newcommand{\at}[1][]{a^{(#1)}}
\newcommand{\twt}[1][]{\tilde{w}^{(#1)}}

\newcommand{\qt}[1][]{q^{(#1)}}

\newcommand{\ut}[1][]{u^{(#1)}}

\def \hM {\widehat{\mM}}

\def \initscale {\alpha}

\title{Less Data, Faster Training: repeating smaller datasets speeds up learning via sampling biases
}

\date{\vspace{-4em}}

\begin{document}

\maketitle

\begin{center}
\begin{minipage}{0.49\textwidth}
\centering
\textbf{Jingwen Liu}\\
Columbia University\\
\texttt{jingwenliu@cs.columbia.edu}
\end{minipage}
\hfill
\begin{minipage}{0.49\textwidth}
\centering
\textbf{Ezra Edelman}\\
University of Pennsylvania\\
\texttt{ezrzae@cis.upenn.edu}
\end{minipage}

\vspace{1em}

\begin{minipage}{0.49\textwidth}
\centering
\textbf{Surbhi Goel}\\
University of Pennsylvania\\
\texttt{surbhig@cis.upenn.edu}
\end{minipage}
\hfill
\begin{minipage}{0.49\textwidth}
\centering
\textbf{Bingbin Liu}\\
Kempner Institute, Harvard University
\texttt{bliu@g.harvard.edu}
\\
\end{minipage}
\end{center}

\begin{abstract}
This work investigates the ``small-vs-large gap'', where repeating on \textit{fewer samples} can lead to \textit{compute saving} during training compared to using a larger dataset.
This is observed across algorithmic tasks, architectures and optimizers and cannot be explained using prior theory.
We argue that the speedup comes from appropriate layer-wise growth enabled by \textit{sampling biases}, which is more pronounced when the dataset size is smaller.
We provide both theoretical analysis and empirical evidence from various interventions.
Our results suggest that using a smaller dataset with more repetitions is not just a fallback strategy under data scarcity, but can be proactively leveraged as a favorable inductive biases for optimization, particularly in reasoning tasks.
\end{abstract}

\section{Introduction}
\label{sec:intro}

The conventional wisdom on data use is the more the better, a view supported by both classic generalization theory and extensive empirical evidence~\citep{hernandez2022scaling,muennighoff2023scaling}.
Recent work has reported a counterintuitive phenomenon that fewer samples can lead to \textit{faster} learning.
One example is online SGD for single-index models, where taking more than one gradient steps on the same batch can lead to faster convergence in terms of steps~\citep{dandi24reuse,arnaboldi2024repetita,lee2025neural}.
Similarly, empirical study on Transformers observes that given a number of training steps, multi-epoch training on a randomly sampled dataset can achieve a better test performance than training with per-step fresh samples, for various algorithmic tasks~\citep{charton2024emergent}. In LLM post-training, a concurrent work \citep{kopiczko2026datarepetitionbeatsdata} also observes more epochs on fewer samples can lead to better performance under a fixed compute budget for math and coding tasks.

These are examples of what we refer to as \textit{small-vs-large gaps}, where training on a smaller number of samples results in reduced training \textit{compute} for a given model, where compute is defined as the total number of (possibly repeated) samples on which the model performs gradient updates (e.g., training steps $\times$ batch size) in order to reach a target performance. 

This work aims to better understand such small-vs-large gaps.
We begin by extending prior work, confirming that the small-vs-large gaps appear across a variety of settings (\Cref{fig:transformer,fig:mlp}), including different tasks, architectures, and optimizers, and under both mini-batch and full-batch updates.
In contrast to prior studies, many of the settings we examine are not explained by existing theory (\Cref{sec:prior_theory}).
These include comparisons of CSQ-SQ lower bounds~\citep{dandi24reuse,arnaboldi2024repetita,lee2025neural}, gradient variance reduction~\citep{kotha2025lowering},
and curriculum learning~\citep{valiant2012finding,abbe2023provable} or learning under biased distributions~\citep{kalai2009learning,cornacchia25biased}.
Notably, the small-vs-large gap exists even under full-batch gradient updates (\Cref{fig:mlp}), implying that explanations based on stochastic gradient updates are not sufficient.

Instead, we show that the small-vs-large gap primarily results from favorable optimization biases due to \textbf{sampling biases} of the dataset.
Intuitively, repeating the dataset reinforces the bias induced by sampling, which helps adjust the relative growth of different layers and in turn speeds up feature learning. This becomes more evident when the dataset is smaller due to a stronger sampling bias.
We formalize this intuition in \Cref{sec:theory_bias} and shows that training on smaller datasets can reduce the number of steps required for convergence (\Cref{thm:2phase}).
This is further supported by the fact that the small-vs-large gap can be removed with proper selection of layerwise initialization or learning rates.
We provide empirical evidence from various interventions in \Cref{sec:empirical_evidence}.
Such sampling biases make the model more robust to learning rate and initialization choice, leading to a gap under standard parameterization.

In summary, our work characterizes the small-vs-large gap with the following contributions:
\vspace{-0.5em}
\begin{itemize}[leftmargin=*]
    \item We confirm that the small-vs-large gap exists across tasks, architectures, and optimizers. The gap is evident in both the number of \textit{optimization steps} and the overall \textit{compute complexity}, which depends on both the number of steps and the per-step cost, proportional to the batch size.

    \item We show that \textit{sampling biases} induced by smaller datasets is a primary driver of the small-vs-large gap (\Cref{sec:understanding}): sampling biases modulate the relative magnitude of updates across layers, which in turn helps with feature learning.
    We identify regimes where existing theory
    fails to explain the gap (\Cref{sec:prior_theory}),
    and theoretically show that training on smaller datasets reduces the number of steps required for convergence in MLP (\Cref{thm:2phase}).
        
    \item We further support the theoretical explanation
    with a broad set of empirical evidence.
    First, training on a small dataset with \textit{random labels} leads to a speedup comparable to that observed with real labels (\Cref{sec:random_label}), indicating that sampling bias is the main mechanism, as the gap persists without task-relevant signal.
    Moreover, \textit{parameter-wise interventions} substantially reduce the small-vs-large gap (\Cref{sec:param_wise}), including adjustments to initialization scales and parameter-wise learning rates across both MLP and Transformers.
    For Transformers, our findings additionally suggest that the widely used QK normalization has nuanced effects on optimization that merit further investigation.

\end{itemize}
We discuss the implications and limitations of our results in \Cref{sec:discussions}.
Together, our results suggest that training on a smaller dataset with increased repetitions is not merely a fallback under data scarcity, but a source of beneficial optimization inductive biases that can be leveraged more proactively, particularly for reasoning tasks.

\begin{figure*}[t]
    \centering
    \begin{subfigure}[t]{0.24\linewidth}
        \centering
        \includegraphics[width=\linewidth]{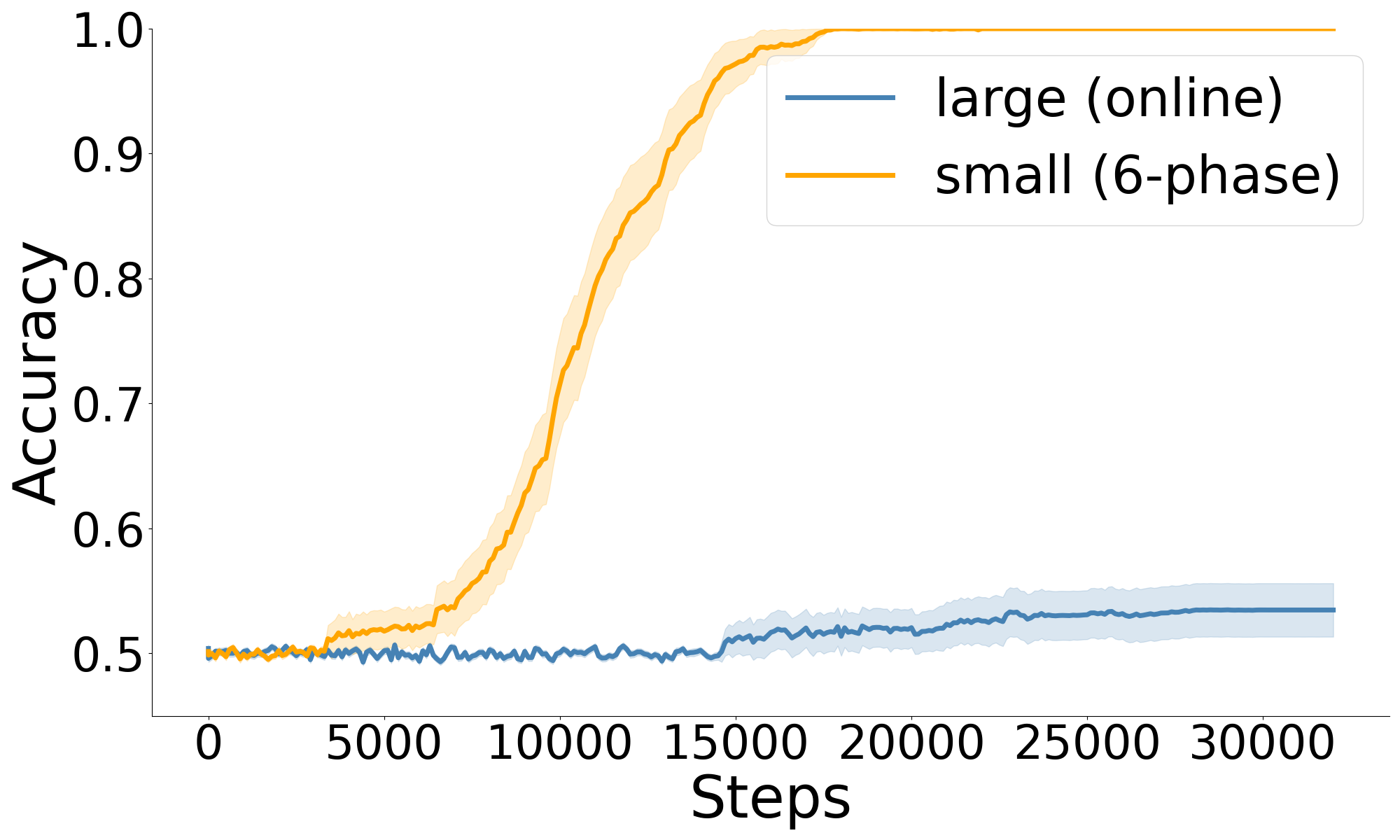}
        \caption{Sparse parity }
        \label{fig:sgd:transformer_sgd_ezra}
    \end{subfigure}
    \begin{subfigure}[t]{0.24\linewidth}
        \centering
        \includegraphics[width=\linewidth]{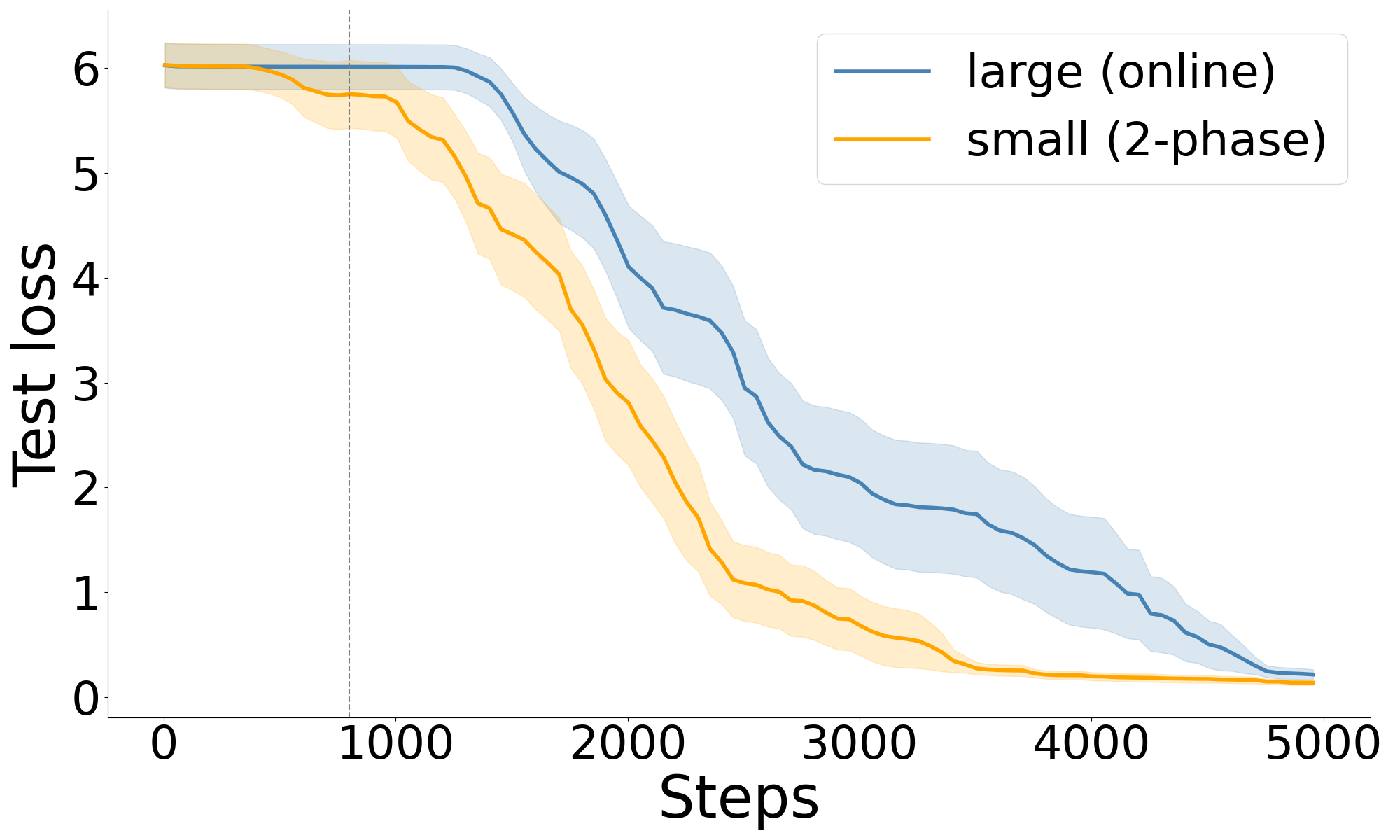}
        \caption{Single-index model}
        \label{fig:sgd:transformer_sim_d40}
    \end{subfigure}
    \begin{subfigure}[t]{0.24\linewidth}
        \centering
        \includegraphics[width=\linewidth]{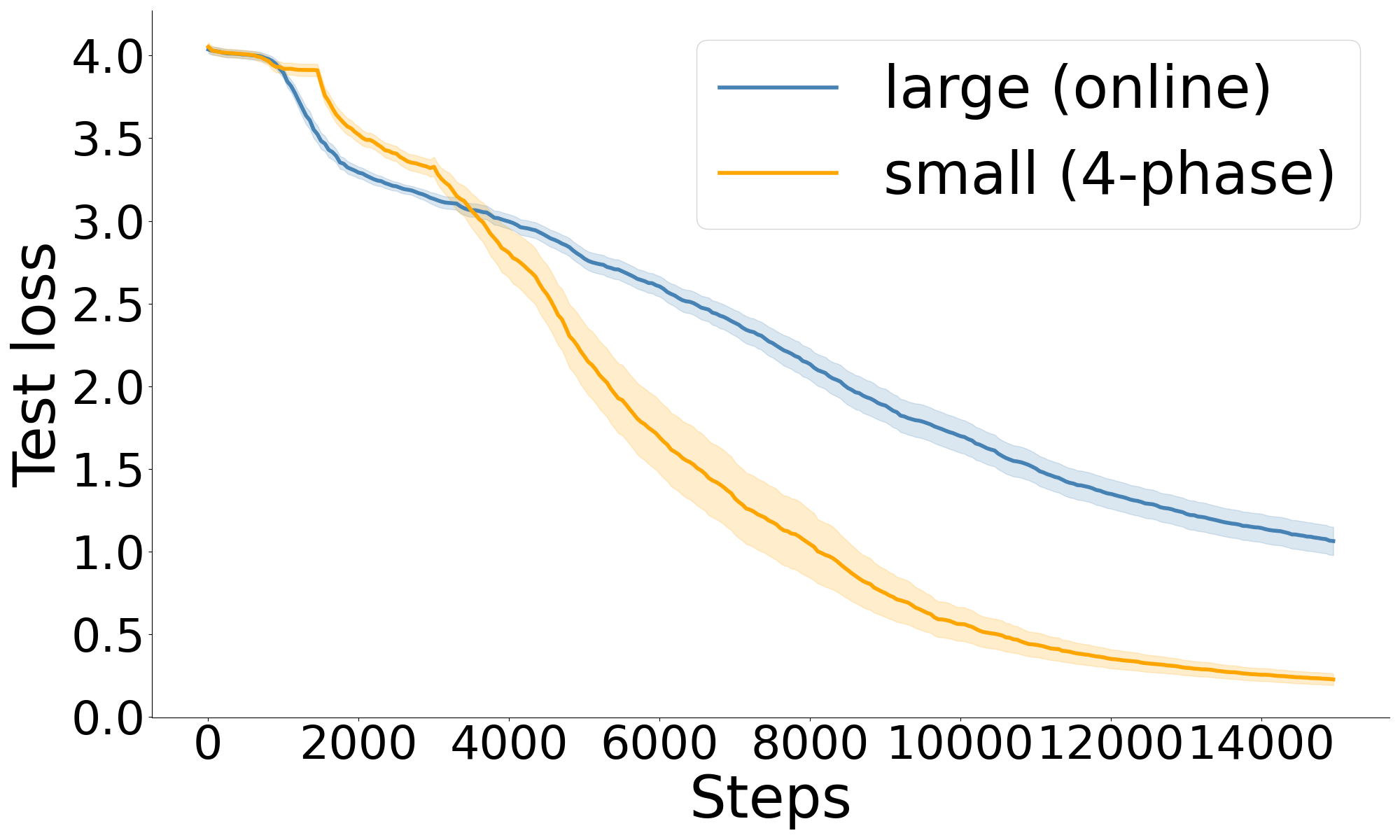}
        \caption{ICL linear regression}
        \label{fig:sgd:transformer_icl}
    \end{subfigure}
    \begin{subfigure}[t]{0.24\linewidth}
        \centering
        \includegraphics[width=\linewidth]{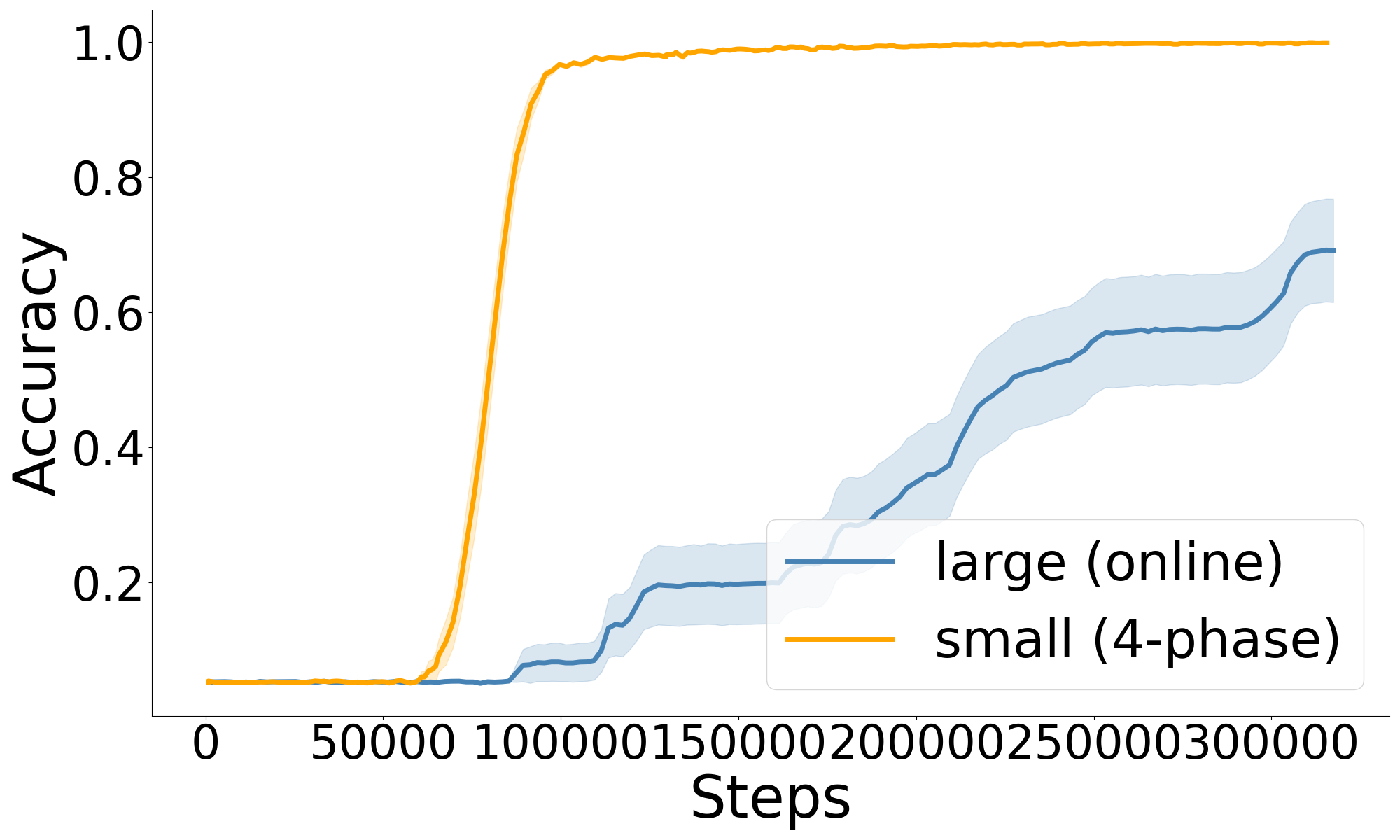}
        \caption{Modular addition}
        \label{fig:sgd:transformer_mod}
    \end{subfigure}
    
    \caption{\textbf{Small-vs-large gap exists in various tasks}.
    Across various feature learning and algorithmic tasks (\Cref{sec:setup}), 
    training on a smaller dataset (yellow curves) leads to faster convergence than training on a larger dataset (blue curves).
    Results are based on 2-layer Transformers optimized with mini-batched AdamW.
    An ``$n$-phase'' schedule denotes that the training set size is progressively increased over $n$ phases (\Cref{sec:setup}).
    }
    \label{fig:transformer}
\end{figure*}

\section{Related work}
\label{sec:related_work}

It is widely believed that in deep learning, more is better, as captured by the study of scaling laws.
However, different resources may not need to be scaled together.
For instance, \textit{data repetition}, which keeps the sample size fixed while scaling up compute, can achieve similar performance to compute-matched online training when the amount of repetition is moderate~\citep{xu2021multi2epoch,sekhari2021sgd,muennighoff2023scaling,lin2025improved,yan2025larger}.
We are interested in the more extreme phenomenon termed the \textit{small-vs-large gap}, where \textit{reducing} the sample size when holding compute constant can help improve performance.
The small-vs-large gap has been observed in recent work on algorithmic tasks~\citep{charton2024emergent}, in-context learning~\citep{zucchet2025emergence}, and language model finetuning on reasoning tasks~\citep{kopiczko2026datarepetitionbeatsdata}.
Prior work has shown this for learning single-index models, where taking more than one gradient steps on the same set of samples can reduce the total number of gradient steps~\citep{dandi24reuse,arnaboldi2024repetita,lee2025neural}.
The intuition is that while online SGD lies in the class of correlational statistical query (CSQ) algorithms, SGD with sample repeats belongs the more general class of statistical query (SQ) algorithms.
In contrast, we find that the compute savings from using less data hold even in regimes where the CSQ-SQ distinction does not apply,
including training with full-batch gradient descent, and tasks with discrete domains.
Close to our quadratic setting in \Cref{sec:theory_bias}, a concurrent work by~\cite{kovacevic2026fullbatch} shows the statistical advantage of full-batch gradient descent over SGD where mini-batches are sampled fresh from the population.
A key difference is that the model in \cite{kovacevic2026fullbatch} only has a single layer (i.e., $f(x)=\sigma(w^\top x)$), hence the effect of relative weight norm does not apply.

Previous work has also studied how multi-pass SGD can improve the sample complexity over single-pass SGD in various settings,
including linear regression~\citep{pillaudvivien2018statistical,lin2025improved}, general stochastic convex optimization~\citep{sekhari2021sgd},
and non-convex problems under the PL condition~\citep{xu2021multi2epoch}.
A crucial difference from our work is that these results focus on saving \textit{samples} but not the \textit{compute}:
they show that the population error achieved by $T$ online SGD steps, where each step is taken on an iid sample from the population, can be achieved by $T$ steps of multi-pass SGD, where each step is taken on an iid sample drawn from an empirical distribution of size smaller than $T$.
In contrast, we will show that it is possible to achieve the same error as $T$ online SGD steps using \textit{fewer than $T$ steps} of multi-pass SGD.

We will show that the key mechanism behind such speedup comes from the strong sampling biases from smaller datasets, which effectively adjusts the relative update speeds across the layers, leading to faster learning.
Such adjustments relate to the idea of balancing contributions from different layers, which has been studied extensively in the optimization and feature learning~\citep{yang2020feature,azulay21initialization,yang2022tensor,yang2023spectral,everett24scaling}.

\section{Setup}
\label{sec:setup}

\paragraph{Tasks}
We consider synthetic tasks, which have tunable parameters and thus allow for explicit control over task complexity.

We start with two classic feature learning which have been extensively studied in the literature.
\vspace{-0.4em}
\begin{itemize}[leftmargin=*]
    \item \textbf{Single-index model (SIM)}: the input is a Gaussian vector $x \sim \gN(0, I_\dim)$, and the label is given by $y := \link(\langle w^*, x\rangle)$, where $w^*$ is the ground truth feature vector, and $\link: \R \rightarrow \R$ is an unknown link function.
    Our experiments take the link function to be a Hermite polynomial, denoted as $\He_k$ for some order $k$.
    
    \item \textbf{$(\dim, \dimsparse)$-sparse parity}: the input is a boolean vector $x \sim \Unif(\{\pm 1\}^\dim)$, and the label is given by $y := \prod_{i \in \support} x_i$, where $\support \subset [\dim]$ is an unknown support of size $\dimsparse$.
\end{itemize}

We consider two additional algorithmic tasks for Transformers:
\vspace{-0.4em}
\begin{itemize}[leftmargin=*]    
    \item \textbf{In-context linear regression}: the input is a sequence $x_1, y_1, x_2, y_2, \ldots, x_k, y_k, x_q$ of length $2k+1$, where each sequence we independently sample a $w \sim \gN(0, I_n)$, $x_i \sim \gN(0, I_n)$, $y_i=w^\top x_i, \forall i\in [k]$ and the label is given by $y:=w^\top x_q$. 

    \item \textbf{$(\range, \nprime)$-modular addition}: the input are two numbers $x,z \sim \Unif([\range])$, and the label is given by $y := (x+z) \mod \nprime$ for some prime $\nprime$.
    For Transformer experiments, $x,z$ are each represented by $\lceil \log_\base \range \rceil$ digits in base-$\base$, and the output logits have size $\nprime$.
\end{itemize}

\paragraph{Data reuse strategies}

We consider both batch stochastic gradient descent (SGD) and (full-batch) gradient descent (GD) over datasets of different sizes.
\footnote{See \Cref{fig:vary_data_size} for an ablation on the dataset size.}
For batch SGD, the batches are sampled uniformly over the distribution with replacement.
We additionally consider \textbf{multi-phase training}, where the dataset sizes across phases can vary.
In particular, for $\nstages$-phase repeat, batches are sampled from a subset $\Dsubset_i$ at the $i_{th}$ stage for $i \in [\nstages]$, where $\Dsubset_i \subset \Dsubset_j$ for $j > i$.
\footnote{We experimented with an alternative where each subsets are drawn independently without requiring to be a superset of the previous subsets.
The results were similar, so we keep the subset requirement which has the additional benefit of smaller sample complexity.}
An example is 2-phase training, where the first phase uses a subset randomly sampled from the population, and the second phase optimizes on the full population.
This is similar to the two-set training proposed in \cite{charton2024emergent}, where each batch is a mix of samples from two sets: one small set which is repeated, and one large set consisting of online samples.
General multi-phase training requires specifying the sizes and number of steps per phase.
A heuristic is to make (1) the first few subsets relatively small so that the model can both quickly reach non-trivial train set performance and deviate non-trivially from initialization; and (2) the final subset $\Dsubset_\nstages$ sufficiently large to ensure good generalization.
As an ablation, we experiment with auto-scheduling which suggest that such heuristic is effective (\Cref{fig:parity_multi_phase_auto}).
Details are provided in \Cref{app:ablation}.

\paragraph{Experimental setup}
Our primary focus is the performance under a given compute, which is measured by the batch size $\times$ number of optimization steps.
We report \textit{expected performance} under a fixed compute by taking the accuracy or loss averaged over random seeds.
For tasks where the accuracy exhibits shape phase transitions, the average accuracy can also be interpreted as the \textit{probability of success}.

We train with both MLPs and Transformers.
The MLP has ReLU activation and no residual connections.
The Transformer has an optional QK normalization.
Models are of depth-2 unless specified otherwise.
All weights are initialized with Pytorch defaults; for example, $W_{ij} \sim \text{Unif}[-1/\sqrt{\dim_{\text{in}}}, 1/\sqrt{\dim_{\text{in}}]}$.
We use the SGD optimizer for MLPs and AdamW for Transformers unless specified, and sweep over the learning rate for each setup.

More details are provided in \Cref{app:expr_details}.

\section{Small-vs-large gap: less data can lead to faster learning}
\label{sec:results}

We first present empirical evidence that less data leads to accelerated learning across tasks and setups.

\paragraph{Mini-batch updates}
We start with training using mini-batch updates, which is the common training strategy in practice and where prior work has also reported the small-vs-large gap~\citep{charton2024emergent}. 
As shown in \Cref{fig:transformer}, smaller datasets lead to faster convergence for all tasks.
For SIM, in-context learning regression and modular addition, multi-phase is used to balance accelerated optimization and good generalization.



However, mini-batch updates introduce a confounding factor of the \textit{number of repetitions}: when trained for the same number of steps, each sample in a smaller dataset is reused more frequently over the course of training.
It is therefore unclear whether this increased repetition is the primary source of the observed speedup.
Our subsequent gradient descent results show that this is not the case.

\begin{figure*}[t]
    \centering
    \begin{subfigure}[t]{0.24\linewidth}
        \centering
        \includegraphics[width=\linewidth]{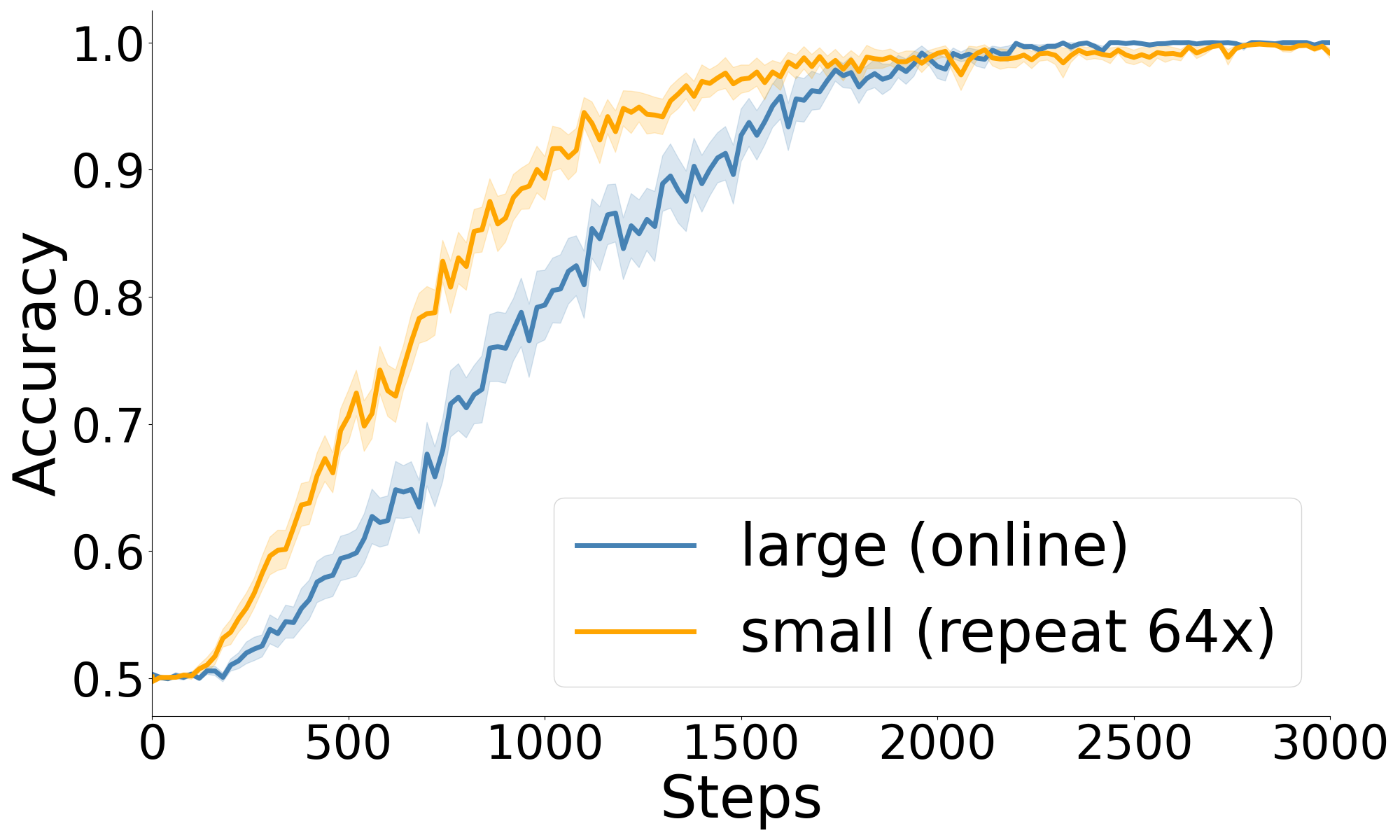}
        \caption{SGD on (20, 6)-parity}
        \label{fig:sgd:mlp_d20k6}
    \end{subfigure}
    \hfill
    \begin{subfigure}[t]{0.24\linewidth}
        \centering
        \includegraphics[width=\linewidth]{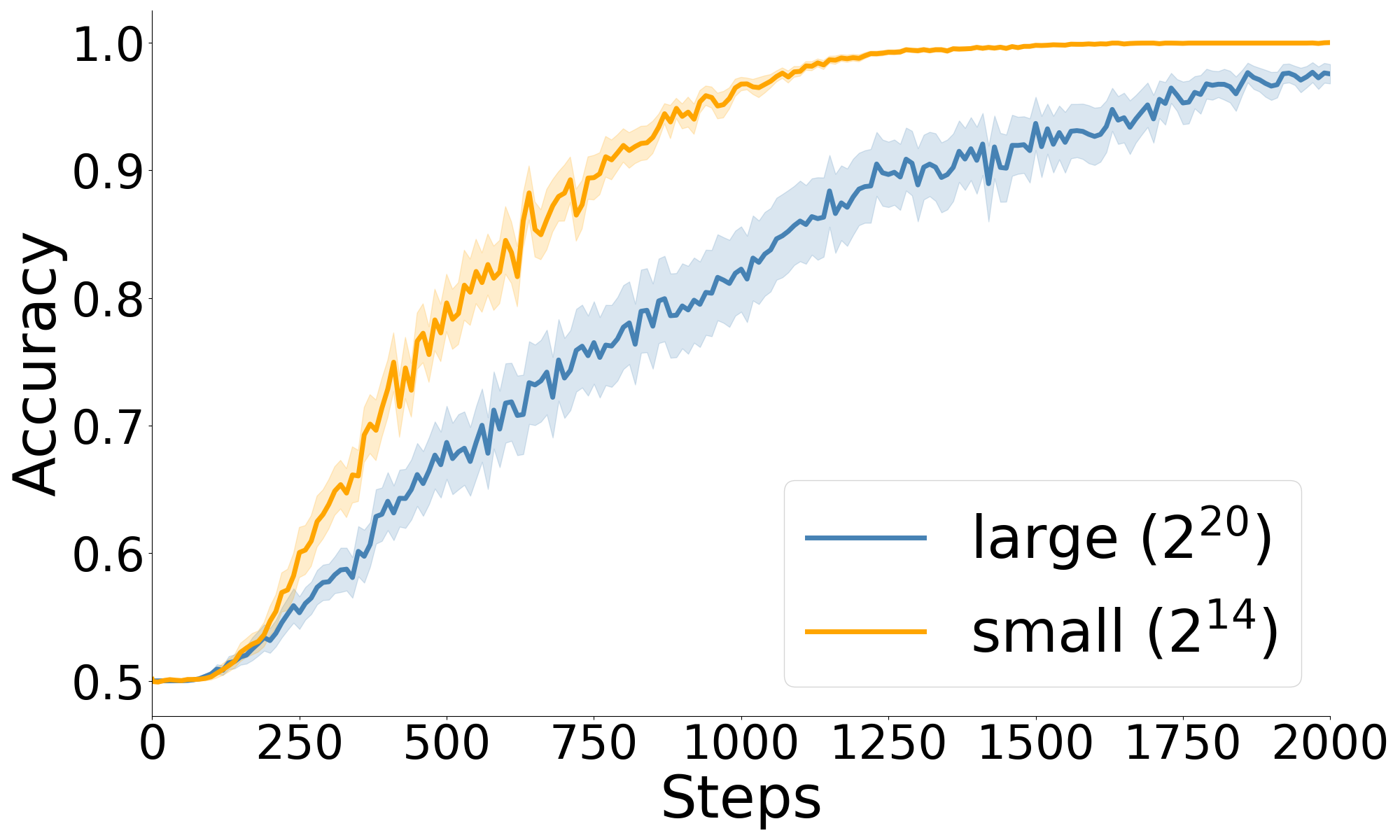}
        \caption{GD on (20, 6)-parity.}
        \label{fig:gd:mlp_d20k6}
    \end{subfigure}
    \hfill
    \begin{subfigure}[t]{0.24\linewidth}
        \centering
        \includegraphics[width=\linewidth]{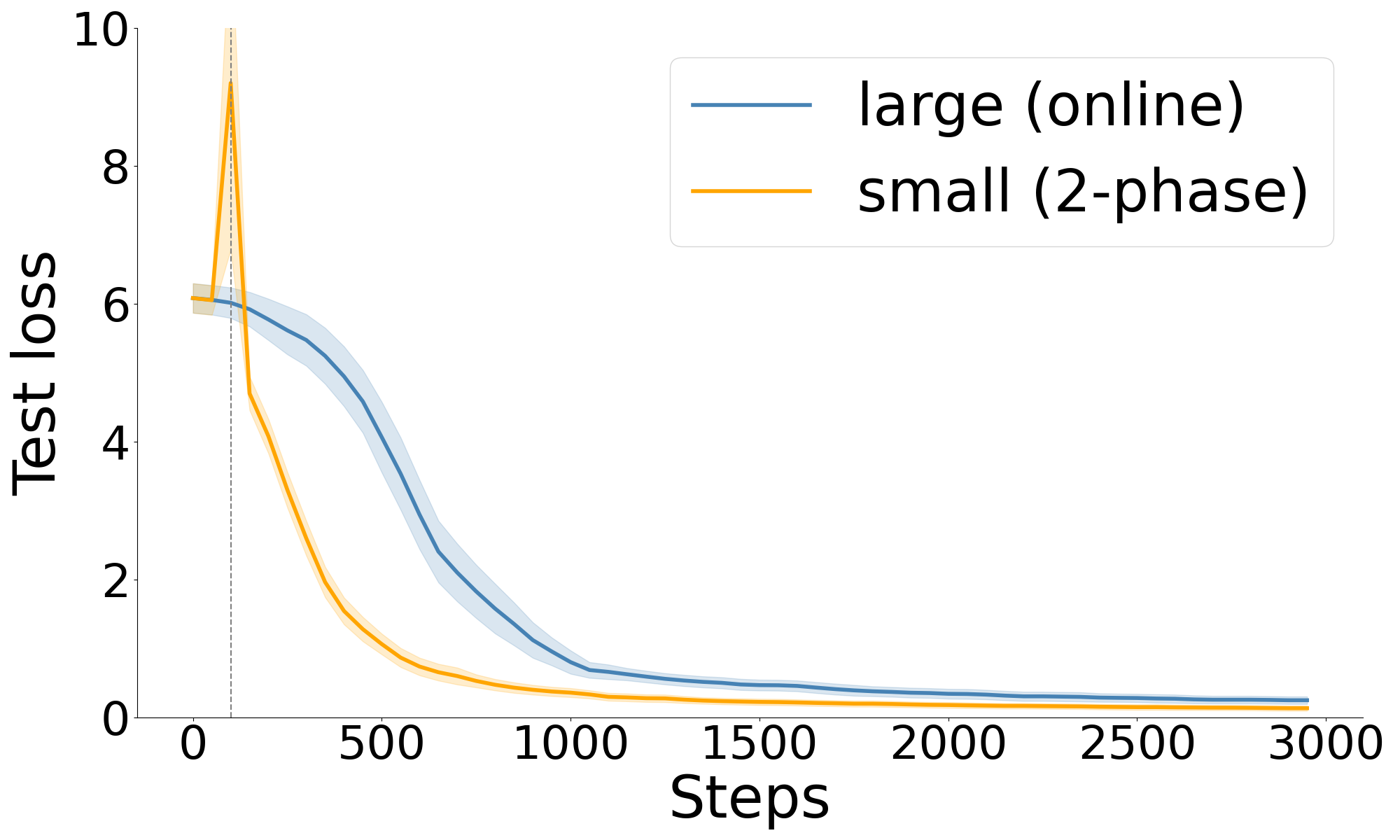}
        \caption{SGD on SIM, $\dim=40$.}
        \label{fig:sgd:mlp_sim_m64}
    \end{subfigure}
    \hfill
    \begin{subfigure}[t]{0.24\linewidth}
        \centering
        \includegraphics[width=\linewidth]{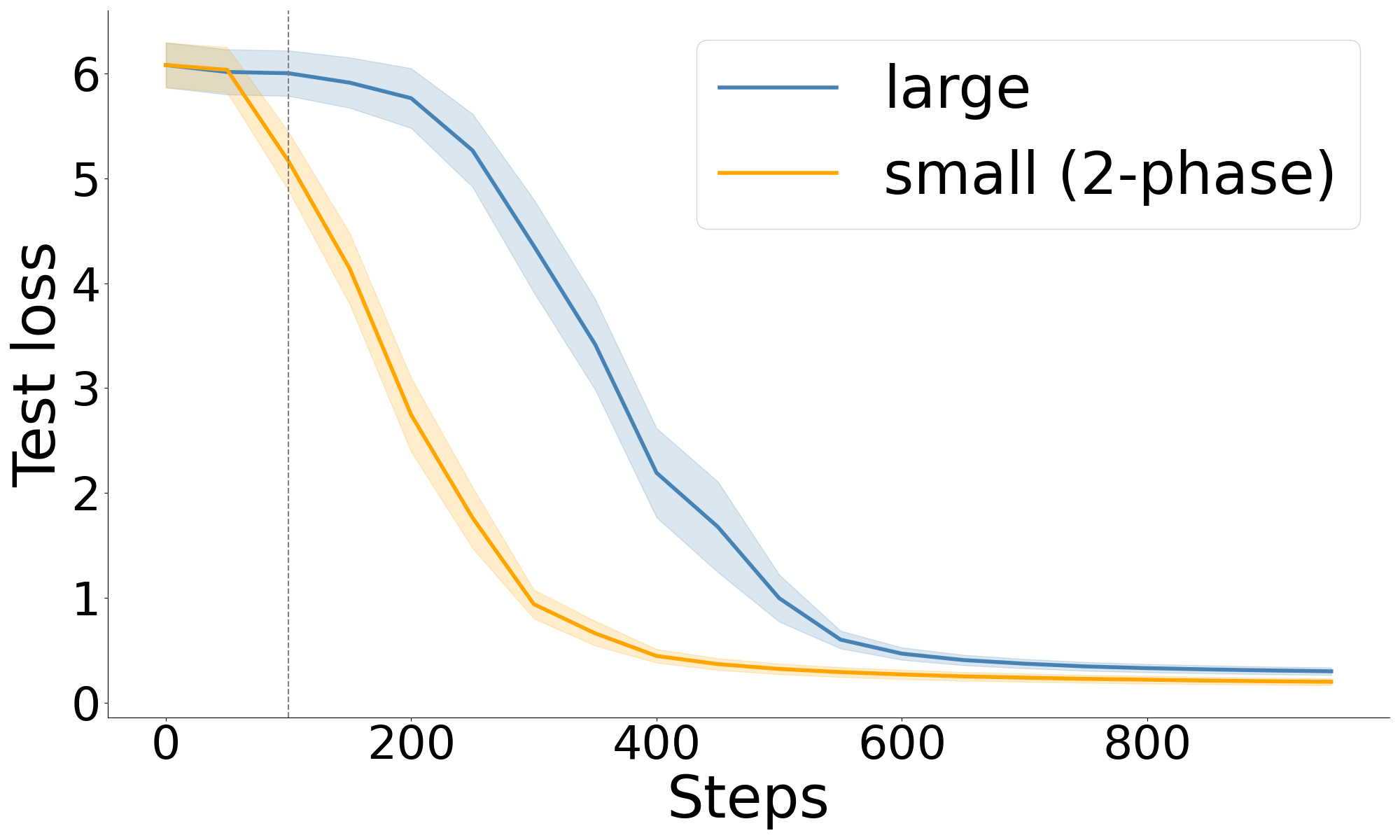}
        \caption{GD on SIM, $\dim=40$.}
        \label{fig:gd:mlp_sim_m64}
    \end{subfigure}
    \caption{\textbf{Small-vs-large gap exists for both mini-batch and full-batch training}.
    Results are based on SIM and parity with 2-layer MLPs, optimized with both mini-batch (SGD) and full-batch (GD) updates.
    The small-vs-large gap with GD is a notable example that prior theory fails to capture (\Cref{sec:prior_theory}).
    }
    
    \label{fig:mlp}
\end{figure*}

\paragraph{Gradient descent (full-batch updates)}
We sample datasets of varying sizes from the population and run (full-batch) gradient descent on each dataset.
As shown in \Cref{fig:mlp}, smaller datasets have better performance at each time step throughout training.
Moreover, the total saving in compute is much more significant than the reduction in steps, since smaller datasets also incur lower per-step computational cost.
For instance, for $(20, 6)$-sparse parity, using $\nsamples=2^{14}$ converges in 1500 steps whereas using training on the full population (i.e. $\nsamples=2^{20}$) requires more than 2000 steps, leading to a 100x speedup in compute.

\begin{remark*}[Population gradients for sparse parity]
    Sparse parity has a special structure that the population gradient of the first layer weight reveals information about the support~\citep{barak2022hidden},
    hence training with the full population can in theory leverage this information and converge quickly.
    However, doing so requires large learning rate (on the order of $\dim^\dimsparse$) which is infeasible due to numerical limitations and dynamics for the second layer.
    Our experiments confirm this and we did not see strict speedup from increasing the learning rate.
\end{remark*}

\section{Unpacking the efficiency gain from smaller datasets}
\label{sec:understanding}

In this section, we use sparse parity as a sandbox for understanding the small-vs-large gap.
We first explain why alternative theories in existing work are not sufficient, and then provide our explanation that hinges on dataset sampling biases.

\subsection{Prior theories are insufficient}
\label{sec:prior_theory}

\ificmlformat
    \paragraph{SQ-CSQ difference.}
\else
    \subsubsection{SQ-CSQ difference}
    \label{sec:sq_csq}
\fi

One mechanism of acceleration identified by prior work is that taking multiple gradient updates on the same data effectively transform (batch) SGD from a correlational statistical query (CSQ) algorithm to a statistical query (SQ) algorithm~\citep{dandi24reuse,arnaboldi2024repetita,lee2025neural},
with the latter being a more powerful class of algorithms.
Prior work uses this to explain the speedup for batch SGD on single-index model (SIM), for which there is a known gap between CSQ and SQ lower bounds.

While insightful, the CSQ-SQ gap cannot fully explain the observed speedup. It cannot explain why there is a speedup for SIM even when using (full-batch) GD, where smaller and larger datasets are both repeated and for the same amount of times.
Moreover, it is not applicable to the wide range of tasks where the SQ and CSQ lower bound coincide, which include all discrete problems such as sparse parity and mod addition.


\paragraph{Gradient variance reduction.} It is known that reducing gradient variances help speed up convergence \citep{NIPS2013_ac1dd209}, and more recently \citep{kotha2025lowering} has reported that reducing gradient variances across batches can accelerate training in sparse parity and in-context linear regression tasks.
However, the variance reduction point of view cannot explain the speedup with full-batch updates, where each step uses the full dataset and hence has no sampling-induced variances.


\paragraph{Biased (input) distribution.} Specific to sparse parity and SIM, prior theory indicates another explanation from the benefit of biased distributions.
The focus was on the biases in the input distribution, which can either be explicitly constructed sparsity~\citep{valiant2012finding,abbe2023provable}, or randomly perturbed distributions~\citep{kalai2009learning,cornacchia25biased}.
Our setup is closer to the latter due to the deviation of the empirical mean from the population mean.
However, a critical difference is that the signal strength (in terms of first-order Fourier coefficients) in \cite{cornacchia25biased} is exponential in the sparse $\dimsparse$, whereas the sampling bias in our analysis depends on the dataset bias only and is independent of the sparsity.
In particular, for a random subset of size $\nsamples$, the signal strength in \cite{cornacchia25biased} is $O(\nsamples^{-\dimsparse/2})$,
which is much smaller than the $O(\nsamples^{-1/2})$ sampling bias as detailed in \Cref{sec:theory_bias}.

We additionally provide two pieces of empirical evidence that the input distribution biases are not the primary driver of the gap:
the speedup persists even when the dataset input bias is removed, and that online training with biased input distribution does not lead to the same amount of speedup.

\underline{Small datasets without biases still lead to speedup.}
We show that the small-vs-large gap exists when we remove input biases in the small datasets.
For parity, we ensure the training set satisfies $\hat\E[x]=0$, and optionally further requiring $\hat\E[y]=0$ and $\hat\E[x|y=-1] = \hat\E[x|y=1] = 0$.
The small-vs-large gap persists as shown in \Cref{fig:repeat_with_data_bias}(a).
Consistent results are observed on Transformer for both parity and SIM (\Cref{fig:repeat_with_data_bias_removed_transformer}).
For SIM, we whiten the inputs to match both 1st and 2nd order statistics, i.e. transform the dataset with $\tilde{x} = \hat\Sigma^{-1/2} (x - \hat\mu)$, where $\hat\mu$ and $\hat\Sigma$ are the empirical mean and covariance.

\underline{Online training with biased distributions has minor effects.}
On the other hand, we train with freshly sampled online data, whose per-coordinate Bernoulli parameters are set to the biases of a finite offline dataset.
Specifically, for $\dim=20$, $\dimsparse=6$, we take the biases from $2^i$ samples for $i \in \{4,6,8,10,12\}$. 
\cite{cornacchia25biased} indicates that using biasing the distributions will improve training speed, which we also confirm in \Cref{fig:repeat_with_data_bias}(b).
However, unless the samples size is exceedingly small (e.g., fewer than $2^5=32$ samples),
the speedup is much more moderate compared to training on a small subset,
and the small-vs-large gap persists.

\begin{figure}
    \centering
    \begin{subfigure}[t]{0.4\linewidth}
        \centering
        \includegraphics[width=\linewidth]{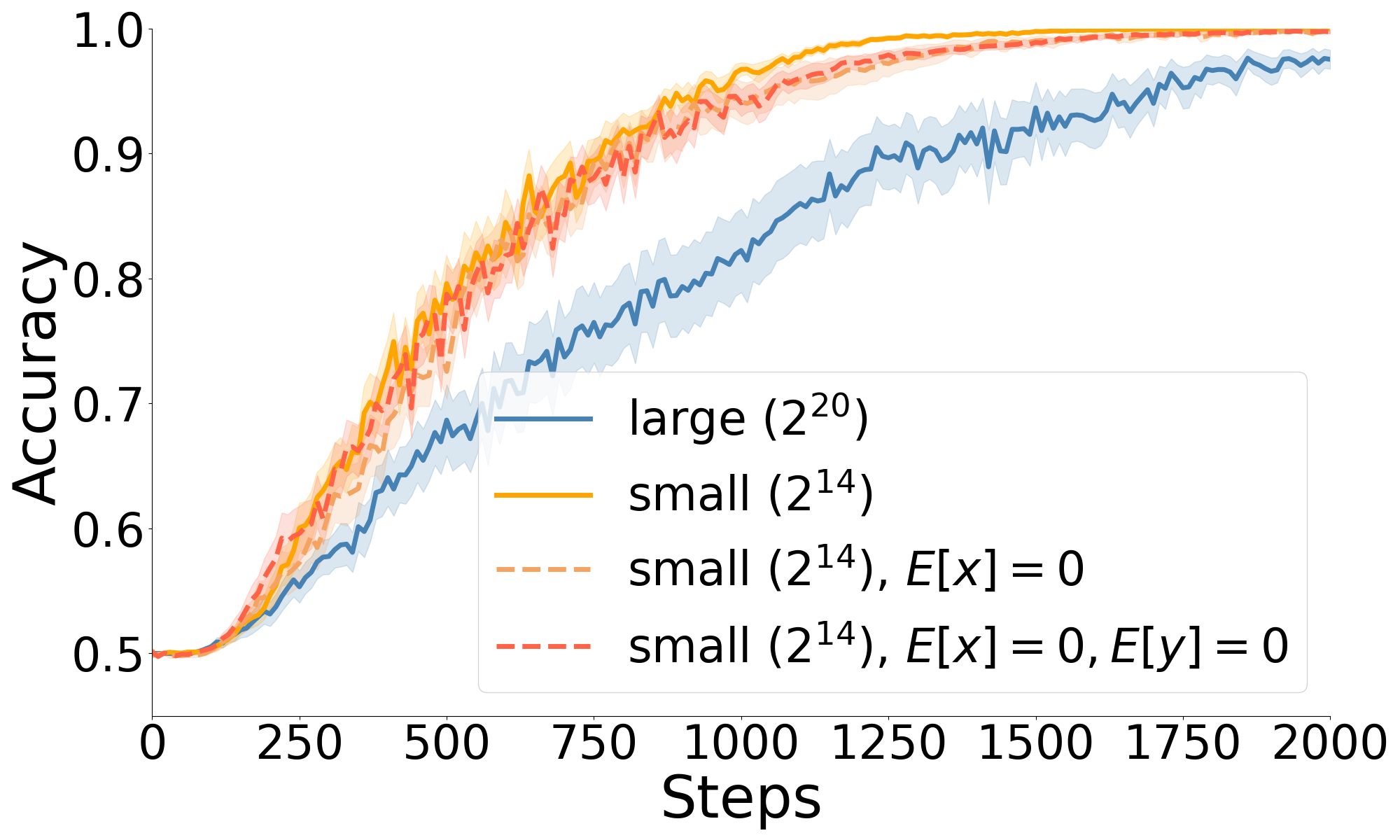}
        \caption{$(20, 6)$-parity, accuracy}
        \label{fig:bias_removed_parity}
    \end{subfigure}
    \quad
    \begin{subfigure}[t]{0.4\linewidth}
        \centering
        \includegraphics[width=\linewidth]{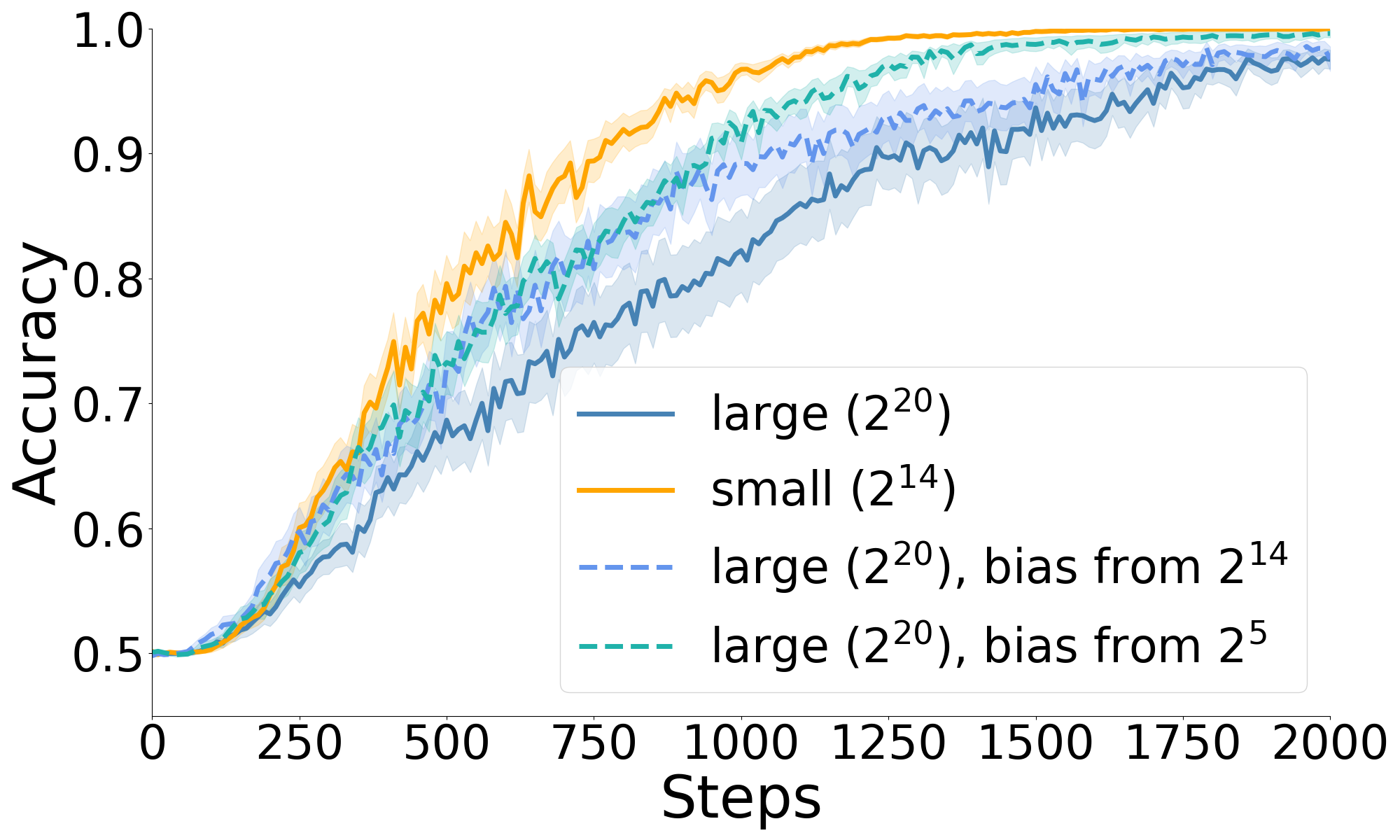}
        \caption{$(20, 6)$-parity, accuracy}
        \label{fig:bias_added_parity}
    \end{subfigure}
    \caption{\textbf{Small-vs-large gap is not explained by input distribution biases.}
    \textit{(Left)} Removing input biases does not affect the performance of training on a small set (size $2^{14}$).
    Removing biases means requiring $\E[x]=0$, or additionally requiring $\E[y]=0$ and $\E[x|y]=0$.
    \textit{(Right)} Introducing biases to the large set does not bridge the small-vs-large gap.
    The biases are taken from the empirical distribution of an size-$2^{m}$ set, for varying $m$.
    When biased with $m=14$, large-set training still has a performance gap to training on the small set of size $2^{14}$.
    The maximum speedup would require $m=5$, which is much smaller than the actual small set size and not sufficient for learning.
    Similar results are shown for SIM and with Transformers (\Cref{fig:repeat_with_data_bias_removed_transformer}).
    }
    \label{fig:repeat_with_data_bias}
\end{figure}

\subsection{Our explanation: dataset sampling bias accelerates learning by adjusting the relative norm growth}
\label{sec:theory_bias}

We claim that a primary source of the speedup from smaller datasets is their sampling biases, which adjust the norms across layers and effectively change the per-layer learning rates.
Intuitively, for parity learned with 2-layer MLPs, feature learning occurs in the first (i.e., input) layer, hence growing the second layer can speed up the feature learning in the first layer, and
smaller datasets have stronger biases that grow the second layer faster.
The rest of this section formalizes this intuition.
We provide empirical evidence in \Cref{sec:empirical_evidence}.



For the theoretical analysis, we consider learning the $2$-sparse parity task with a 2-layer network with quadratic activation, i.e. $f(x) = a \sigma(w^\top x) - 1$, with $\sigma(z) := \frac{1}{2}z^2$.
The model is optimized using the correlation loss, i.e. $L(f) = \E_{x,y}[\ell(y, y')]$, where $\ell(y, y') = -yy'$.
Let $\lr$ denote the learning rate.
We consider projected updates
\begin{align}
\label{eq:projected_updates}
    \at[t+1] &= \max\{-1, \min\big\{1, \at[t] - \lr \nabla_{\at[t]}L\big\}\},\\
    \wt[t+1] &= \frac{\wt[t] - \lr \nabla_{\wt[t]}L}{\bigl\|\wt[t]-\lr \nabla_{\wt[t]}L\bigr\|_2}.
\end{align}
Following standard practices, we initialize with $\wt[0] \sim \text{Unif}(\mathbb{S}^{\dim-1})$, and $\at[0] \sim \gN(0, 1/\width)$ where $\width$ can be considered as a model width parameter.
We focus on the gradient descent (GD) process; the SGD process is a noisy version of GD and can be analyzed using techniques from~\cite{arous2020online,abbe2023sgd}.
Since the correlation loss does not introduce interaction across neurons, the analysis below can be considered as focusing on one neuron of a width-$\width$ network.
Without loss of generality, we have the support on the first $2$ coordinates and the minimizer $w^\star$ has $|w_1^\star| = |w_2^\star| = \frac{1}{\sqrt{2}}$, and $w_i = 0$ otherwise.

We show that the smaller $\nsamples$ leads to faster convergence:

\begin{theorem}[2-phase training from standard initialization]
\label{thm:2phase}
    Consider a 2-phase training with $\width > \dim$ and learning rate $\lr=\Theta(1)$.
    The first phase updates with \Cref{eq:projected_updates} using a randomly sampled dataset of size $\dim \leq \nsamples \leq \dim^2$, until $|a| \geq a_\star$ for some $0< a_\star \lesssim \frac{1}{(\nsamples \dim)^{1/4}(\log d/\delta)^{1/2}}$;
    the second phase updates with \Cref{eq:projected_updates} using the full population gradient, until reaching a $\hat{w}$ such that $\|\hat{w} - w^\star\|_2 \lesssim \sqrt\varepsilon$.
    Let $T_1, T_2$ denote the numbers of steps required in each phase respectively.
    Let $p_{\mathrm{all}} \in (0,1)$ be a universal constant where $p_{\mathrm{all}} = \Theta(1)$.
    \footnote{$p_{\mathrm{all}}$ is formally defined in Lemma~\ref{lem:final_joint_good_alignment}.}
    Then, with probability at least $p_{\mathrm{all}}-\delta$ over the random initialization and the phase-1 samples,
    \begin{align}
        T_1 \lesssim \frac{a^*\sqrt{\nsamples}}{\lr},
        \quad
        T_2 \lesssim \frac{2}{\lr a^*}\log\Big(\frac{d}{\varepsilon}\Big).
    \end{align}
    With the optimal choice of $a_\star$, the total number of steps is $O \left( (\nsamples \dim)^{1/4} \log \left(\frac{d}{\varepsilon} \right) \log^{1/2} \left( \frac{d}{\delta}\right) \right)$.
\end{theorem}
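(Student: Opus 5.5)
We first write out the gradients explicitly. For $f(x)=a\sigma(w^\top x)-1$ with $\sigma(z)=z^2/2$ and correlation loss on a dataset with empirical expectation $\hat\E$:
\begin{align*}
\nabla_a L &= -\tfrac12 \hat\E[y (w^\top x)^2] = -\tfrac12 w^\top \hat M w, \qquad \hat M := \hat\E[y\, x x^\top],\\
\nabla_w L &= -a\, \hat M w .
\end{align*}
At the population level, $y=x_1x_2$ gives $M := \E[yxx^\top] = e_1e_2^\top + e_2e_1^\top$. Hence the top eigenvector is $w^\star=(e_1+e_2)/\sqrt2$ (or $(e_1-e_2)/\sqrt2$, depending on the sign of $a$), and $\nabla_a L = -w_1w_2$. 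At a random initialization $w_1w_2 \approx 1/d$, so the population signal for $a$ is tiny. The proof is therefore a two-phase analysis.

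\textbf{Phase 1.} We compare the empirical $\hat M$ with $M$. For $N$ samples, the off-support entries $\hat M_{ij}=\frac1N\sum_n y_n x_{n,i}x_{n,j}$ are mean-zero sub-Gaussian with scale $1/\sqrt N$. The diagonal is $\hat M_{ii}=\hat\E[y]$, which is also of order $N^{-1/2}$. The key quantity is the sampling bias $w^\top(\hat M-M)w$. At initialization the diagonal contributes $\hat\E[y]\|w\|^2=\hat\E[y]$, which has size $\asymp N^{-1/2}$ with constant probability. The off-diagonal part contributes $O(\sqrt{\log(d/\delta)}/\sqrt{N})\cdot O(1)$ by a quadratic-form/Hanson–Wright bound.

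I would condition on the event, of probability $\Theta(1)$ and presumably part of $p_{\mathrm{all}}$, that $|\hat\E[y]|\gtrsim N^{-1/2}$ and that $\hat\E[y]$ dominates the off-diagonal quadratic form. On this event, $a$ moves monotonically at rate $\eta\, \Theta(N^{-1/2})$, so $|a|$ reaches $a_\star$ within $T_1\lesssim a_\star\sqrt N/\eta$ steps. Meanwhile we must show that $w$ does not drift away from the support. Each $w$-step changes $w$ by at most $\eta|a|\,\|\hat M-M\|_{op}$. By matrix concentration with $N\ge d$, $\|\hat M - \E\hat M\|_{op}\lesssim \sqrt{d\log(d/\delta)/N}$, although the diagonal shift $\hat\E[y] I$ is removed by the normalization. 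Summing over $T_1$ steps with $|a|\le a_\star$ gives a total perturbation of about $\eta a_\star T_1\sqrt{d\log(d/\delta)/N}\approx a_\star^2\sqrt{d\log(d/\delta)}$.

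The stated constraint $a_\star\lesssim (Nd)^{-1/4}(\log d/\delta)^{-1/2}$ should be exactly what keeps this perturbation small relative to the initial alignment $w_1,w_2\sim 1/\sqrt d$. In particular, $(w_1+w_2)^2$ (or the relevant sign combination) must remain $\gtrsim 1/d$, with a constant-probability sign condition folded into $p_{\mathrm{all}}$, as in Lemma~\ref{lem:final_joint_good_alignment}.

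\textbf{Phase 2.} With population gradients the $w$-update is normalized power iteration on $I+\eta a M$. Writing $u=(w_1+w_2)/\sqrt2$ for the top-eigenvector coordinate, the ratio of $|u|$ to the mass in the null directions grows by a factor $(1+\eta |a|)$ per step. Meanwhile $a$ keeps growing in magnitude, since $\nabla_a L=-w_1w_2$ gains sign consistency as alignment improves; hence $|a|\ge a_\star$ throughout. Starting from $u^2\gtrsim 1/d$, reaching $\|w-w^\star\|^2\le\varepsilon$ requires the ratio to grow by a factor $\sqrt{d/\varepsilon}$. This takes $\log(d/\varepsilon)/(2\eta a_\star)\cdot 2$ steps, matching $T_2$. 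Balancing $a_\star\sqrt N$ against $\log(d/\varepsilon)/a_\star$ under the upper constraint then yields the stated total bound, taking $a_\star$ at its maximal allowed value.

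\textbf{Main obstacle.} I expect the main obstacle to be Phase 1: showing jointly that $a$'s drift is dominated by the sign-consistent bias $\hat\E[y]$ and that $w$'s alignment with the support, together with its sign relative to $a$, is preserved despite the random matrix $\hat M - M$. The first thing I would try is to decompose $\hat M = M + \hat\E[y]I + E$. The identity part does not affect normalized $w$, and for $E$ I would use an operator-norm bound together with a union bound over $T_1$ steps.
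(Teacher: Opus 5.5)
Your overall architecture (a constant-probability start with $|q^{(0)}|\gtrsim N^{-1/2}$, linear growth of $|a|$, drift control, transfer of the population sign, power iteration in Phase 2, then balancing $a_\star$) matches the paper. The gap is that the step carrying Phase 1 is asserted rather than proved. You need $q^{(t)}=w^{(t)\top}\hat M w^{(t)}$ to keep the sign of $a^{(t)}$ and magnitude $\gtrsim N^{-1/2}$ for \emph{every} $t<T_1$, and the mechanism you suggest does not deliver this. With your decomposition $\hat M=M+\widehat{\mathbb{E}}[y]I+E$, one has $q^{(t)}=\widehat{\mathbb{E}}[y]+w^{(t)\top}(M+E)w^{(t)}$. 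At $t=0$ the second term has conditional standard deviation $\approx\sqrt{2/N}$ and is uncorrelated with $\widehat{\mathbb{E}}[y]$, so it is of the same order as your ``dominant'' term. A Hanson--Wright/Bernstein bound carrying a $\sqrt{\log(d/\delta)}$ factor can only certify domination on $\{|\widehat{\mathbb{E}}[y]|\gtrsim\sqrt{\log(d/\delta)/N}\}$, which has polynomially small probability, not $\Theta(1)$. Even granting domination at $t=0$, the term $w^{(t)\top}(M+E)w^{(t)}$ changes as $w$ moves. Keeping it from overturning $\widehat{\mathbb{E}}[y]$ needs $|(w^{(t)}-w^{(0)})^\top(M+E)(w^{(t)}+w^{(0)})|\ll N^{-1/2}$ throughout. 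That is a bootstrap, circular with $T_1\lesssim a_\star\sqrt N/\eta$, and you never set it up. Your drift estimate also drops the $\eta a Mw$ part of the step: a normalized step moves $w$ by about $\eta|a|\,\|(M+E)w\|_2$, not $\eta|a|\,\|\hat M-M\|_{op}$.

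The missing idea is the paper's deterministic monotonicity lemma, which makes all of this unnecessary. For $w^+=(I+\eta a\hat M)w/\|(I+\eta a\hat M)w\|_2$ one has exactly
\[
q^+-q=\frac{2\eta a(s-q^2)+\eta^2a^2(r-qs)}{1+2\eta aq+\eta^2a^2s},\qquad s=w^\top\hat M^2w,\quad r=w^\top\hat M^3w.
\]
View the spectrum of $\hat M$ as a random variable $\Lambda$ with weights $\langle w,u_i\rangle^2$. Then $s-q^2=\mathrm{Var}(\Lambda)\ge 0$ and $|r-qs|\le 2\|\hat M\|_2(s-q^2)$, so under $\eta|a|\,\|\hat M\|_2\le\frac12$ the increment $q^+-q$ has the sign of $a$. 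Starting from $\mathrm{sign}(a^{(0)})=\mathrm{sign}(q^{(0)})$, induction gives $|q^{(t)}|\ge|q^{(0)}|$ for all $t<T_1$, with no decomposition of $\hat M$ and no perturbation bound. The bound $|q^{(0)}|\ge c_0/\sqrt N$ is proved for $q^{(0)}$ as a whole via Paley--Zygmund and hypercontractivity. Drift control is then needed only at the hand-off, together with the coupling $\mathrm{sign}(q^{(0)})=\mathrm{sign}(w^{(0)}_1w^{(0)}_2)$ and $|w^{(0)\top}Mw^{(0)}|\ge 3/(4d)$. The paper obtains this coupling from a one-sided anti-concentration bound for $\sqrt N(q^{(0)}-2w^{(0)}_1w^{(0)}_2)$. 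In your decomposition $\mathrm{sign}(q^{(0)})=\mathrm{sign}(\widehat{\mathbb{E}}[y])$, so the coupling is a further joint event you leave unspecified. Relatedly, the $(Nd)^{-1/4}$ scale is not what your own drift estimate yields; that estimate would permit $a_\star$ up to roughly $d^{-1/2}$. The paper's scale comes from comparing its cruder bound $\|w^{(T_1)}-w^{(0)}\|_2\le 8\|\hat M\|_2a_\star^2/q_\star+4\eta\|\hat M\|_2a_\star$, with $q_\star=c_0/\sqrt N$, to the margin $1/(4\sqrt d)$.
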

\Cref{thm:2phase} implies that a small $\nsamples$ leads to a direct saving in the number of steps $T$.
One could alternatively skip Phase 1 and train directly on the full population.
This will require $O(\width^{1/2}\log(\dim/\epsilon))$ steps (\Cref{lem:final_phase2_alignment_varying_a}),
which is worse than the 2-phase convergence when $\width \gg \dim^2$.

\textit{Proof sketch.}
 The gradient magnitude of $w$ depends on the magnitude of $a$, and since $a$ is initialized to be very small ($O(1/\sqrt{m})$), this slows down the learning of $w$. However, the sampling bias of small datasets can quickly grow the magnitude of $a$ at the initial stage of training. The gradient of $a$ is given by $\qt[t]:=(\wt[t])^\top \hM \wt[t]$, where $\hM := \widehat{\E}[yxx^\top] \defeq \frac1\nsamples \sum_{s=1}^{\nsamples} y^{(s)} x^{(s)} x^{(s)\top}$. Due to the anti-concentration of $\hM$, $\qt[t]$ is on the order of $\nsamples^{-1/2}$, whereas the population quantity is on the order of $1/\dim$.
 Therefore, in the first phase, $a$ grows at a linear rate of $\nsamples^{-1/2}$, hence the number of steps for $a$ to reach $a_\star$ is proportional to $\nsamples^{1/2}$.
Once $a$ reaches $a_\star$, we switch to the second phase which uses population updates.
The analysis is on the power iteration on the true moment $\mM$,
whose contraction rate depends on $\eta a_\star$.

In fact, the first-phase analysis primarily relies on the anti-concentration of the empirical moment matrix $\hM$,  which is largely independent of the true label signal. This suggests that the early-stage acceleration should still persist even when the labels are random. We further study this in \Cref{sec:random_label}. 
Furthermore, the convergence rate of the feature direction $w$ depends on the strength of its gradient signal relative to its scale. The second-phase analysis reveals that the strength of the gradient is governed by its learning rate and the initial magnitude of $a$. This directly motivates empirical interventions explored in \Cref{sec:param_wise}, where we manipulate per-layer learning rates and layer-wise initialization scales.

Notably, the $\nsamples^{-1/2}$ bias in Phase 1 contrasts with the result in \cite{cornacchia25biased}, where a per-coordinate bias $\eta$ induces a Fourier coefficient on the order of $\eta^\dimsparse$.
Such a coefficient is non-negligible only when $\dimsparse$ is bounded.
In contrast, our analysis is independent of $\dimsparse$, suggesting that the small-vs-large gap persists even in dense regimes where $\dimsparse$ can be as large as $\dim$, as confirmed empirically (\Cref{fig:parity_dense}).
Moreover, the notion of ``bias'' differ between the two settings. \cite{cornacchia25biased} assumes $\eta=\Omega(1)$, which is substantially larger than the $O(\nsamples^{-1/2})$ sampling bias unless $\nsamples$ is unreasonably small, as discussed in \Cref{sec:input_bias}.



\section{Empirical evidence for relative norm growth}
\label{sec:empirical_evidence}

This section provides empirical evidence supporting the claim in \Cref{sec:understanding} that less data leads to faster learning by affecting the relative growth rates of the two layers.
\Cref{fig:observed_aw_ratio} provides direct observational evidence:
during the initial phase of training, the weight norm ratio $\frac{\|a\|_2}{\|W\|_F}$ increases more rapidly when the dataset is smaller, for both parity and SIM.

We further consider \textit{interventions} on the training process to provide stronger evidence, in terms of
1) \textit{data}, where we show that the speedup from small-set exists even when the labels are random;
2) \textit{weight norms}, by changing initialization scale or the adoption of normalization layers;
and 3) layer-wise \textit{learning rate} controls.
All results support our hypothesis, as detailed below.

\subsection{Small-vs-large gap exists when training first on \textit{random} labels}
\label{sec:random_label}


One implication of \Cref{thm:2phase} is that the role of small dataset biases is to help grow the second layer faster, which in turn amplifies the update in the input layer which is responsible for feature learning.
This suggests that any methods that help grow the second layer should achieve similar acceleration, even when the growth results from signals unrelated to the target function.

Training with \textit{random labels} is one example that provides such a speedup.
Specifically, consider a modified 2-phase training procedure similar to \Cref{thm:2phase},
where we alter the first phase to train with random labels, i.e., $y$ is sampled i.i.d. from Uniform$\{-1, +1\}$.
We show that the gradient of the second layer ($a$) is still of order $\Theta(\nsamples^{-1/2})$ around initialization. 

\begin{corollary}[2-phase training, with Phase 1 on random labels]
\label{cor:random_label}
    Consider the setting in \Cref{thm:2phase} where the first phase uses a randomly sampled dataset of size $\dim \leq \nsamples \leq \dim^2$, with labels uniformly sampled from $\{-1, +1\}$. 
    Similar to \Cref{thm:2phase}, let $p_{\mathrm{all}} \in (0,1)$ be a universal constant where $p_{\mathrm{all}} = \Theta(1)$.
    Then, with probability at least $p_{\mathrm{all}}-\delta$ over the random initialization and the phase-1 samples,
    the total number of steps required to reach a $\hat w$ such that $\|\hat{w} - w^\star\|_2 \leq \varepsilon$ is
    $$T=O\left(\frac{\sqrt{\nsamples}}{\lr \sqrt{\dim}} + \frac{\sqrt{\dim}}{\lr} \log\Big(\frac{\dim}{\varepsilon}\Big)  \right).$$
\end{corollary}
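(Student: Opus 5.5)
We follow the two-phase template of \Cref{thm:2phase}: Phase 1 only needs $|a|$ to grow, and Phase 2 is identical to the one there. Labels enter Phase 1 only through $\hM = \frac1\nsamples\sum_s y^{(s)}x^{(s)}x^{(s)\top}$. The $a$-gradient is $-\qt[t] = -(\wt[t])^\top\hM\wt[t]$, and the $w$-gradient is $-\at[t]\hM\wt[t]$. So the plan is to show that with random labels $\hM$ has the same anti-concentration at the scale $\nsamples^{-1/2}$, and then reuse the rest of the argument. The corollary is the special case $a_\star \asymp \dim^{-1/2}$ of the theorem's bounds, which gives $T_1 \lesssim \sqrt{\nsamples/\dim}/\lr$ and $T_2 \lesssim \sqrt{\dim}\log(\dim/\varepsilon)/\lr$. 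This choice is compatible with the theorem's constraint $a_\star \lesssim (\nsamples\dim)^{-1/4}$ up to logs, because $\nsamples \le \dim^2$ gives $(\nsamples\dim)^{-1/4} \gtrsim \dim^{-3/4}$. I would need to check that the Phase-2 lemma (\Cref{lem:final_phase2_alignment_varying_a}) tolerates $a_\star \asymp \dim^{-1/2}$ once the alignment conditions of \Cref{lem:final_joint_good_alignment} hold; alternatively I would run Phase 1 until $|a| \ge \dim^{-1/2}$ directly.

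\textbf{Step 1: anti-concentration with random labels.} Since $y^{(s)}$ is independent of $x^{(s)}$ and mean zero, $\E\hM = 0$, and
\[
\qt[0] = \frac1\nsamples \sum_s y^{(s)}\langle \wt[0],x^{(s)}\rangle^2
\]
is a sum of i.i.d.\ symmetric terms with variance of order $1/\nsamples$ (because $\E\langle w,x\rangle^4 = \Theta(1)$ for unit $w$). A Paley--Zygmund or Berry--Esseen argument then gives $|\qt[0]| \gtrsim \nsamples^{-1/2}$ with constant probability. This is the same event used in the proof of \Cref{thm:2phase}. There the true-label term $w^\top\mM w = O(1/\dim)$ was a negligible perturbation, and here it is simply absent.

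\textbf{Step 2: the sign of $\qt[t]$ persists.} We must keep $\qt[t]$ of the same sign with magnitude $\gtrsim \nsamples^{-1/2}$ for $T_1$ steps. The $w$-update is a normalized power-type step $w \leftarrow (w + \lr a\hM w)/\|\cdot\|$. While $|a| \le a_\star$, each step moves $w$ by at most $\lr a_\star\|\hM\|_{\mathrm{op}}$, and $\|\hM\|_{\mathrm{op}} \lesssim \sqrt{\dim/\nsamples}\,\log(\dim/\delta)$ by matrix Bernstein. Moreover, moving along $\hM w$ with step sign matching $\mathrm{sign}(a) = \mathrm{sign}(q)$ only increases $|w^\top \hM w|$, since power iteration increases the Rayleigh quotient in magnitude. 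So $|\qt[t]|$ is nondecreasing, and $|a|$ grows at least linearly at rate $\lr\nsamples^{-1/2}$. This gives $T_1 \lesssim a_\star\sqrt{\nsamples}/\lr$.

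\textbf{Step 3: $w$ stays in a good starting region.} This is the main obstacle. With random labels, Phase 1 may rotate $w$ toward the top eigenvector of the noise matrix $\hM$, which is unrelated to $w^\star$. If so, the initial alignment $|w_1|,|w_2| \gtrsim \dim^{-1/2}$ that Phase 2 needs could be destroyed. To control this, I would bound the total rotation by $\sum_t \lr|\at[t]|\,\|\hM\|_{\mathrm{op}} \lesssim \lr T_1 a_\star\|\hM\|_{\mathrm{op}}$. Then I would choose the stopping level, shrinking $a_\star$ by log factors if needed, so that the projection onto $\mathrm{span}(e_1,e_2)$ changes by at most a constant factor. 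Failing that, I would argue that $\hM$ is independent of the support directions, so its action on the $(e_1,e_2)$ components is a rotation-invariant perturbation that keeps their mass at order $1/\dim$ with constant probability. This event is the one collected into $p_{\mathrm{all}}$.

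Finally, I would union-bound the failure probabilities, invoke the Phase-2 contraction from \Cref{thm:2phase} with rate governed by $\lr a_\star$, and sum $T_1 + T_2$.
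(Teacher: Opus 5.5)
Your Steps 1 and 2 match the paper's Phase~1. There is, however, a genuine gap in how you enter Phase~2. \Cref{lem:final_phase2_alignment_varying_a} requires $\sign(a)=\sign(q_{\mathrm{pop}}(w))$ at the switch, where $q_{\mathrm{pop}}(w)=w^\top\mM w$. Its progress measure is the overlap $|\langle w,u_{\sign(a)}\rangle|$ with $u_{\sign(a)}=(e_1+\sign(a)e_2)/\sqrt2$, not $|w_1|,|w_2|$. With random labels, $\sign(a^{(T_1)})=\sign(a^{(0)})=\sign(q^{(0)})$ is fixed by the label noise and says nothing about the population sign. Your Step 3 only protects the size of the support projection. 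For example, $w_1=w_2$ with $\sign(a)=-1$ gives zero overlap and makes $|a|$ shrink in Phase~2.

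This also cannot be fixed by union-bounding failure probabilities. The two sign events each have probability about $1/2$, so they must be intersected multiplicatively by conditioning. The paper's \Cref{lem:random_label_joint_good_alignment} does this in two steps:
\begin{itemize}
\item The population event $\{\sign(a^{(0)})\,q_{\mathrm{pop}}(w^{(0)})\ge \frac{3}{4d},\ \|P_{12}w^{(0)}\|_2\le \frac{1}{\sqrt d}\}$ has probability at least $p_{\mathrm{pop}}/2$, by the radial--angular argument and the symmetry of $a^{(0)}$.
\item Conditional on $(w^{(0)},a^{(0)})$, the law of $q^{(0)}$ is symmetric (flip all random labels). So $\{\sign(q^{(0)})=\sign(a^{(0)}),\ |q^{(0)}|\ge c_r/\sqrt N\}$ has conditional probability at least $p_r/2$.
\end{itemize}
Then the drift bound $\|w^{(T_1)}-w^{(0)}\|_2\le \frac{1}{4\sqrt d}$ together with \Cref{lem:final_phase1_population_sign_transfer} carries both the population sign and an overlap of at least $\frac{1}{2\sqrt d}$ to the switch time.

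Separately, your compatibility argument for $a_\star\asymp d^{-1/2}$ is backwards. For $N\ge d$ one has $(Nd)^{-1/4}\le d^{-1/2}$, so this $a_\star$ lies outside the range allowed in \Cref{thm:2phase} whenever $N\gg d$. The corollary's total of order $\sqrt d\log(d/\varepsilon)/\eta$ genuinely improves on the theorem's $(Nd)^{1/4}$; it is not a special case of it.

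The larger $a_\star$ is admissible because $\E[\hM]=0$ under random labels, so $\|\hM\|_2\lesssim\sqrt{d\log(d/\delta)/N}$ rather than $\approx 1$. The drift bound of \Cref{lem:final_w_drift_phase1} then becomes $\approx\sqrt d\,a_\star^2+\eta a_\star\|\hM\|_2$. This stays below $\frac{1}{4\sqrt d}$ exactly when $a_\star\lesssim d^{-1/2}$, up to constants and logs. Your Step 3 total-rotation estimate $\eta T_1a_\star\|\hM\|_2$, evaluated with this norm, gives precisely this. So that part of your plan is sound once you drop the appeal to the true-label constraint.
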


We verify the speedup empirically by training a MLP first on a small set of \textit{random} labels, and then switching to the large-set training.
Our hypothesis will be supported if the first phase on random labels leads to accelerated learning of the actual task.
We experiment with MLP on parity and SIM:
for parity, the random labels are obtained by sampling $y$ uniformly from $\{-1, +1\}$; for SIM, we sample a random feature vector $w_{\text{random}} \sim \gN(0, I)$ and use $w_{\text{random}}$ along with the true link function to label the small dataset.
As shown in \Cref{fig:random_label},
both the accuracy and the weight norm growth (measured by $\|a\|_2/\|W\|_F$) are sensitive only to the dataset size, but not the label choice (i.e., true or random labels).
This also agrees with our theory (i.e., phase 1 analysis) that the benefit of sampling bias exists even with random labels.
We observe similar results for Transformer learning mod addition, where an initial phase of random label training speeds up the subsequent actual learning (\Cref{fig:random_label_mod_add})

\begin{figure*}[t]
    \centering
    \begin{subfigure}[t]{0.24\linewidth}
        \centering
        \includegraphics[width=\linewidth]{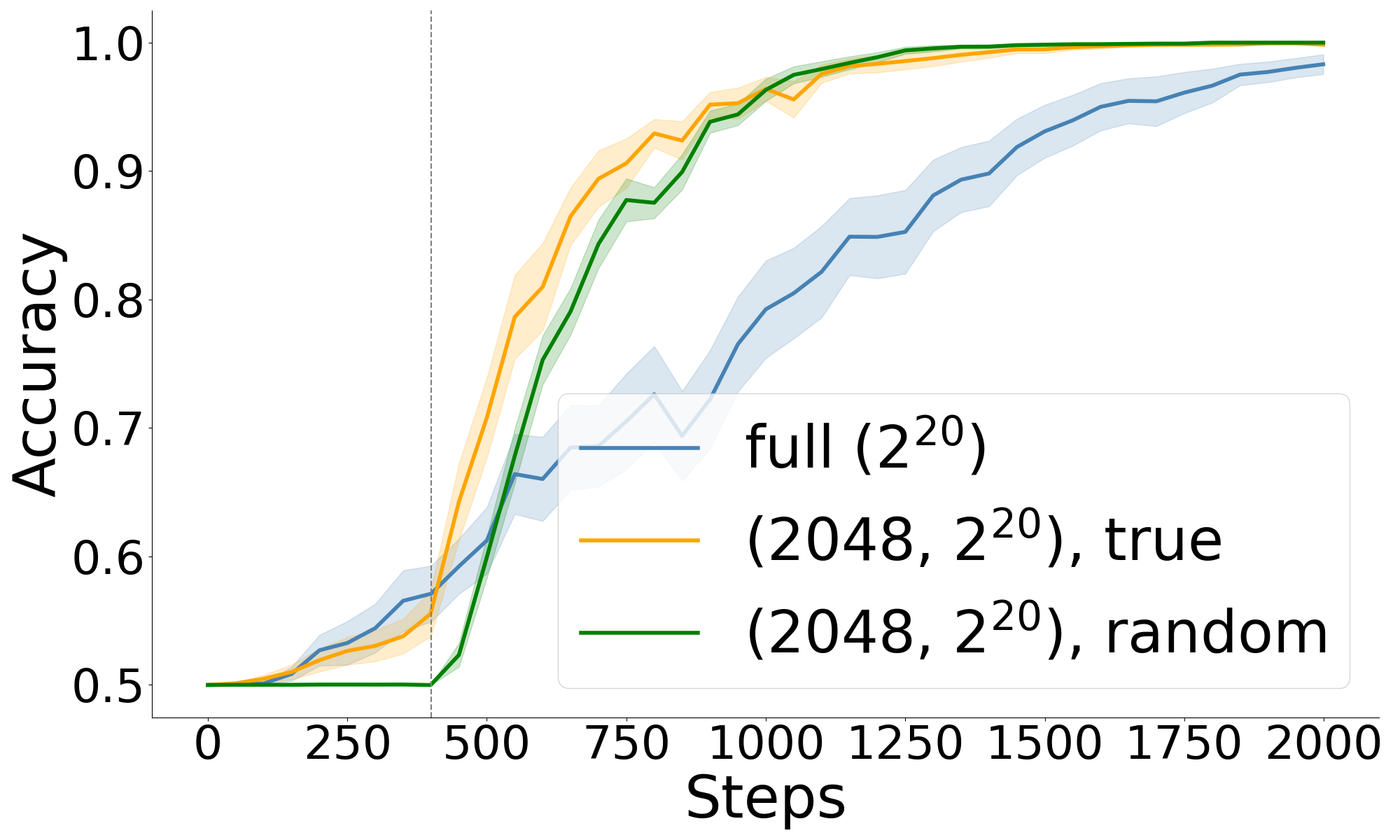}
        \caption{$(20, 6)$-parity, acc.}
        \label{fig:parity_random_label}
    \end{subfigure}
    \hfill
    \begin{subfigure}[t]{0.24\linewidth}
        \centering
        \includegraphics[width=\linewidth]{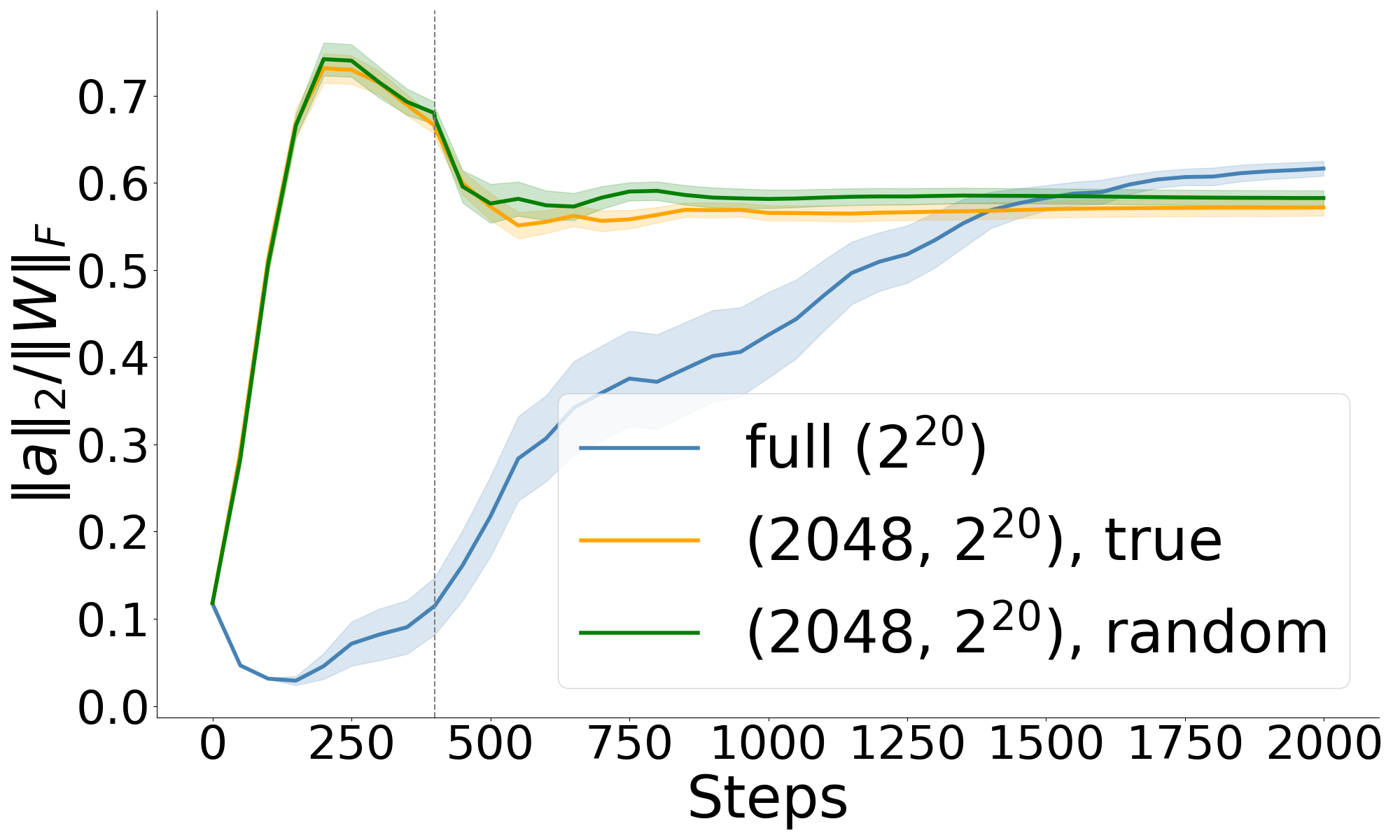}
        \caption{$(20, 6)$-parity, $\frac{\|a\|_2}{\|W\|_F}$}
        \label{fig:parity_norm_ratio_random_label}
    \end{subfigure}
    \hfill
    \begin{subfigure}[t]{0.24\linewidth}
        \centering
        \includegraphics[width=\linewidth]{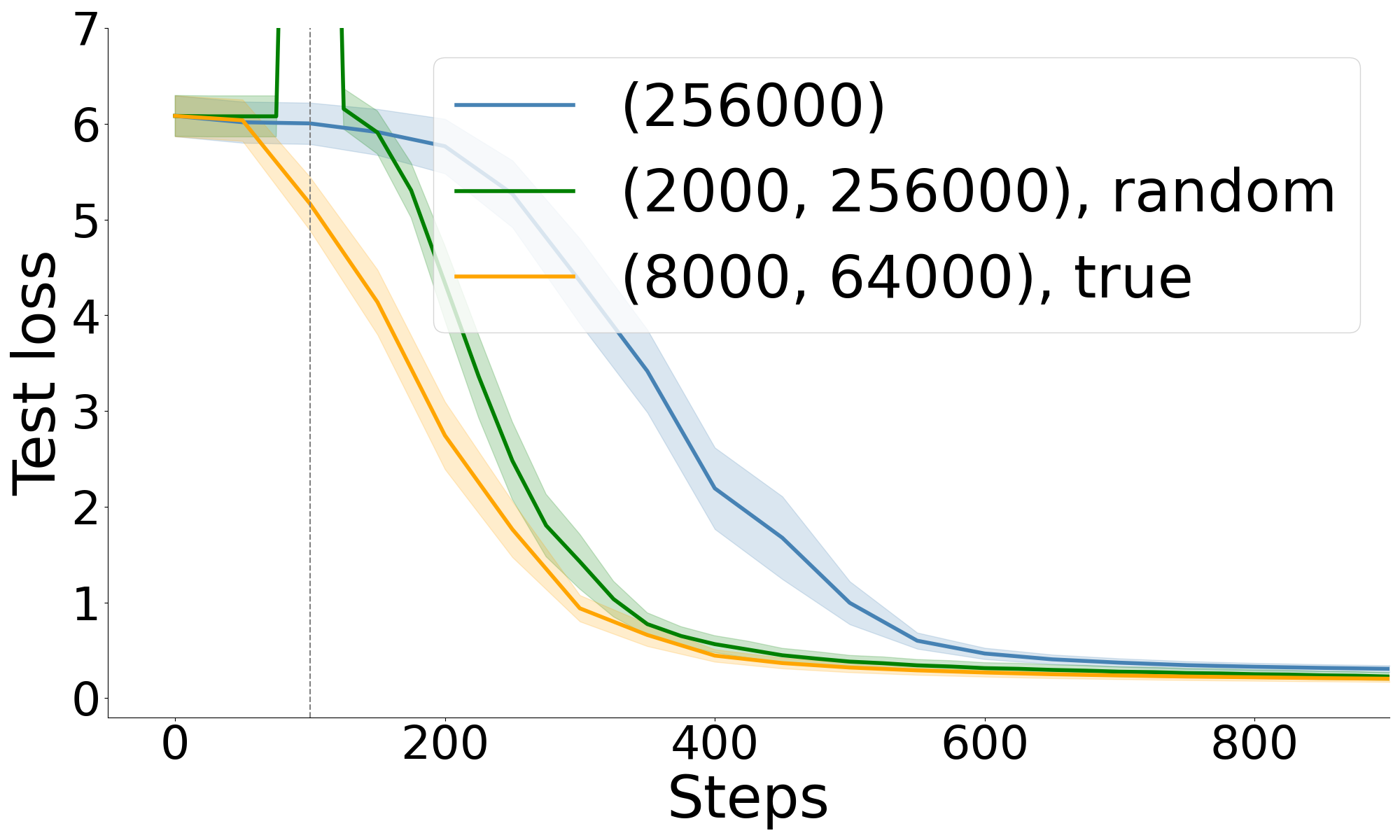}
        \caption{SIM, $\dim=40$, loss}
        \label{fig:sim_random_label}
    \end{subfigure}
    \hfill
    \begin{subfigure}[t]{0.24\linewidth}
        \centering
        \includegraphics[width=\linewidth]{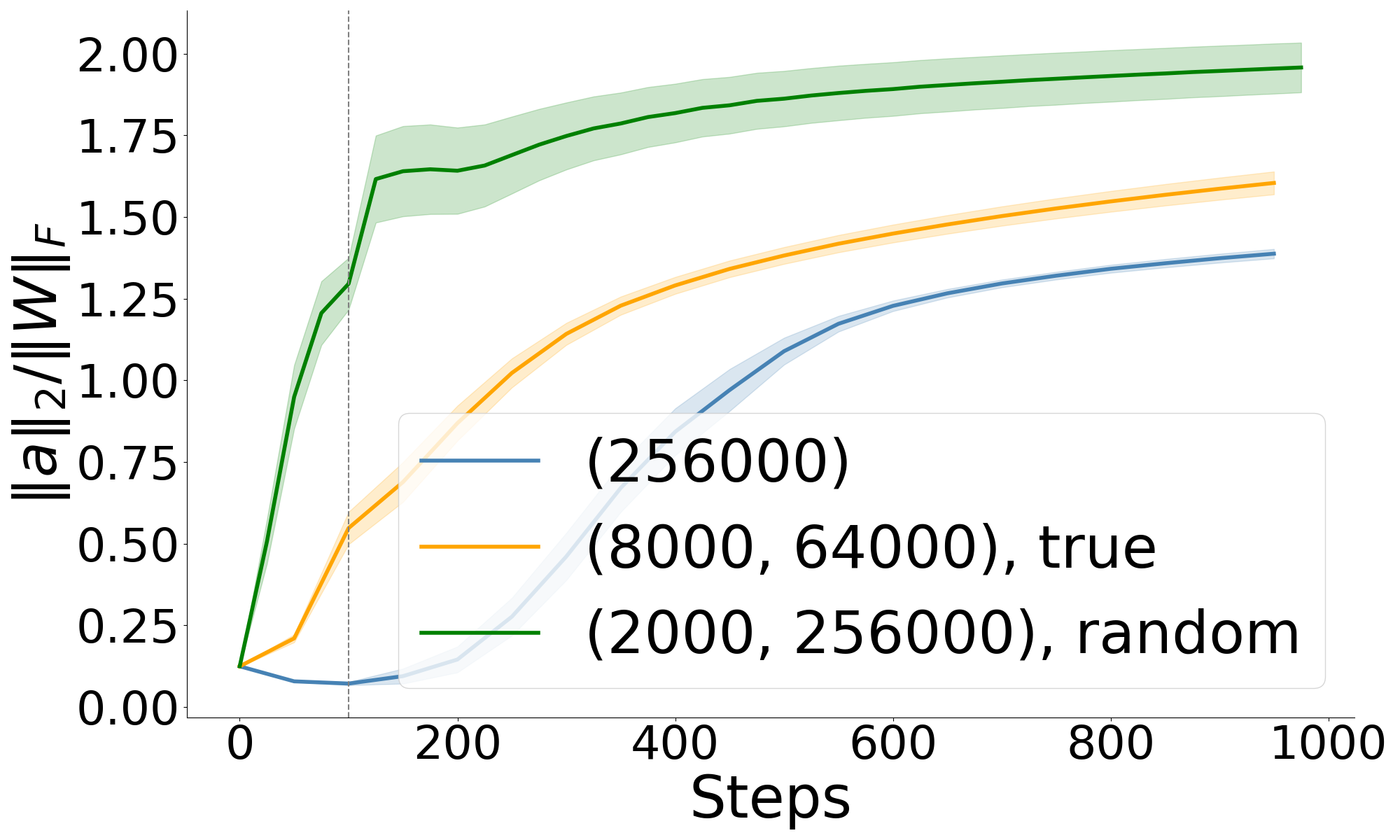}
        \caption{SIM, $\dim=40$, $\frac{\|a\|_2}{\|W\|_F}$}
        \label{fig:sim_norm_ratio_random_label}
    \end{subfigure}
    \caption{\textbf{Training on small datasets with \textit{random labels} leads to faster learning}.
    For GD on both parity and SIM, the initial random-training leads to significant speedup and faster growth of $\|a\|_2/\|W\|_F$.
    The blue/yellow curves correspond to large/small sets.
    The green curves correspond to training first on a small set of \textit{random} labels and then switching to large sets with true labels.
    }
    
    \label{fig:random_label}
\end{figure*}

\subsection{Small-vs-large gap diminishes with parameter-wise interventions}
\label{sec:param_wise}

This section investigates direct interventions on layer-wise scalings.
We find that, consistent with the analysis in \Cref{sec:theory_bias}, these interventions can significantly reduce and sometimes eliminate the small-vs-large performance gap.
However, smaller sets exhibit favorable optimization biases that make training more robust to hyperparameter choices.
Results are reported on MLPs trained with the SGD optimizer and Transformers trained with AdamW.
\footnote{AdamW affects MLP and Transformer training differently, which we discuss in more details in \Cref{app:ablation}.}


\subsubsection{MLP: layer-wise interventions}
\label{sec:mlp_layerwise}

One takeaway from \Cref{sec:theory_bias} is that the small-vs-large gap comes from the fact that smaller-set training can better adjust the relative norms across layers.
Thus, interventions that directly adjust the relative layer norms should be able to reduce the gap.
Such relative norm change can be achieved in two ways:
1) at initialization, by directly adjusting the standard deviations from which the weights are sampled from,
and 2) throughout training, by explicitly supplying layer-specific learning rates.

For \textbf{initialization scales}, we multiply the initial weights by layer-specific constants.
\Cref{fig:heatmaps} shows the final iterate test accuracy for a large sweep over scales for a 2-hidden-layer MLP, where we consider 2 scaling parameters, one for the first layer weights and one for the other layers.
Note that there exist scaling constants that completely close the small-vs-large gap between size-$2^{14}$ random subsets and the full population ($2^{20}$).
In particular, we observe that the larger dataset performs worse at the default init (marked by the red star), but shrinking the first layer scale and growing the other layers improves its performance and eventually makes the gap vanishes.
However, identifying the right constants requires searching through a large set of combinations, 
whereas small-set training is able to adjust the scaling automatically and is much more robust to initialization scale.

For MLP with ReLU activations, growing one layer leads to larger updates on another.
Therefore, shrinking the input layer and scaling up the output layer can be effectively considered as using \textbf{layer-wise learning rates} which are larger for the input layer and smaller for the other.
The empirical evidence agrees with this hypothesis.
Let $\eta_1, \eta_2$ denote the learning rates for the first and second layer, respectively.
\Cref{fig:layerwise_lr} shows that the optimal choice of $(\eta_1, \eta_2)$, where $\eta_1 \gg \eta_2$, significantly reduces the small-vs-large gap between the full population ($\nsamples=2^{20}$) versus a random subset ($\nsamples = 2^{14}$).


\paragraph{\textit{What is the optimal scaling?}}
The above results show that the small-vs-large gap can be bridged when using proper layerwise initialization scaling or learning rates.
It is then desirable to identify such a scheme without extensive hyperparameter search.
A natural candidate is the $\mu P$ parameterization~\citep{yang2020feature,yang2022tensor}, which is designed to maximize feature learning.
Empirically, we find $\mu P$ to be a strong starting point, bridging the gap in 2-layer MLPs for parity (\Cref{fig:muP_MLP}).
However, the small-vs-large gap exists for SIM.
Following the argument in \Cref{thm:2phase}, we additionally consider a one-dimensional search of a single parameter $\alpha$, which adjusts initialization scaling by dividing the first layer's standard deviation by $\alpha$ and multiplying the second layer's by $\alpha$.
As shown in \Cref{fig:muP_MLP}, such $\alpha$ scaling suffices to bridge the small-vs-large gap in both parity and SIM,
and the optimal $\alpha$ remains \textit{constant} across model widths (\Cref{fig:init_scale_across_m}).

One could potentially consider a more thorough search on the design choices of layerwise initialization or learning rates scheme, for which there is a vast existing literature as discussed in \Cref{sec:related_work}.
Our results on the small-vs-large gap provide an orthogonal angle, suggesting that \textit{data use strategy} can offer helpful inductive biases that make the model robust to the scale or learning rates.

\begin{figure*}[t]
    \centering

    \begin{minipage}[b]{0.56\linewidth}
        \centering
        \begin{subfigure}{0.48\linewidth}
            \centering
            \includegraphics[width=\linewidth]{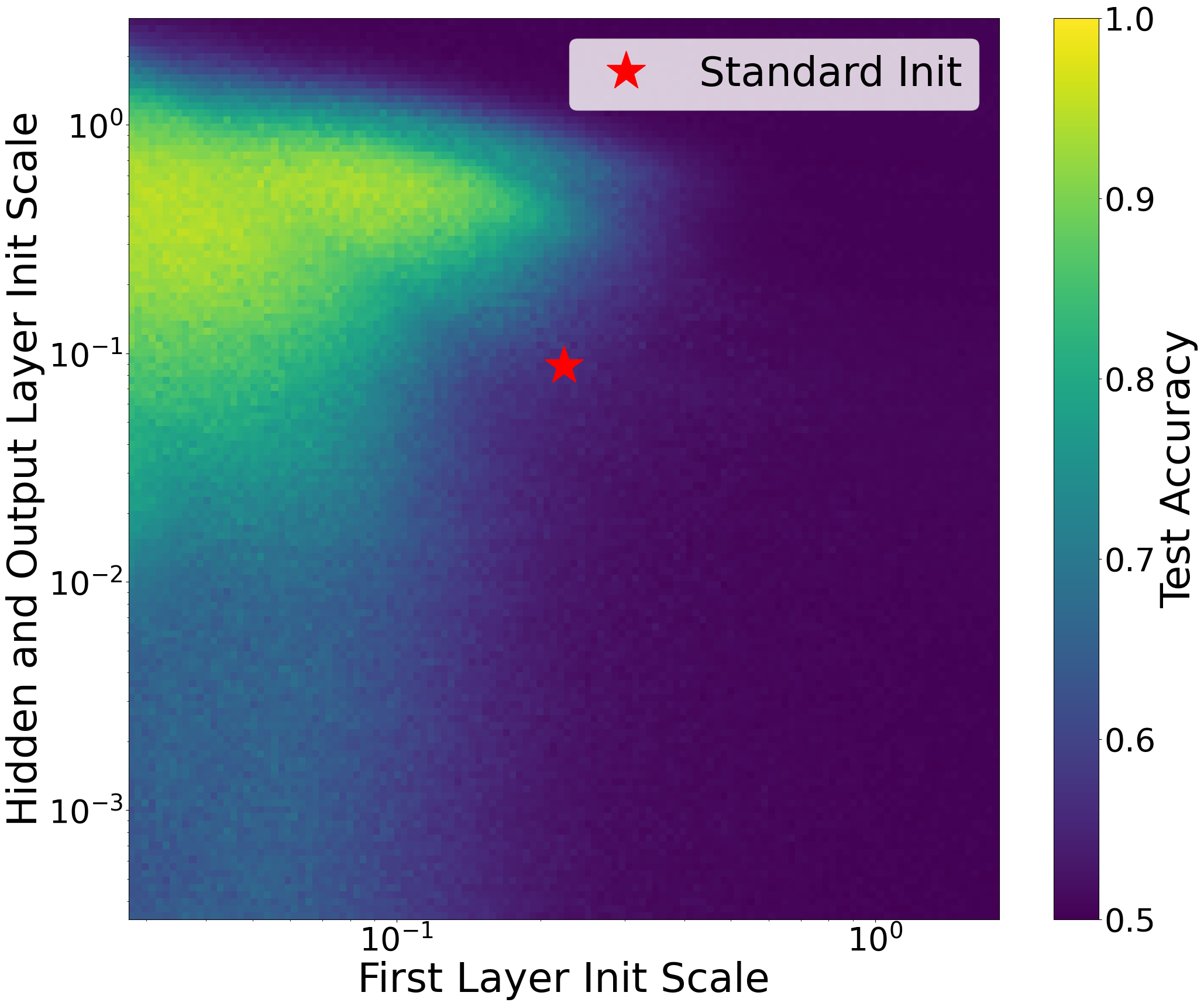}
            \caption{Full population (size $2^{20}$)}
        \end{subfigure}
        \begin{subfigure}{0.48\linewidth}
            \centering
            \includegraphics[width=\linewidth]{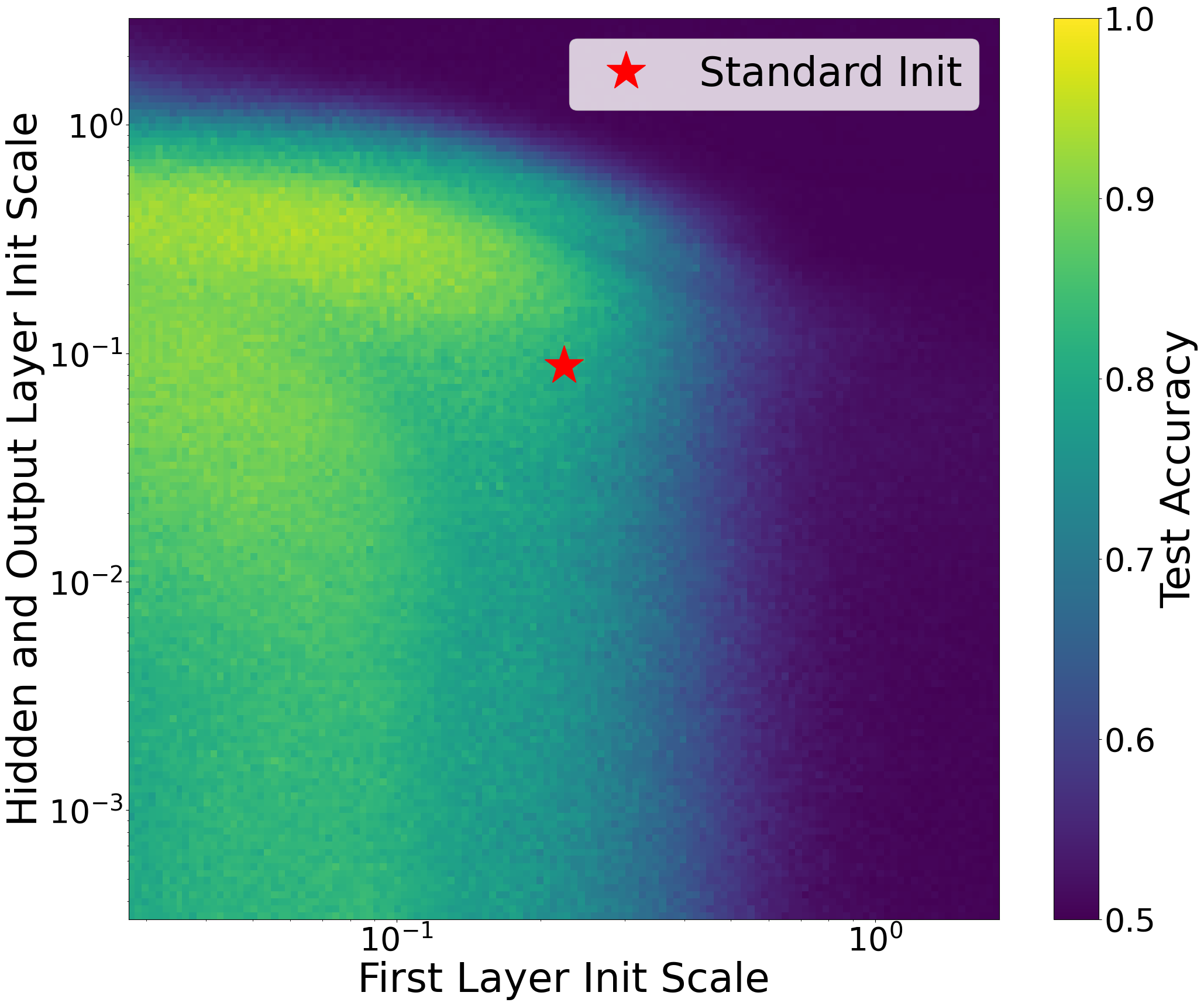}
            \caption{Random subsets of size $2^{14}$}
        \end{subfigure}

        \caption{\textbf{Proper initialization removes the small-vs-large gap},
        though smaller-set training is more robust to the initialization scale.
        Results are shown for $(20,6)$-parity with MLP.
        The heatmaps show the accuracies (averaged over 256 seeds) using per-setup best learning rate.
        }
        \label{fig:heatmaps}
    \end{minipage}\hfill
    \begin{minipage}[b]{0.4\linewidth}
        \centering
        \includegraphics[width=\linewidth]{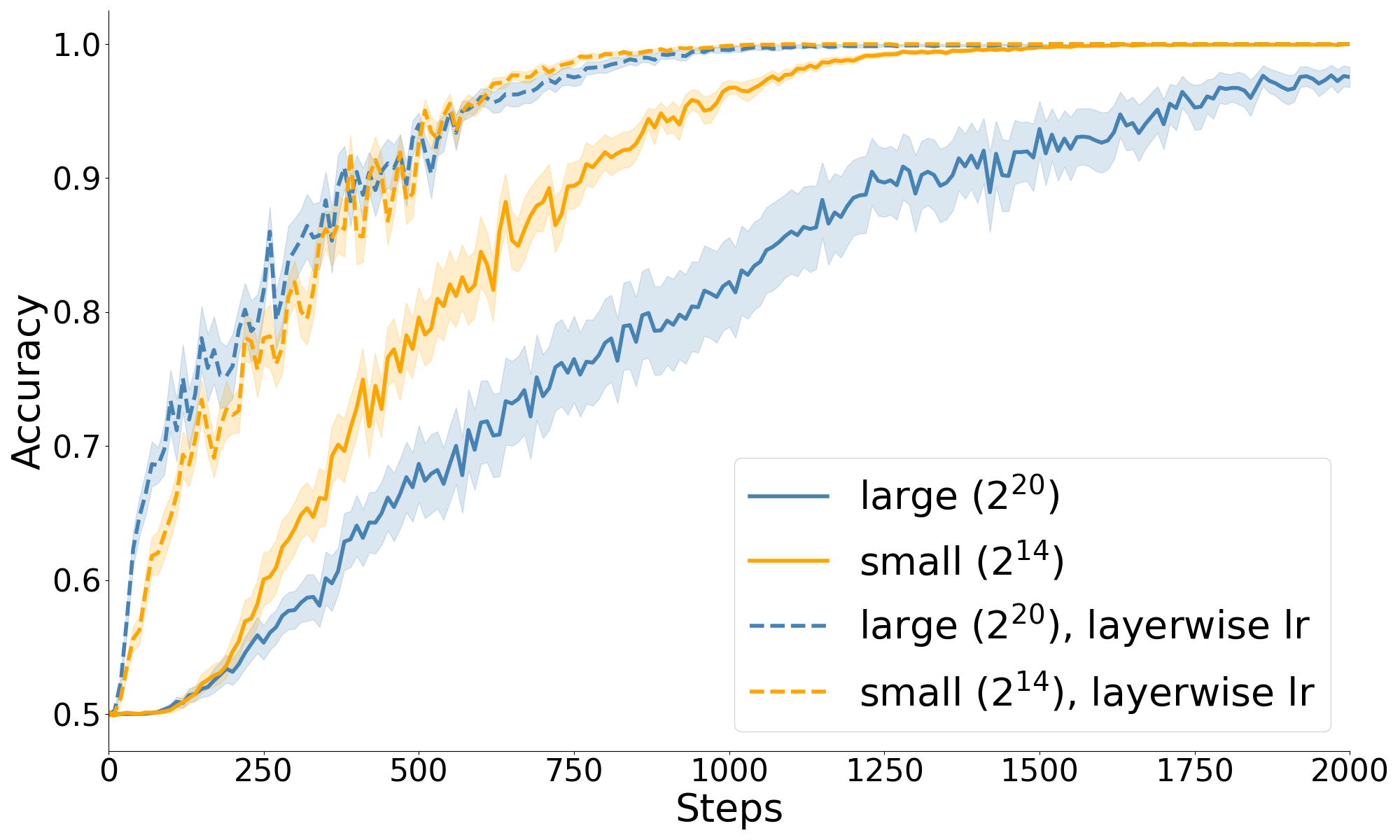}
        \vspace{0.1em}
        \caption{\textbf{Layer-wise learning rate removes the small-vs-large gap.}
        The optimal learning rate for $w$ is larger than that of $a$.
        Results are shown for MLP on (20, 6)-parity, trained with GD.
        }
        \label{fig:layerwise_lr}
    \end{minipage}
\end{figure*}

\begin{figure}
    \centering
    \includegraphics[width=0.46\linewidth]{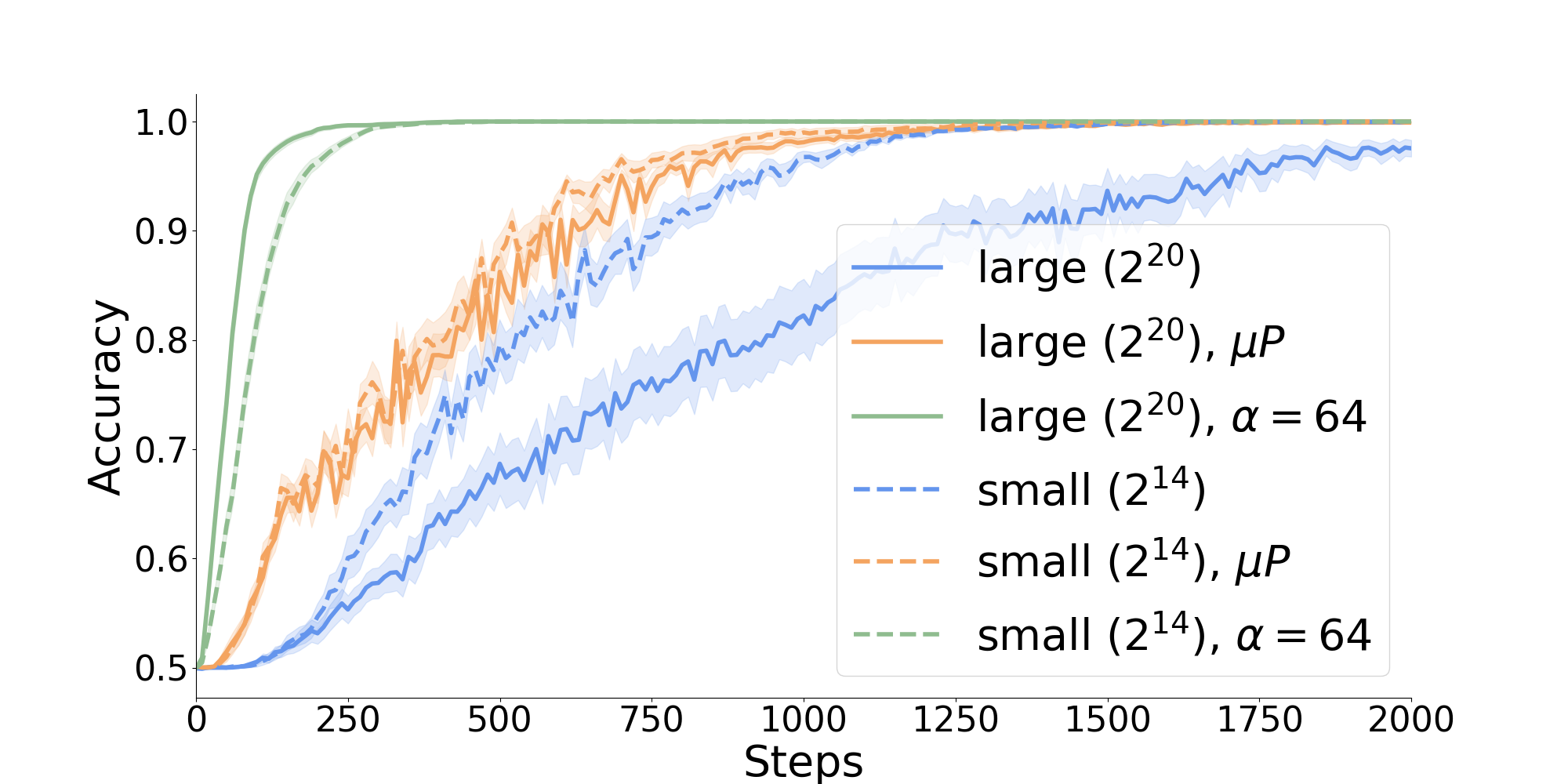}
    \includegraphics[width=0.4\linewidth]{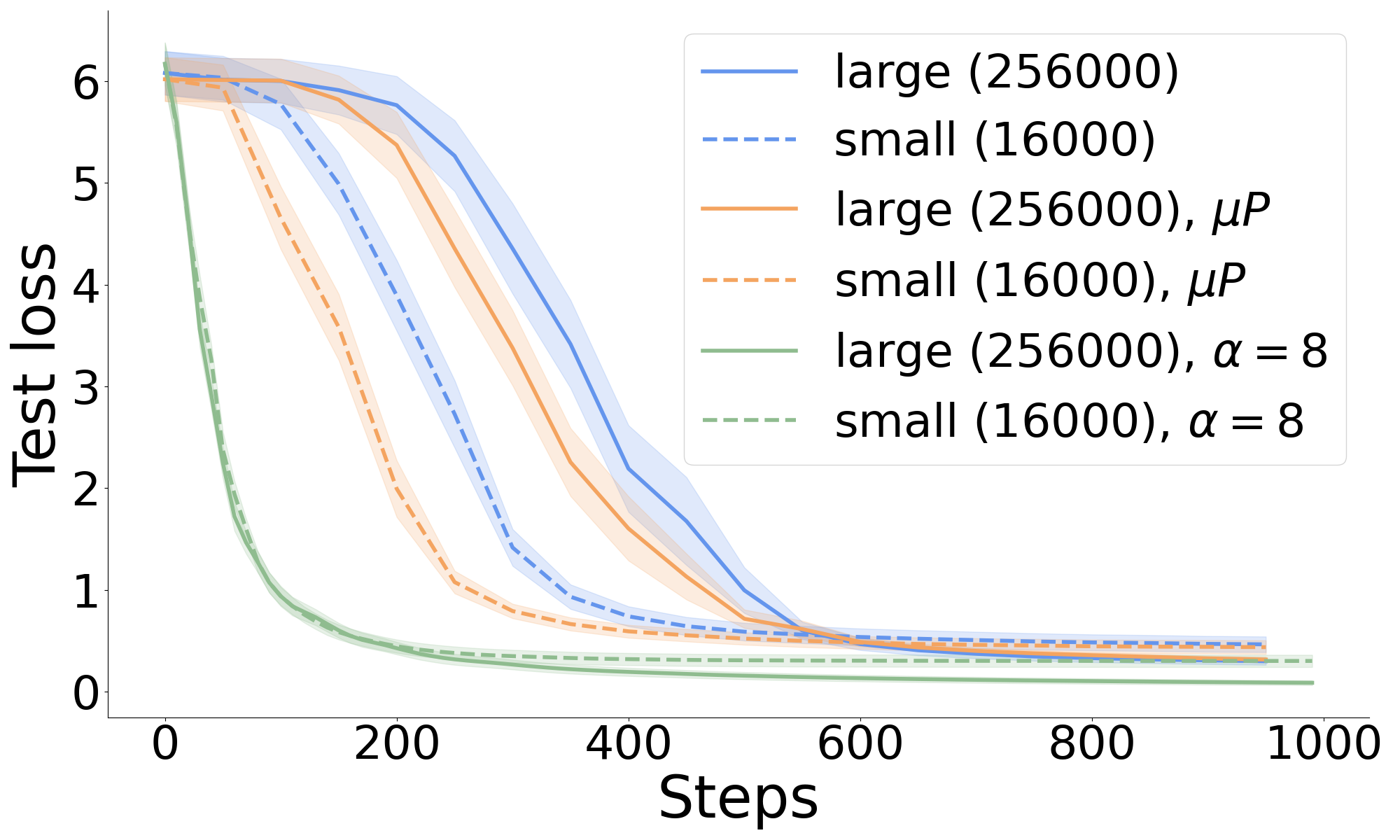}
    \caption{\textbf{Comparison to $\mu P$} \citep{yang2022tensor}.
    \textit{(Left)}
    For $(20,6)$-parity, $\mu P$ and the $\initscale$ scaling both close the small-vs-large gap.
    \textit{(Right)} However, $\mu P$ does not bridge the gap for SIM.
    Results are based on 2-layer width-64 MLPs.
    }
    \label{fig:muP_MLP}
\end{figure}

\subsubsection{Transformer: scaling of \texorpdfstring{$\Wq, \Wk$}{QK matrices}}
\label{sec:transformer_scaling}

The small-vs-large gap is observed in Transformers for both mini-batch updates (\Cref{fig:transformer}) and full-batch updates (\Cref{fig:transformer_parity_gd}).
Similar to MLP, we find that the gap can be attributed to favorable optimization biases induced by small-set training, and proper interventions can reduce the gap.
We focus on interventions on $\Wq, \Wk$ matrices which directly affect attention,
and provide ablation studies on other parameters in \Cref{app:ablation}.

The small-vs-large gap is reduced when increasing the \textit{learning rate} for $\Wq, \Wk$ or increasing the \textit{initialization scale} of $\Wq, \Wk$
for parity (\Cref{fig:transformer_scaling_parity}). 
Specifically, for the learning rate intervention, we tune a separate learning rate for $\Wq, \Wk$ while keeping the learning rate for other parameters fixed at the optimal value for the global learning rate.
In \Cref{fig:transformer_scaling_parity:parity_qk_lr}, the optimal QK learning rate is 3.6x that of the optimal global learning rate for large-set training (i.e. using online mini-batches),
2.5x for the small-set training with 100 epochs,
and 1.25x for the small-set 6-phase training.
For QK initialization, large-set training benefits greatly from increased initialization scale; the optimal scale is 8x the default initialization, which sharpens the attention logits by a factor of 64 when considering both $\Wq, \Wk$. 
In contrast, small-set training gets much more moderate improvements from tuning the initialization scale: 100-epoch training gets the best speedup when scaling the default by 2x, while 6-phase training sees no benefit from scaling changes. 
Similar phenomena are observed in SIM and ICL as well (\Cref{fig:transformer_scaling_sim_icl}), where tuning the QK initialization narrows (but not necessarily closes) the small-vs-large gap.

\paragraph{Connection to and implications of QK normalization}
The attention logits scaling adjustment mentioned above is reminiscent of QK normalization, which normalizes the key and query vectors to a sphere~\citep{henry2020query0key,dehghani23scaleViT,zhai2023stabilizing}.
For example, the commonly adopted RMSNorm imposes a scaling of $\asymp \sqrt{\dim}$ compared to default non-normalized version.
QK normalization is now common practice for preventing training instabilities, as it constrains the attention logit magnitudes~\citep{dehghani23scaleViT,zhai2023stabilizing,wortsman2023smallscale}.

Our results suggest that its implications on optimization may be more nuanced.
On the plus side, for sparse parity and SIM, QK normalization can significantly speed up optimization for online training, removing the initial saddle-like behavior and almost entirely closing the small-vs-large gap (\Cref{fig:transformer_scaling_parity:parity_qk_norm}, \Cref{fig:transformer_scaling:sim_qk_norm}).
However, such acceleration is not universal across tasks.
In particular, QK normalization hurts both online and subset training for ICL (\Cref{fig:transformer_scaling:icl_qk_norm}) and mod addition (\Cref{fig:mod_qk_norm}).
Moreover, when data is repeated during training, QK normalization can exacerbate overfitting, which is observed in both mini-batch (\Cref{fig:transformer_scaling_parity:parity_qk_norm}, 6-phase) and full-batch training (\Cref{fig:transformer_parity_gd_qk_norm});
a train-test comparison is provided in \Cref{fig:transformer_qk_norm_overfit}.
Understanding the exact mechanism is left as future work.


\begin{figure*}[t]
    \centering
    \begin{subfigure}[t]{0.32\linewidth}
        \centering
        \includegraphics[width=\linewidth]{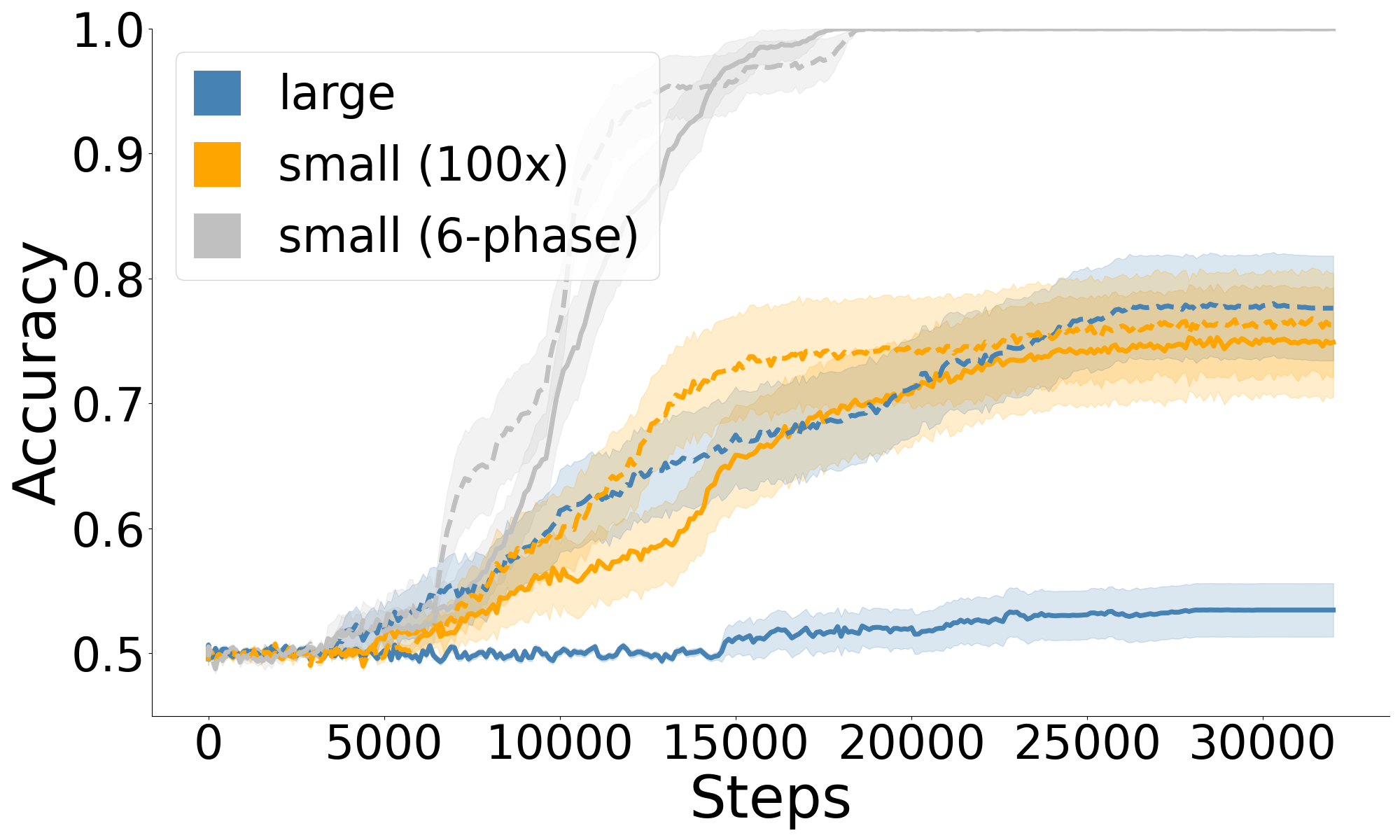}
        \caption{QK learning rate}
        \label{fig:transformer_scaling_parity:parity_qk_lr}
    \end{subfigure}
    \begin{subfigure}[t]{0.32\linewidth}
        \centering
        \includegraphics[width=\linewidth]{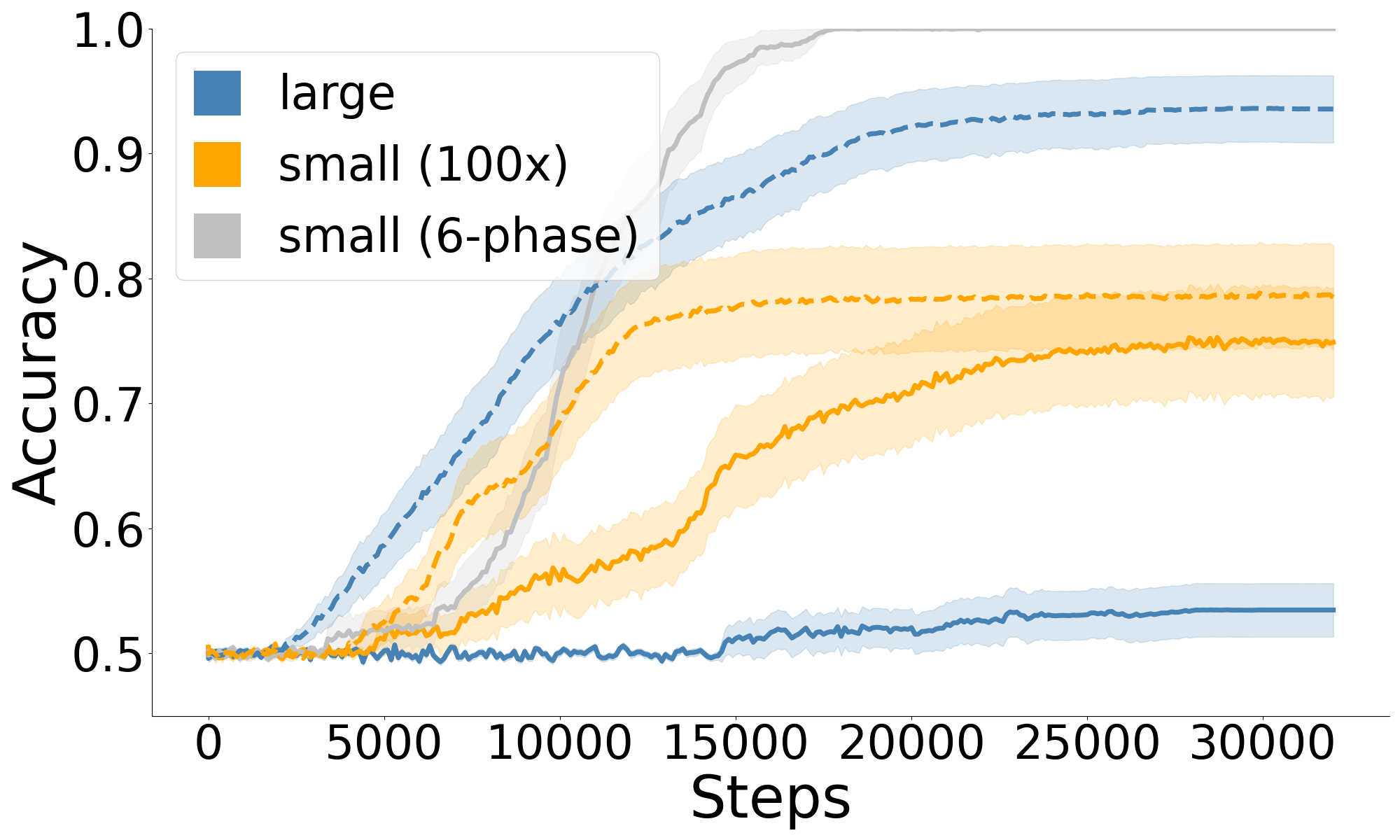}
        \caption{QK initialization scale}
        \label{fig:transformer_scaling_parity:parity_qk_init}
    \end{subfigure}
    \begin{subfigure}[t]{0.32\linewidth}
        \centering
        \includegraphics[width=\linewidth]{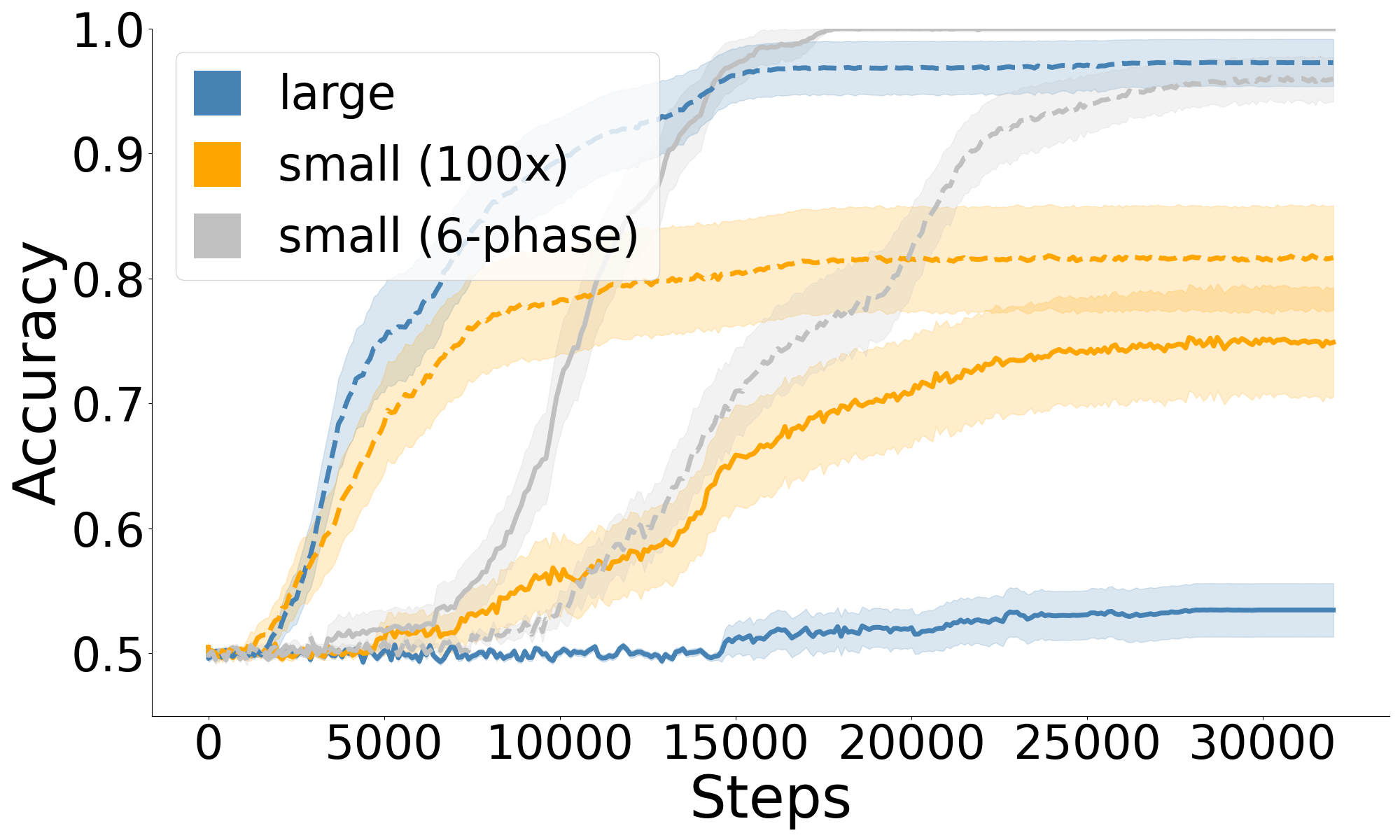}
        \caption{QK normalization}
        \label{fig:transformer_scaling_parity:parity_qk_norm}
    \end{subfigure}
    \caption{\textbf{Small-vs-large gap in Transformers can be reduced with interventions on $W_Q, W_K$.}
    Results are shown for $(20, 6)$-parity with two-layer Transformers;
    similar results are also observed for SIM and ICL (\Cref{fig:transformer_scaling_sim_icl}).
    The small-vs-large gap is reduced by 
    (a) tuning learning rate on $W_Q, W_K$ separately than the rest of the parameters;
    (b) using the optimal initialization scaling of $W_Q, W_K$;
    (c) using QK-layernorm.
    Solid lines are the default setup where we observe clear small-vs-large gaps, and dashed lines are interventions that reduce or even revert the small-vs-large gap.
    Interventions are not helpful especially for 6-phase training:
    for (b), the optimal initialization scaling is the default one, hence we omit the corresponding dashed line;
    for (c), QK layernorm slows down 6-phase training. 
    }
    \label{fig:transformer_scaling_parity}
\end{figure*}

\begin{figure*}[t]
    \centering
    \begin{subfigure}[t]{0.24\linewidth}
        \centering
        \includegraphics[width=\linewidth]{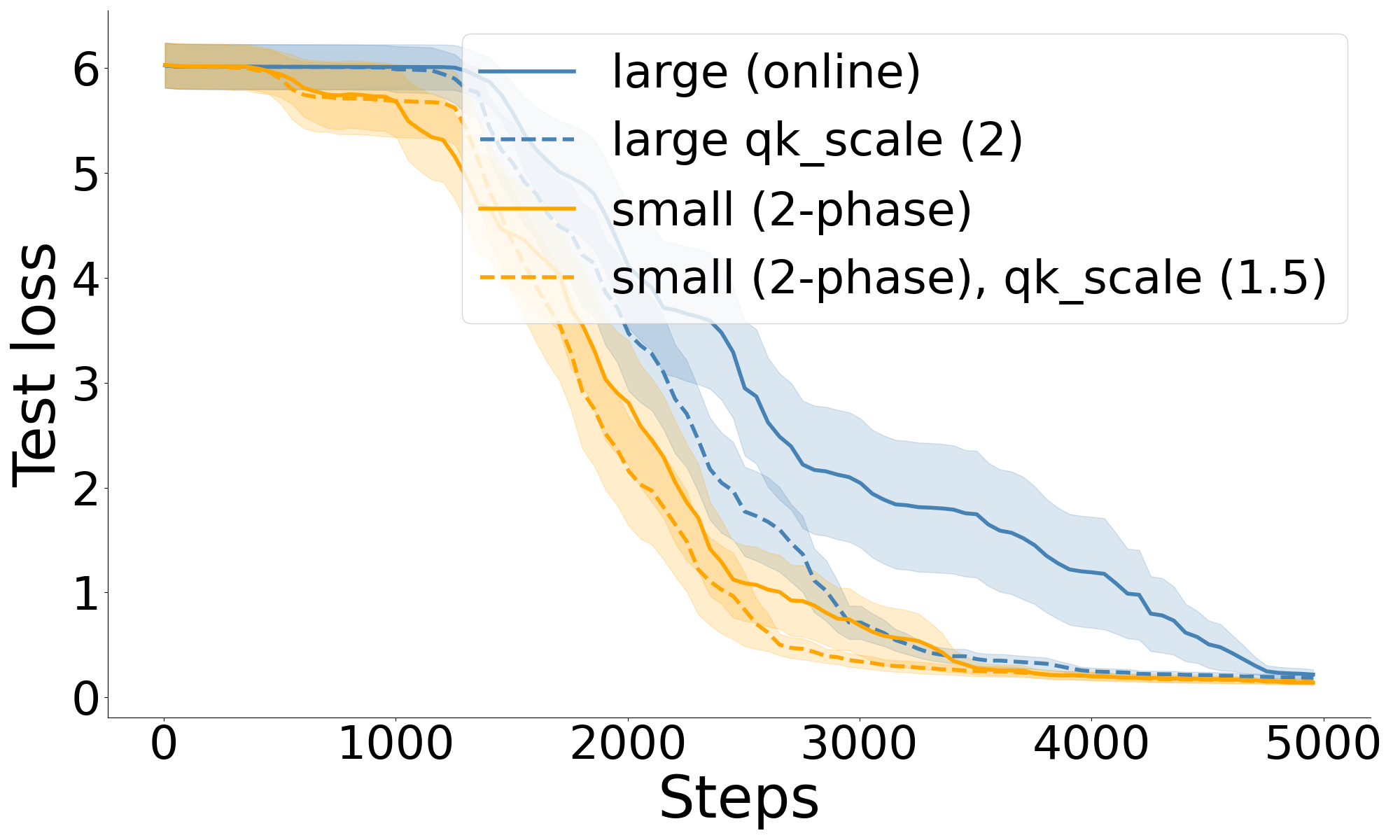}
        \caption{SIM QK scale}
        \label{fig:transformer_scaling:sim_qk_scale}
    \end{subfigure}
    \begin{subfigure}[t]{0.24\linewidth}
        \centering
        \includegraphics[width=\linewidth]{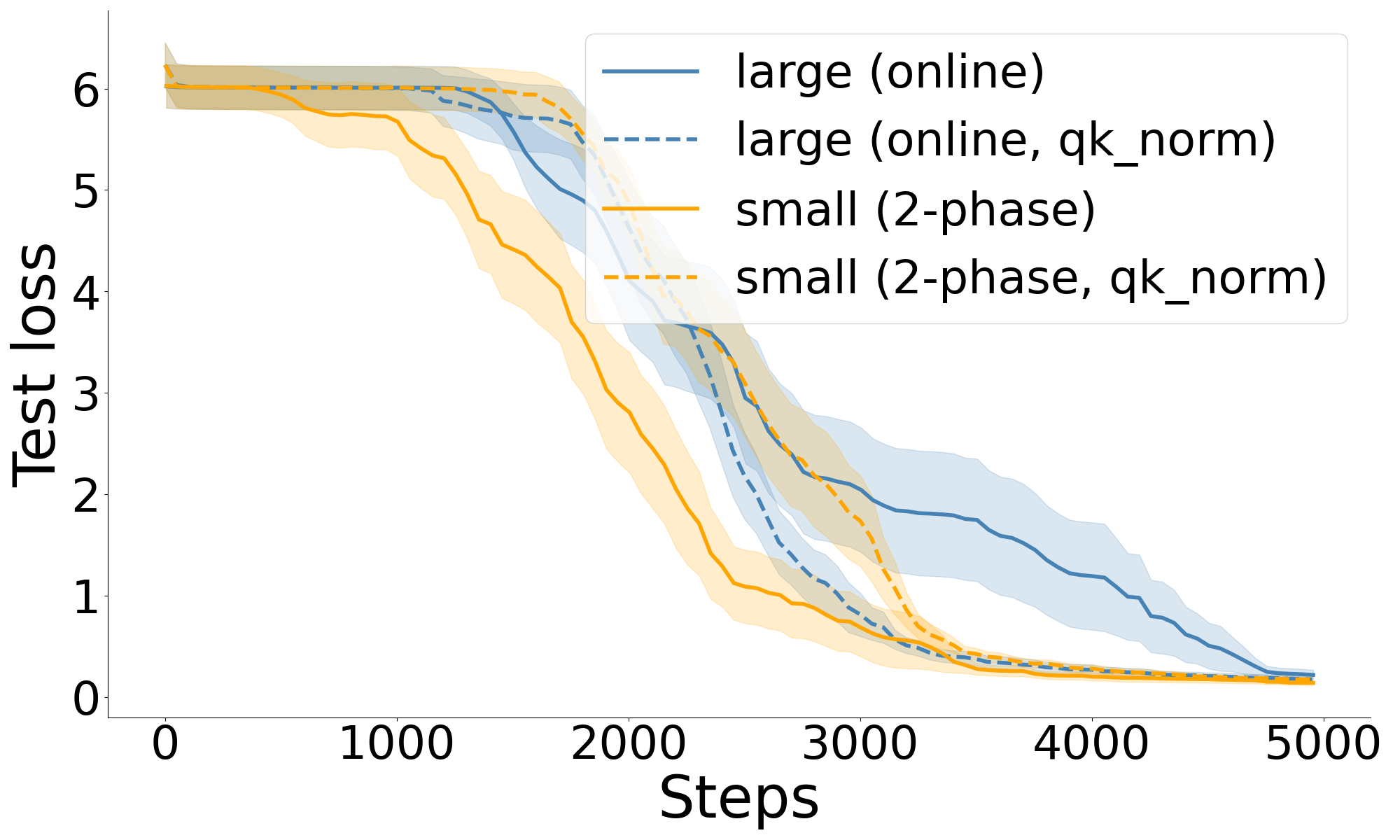}
        \caption{SIM QK normalization}
        \label{fig:transformer_scaling:sim_qk_norm}
    \end{subfigure}
    \begin{subfigure}[t]{0.24\linewidth}
        \centering
        \includegraphics[width=\linewidth]{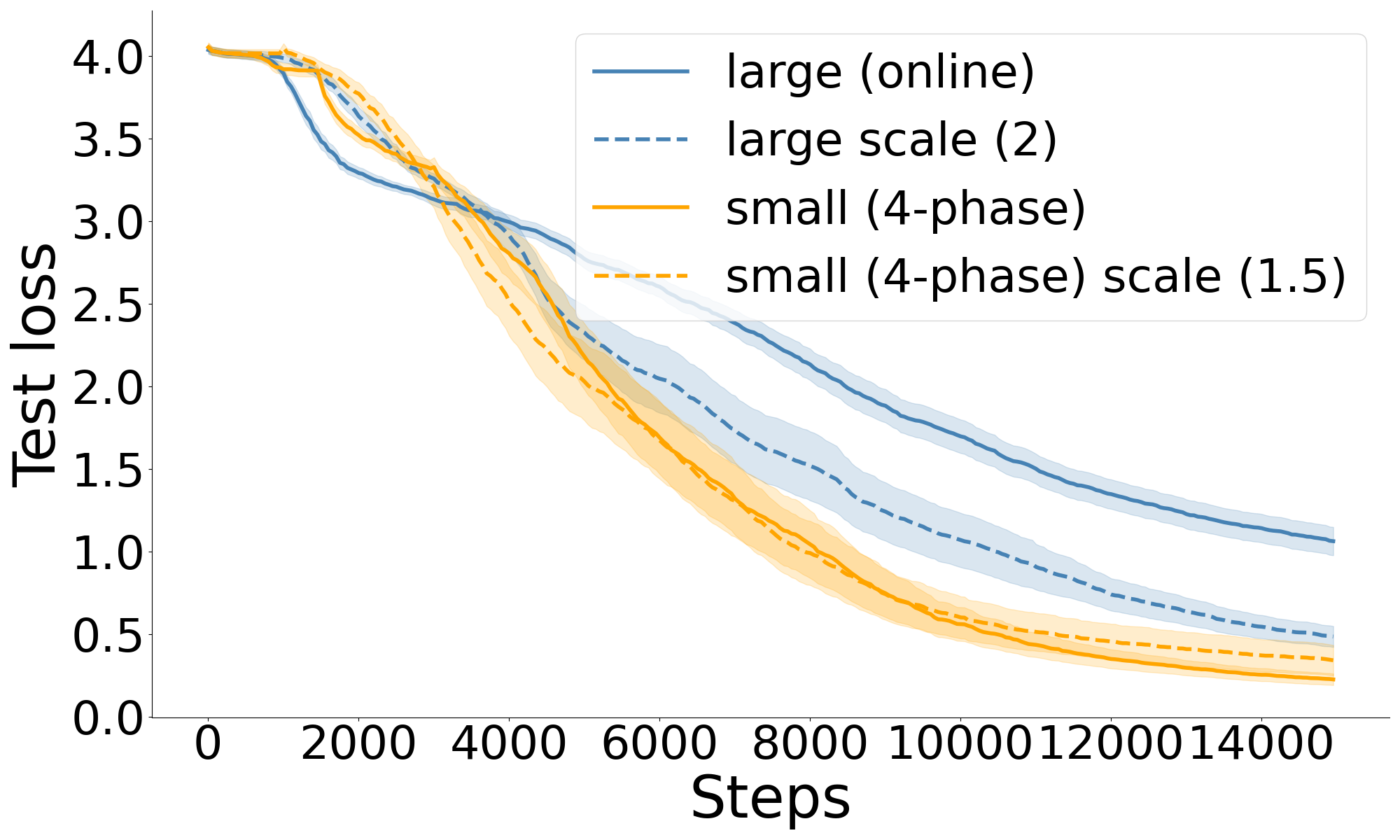}
        \caption{ICL QK scale}
        \label{fig:transformer_scaling:icl_qk_scale}
    \end{subfigure}
    \begin{subfigure}[t]{0.24\linewidth}
        \centering
        \includegraphics[width=\linewidth]{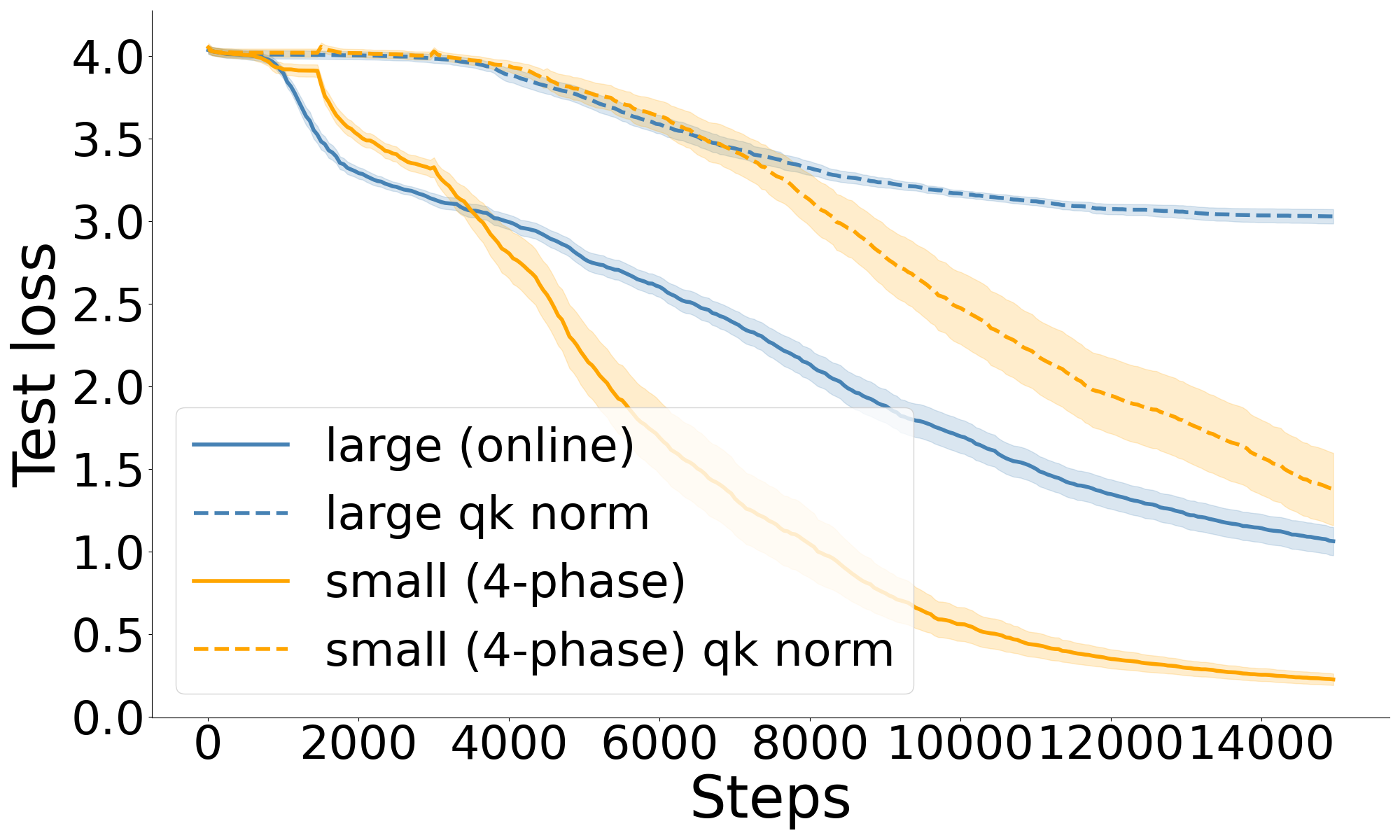}
        \caption{ICL QK normalization}
        \label{fig:transformer_scaling:icl_qk_norm}
    \end{subfigure}
    \caption{\textbf{Small-vs-large gap in Transformers can be reduced with interventions on $W_Q, W_K$},
    for (a) SIM and (b) in-context learning regression trained using two-layer Transformers.
    Solid lines are the default setup where we observe clear small-vs-large gaps, and dashed lines are interventions that reduce or even revert the small-vs-large gap.
    }
    \label{fig:transformer_scaling_sim_icl}
\end{figure*}

\section{Discussions}
\label{sec:discussions}

This work studies the small-vs-large gap, a phenomenon that under a fixed compute budget, repeating on smaller dataset outperforms training on larger dataset.
We identify the sampling bias from smaller datasets as a key mechanism underlying this effect, which helps adjust the relative norms across layers.
We formalize this mechanism theoretically and substantiate it with empirical results.
Notably, across tasks and for both MLP and Transformers, we find that the small-vs-large gap persists even when training on randomly labeled data, and that proper choices of initialization scale and layer-wise learning rates can substantially reduce or even close the gap.

Beyond the small-vs-large gap itself, our results suggest that dataset size and repetition can be leveraged as favorable optimization biases.
In particular, repeated training on smaller subsets can act as an implicit layerwise preconditioner that steers models into a more favorable feature-learning regime, partially substituting for carefully tuned initialization, normalization, or layerwise learning rates. This perspective motivates training pipelines that explicitly separate an early ``optimization-shaping'' phase from a later ``coverage/generalization'' phase, and suggests that compute-optimal training may require jointly considering data, optimizer, and parameterization.

Below we discuss other implications of our findings.

\paragraph{How to enlarge the small-vs-large gap}
We investigate how the small-vs-large gap changes as we scale along different axes.
The following factors are found to widen the gap:
\begin{itemize}[leftmargin=*]
    \item \textit{Increasing model depth.}
    The small-vs-large gap is due to the relative norm growth across layers,
    which suggests that the gap should be more pronounced when the model depth increases as the scaling effect will percolate exponentially in depth.
    Our empirical findings confirm this, where the small-vs-large gap widens for both MLP (\Cref{fig:mlp_depth}) and Transformers (\Cref{fig:transformer_depth}).

    \item \textit{Reducing model width.}
    We find that the small-vs-large gap is the widest at a small model width (e.g., width 64), as shown in \Cref{fig:mlp_width} for parity and \Cref{fig:mlp_width_sim} for SIM.
    We hypothesize this is because models with standard initialization approach the kernel regime as the width increases, suggesting that the small-vs-large gap is specific to feature learning.

    \item \textit{Increasing task complexity.}
    More difficult tasks lead to a wider small-vs-large gap, across tasks and architectures.
    \Cref{fig:task_complexity_sim} shows SIM results on MLP trained with full-batches, where the gap is wider on $\dim=80$ than $\dim=40$.
    \Cref{fig:task_complexity_parity} shows parity results on Transformers trained with mini-batch updates, where $(20, 6)$-parity sees a bigger gap than $(10, 6)$-parity.

    \item  \textit{Smoother transition between training phases.}
    Compared to the 2-phase analysis in \Cref{sec:theory_bias}, growing the repeating subset size more gradually helps improve training and hence increases the small-vs-large gap.
    In \Cref{fig:transformer_scaling_parity}, training transformer using 6-phase training greatly enlarges the gap comparing to 1-phase training in parity.

\end{itemize}

\begin{figure*}
    \centering
    \begin{subfigure}{0.48\linewidth}
        \includegraphics[width=0.49\linewidth]{figs/jingwen_figs/mlp_gd_m64_sim.png}
        \includegraphics[width=0.49\linewidth]{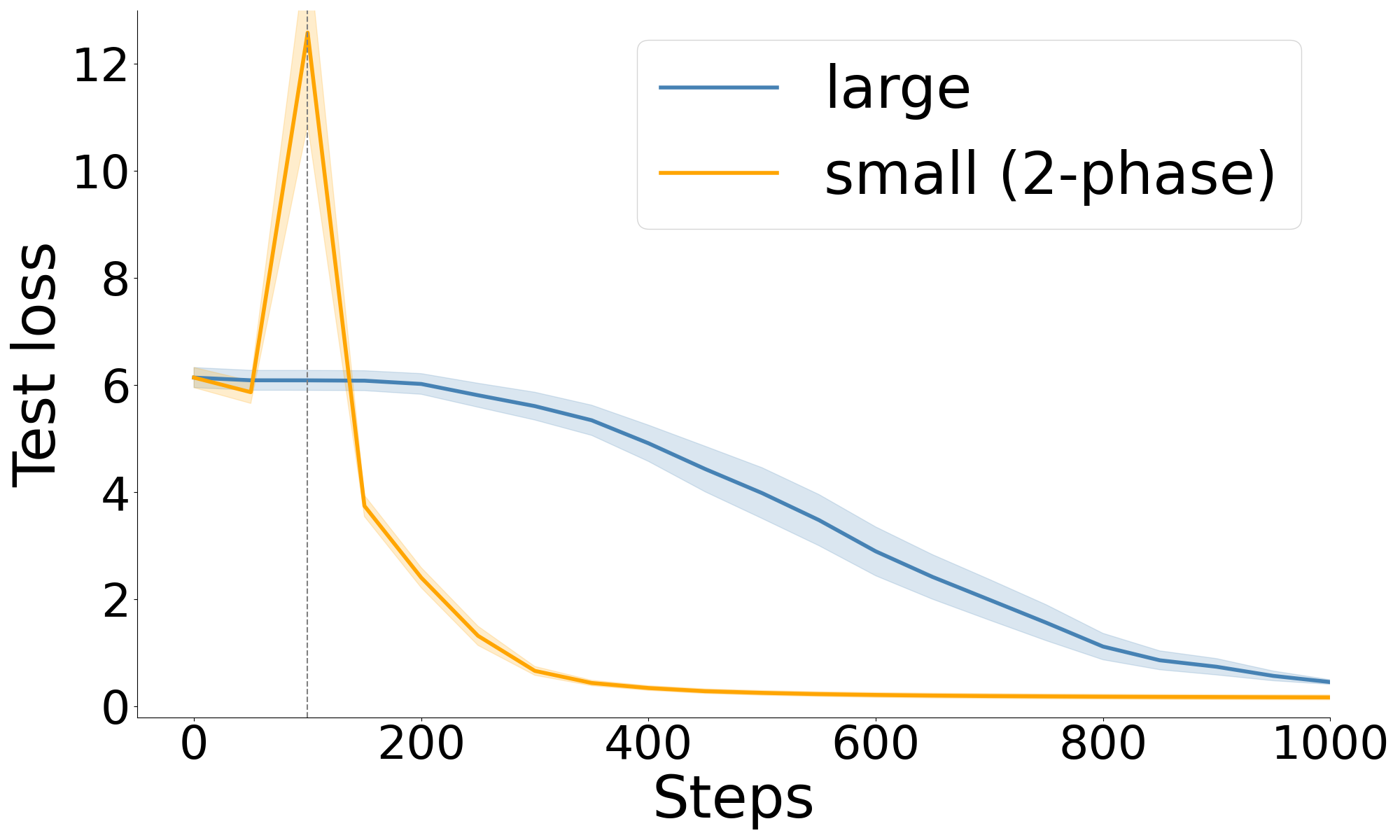}
        \caption{
        MLP on SIM with dimension $d=40$ (left) and $d=80$ (right), trained with GD.
        }
        \label{fig:task_complexity_sim}
    \end{subfigure}
    \hfill
    \begin{subfigure}{0.48\linewidth}
        \centering
        \includegraphics[width=0.49\linewidth]{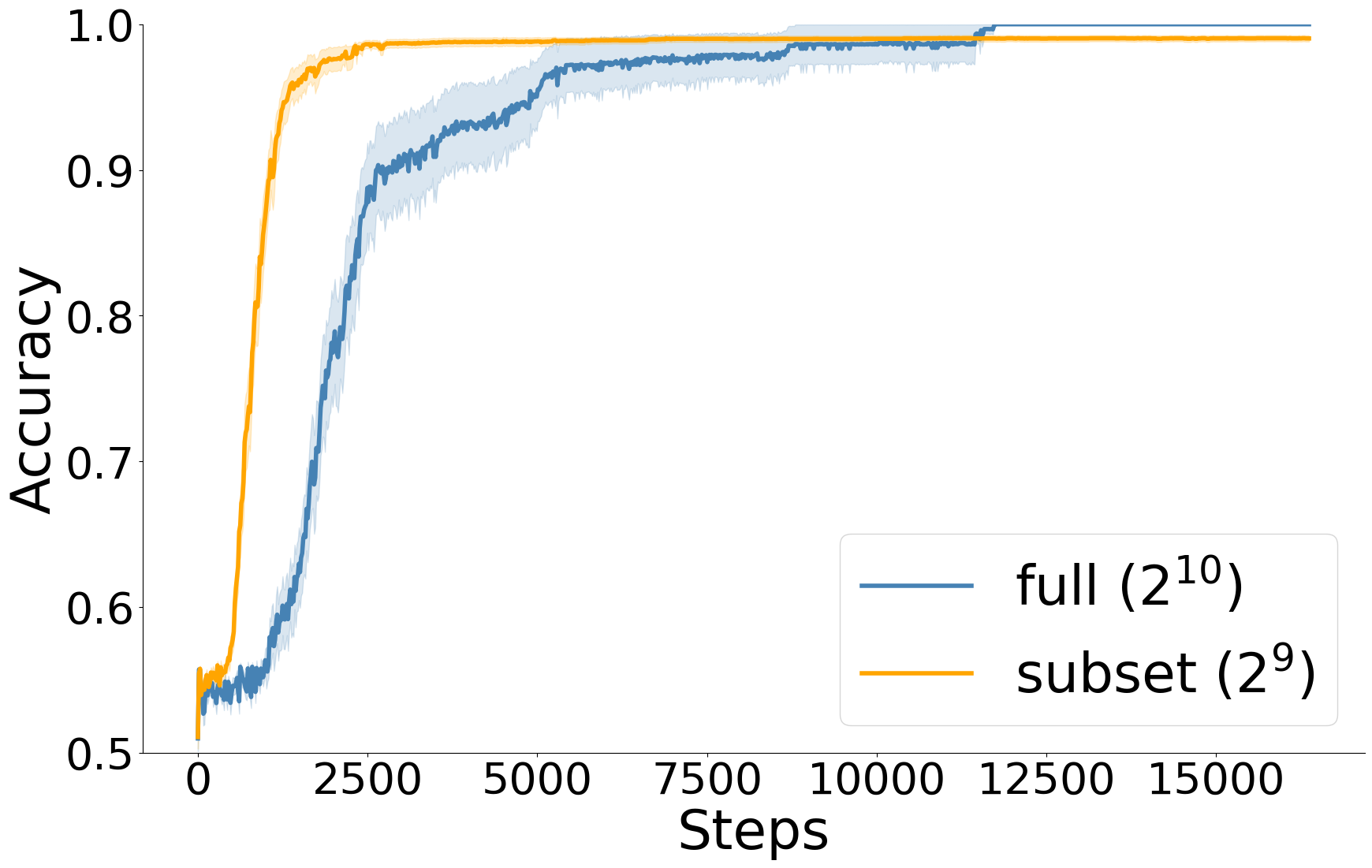}
        \includegraphics[width=0.49\linewidth]{figs/sgd_transformer_d20_k6.png}
        \caption{
        Transformer on parity with dimension $d=10$ (left) and $d=20$ (right), trained with batched AdamW.}\label{fig:task_complexity_parity}
    \end{subfigure}
    \caption{\textbf{Increasing the task complexity} increases the small-vs-large gap.}
    \label{fig:task_complexity}
\end{figure*}

\begin{figure*}[t]
    \centering
    \begin{subfigure}{\textwidth}
        \centering
        \begin{minipage}{0.24\linewidth}
            \centering
            \includegraphics[width=\linewidth]{figs/MLP_GD_d20k6_m64_full_vs_subset.png}
        \end{minipage}
        \hfill
        \begin{minipage}{0.24\linewidth}
            \centering
            \includegraphics[width=\linewidth]{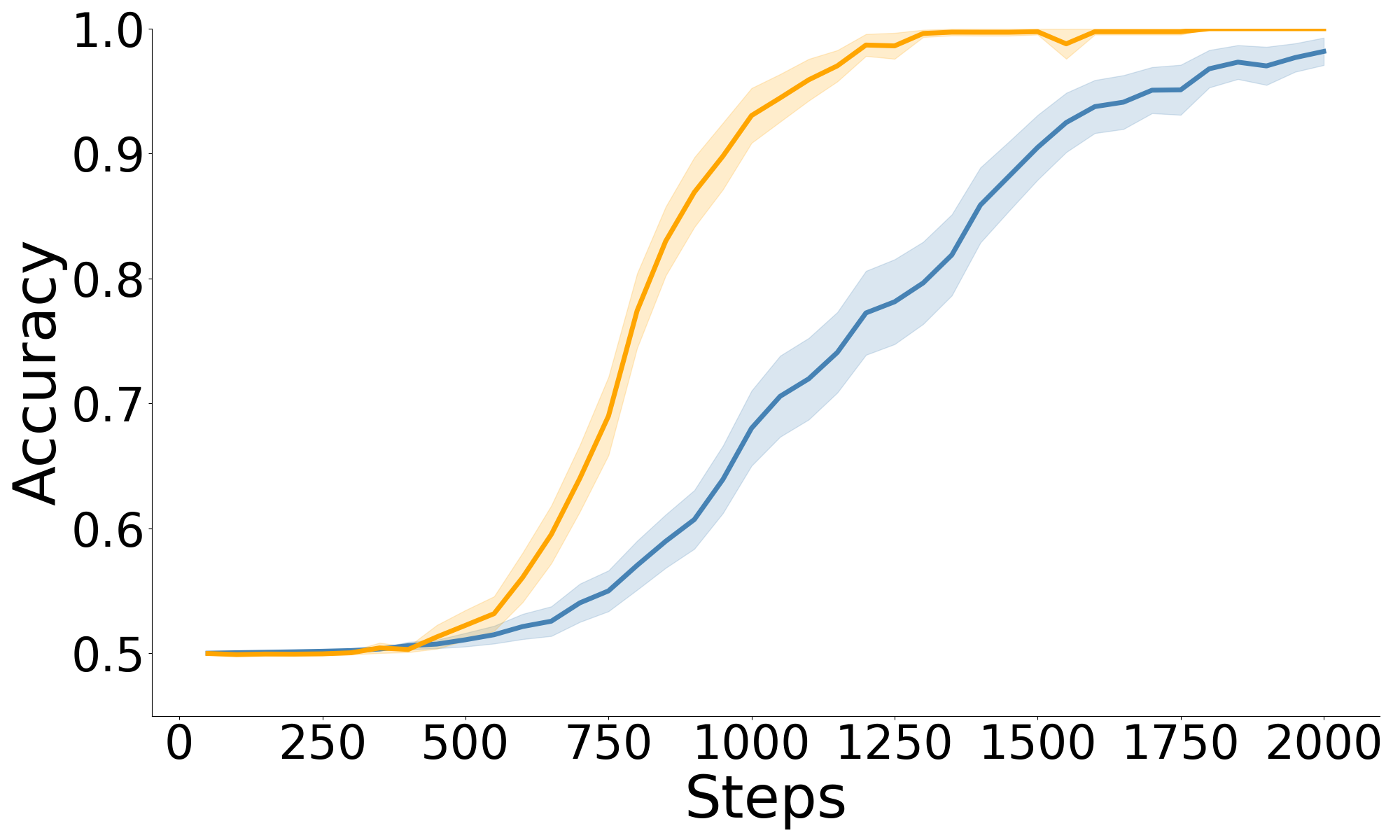}
        \end{minipage}
        \hfill
        \begin{minipage}{0.24\linewidth}
            \centering
            \includegraphics[width=\linewidth]{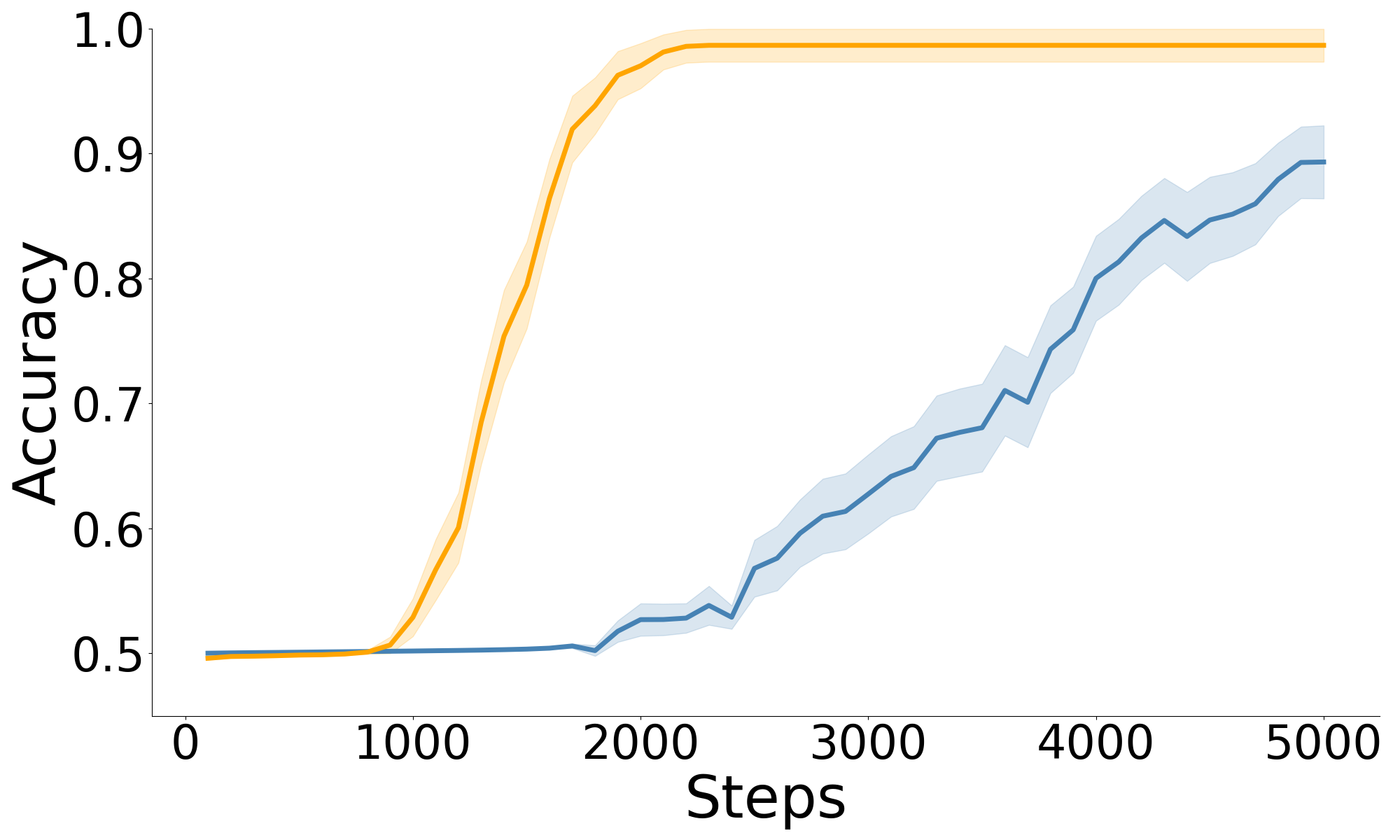}
        \end{minipage}
        \hfill
        \begin{minipage}{0.24\linewidth}
            \centering
            \includegraphics[width=\linewidth]{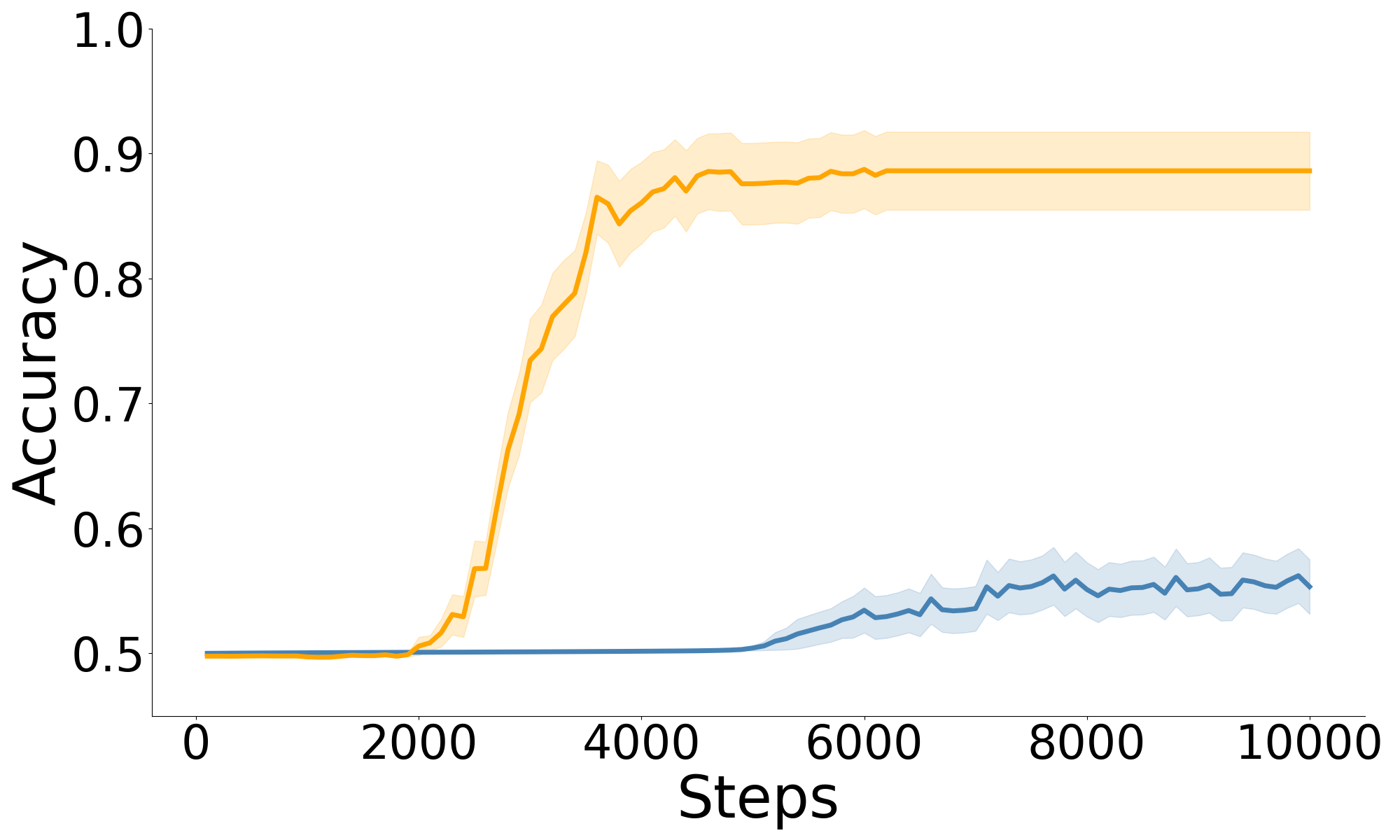}
        \end{minipage}
        \caption{MLP on $(20, 6)$-parity, across \textit{varying depths} (2, 4, 6, 8).}
        \label{fig:mlp_depth}
    \end{subfigure}


    \begin{subfigure}{\textwidth}
        \centering
        \begin{minipage}{0.24\linewidth}
            \centering
            \includegraphics[width=\linewidth]{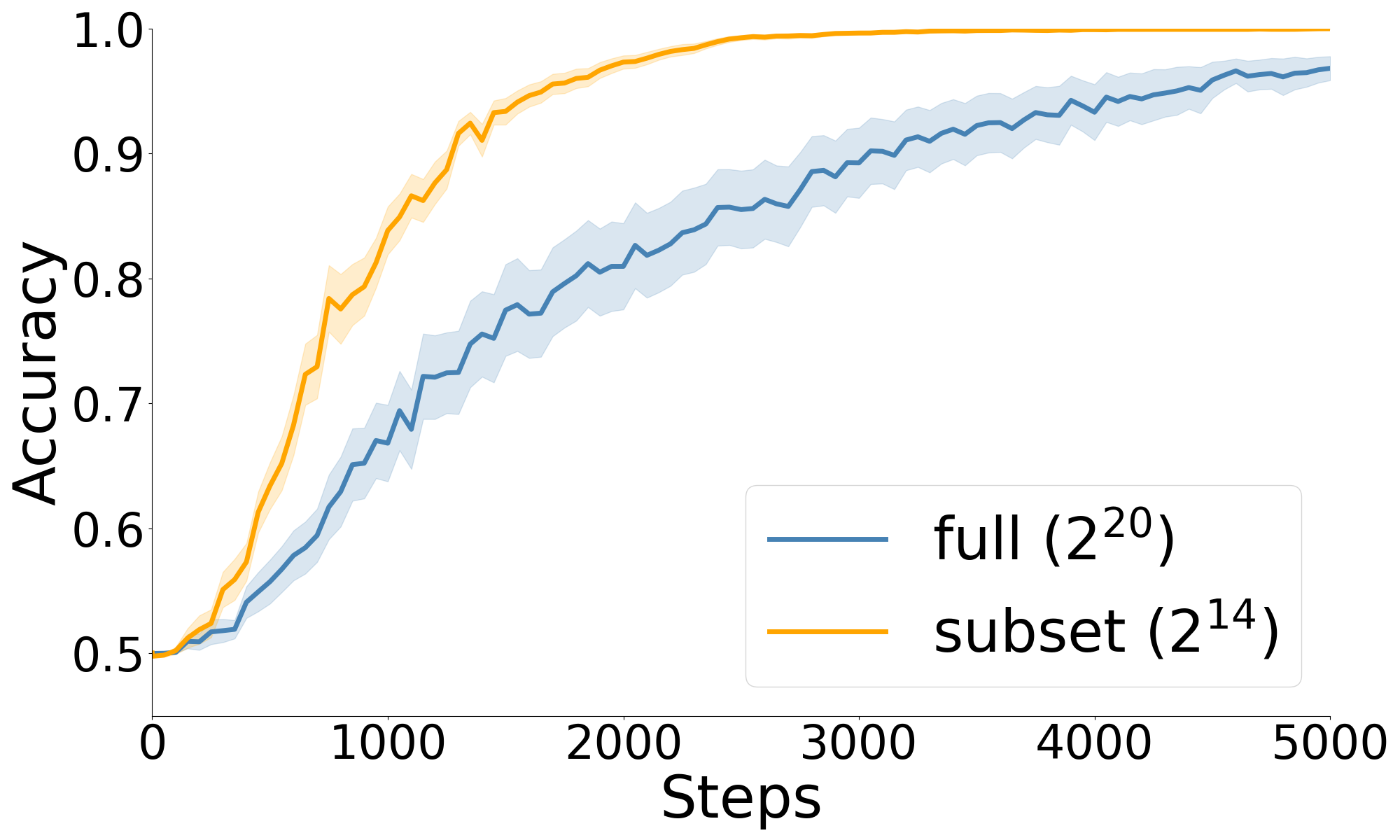}
        \end{minipage}
        \hfill
        \begin{minipage}{0.24\linewidth}
            \centering
            \includegraphics[width=\linewidth]{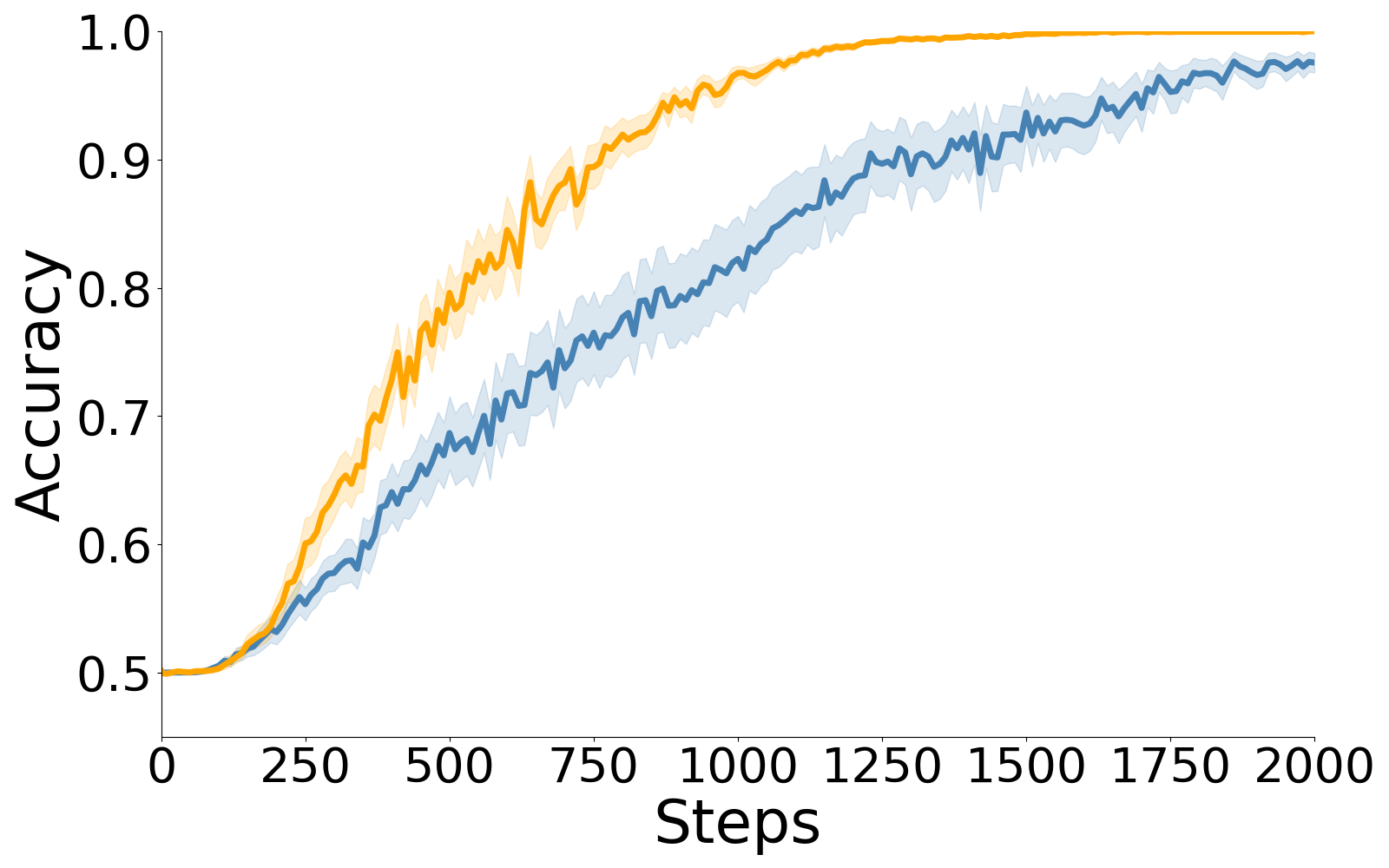}
        \end{minipage}
        \hfill
        \begin{minipage}{0.24\linewidth}
            \centering
            \includegraphics[width=\linewidth]{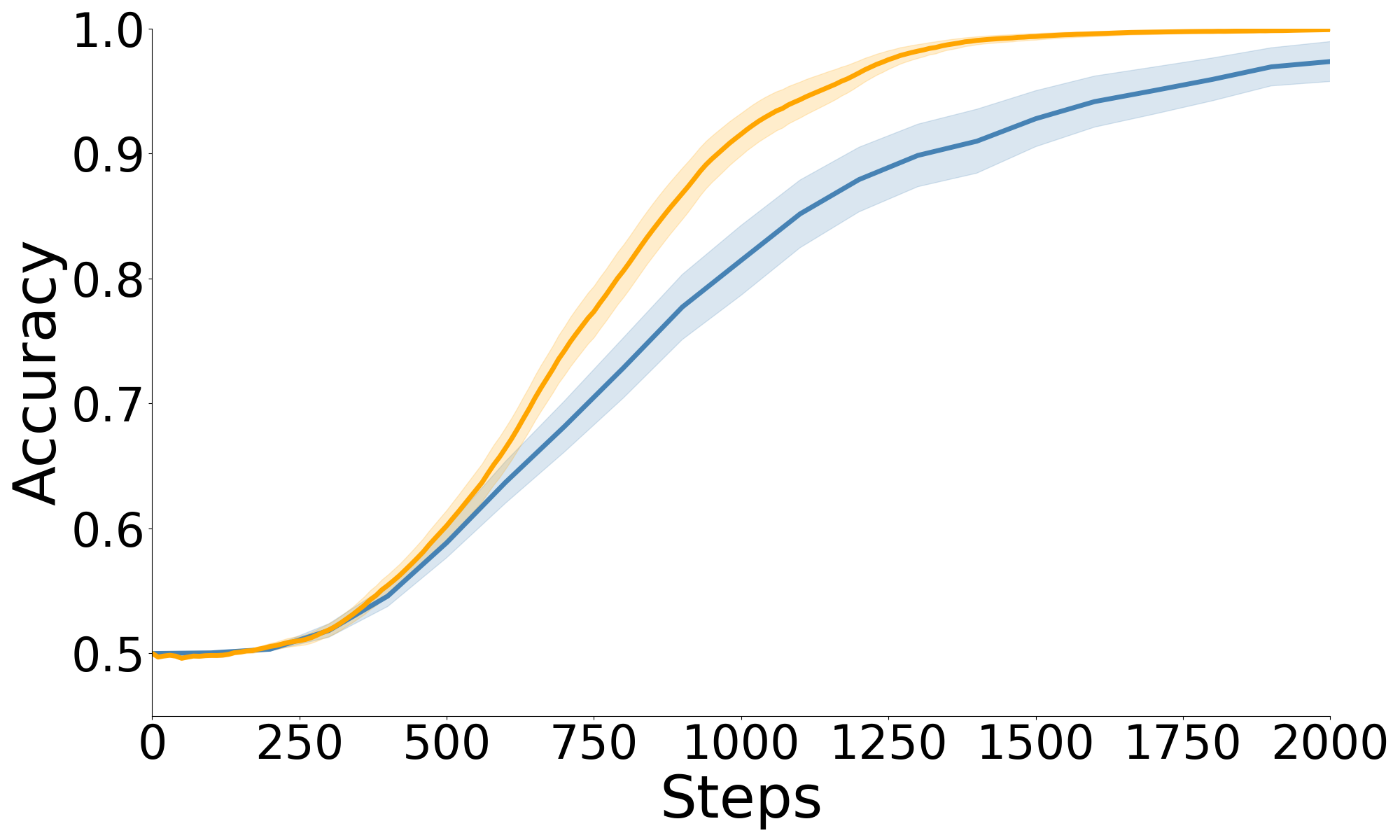}
        \end{minipage}
        \hfill
        \begin{minipage}{0.24\linewidth}
            \centering
            \includegraphics[width=\linewidth]{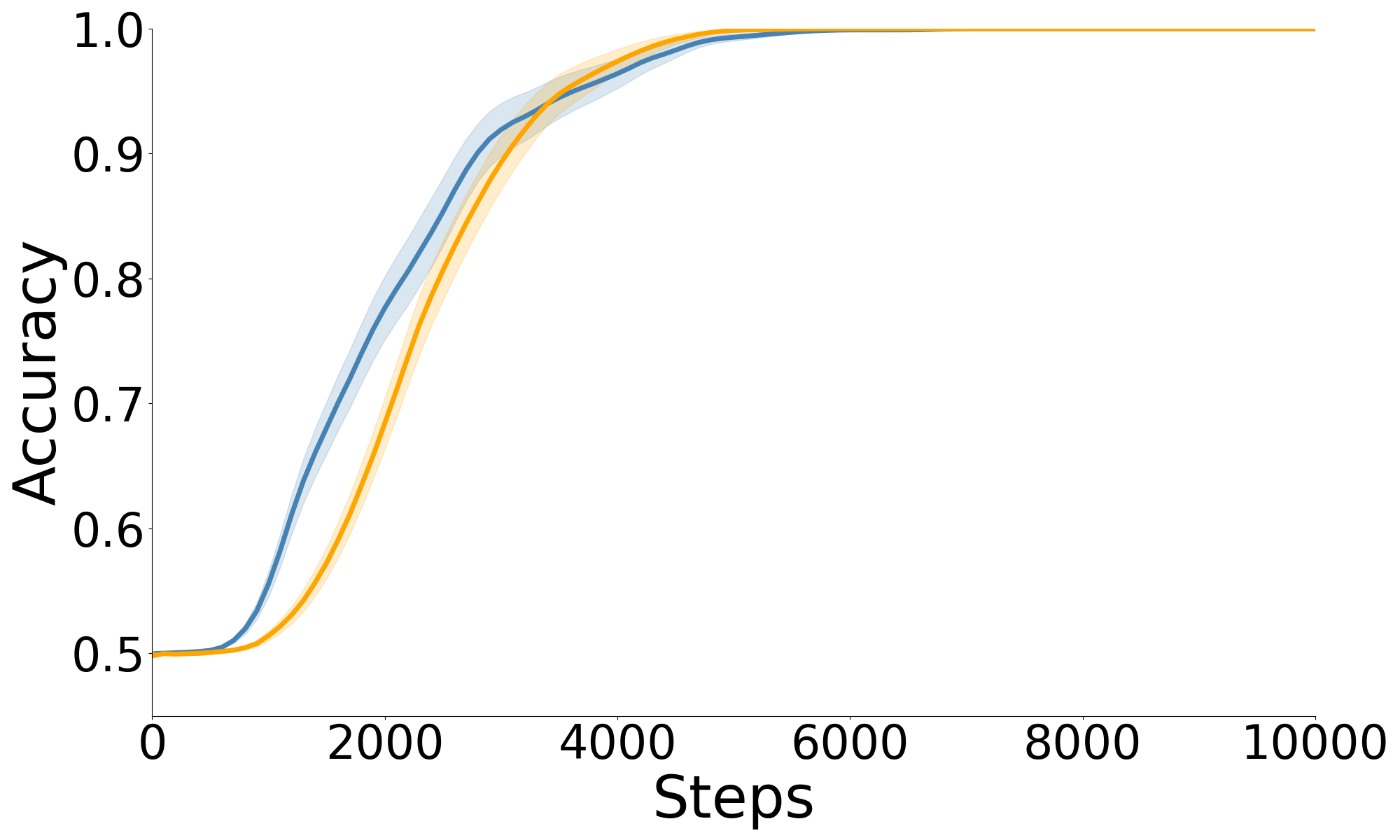}
        \end{minipage}
        \caption{MLP on $(20, 6)$-parity, across \textit{varying widths} (32, 64, 256, 1024).}
    \label{fig:mlp_width}
    \end{subfigure}
    \caption{\textbf{Model sizes affect the small-vs-large gap}.
    Increasing model depth widens the gap (\textit{top row}),
    whereas increasing the model width reduces the gap (\textit{bottom row}).
    Results are shown on sparse parity learned with MLP using full-batch updates;
    similar results are also observed on SIM with MLP (\Cref{fig:mlp_width_sim}) and parity with Transformers (\Cref{fig:transformer_depth}).
    }
\end{figure*}


\paragraph{When is data repetition (not) helpful?}
Even though the small-vs-large gap has been observed across various choice of tasks, architectures, and optimizers, we do not believe it to exist universally.
A classic example is linear regression, which does not have a small-vs-large gap.
While prior work has shown that multi-epoch training improves the statistical complexity for linear regression~\citep{pillaudvivien2018statistical,lin2025improved}, no result has suggested an improvement in terms of optimization steps or the compute cost.
We hypothesize that the speedup from data repetition is exclusive for \textit{non-convex} optimization, and a \textit{computational-statistical gap} is likely required.
Connecting to practice, especially the era of large-language model training, we do not expect this phenomena to be directly observable in many real-world language corpora, due to both (near) duplicates widely present in the web datasets, and the lack of clear structure in free-form texts.
However, we hypothesize that the small-vs-large gap may be of interest for more \textit{structured tasks} such as formal reasoning.
As empirical evidence, the concurrent work \citep{kopiczko2026datarepetitionbeatsdata} observes the small-vs-large gap in LLM post-training for math and coding tasks.

\section*{Acknowledgment}
We are especially grateful to Depen Morwani, who contributed substantially to the early stages of this work and helped shape the project direction.
We thank Alex Damian, Aditi Raghunathan, Andrej Risteski, Daniel Hsu, Eric Wong, and Samuel Deng for helpful discussions and feedback.
JL is supported by NSF award DMS-2502259 and ONR N00014-22-1-2713.
SG was supported in part by an AI2050 Early Career Fellowship from Schmidt Sciences.
BL was supported by the Kempner Fellowship from the Kempner Institute at Harvard.
This work was enabled in part by a gift from the Chan Zuckerberg Initiative Foundation to establish the Kempner Institute for the Study of Natural and Artificial Intelligence.
Part of this work was done when the authors were participating in the Special Year on Large Language Models and Transformers and the Program on Modern Paradigms in Generalization at the Simons Institute for the Theory of Computing at UC Berkeley.


\bibliography{references}
\bibliographystyle{abbrvnat}


\renewcommand{\theequation}{\thesection.\arabic{equation}}

\newpage
\appendix
\onecolumn

\tableofcontents
\newpage

\section{Proof details}
\label{app:proof_details}

Motivated by the empirical evidence in \Cref{sec:results,sec:empirical_evidence,sec:understanding}, we analyze a minimal setting, a single quadratic neuron on 2-sparse parity under a two-phase schedule: phase~1 runs GD on a fixed dataset of size $\nsamples$, and phase~2 switches to the full population.

\subsection{Setup}
We assume the input $x\in\{\pm 1\}^{\dim}$ is sampled uniformly from the hypercube and the label $y=x_1x_2\in\{\pm 1\}$ is a 2-sparse parity. We study the quadratic neuron
\[
f(x)=\frac12 a (w^\top x)^2,
\]
trained with correlation loss $\ell(y,\hat y)=-y\hat y$.
Let $w^\star$ denote a global minimizer, where $|w_1^\star| = |w_2^\star| = \frac{1}{\sqrt{2}}$, and $w_i =0$ otherwise.

\paragraph{Projection.} To ensure that the weights are bounded at the solution, we use the following projections:
\begin{itemize}
    \item \textbf{Output weight clipping:} after each update we set $a\leftarrow \mathrm{clip}_{[-1,1]}(a)$.
    \item \textbf{Input weight renormalization:} after each update we set $w\leftarrow w/\|w\|_2$.
\end{itemize}

We consider 2-phase training, where Phase 1 uses a randomly sampled dataset of size $\nsamples$, and Phase 2 uses the full population.



\paragraph{Phase~1 (fixed batch of size $\nsamples$).}
Fix a dataset $\{(x^{(s)},y^{(s)})\}_{s=1}^{\nsamples}$ and define the empirical moment matrix
\[
\hM \defeq \widehat{\E}[yxx^\top]
\defeq \frac1\nsamples \sum_{s=1}^{\nsamples} y^{(s)} x^{(s)} x^{(s)\top}.
\]
One step of (projected) gradient descent on this fixed batch takes the form
\begin{align}
    \at[t+1]
    &= \mathrm{clip}_{[-1,1]}\Big(\at[t] + \frac{\lr}{2}  (\wt[t])^\top \hM \wt[t]\Big),
    \label{eq:final_a_update}
    \\
    \wt[t+1]
    &= \frac{\wt[t] + \lr \at[t] \hM \wt[t]}{\bigl\|\wt[t] + \lr \at[t] \hM \wt[t]\bigr\|_2}.
    \label{eq:final_w_update}
\end{align}
Let's define
\[
\qt[t]\defeq (\wt[t])^\top \hM \wt[t]
\]
which will be useful in the analysis.

\paragraph{Phase~2 (population).}
After phase~1, we switch to population gradients, replacing $\hM$ by the population matrix
\[
\mM \defeq \E[yxx^\top] = e_1 e_2^\top + e_2 e_1^\top.
\]
in the above updates.

We restate \Cref{thm:2phase} below, which shows that using a smaller $\nsamples$ in Phase 1 improves convergence.

\begin{theorem*}[2-phase training from standard initialization; \Cref{thm:2phase} restated.]
\label{thm:2phase_restated}
    Consider a 2-phase training with $\width > \dim \geq 3$ 
    \footnote{$\dim \geq 3$ is required for the proof of \Cref{lem:final_joint_good_alignment} regarding the probability of the Beta distribution.}
    and learning rate $\lr \leq \frac{1}{2}$. 
    The first phase uses a randomly sampled dataset of size $\dim \leq \nsamples \leq \dim^2$, until $|a| \geq a_\star$ for some $a_\star \in (0, 1)$ where $a_\star \lesssim \frac{1}{(\nsamples \dim)^{1/4}\sqrt{\log (d/\delta)}}$;
    the second phase uses the full population gradient, until reaching a $\hat{w}$ such that $\|\hat{w} - w^\star\|_2 \lesssim \sqrt{\varepsilon}$.
    Let $T_1, T_2$ denote the numbers of steps required in each phase respectively.
    Let $p_{\mathrm{all}} \in (0,1)$ be a universal constant where $p_{\mathrm{all}} = \Theta(1)$.
    \footnote{$p_{\mathrm{all}}$ is formally defined in Lemma~\ref{lem:final_joint_good_alignment}.}
    Then, with probability at least $p_{\mathrm{all}}-\delta$ over the random initialization and the phase-1 samples,
    \begin{align}
        T_1 \lesssim \frac{a_\star\sqrt{\nsamples}}{\lr},
        \quad
        T_2 \lesssim \frac{2}{\lr a_\star}\log\Big(\frac{d}{\varepsilon}\Big).
    \end{align}
    On the same event, with the optimal choice of $a_\star$, the total number of steps is
    $O \left( (\nsamples \dim)^{1/4} \log \left(\frac{d}{\varepsilon} \right) \log^{1/2} \left( \frac{d}{\delta}\right) \right)$.
\end{theorem*}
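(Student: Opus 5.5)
The proof splits into the two phases; the link between them is a ``good event'' on the alignment of $\wt$ with the support, whose probability is $p_{\mathrm{all}}$.

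\textbf{Phase 1.} We show that $a$ grows linearly at rate $\Theta(\lr\nsamples^{-1/2})$ while $w$ barely moves. Write $\hM = \mM + \Delta$, where
\[
\Delta = \frac1\nsamples\sum_s \bigl(y^{(s)}x^{(s)}x^{(s)\top} - \mM\bigr).
\]
The diagonal of $\Delta$ is $\hat\E[y]\, I$, since $x_i^2=1$. So $\qt[t] = 2w_1w_2 + \hat\E[y] + (\text{off-diagonal part})$. The scalar $\hat\E[y]$ is an average of $\nsamples$ independent signs, so by anti-concentration (Paley--Zygmund or Berry--Esseen) $|\hat\E[y]|\ge c\,\nsamples^{-1/2}$ with constant probability. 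The off-diagonal part evaluated at a near-uniform unit vector $\wt[0]$ is a degree-2 chaos. Conditioned on the data, its variance over $\wt[0]$ is about $\|\Delta_{\mathrm{off}}\|_F^2/\dim^2 \approx 1/\nsamples$. It therefore has the same order and cannot systematically cancel the diagonal term; I would lower bound $|\qt[0]|$ via anti-concentration of the whole quadratic form, conditioned on the data. The term $2w_1w_2 = O(1/\dim)$ is negligible since $\nsamples\le\dim^2$.

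To keep $\qt[t]$ stable over the phase, I bound how far $w$ travels. Matrix Bernstein gives $\|\Delta\|_{op}\lesssim \sqrt{\dim\log(\dim/\delta)/\nsamples}$. Each $w$-step moves $w$ by at most $\lr |\at[t]|\,\|\hM\|_{op}$, and $|\at[t]|\le a_\star$ throughout. Summing over $T_1 \approx a_\star\sqrt{\nsamples}/\lr$ steps, the total drift is $\lesssim a_\star^2 \sqrt{\nsamples}\,\|\hM\|_{op}\lesssim a_\star^2\sqrt{\dim\log(\dim/\delta)}$. The change in $q$ is at most (drift)$\cdot\|\hM\|_{op}$. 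Requiring this to be $\ll \nsamples^{-1/2}$ gives exactly $a_\star^2\lesssim 1/(\sqrt{\nsamples\dim}\,\log(\dim/\delta))$, matching the hypothesis on $a_\star$. The same drift bound also preserves the initial alignment $|w_1|,|w_2|\gtrsim 1/\sqrt{\dim}$ up to constants. Hence $\qt[t]$ keeps a fixed sign with magnitude $\gtrsim \nsamples^{-1/2}$, and $|a|$ reaches $a_\star$ within $T_1\lesssim a_\star\sqrt{\nsamples}/\lr$ steps, from $|\at[0]|\lesssim 1/\sqrt{\width}\le a_\star$.

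\textbf{Phase 2.} The update $w\mapsto (I+\lr a\mM)w$, normalized, is power iteration. $\mM$ has eigenvalue $+1$ on $(e_1+e_2)/\sqrt2$, eigenvalue $-1$ on $(e_1-e_2)/\sqrt2$, and $0$ elsewhere. Meanwhile $a$ is driven by $q=2w_1w_2$. I would make the good event of Lemma~\ref{lem:final_joint_good_alignment} require that the sign of $a$ after Phase 1 agrees with the sign of $w_1w_2$ at initialization. This comes from the sign of $\hat\E[y]$ and the Beta law of the initial coordinates, and gives constant probability $p_{\mathrm{all}}$.

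On this event, $a$ is nondecreasing in magnitude and stays $\ge a_\star$. The component of $w$ along the matching eigenvector grows by the factor $(1+\lr a_\star)$ relative to the others each step. It starts at $\Omega(1/\sqrt{\dim})$, so the ratio of the off-component to the on-component drops from $\sqrt{\dim}$ to $\sqrt\varepsilon$ after $\log(\dim/\varepsilon)/\log(1+\lr a_\star)\lesssim \frac{2}{\lr a_\star}\log(\dim/\varepsilon)$ steps. This yields $\|\hat w - w^\star\|\lesssim\sqrt\varepsilon$; I would invoke Lemma~\ref{lem:final_phase2_alignment_varying_a} for this.

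\textbf{Combining.} Balance $a_\star\sqrt{\nsamples}$ against $\log(\dim/\varepsilon)/a_\star$, subject to the cap on $a_\star$. Taking $a_\star$ at the cap gives the total $O\bigl((\nsamples\dim)^{1/4}\log(\dim/\varepsilon)\log^{1/2}(\dim/\delta)\bigr)$.

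The main obstacle is the Phase-1 lower bound on $|\qt[t]|$ holding jointly with the sign compatibility needed in Phase 2. This is the anti-concentration of the random quadratic form in both the data and the initialization, together with the drift bound, and it is where the constant $p_{\mathrm{all}}$ comes from.
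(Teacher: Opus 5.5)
There is a genuine gap: your Phase-1 control of $q^{(t)}$ through the drift of $w$ fails as written. You bound the per-step movement by $\eta|a^{(t)}|\,\|\widehat M\|_2$ and then use $\sqrt{d\log(d/\delta)/N}$. That is the Bernstein bound for $\|\Delta\|_2$, not for $\|\widehat M\|_2$. With $u_+=(e_1+e_2)/\sqrt2$ one has exactly $u_+^\top\widehat M u_+=1+\widehat{\mathbb E}[y]$, so $\|\widehat M\|_2\asymp 1$ (up to the Bernstein logarithm) for all $d\le N\le d^2$.

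With the correct norm, your drift bound over $T_1\asymp a_\star\sqrt N/\eta$ steps becomes of order $a_\star^2\sqrt N$. The Lipschitz bound $|q(w')-q(w)|\le 2\|\widehat M\|_2\|w'-w\|_2$ then keeps $q^{(t)}$ within $O(N^{-1/2})$ of $q^{(0)}$ only when $a_\star\lesssim N^{-1/2}$. At the scale needed for the final rate, $a_\star\asymp (Nd)^{-1/4}$, the allowed perturbation is of order $1/\sqrt d$ (up to logarithms). This exceeds $N^{-1/2}$ once $N\gg d$, so nothing in your argument stops $q^{(t)}$ from shrinking or flipping sign. Retreating to $a_\star\asymp N^{-1/2}$ gives a total of order $\sqrt N\log(d/\varepsilon)/\eta$, off by a factor $(N/d)^{1/4}$.

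The apparent match between your condition and the hypothesis on $a_\star$ is an artifact of this slip. Your numbers implicitly use $\|\widehat M w^{(t)}\|_2\lesssim\sqrt{d\log(d/\delta)/N}$. That holds for a typical $w^{(0)}$, but carrying it along the trajectory needs an argument you do not give.

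The missing idea is that Phase 1 needs no drift control at all. \Cref{lem:final_q_monotone} shows the Rayleigh quotient moves monotonically under the signed power step. With $s=w^\top\widehat M^2w$ and $r=w^\top\widehat M^3w$,
\[
q^+-q=\frac{2\eta a(s-q^2)+\eta^2a^2(r-qs)}{1+2\eta aq+\eta^2a^2s},
\qquad
|r-qs|\le 2\|\widehat M\|_2\,(s-q^2).
\]
So under $\eta|a|\,\|\widehat M\|_2\le\frac12$ the increment has the sign of $a$. With $\sign(a^{(0)})=\sign(q^{(0)})$, induction gives $|q^{(t)}|\ge|q^{(0)}|$ for all $t<T_1$. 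Phase 1 then only needs the stability cap $a_\star\le 1/(2\eta B_{N,d,\delta})$.

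The $(Nd)^{-1/4}$ cap comes from a much coarser requirement. The drift, of order $B_{N,d,\delta}\,a_\star^2\sqrt N/c_0$, must be at most $1/(4\sqrt d)$, so that the population sign and the $\Omega(1/\sqrt d)$ alignment survive into Phase 2. This is \Cref{lem:final_phase1_population_sign_transfer}, which uses $|q_{\mathrm{pop}}(w^{(0)})|\ge 3/(4d)$ and $\|P_{12}w^{(0)}\|_2\le 1/\sqrt d$.

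Relatedly, the joint sign event cannot be read off from $\sign(\widehat{\mathbb E}[y])$, because the off-diagonal chaos is of the same order $N^{-1/2}$. The paper conditions on $w^{(0)}$ and applies a one-sided anti-concentration bound to the i.i.d.\ average $\frac1N\sum_s y^{(s)}(x^{(s)\top}w^{(0)})^2$ about its mean $2w^{(0)}_1w^{(0)}_2$. This yields $\sign(q^{(0)})=\sign(w^{(0)}_1w^{(0)}_2)$ and $|q^{(0)}|\ge c_0/\sqrt N$ simultaneously with constant probability.

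Your Phase 2 and the final balancing of $a_\star$ are fine once these pieces are in place.
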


\subsection{Analysis of Phase 1}
First we show that a smaller dataset size $\nsamples$ leads to faster growth of the outer weight.


\begin{theorem}[Phase~1 is faster with fewer fixed-batch samples]
\label{thm:final_phase1_faster}
Initialize $\wt[0] \sim \text{Unif}(\mathbb{S}^{\dim-1})$, and $\at[0] \sim \gN(0, 1/\width)$ where $\width$ can be considered as a model width parameter.
Fix any target $a_\star\in(0,1]$ and learning rate $\lr>0$.
Consider phase~1 training on a fixed batch of size $\nsamples$ and let
\[
T_\star(\nsamples)\defeq \min\{t:\ |\at[t]|\ge a_\star\}.
\]
For some choice of $c_0 \in (0, \frac{\sqrt{3}}{2})$, define the event
\[
\mathcal G_0 \defeq \Big\{\sign(\at[0])=\sign(\qt[0])\ \text{and}\ |\qt[0]|\ge c_0/\sqrt{\nsamples}\Big\},
\]
and $p_{\mathrm{good}}\defeq \frac12 p_{PZ}(c_0)$ where $p_{PZ}(c_0)$ is the constant from Lemma~\ref{lem:final_q0_lower_bound}.
Then $\Pr[\mathcal G_0]\ge p_{\mathrm{good}}$.
On the intersection of $\mathcal G_0$ with the stability event $\{\lr a_\star\|\hM\|_2\le 1/2\}$,
\begin{equation}
\label{eq:final_phase1_faster_bound}
T_\star(\nsamples)\ \le\ \left\lceil \frac{2(a_\star-|\at[0]|)_+}{\lr\,c_0}\sqrt{\nsamples}\right\rceil
\ =\ O\!\Big(\frac{a_\star}{\lr}\sqrt{\nsamples}\Big).
\end{equation}
In particular, for every $\delta\in(0,1)$, if $a_\star \le \min\!\left\{1,\frac{1}{2\lr B_{\nsamples,\dim,\delta}}\right\}$ with $B_{\nsamples,\dim,\delta}$ from Corollary~\ref{lem:phase1_astar_stability}, then \eqref{eq:final_phase1_faster_bound} holds with probability at least $p_{\mathrm{good}}-\delta$.
In particular, holding $(\lr,a_\star)$ fixed, the required number of phase~1 steps scales as $\sqrt{\nsamples}$.
\end{theorem}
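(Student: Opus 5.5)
We prove the probability bound first and then the deterministic growth argument. The bound on the number of steps is obtained by showing that on the good event, $|\at[t]|$ grows by at least $\frac{\lr c_0}{2\sqrt{\nsamples}}$ per step until it reaches $a_\star$.

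\textbf{Step 1: $\Pr[\mathcal G_0]\ge p_{\mathrm{good}}$.} Lemma~\ref{lem:final_q0_lower_bound} gives $\Pr[|\qt[0]|\ge c_0/\sqrt{\nsamples}]\ge p_{PZ}(c_0)$. This is a Paley--Zygmund bound: it uses $\E[(\qt[0])^2]\asymp 1/\nsamples$ together with a fourth-moment bound over $\wt[0]$ and the samples. Next, $\at[0]\sim\gN(0,1/\width)$ is independent of $(\wt[0],\hM)$ and symmetric. Hence, conditional on $\qt[0]$, the event $\sign(\at[0])=\sign(\qt[0])$ has probability exactly $1/2$. Multiplying gives $\Pr[\mathcal G_0]\ge \frac12 p_{PZ}(c_0)$.

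\textbf{Step 2: one-step control of $\qt[t]$.} Work on $\mathcal G_0$ with $\lr a_\star\|\hM\|_2\le 1/2$, and assume without loss of generality that $\at[0]>0$ and $\qt[0]>0$. We show by induction that, for all $t<T_\star$, both $\at[t]\ge \at[0]$ and $\qt[t]\ge \qt[0]$ hold. Write $v=\wt[t]+\lr\at[t]\hM\wt[t]$. Then
\[
v^\top\hM v=\qt[t]+2\lr\at[t]\|\hM\wt[t]\|^2+\lr^2(\at[t])^2 (\wt[t])^\top\hM^3\wt[t],
\]
and we want this to be at least $\qt[t]\|v\|^2$. Set $A=\hM$, $s=\lr\at[t]\in[0,1/(2\|A\|_2)]$ and $u=\wt[t]$. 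Then $\|v\|^2=1+2s\,\qt[t]+s^2\|Au\|^2$. The claim is therefore equivalent to
\[
u^\top (A+2sA^2+s^2A^3)u\ge q(1+2sq+s^2\|Au\|^2), \qquad q=u^\top Au.
\]
We verify this by diagonalizing $A$. With weights $p_i=u_i^2$ on the eigenvalues $\lambda_i$, the claim becomes $\E_p[\lambda(1+s\lambda)^2]\ge \E_p[\lambda]\,\E_p[(1+s\lambda)^2]$. Since $s\|A\|\le1/2$, the map $\lambda\mapsto(1+s\lambda)^2$ is increasing on the spectrum. The inequality is then Chebyshev's association inequality for the two increasing functions $\lambda$ and $(1+s\lambda)^2$. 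This is the Rayleigh-quotient monotonicity of power iteration with the positive definite map $I+sA$.

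We also need $\at[t]$ to stay below $a_\star$ so that the stability condition applies at step $t$. This holds because $t<T_\star$.

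\textbf{Step 3: counting steps.} By Steps 1--2, for $t<T_\star$ we have $\qt[t]\ge c_0/\sqrt{\nsamples}$. The $a$-update then gives $\at[t+1]\ge \at[t]+\frac{\lr c_0}{2\sqrt{\nsamples}}$; clipping is harmless because $a_\star\le1$. Therefore $T_\star\le \lceil 2(a_\star-|\at[0]|)_+\sqrt{\nsamples}/(\lr c_0)\rceil$.

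The final high-probability claim follows from Corollary~\ref{lem:phase1_astar_stability}, which gives $\|\hM\|_2\le B_{\nsamples,\dim,\delta}$ with probability at least $1-\delta$. Under the stated condition on $a_\star$, the stability event then holds. A union bound with Step 1 yields probability at least $p_{\mathrm{good}}-\delta$.

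\textbf{Main obstacle.} The main obstacle is Step 2, the monotonicity of $\qt[t]$ under the normalized update. I would settle it by the spectral Chebyshev argument above, where the stability condition is exactly what keeps $I+sA$ positive with an increasing square on the spectrum.
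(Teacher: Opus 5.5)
Your proof is correct and has the same structure as the paper's. Both use the Paley--Zygmund lemma plus the sign symmetry of $a^{(0)}$ for the probability bound, monotonicity of $q^{(t)}$ under the stability condition, linear growth of $|a^{(t)}|$ with clipping inactive, and a union bound with the event $\{\|\hM\|_2\le B_{N,d,\delta}\}$.

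The only real difference is how you prove the monotonicity step. The paper expands $q^+-q$ exactly and writes the numerator as $2\eta a\,\mathrm{Var}(\Lambda)+\eta^2a^2\bigl(\E[\Lambda^3]-\E[\Lambda]\E[\Lambda^2]\bigr)$. It then controls the second term by $\bigl|\E[\Lambda^3]-\E[\Lambda]\E[\Lambda^2]\bigr|\le 2\|\hM\|_2\,\mathrm{Var}(\Lambda)$. Your quantity $\mathrm{Cov}_p\bigl(\lambda,(1+\eta a\lambda)^2\bigr)$ is exactly that numerator, and Chebyshev's association inequality settles its sign in one line.

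What each route buys:
\begin{itemize}
\item Your route is shorter and more conceptual: it is Rayleigh-quotient monotonicity of power iteration with a positive map. It also only needs $1+\eta a\lambda\ge 0$ on the spectrum.
\item The paper's route gives an explicit lower bound $\eta|a|\,\mathrm{Var}(\Lambda)$ on the numerator and handles both signs of $a$ at once. Neither extra is needed for this theorem.
\end{itemize}

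One small point: you should justify the reduction to $a^{(0)},q^{(0)}>0$. Either note that the dynamics are invariant under $(a,\hM)\mapsto(-a,-\hM)$, which maps $(a,q)\mapsto(-a,-q)$. Or observe that for $\eta a<0$ the map $\lambda\mapsto(1+\eta a\lambda)^2$ is nonincreasing on the spectrum, so the same association inequality simply reverses.
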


\begin{proof}[Proof sketch]
The proof consists of three parts.
\begin{enumerate}
    \item \textbf{Initialization gives a nontrivial $\qt[0]$ at constant probability.}
    For $\wt[0]$ randomly sampled from the unit sphere
    and an i.i.d.\ batch of size $\nsamples$, the empirical quadratic form
    $\qt[0]=(\wt[0])^\top \hM \wt[0]=\frac1\nsamples\sum_{s=1}^{\nsamples} y^{(s)}(x^{(s)\top}\wt[0])^2$
    has magnitude $\Omega(1/\sqrt{\nsamples})$ with constant probability (Lemma~\ref{lem:final_q0_lower_bound}).
    Since $\at[0]$ is initialized symmetric about $0$ and independent of $\qt[0]$, we also have
    $\Pr[\sign(\at[0])=\sign(\qt[0])]=1/2$ conditional on $\qt[0]\neq 0$ (Lemma~\ref{lem:final_sign_match}).
    Together this yields the constant-probability ``good'' event
    $\mathcal G_0=\{\sign(\at[0])=\sign(\qt[0]),\ |\qt[0]|\ge c_0/\sqrt{\nsamples}\}$.

    \item \textbf{While $|\at[t]|\le a_\star$, the sign of $\qt[t]$ is stable and $|\qt[t]|$ does not decrease.}
    Under the stability condition $\lr a_\star\|\hM\|_2\le 1/2$, the normalized updates to the inner weights is a signed power iteration. Lemma~\ref{lem:final_q_monotone} shows that the one-step increment $\qt[t+1]-\qt[t]$ has the same sign as $\at[t]$. On $\mathcal G_0$, the outer weight update (with clipping being inactive since $|\at[t]|$ hasn't grown to $a_\star < 1$) is
    $\at[t+1]=\at[t]+\frac{\lr}{2}\qt[t]$, hence $\sign(\at[t])$ cannot flip as long as $\sign(\at[t])=\sign(\qt[t])$.
    Combining these gives by induction that for all $t<T_\star$,
    $\sign(\at[t])=\sign(\qt[t])=\sign(\qt[0])$ and therefore $|\qt[t]|\ge |\qt[0]|$.

    \item \textbf{Linear growth of $|\at[t]|$ and the $\sqrt{\nsamples}$ time scale.}
    On the same event, for all $t<T_\star$ we have
    $|\at[t+1]|=|\at[t]|+\frac{\lr}{2}|\qt[t]|\ge |\at[t]|+\frac{\lr}{2}|\qt[0]|$,
    so $|\at[t]|$ grows at least linearly until it reaches $a_\star$. Thus
    \[
    T_\star \le \left\lceil \frac{2(a_\star-|\at[0]|)_+}{\lr\,|\qt[0]|}\right\rceil
    \le \left\lceil \frac{2(a_\star-|\at[0]|)_+}{\lr\,c_0}\sqrt{\nsamples}\right\rceil
    \]
    on $\mathcal G_0$, which is the claimed $O(\sqrt{\nsamples})$ bound.
\end{enumerate}
Finally, Lemma~\ref{lem:final_Mhat_opnorm} provides a high-probability bound on $\|\hM\|_2$, yielding an explicit stable choice of $a_\star$ (Corollary~\ref{lem:phase1_astar_stability}).
\end{proof}

\subsubsection{Constant-probability lower bound on \texorpdfstring{$|\qt[0]| = \Omega(1/\sqrt{\nsamples})$}{q}}

We now prove that, in the parity setting, the fixed-batch quadratic form at initialization
\[
\qt[0]=(\wt[0])^\top \hM \wt[0]
=\frac1\nsamples\sum_{s=1}^{\nsamples} y^{(s)}\bigl(x^{(s)\top}\wt[0]\bigr)^2
\]
typically has magnitude $\Omega(1/\sqrt{\nsamples})$ with \emph{constant} probability.

\begin{lemma}
\label{lem:final_q0_lower_bound}
Assume $\wt[0]$ is uniform on the unit sphere (equivalently, $\wt[0]=g/\|g\|_2$ for $g\sim\mathcal N(0,\mI)$).
For some $c_0 \in (0, \frac{\sqrt{3}}{2})$ and 
$p_{PZ}(c)\defeq (1 - 1/\sqrt{2}) \cdot \frac{(1-\frac{4}{3}c^2)^2}{3^8}$,
for all $\nsamples\ge 1$ and all $\dim\ge 3$,
\[
\Pr\left[ |\qt[0]|\ \ge\ \frac{c_0}{\sqrt{\nsamples}} \right]\ \ge\ p_{PZ}(c_0).
\]
As an example, we can choose $c_0 = \sqrt{3/8}$,
in which case $p_{PZ}(c_0) = \frac{2-\sqrt{2}}{8\cdot 3^8}$.
\end{lemma}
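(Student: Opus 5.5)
The plan is to condition on $\wt[0]$, use a second-moment (Paley--Zygmund) argument over the phase-1 samples, and control the fourth moment by hypercontractivity on the Boolean cube. Fix $w=\wt[0]$ and set
\[
Z \defeq \sqrt{\nsamples}\,\qt[0] = \frac{1}{\sqrt{\nsamples}}\sum_{s=1}^{\nsamples} y^{(s)}\bigl(x^{(s)\top}w\bigr)^2 .
\]
The event $|\qt[0]|\ge c_0/\sqrt{\nsamples}$ is exactly $Z^2\ge c_0^2$. Paley--Zygmund applied to the nonnegative variable $Z^2$ gives, for $\theta\in(0,1)$,
\[
\Pr\bigl[Z^2\ge \theta\,\E[Z^2\mid w]\bigr]\ge (1-\theta)^2\frac{(\E[Z^2\mid w])^2}{\E[Z^4\mid w]}.
\]
I would take $\theta = \tfrac{4}{3}c_0^2<1$, which matches the factor $(1-\tfrac43 c^2)^2$ in $p_{PZ}$. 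It then suffices to show $\E[Z^2\mid w]\ge 3/4$ on a good event for $w$, since then $\theta\,\E[Z^2\mid w]\ge c_0^2$.

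\textbf{Second moment.} Write $\xi_s = y^{(s)}(x^{(s)\top}w)^2$. These are i.i.d.\ with mean $\mu=\E[x_1x_2(x^\top w)^2]=2w_1w_2$ and second moment
\[
\E[\xi^2]=\E[(x^\top w)^4]=3\|w\|_2^4-2\textstyle\sum_i w_i^4 = 3-2\sum_i w_i^4 .
\]
Hence $\E[Z^2\mid w]=\E[\xi^2]+(\nsamples-1)\mu^2\ge 3-2\sum_i w_i^4$. This is always at least $1$, and it is at least $3/4$ with room to spare. If the bookkeeping requires a stronger lower bound, for example to absorb the mean term, I would restrict to an event on $w$ such as $\sum_i w_i^4\le 1/\sqrt2$. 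Since $\E\sum_i w_i^4=3/(\dim+2)\le 3/5$ for $\dim\ge 3$, Markov or a Beta-tail computation for $w_i^2\sim\mathrm{Beta}(\tfrac12,\tfrac{\dim-1}{2})$ should give such an event probability at least $1-1/\sqrt2$. I expect this is where the prefactor $(1-1/\sqrt2)$ and the assumption $\dim\ge 3$ enter.

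\textbf{Fourth moment (main step).} I expect bounding $\E[Z^4\mid w]\le C(\E[Z^2\mid w])^2$ with a dimension- and $\nsamples$-free constant to be the main obstacle. My first attempt would be Bonami hypercontractivity. Conditioned on $w$, $Z$ is a multilinear polynomial of degree at most $4$ in the independent Rademacher bits $\{x^{(s)}_i\}$: each summand $x_1x_2(x^\top w)^2$ has degree at most $4$ after reducing $x_i^2=1$. The $(2,4)$-Bonami inequality then gives
\[
\|Z\|_4\le (\sqrt3)^{4}\|Z\|_2 = 9\|Z\|_2,
\]
so $\E[Z^4\mid w]\le 3^8(\E[Z^2\mid w])^2$. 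This matches the $3^8$ in $p_{PZ}$ and avoids expanding the eighth moments of $x^\top w$ by hand. Substituting into Paley--Zygmund gives conditional probability at least $(1-\tfrac43c_0^2)^2/3^8$ on the good $w$-event. Averaging over $w$ then yields $\Pr[|\qt[0]|\ge c_0/\sqrt\nsamples]\ge p_{PZ}(c_0)$. Finally, plugging in $c_0=\sqrt{3/8}$ gives $1-\tfrac43c_0^2=\tfrac12$, which produces the stated example constant $\frac{2-\sqrt2}{8\cdot 3^8}$.
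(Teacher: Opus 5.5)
Your proposal is correct and follows the paper's skeleton: condition on $w$, apply Paley--Zygmund to $(q^{(0)})^2$ with $\theta=\tfrac43c_0^2$, and use the $(2,4)$ Bonami bound to get the constant $3^8$. Where you differ is the lower bound on the second moment, and your version is stronger.

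The paper writes $\E[(q^{(0)})^2\mid w]=\mu(w)^2+\sigma^2(w)/N$, discards $\mu(w)^2$, and lower-bounds the variance by $1-4w_1^2w_2^2\ge 1-(w_1^2+w_2^2)^2$. To make this at least $3/4$ it must restrict to the event $\mathcal E=\{w_1^2+w_2^2\le 1/2\}$. Since $w_1^2+w_2^2\sim\mathrm{Beta}(1,(d-2)/2)$, this event has probability $1-2^{-(d-2)/2}\ge 1-2^{-1/2}$. That is where the prefactor $1-1/\sqrt2$ and the hypothesis $d\ge 3$ come from.

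You instead drop the nonnegative term $(N-1)\mu^2$ and keep $\E[\xi^2]=3-2\sum_i w_i^4\ge 1$, which holds for every unit $w$. So you get conditional probability at least $(1-\tfrac43c_0^2)^2/3^8$ uniformly in $w$. This beats $p_{PZ}(c_0)$ and needs no restriction on $d$, so the lemma follows immediately.

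You should therefore drop the hedge about restricting to $\{\sum_i w_i^4\le 1/\sqrt2\}$. It is unnecessary, since the mean term can only help. Your guess that it explains the factor $1-1/\sqrt2$ does not match the paper. And Markov with $\E\sum_i w_i^4=3/(d+2)$ would only give probability about $1-3\sqrt2/5\approx 0.15$ at $d=3$, not $1-1/\sqrt2$.

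The paper's variance-based bookkeeping does earn its keep later. In the joint-alignment lemma it needs a one-sided bound on the centered fluctuation $\sqrt N(q^{(0)}-\mu)$, and there a variance lower bound, not a raw second-moment bound, is the relevant quantity.
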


\begin{proof}
Fix $w=\wt[0]$ and define a single-sample random variable $Z \defeq y (x^\top w)^2$ so that $\qt[0]=\frac1\nsamples\sum_{s=1}^{\nsamples} Z_s$ for i.i.d.\ copies $Z_s$.

\paragraph{Step 1: conditional mean.}
\[
\mu(w)\defeq \E[Z\mid w]
=\E\left[x_1x_2\Big(\sum_{i=1}^{\dim} w_i x_i\Big)^2\right]
=\sum_{i,j} w_i w_j \E[x_1x_2x_ix_j]
=2w_1w_2.
\]

\paragraph{Step 2: conditional variance is bounded below on a constant-probability event over $w$.}
Since $y^2\equiv 1$,
\[
\E[Z^2\mid w]=\E\big[(x^\top w)^4\mid w\big]=
3\|w\|_2^4-2\sum_{i=1}^{\dim} w_i^4
\ge
\|w\|_2^4
=1.
\]
Therefore,
\[
\Var(Z\mid w)=\E[Z^2\mid w]-\mu(w)^2
\ge 1-4w_1^2w_2^2
\ge 1-(w_1^2+w_2^2)^2.
\]
Define the event $\mathcal E\defeq\{w_1^2+w_2^2\le 1/2\}$.
On $\mathcal E$ we have $\Var(Z\mid w)\ge 1-1/4=3/4$.
Moreover, since $w$ is uniform on the sphere and $\dim\ge 3$, the random variable $w_1^2+w_2^2$ has a $\mathrm{Beta}(1,(\dim-2)/2)$ distribution, 
hence
\[
\Pr[\mathcal E]
= \Pr[w_1^2+w_2^2\le 1/2]
= 1- (1-1/2)^{(\dim-2)/2}
\ge 1-2^{-1/2}
 =: p_1,
\]
where $p_1>0$ is an absolute constant.

\paragraph{Step 3: Paley-Zygmund on $(\qt[0])^2$.}
Condition on $w\in\mathcal E$. We already have $\sigma^2(w)\defeq \Var(Z\mid w)\ge 3/4$, hence
\[
\E[(\qt[0])^2\mid w]
= \mu(w)^2+\frac{\sigma^2(w)}{\nsamples}
\ge
\frac{3}{4\nsamples}.
\]
We also need to upper bound the second moment of $(\qt[0])^2$, i.e. a fourth-moment upper bound for $\qt[0]$.
For fixed $w$, the random variable $\qt[0]$ is a polynomial of total degree at most $4$ in the independent Rademacher variables $\{x_i^{(s)}\}_{i\in[\dim], s\in[\nsamples]}$ (after multilinearization using $x_i^2\equiv 1$).
By the Bonami-Beckner (hypercontractive) inequality, for any degree-$d$ polynomial $f$ of Rademachers,
\[
(\E[|f|^4])^{1/4} \leq (4-1)^{d/2}(\E[|f|^2])^{1/2}
\]
Applying this with $f=\qt[0]$ (conditional on $w$) gives
\begin{equation}
\label{eq:final_q0_hypercontractive}
\E[(\qt[0])^4\mid w] \le 3^8 \E[(\qt[0])^2\mid w]^2.
\end{equation}

Apply Paley-Zygmund to the non-negative random variable $Y\defeq (\qt[0])^2$ conditional on $w\in\mathcal E$:
\[
    \Pr\left[(\qt[0])^2\ge \theta \E[(\qt[0])^2\mid w]\ \middle|\ w\right]
    \ge
    \frac{(1-\theta)^2 \E[(\qt[0])^2\mid w]^2}{\E[(\qt[0])^4\mid w]}
    \ge \frac{(1-\theta)^2}{3^8},
\]
where we used \eqref{eq:final_q0_hypercontractive}.
With $\theta=1/2$,
\[
    \Pr\left[(\qt[0])^2\ge \theta \E[(\qt[0])^2\mid w]\ \middle|\ w\right]
    \ge \frac{1}{4\cdot 3^8}
    =:p_2
\]
On this event,
\[
|\qt[0]|
\ge
\sqrt{\tfrac12 \E[(\qt[0])^2\mid w]}
\ge
\sqrt{\frac{3}{8}}\cdot\frac{1}{\sqrt{\nsamples}}.
\]
Thus, for $c_0\defeq \sqrt{3/8}$,
\[
\Pr\left[|\qt[0]|\ge \frac{c_0}{\sqrt{\nsamples}}\right]
\ge
\Pr[\mathcal E]\cdot p_2
\ge
p_1p_2
 =: p_{ZL}(c_0).
\]
\end{proof}

We also need the lucky event of $\sign(\at[0])=\sign(\qt[0])$ so that the updates don't flip the output weight's sign.
\begin{lemma}
\label{lem:final_sign_match}
Assume $\at[0]$ is independent of $(\wt[0],\hM)$, symmetric about $0$, and that $\Pr[\at[0]=0]=0$.
Then for every threshold $q_\star>0$,
\[
\Pr\Big[\sign(\at[0])=\sign(\qt[0])\ \text{and}\ |\qt[0]|\ge q_\star\Big]
=\frac12 \Pr\big[|\qt[0]|\ge q_\star\big].
\]
\end{lemma}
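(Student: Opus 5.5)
Since $\at[0]$ is independent of $(\wt[0],\hM)$, I will condition on $(\wt[0],\hM)$; this fixes $\qt[0]$. The idea is to average over $\at[0]$ only inside the event $A\defeq\{|\qt[0]|\ge q_\star\}$, on which $\qt[0]\neq 0$ because $q_\star>0$. So $\sign(\qt[0])\in\{\pm1\}$ is deterministic given the conditioning, and the two-sided event is just $\{\sign(\at[0])=s\}\cap A$ with $s\defeq\sign(\qt[0])$.

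The key step is to compute the conditional probability
\[
\Pr\big[\sign(\at[0])=s \,\big|\, \wt[0],\hM\big]=\Pr[\sign(\at[0])=s]
\]
using independence. By symmetry of $\at[0]$ about $0$ we have $\Pr[\at[0]>0]=\Pr[\at[0]<0]$. Together with $\Pr[\at[0]=0]=0$, both probabilities equal $1/2$, whichever value $s\in\{\pm1\}$ takes.

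Finally I take expectations via the tower property:
\[
\Pr[\sign(\at[0])=\sign(\qt[0]),\,A]=\E\big[\ind_A\cdot \Pr[\sign(\at[0])=\sign(\qt[0])\mid \wt[0],\hM]\big]=\tfrac12\Pr[A].
\]
I do not expect a real obstacle. The only care needed is on measurability, and on the fact that the conditioning reduces to a product-measure computation, which is exactly what independence gives (Fubini on the product space). The assumption $q_\star>0$ is used only to exclude $\qt[0]=0$, where $\sign$ would be $0$ and the match probability would be $0$ rather than $1/2$.
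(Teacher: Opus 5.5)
Your proposal is correct and follows essentially the same route as the paper: condition on $(\wt[0],\hM)$ so that $\qt[0]$ is fixed, note that the conditional match probability is $1/2$ on $\{\qt[0]\neq 0\}$ by independence and symmetry, then multiply by the indicator of $\{|\qt[0]|\ge q_\star\}$ and average. You also spell out two steps the paper leaves implicit: symmetry together with $\Pr[\at[0]=0]=0$ is what gives $\Pr[\at[0]>0]=\Pr[\at[0]<0]=1/2$, and $q_\star>0$ is what excludes the case $\qt[0]=0$.
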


\begin{proof}
Condition on $(\wt[0],\hM)$ so that $\qt[0]$ is fixed.
On the event $\qt[0]\neq 0$, symmetry and independence of $\at[0]$ imply
$\Pr[\sign(\at[0])=\sign(\qt[0])\mid \qt[0]]=1/2$.
Multiply by $\ind\{|\qt[0]|\ge q_\star\}$ and average over $(\wt[0],\hM)$.
\end{proof}

\subsubsection{Stability of \texorpdfstring{$\qt[t]$}{q} for small \texorpdfstring{$\at[t]$}{a}}

Next, we show that under a stability condition of $\lr|a| \|\hM\|_2 \le 1/2$,
the update in $q$ has the same sign as $a$.

\begin{lemma}
\label{lem:final_q_monotone}
Fix any unit vector $w\in\R^\dim$, scalar $a\in\R$, and learning rate $\lr>0$.
Define
\[
\tilde w := (\mI+\lr a \hM)w,
\qquad
w^+ := \tilde w/\|\tilde w\|_2.
\]
Let $q := w^\top \hM w$ as before,
and define $q^+ := (w^+)^\top \hM w^+$ similarly.

Then, under the stability condition of $\lr|a| \|\hM\|_2 \le 1/2$,
$q^+-q$ has the same sign as $a$ (or is zero).
\end{lemma}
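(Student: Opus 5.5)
Since $\hM$ is symmetric, I would work in its eigenbasis. Write $\hM=\sum_i \lambda_i v_iv_i^\top$ and $w=\sum_i c_i v_i$ with $\sum_i c_i^2=1$. Then $q=\sum_i \lambda_i c_i^2$. Set $B:=\mI+\lr a\hM$; it has eigenvalues $\mu_i:=1+\lr a\lambda_i$. The stability condition $\lr|a|\|\hM\|_2\le 1/2$ gives $\mu_i\in[1/2,3/2]$, so every $\mu_i>0$ and $B$ is positive definite. Since $\tilde w=Bw$, we get
\[
q^+=\frac{w^\top B\hM B w}{w^\top B^2 w}=\frac{\sum_i \lambda_i\mu_i^2c_i^2}{\sum_i \mu_i^2 c_i^2}.
\]
So $q^+$ is the mean of $\lambda$ under the reweighted distribution $\pi_i\propto \mu_i^2c_i^2$, while $q$ is its mean under $p_i:=c_i^2$.

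The key step is to show that reweighting by $\mu_i^2$ moves the mean in the direction $\sign(a)$. Note that
\[
q^+-q=\frac{\mathrm{Cov}_p(\lambda,\mu^2)}{\E_p[\mu^2]}.
\]
The denominator is positive because each $\mu_i>0$. The map $\lambda\mapsto(1+\lr a\lambda)^2$ is monotone on the relevant range: for $a>0$ it is increasing, since $\mu_i>0$ and $\mu$ increases in $\lambda$; for $a<0$ it is decreasing. By Chebyshev's sum inequality (covariance of two comonotone functions of the same random variable is nonnegative), $\mathrm{Cov}_p(\lambda,\mu^2)\ge0$ when $a>0$ and $\le0$ when $a<0$. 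When $a=0$ we have $B=\mI$ and $q^+=q$. An explicit alternative is
\[
\mathrm{Cov}_p(\lambda,\mu^2)=\tfrac12\sum_{i,j}p_ip_j(\lambda_i-\lambda_j)(\mu_i^2-\mu_j^2)=\tfrac12\lr a\sum_{i,j}p_ip_j(\lambda_i-\lambda_j)^2(\mu_i+\mu_j).
\]
Every summand is nonnegative because $\mu_i+\mu_j>0$, so the sign is visibly that of $a$. I would present this pairwise identity as the main computation, since it avoids invoking monotonicity separately.

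I expect the only real subtlety to be checking that $\mu_i+\mu_j>0$, that is, that $B$ does not flip the sign of any eigendirection. The stability condition is used exactly there. Without it, a negative $\mu_i$ could make $\mu^2$ non-monotone in $\lambda$ and break the argument. The bound $1/2$ is more than enough, since any bound strictly below $1$ would suffice. The remaining steps are routine algebra, and the conclusion holds with equality precisely when $a=0$ or $\lambda$ is constant on the support of $p$.
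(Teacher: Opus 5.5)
Your proof is correct, and it follows the paper's skeleton up to the last step. Both work in the eigenbasis of $\hM$ and reach the same ratio: the paper's numerator $2\eta a(s-q^2)+\eta^2a^2(r-qs)$, with $s=w^\top\hM^2w$ and $r=w^\top\hM^3w$, is exactly your $\mathrm{Cov}_p(\lambda,\mu^2)$ once $\mu^2=1+2\eta a\lambda+\eta^2a^2\lambda^2$ is expanded. Its denominator is your $\E_p[\mu^2]=\|\tilde w\|_2^2$.

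The two proofs differ in how they certify the sign. The paper expands in powers of $\eta a$ and treats the quadratic term as a perturbation. It writes $r-qs=\E[(\Lambda-\E\Lambda)^2(\Lambda+\E\Lambda)]$, bounds $|r-qs|\le 2\|\hM\|_2(s-q^2)$, and uses $2\eta|a|\|\hM\|_2\le 1$ to show the variance term dominates. This gives a numerator $\ge \eta a(s-q^2)$ for $a\ge 0$, and symmetrically for $a\le 0$. You instead keep $\mu^2$ unexpanded and use the pairwise symmetrization with $\mu_i^2-\mu_j^2=\eta a(\lambda_i-\lambda_j)(\mu_i+\mu_j)$. That gives an exact, manifestly sign-definite formula with no domination estimate.

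Your version buys three things. It characterizes the equality case. It shows that $\eta|a|\|\hM\|_2<1$ already suffices. It also recovers the paper's quantitative bound, since $\mu_i+\mu_j\ge 1$ under the stated condition and $\tfrac12\sum_{i,j}p_ip_j(\lambda_i-\lambda_j)^2=\Var_p(\lambda)=s-q^2$.

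The two arguments are close cousins: the paper's factor $\Lambda+\E\Lambda$ is the single-sum analogue of your $\mu_i+\mu_j$. Indeed, the paper's numerator regroups as $\eta a\,\E[(\Lambda-\E\Lambda)^2(2+\eta a(\Lambda+\E\Lambda))]$, which is sign-definite for exactly the same reason as yours.
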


\begin{proof}
Define the following quantities
\begin{align}
    s \defeq w^\top \hM^2 w,\qquad
    r \defeq w^\top \hM^3 w.
\end{align}

With $\hM$ being symmetric, we have
\[
q^+ = \frac{\tilde w^\top \hM \tilde w}{\tilde w^\top \tilde w}
= \frac{w^\top(\hM + 2\lr a \hM^2 + \lr^2 a^2 \hM^3)w}{w^\top(\mI + 2\lr a \hM + \lr^2 a^2 \hM^2)w}
= \frac{q + 2\lr a  s + \lr^2 a^2 r}{1 + 2\lr a q + \lr^2 a^2 s},
\]
and the update in $q$ is
\begin{equation}
\label{eq:final_q_exact_update}
    q^+-q =
    \frac{2\lr a (s-q^2) + \lr^2 a^2 (r-q s)}
    {1 + 2\lr a q + \lr^2 a^2 s}.
\end{equation}
Note that the denominator in \eqref{eq:final_q_exact_update} is positive, which follows from the stability assumption $\lr|a|\|\hM\|_2\le1/2$.
The sign of $q^+-q$ hence depends on the two terms in the numerator, which we bound separately below.

First, let $\hM=\sum_{i=1}^{\dim} \lambda_i u_i u_i^\top$ be an eigendecomposition and set
$\alpha_i \defeq \langle w,u_i\rangle^2$ so that $\alpha$ is a probability vector.
Let $\Lambda$ be the random variable taking value $\lambda_i$ with probability $\alpha_i$.
Then
\[
q=\sum_i \alpha_i\lambda_i= \E[\Lambda],\qquad
s=\sum_i \alpha_i\lambda_i^2= \E[\Lambda^2],\qquad
r=\sum_i \alpha_i\lambda_i^3.
\]

This directly gives that $s-q^2=\Var(\Lambda) \geq 0$.

For the second term, note that
\[
r-qs
= \E[\Lambda^3]-\E[\Lambda]\E[\Lambda^2]
= \E\big[(\Lambda-\E[\Lambda])^2(\Lambda+\E[\Lambda])\big].
\]
Since $|\Lambda+\E[\Lambda]|\le 2\|\hM\|_2$, we obtain
\begin{align*}
    |r-qs|\le 2\|\hM\|_2 \E[(\Lambda-\E[\Lambda])^2]=2\|\hM\|_2(s-q^2),
\end{align*}
Combining this with the stability assumption of $\lr|a|\|\hM\|_2\le 1/2$, 
we can bound the second term of the numerator in \Cref{eq:final_q_exact_update} by

\begin{align*}
    \bigl|\lr^2 a^2(r-qs)\bigr|
    \le&
    2\lr^2|a|^2\|\hM\|_2(s-q^2)
    \le
    \lr|a| (s-q^2).
\end{align*}

Therefore,
\begin{itemize}[leftmargin=*]
    \item if $a\ge 0$, then $2\lr a(s-q^2)+\lr^2 a^2(r-qs)\ge 2\lr a(s-q^2)-\lr a(s-q^2)=\lr a(s-q^2)\ge 0$;
    \item if $a\le 0$, then $2\lr a(s-q^2)+\lr^2 a^2(r-qs)\le 2\lr a(s-q^2)+\lr|a|(s-q^2)=\lr a(s-q^2)\le 0$.
\end{itemize}
Thus the numerator in \eqref{eq:final_q_exact_update} and hence $q^+ - q$ has the same sign as $a$ (or is zero).
\end{proof}


\subsubsection{Linear growth of \texorpdfstring{$\at[t]$}{a}}

Next, we show that $a$ grows linearly when conditioned on the lucky event in \Cref{lem:final_sign_match} and the stability assumption in \Cref{lem:final_q_monotone},
from which an upper bound on $T_\star$ (i.e., time for $a$ to grow to $a_\star$) directly follows.

\begin{lemma}
\label{lem:final_a_reaches_astar} Assume the initialization event
\[
\sign(\at[0])=\sign(\qt[0])
\qquad\text{and}\qquad
|\qt[0]|\ge q_\star>0,
\]
for $q_\star := \frac{c_0}{\sqrt{\nsamples}}$ from \Cref{lem:final_q0_lower_bound}.
Further, assume the stability condition
\[
\lr a_\star \|\hM\|_2 \le \frac12.
\]
Then for all $t<T_\star$ we have $\sign(\at[t])=\sign(\qt[t])=\sign(\qt[0])$ and $|\qt[t]|\ge |\qt[0]|\ge q_\star$.
Consequently,
\begin{equation}
\label{eq:final_Tstar_bound}
T_\star \le \left\lceil \frac{2(a_\star-|\at[0]|)_+}{\lr q_\star}\right\rceil.
\end{equation}
\end{lemma}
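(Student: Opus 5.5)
The plan is an induction on $t$ over the range $0\le t<T_\star$, using the invariant
\[
\mathcal I_t:\quad \sign(\at[t])=\sign(\qt[t])=\sign(\qt[0]),\qquad |\qt[t]|\ge|\qt[0]|,\qquad |\at[t]|\ge|\at[0]|.
\]
Write $\sigma\defeq\sign(\qt[0])$; this is well defined and nonzero because $|\qt[0]|\ge q_\star>0$. The base case $\mathcal I_0$ is exactly the assumed initialization event.

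For the inductive step, fix $t$ with $t+1<T_\star$ and assume $\mathcal I_t$. By the definition of $T_\star$ we have $|\at[t]|<a_\star\le 1$.

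First, the clipping in \eqref{eq:final_a_update} needs care. The unclipped value is
\[
\at[t]+\tfrac{\lr}{2}\qt[t]=\sigma\bigl(|\at[t]|+\tfrac{\lr}{2}|\qt[t]|\bigr),
\]
since both terms carry sign $\sigma$ by $\mathcal I_t$. Clipping to $[-1,1]$ preserves the sign. It can only reduce the magnitude to $1\ge a_\star$, and that cannot happen before $T_\star$. Hence for $t+1<T_\star$ the clip is inactive, and
\[
|\at[t+1]|=|\at[t]|+\tfrac{\lr}{2}|\qt[t]|\ge|\at[t]|+\tfrac{\lr}{2}|\qt[0]|,\qquad \sign(\at[t+1])=\sigma.
\]

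Next, for the $q$-invariant, apply Lemma~\ref{lem:final_q_monotone} with $w=\wt[t]$ and $a=\at[t]$. The stability hypothesis holds because $\lr|\at[t]|\|\hM\|_2\le\lr a_\star\|\hM\|_2\le\tfrac12$. The lemma then says $\qt[t+1]-\qt[t]$ has sign $\sigma$ or is zero. Therefore $\sigma\qt[t+1]\ge\sigma\qt[t]=|\qt[t]|\ge|\qt[0]|>0$, which gives $\sign(\qt[t+1])=\sigma$ and $|\qt[t+1]|\ge|\qt[0]|\ge q_\star$. This closes the induction.

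One point about ordering matters here. The $w$-update \eqref{eq:final_w_update} at step $t$ uses $\at[t]$, not $\at[t+1]$, so the monotonicity lemma is invoked with the current $a$, whose magnitude is below $a_\star$. This is where the stability condition is stated in terms of $a_\star$ rather than $\at[t]$.

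Telescoping the linear growth gives, for every $t\le T_\star$,
\[
|\at[t]|\ge|\at[0]|+t\,\tfrac{\lr}{2}q_\star.
\]
The right-hand side reaches $a_\star$ once $t\ge 2(a_\star-|\at[0]|)_+/(\lr q_\star)$. If $T_\star$ exceeded the ceiling in \eqref{eq:final_Tstar_bound}, then at that ceiling step we would already have $|\at[t]|\ge a_\star$, contradicting minimality of $T_\star$. If $|\at[0]|\ge a_\star$, then $T_\star=0$ and the bound is trivial.

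The step most likely to cause trouble is the bookkeeping at the boundary: checking that clipping is never active strictly before $T_\star$, and that the stability condition is always applied with $|\at[t]|<a_\star$ and not with the possibly larger $|\at[t+1]|$. Both are handled by restricting the induction to $t+1<T_\star$ and using $a_\star\le1$. The remaining steps follow directly from Lemma~\ref{lem:final_q_monotone}.
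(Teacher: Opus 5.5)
Your proposal is correct and follows essentially the same route as the paper: an induction that pairs the sign-preserving $a$-update with Lemma~\ref{lem:final_q_monotone} applied at the current $\at[t]$ (where $|\at[t]|<a_\star$), followed by telescoping the linear growth of $|\at[t]|$ and invoking minimality of $T_\star$. Your treatment of clipping is more careful than the paper's remark that $|\at[t]|<1$ makes clipping inactive: clipping preserves sign and can only act at a step that already reaches $|a|\ge a_\star$. One small caveat is that the telescoped lower bound is guaranteed only for $t<T_\star$, since clipping may act at the final step, but your contradiction argument only uses it there.
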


\begin{proof}
Fix any $t<T_\star$. Since $|\at[t]|\le a_\star$ and $\lr a_\star\|\hM\|_2\le 1/2$, we may apply Lemma~\ref{lem:final_q_monotone} to the inner weight update at time $t$.
It implies that $\qt[t+1]-\qt[t]$ has the same sign as $\at[t]$ (or is zero).

We next show by induction that $\sign(\at[t])=\sign(\qt[t])=\sign(\qt[0])$ and $|\qt[t]|\ge |\qt[0]|$ for all $t<T_\star$.
The base case $t=0$ holds by assumption.
Assume it holds at time $t$.
Because $t<T_\star$ we have $|\at[t]|<1$, so clipping is inactive and
\[
\at[t+1]=\at[t]+\frac{\lr}{2}\qt[t].
\]
Since $\sign(\at[t])=\sign(\qt[t])$, we get
$\sign(\at[t+1])=\sign(\at[t])$ and
\[
|\at[t+1]|=|\at[t]|+\frac{\lr}{2}|\qt[t]|.
\]
Thus $\sign(\at[t])$ remains constant and equal to $\sign(\qt[0])$ throughout $t<T_\star$.
Returning to Lemma~\ref{lem:final_q_monotone}, this means $\qt[t]$ is pushed monotonically in the direction of $\sign(\at[t])=\sign(\qt[0])$ and therefore cannot cross $0$.
Hence $\sign(\qt[t])=\sign(\qt[0])$ and $|\qt[t]|\ge |\qt[0]|\ge q_\star$ for all $t<T_\star$.

Finally, using $|\qt[t]|\ge q_\star$ gives the linear growth bound
\[
|\at[t+1]|
=|\at[t]|+\frac{\lr}{2}|\qt[t]|
\ge |\at[t]|+\frac{\lr}{2}q_\star,
\]
so $|\at[t]|\ge |\at[0]| + t\cdot \frac{\lr}{2}q_\star$ while $t<T_\star$.
Solving for the first $t$ such that $|\at[t]|\ge a_\star$ yields~\eqref{eq:final_Tstar_bound}.
\end{proof}

\subsubsection{Choosing largest stable \texorpdfstring{$a_\star$}{a-star}}

In order to find a bound on how large we can set $a_\star$, we will first bound $\|\hM\|_2$.
\begin{lemma}[Matrix Bernstein bound for $\|\hM\|_2$]
\label{lem:final_Mhat_opnorm}
For every $\delta\in(0,1)$, with probability at least $1-\delta$,
\[
\|\hM\|_2
\le
1
 + C\left(
\sqrt{\frac{\dim\log(2\dim/\delta)}{\nsamples}}
\;+\;
\frac{\dim\log(2\dim/\delta)}{\nsamples}
\right)
\]
for a universal constant $C>0$.
\end{lemma}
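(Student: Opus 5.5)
We write $\hM = \mM + \frac1\nsamples\sum_{s=1}^{\nsamples} X_s$ with $X_s \defeq y^{(s)}x^{(s)}x^{(s)\top} - \mM$. The summands are i.i.d., centered and symmetric. Since $\|\mM\|_2 = 1$ (its eigenvalues are $\pm1$ on $\mathrm{span}\{e_1,e_2\}$), the triangle inequality reduces the claim to showing that, with probability $1-\delta$,
\[
\Bigl\|\tfrac1\nsamples \textstyle\sum_s X_s\Bigr\|_2 \lesssim \sqrt{\frac{\dim\log(2\dim/\delta)}{\nsamples}}+\frac{\dim\log(2\dim/\delta)}{\nsamples}.
\]
This is exactly the form of the matrix Bernstein inequality for bounded summands, so the plan is to compute its two parameters, the uniform bound $L$ and the variance proxy $v$.

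\textbf{Uniform bound.} Since $x\in\{\pm1\}^\dim$, we have $\|y\,xx^\top\|_2 = \|x\|_2^2 = \dim$. Hence $\|X_s\|_2 \le \dim + 1 \le 2\dim =: L$.

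\textbf{Variance proxy.} Using $y^2=1$ and $x^\top x = \dim$,
\[
\E\bigl[(y\,xx^\top)^2\bigr] = \E\bigl[x\,(x^\top x)\,x^\top\bigr] = \dim\,\E[xx^\top] = \dim\,\mI.
\]
Therefore $\E[X_s^2] = \dim\,\mI - \mM^2 \preceq \dim\,\mI$, and so $v \defeq \bigl\|\sum_s \E[X_s^2]\bigr\|_2 \le \nsamples\dim$.

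\textbf{Applying Bernstein.} Matrix Bernstein for $d\times d$ symmetric matrices gives
\[
\Pr\Bigl[\bigl\|\textstyle\sum_s X_s\bigr\|_2\ge t\Bigr]\le 2\dim\exp\!\Bigl(-\frac{t^2/2}{v+Lt/3}\Bigr).
\]
We set the right-hand side equal to $\delta$ and solve for $t$. This yields
\[
t \lesssim \sqrt{v\log(2\dim/\delta)} + L\log(2\dim/\delta) \lesssim \sqrt{\nsamples\dim\log(2\dim/\delta)} + \dim\log(2\dim/\delta).
\]
Dividing by $\nsamples$ gives exactly the two stated terms, and adding back $\|\mM\|_2=1$ completes the bound.

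There is no real obstacle in this argument. The only step needing care is the variance computation, specifically the identity $x^\top x=\dim$, which makes the proxy of order $\dim$ rather than $\dim^2$. Without it we would not obtain the $\sqrt{\dim/\nsamples}$ rate.
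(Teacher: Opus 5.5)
Your proposal is correct and matches the paper's proof essentially step for step. It uses the same centering $X_s = y^{(s)}x^{(s)}x^{(s)\top} - \mM$, the same uniform bound $\|X_s\|_2 \le 2\dim$, and the same variance bound $\E[X_s^2] = \dim\,\mI - \mM^2 \preceq \dim\,\mI$, followed by matrix Bernstein, division by $\nsamples$, and the triangle inequality with $\|\mM\|_2 = 1$.
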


\begin{proof}
Let $A_s \defeq y^{(s)}x^{(s)}x^{(s)\top}$.
Since $A_s^2 = (x^{(s)}x^{(s)\top})^2 = \|x^{(s)}\|_2^2 x^{(s)}x^{(s)\top}=\dim x^{(s)}x^{(s)\top}$ (and $y^{(s)2}=1$),
we have
\[
\E[A_s^2]=\dim \E[xx^\top]=\dim \mI.
\]
Write the population matrix as
\[
\mM\defeq \E[A_s]=\E[yxx^\top]=e_1e_2^\top+e_2e_1^\top,
\qquad \|\mM\|_2=1.
\]
Define centered summands $X_s\defeq A_s-\mM$ so that $\E[X_s]=0$ and
\[
\hM-\mM=\frac1\nsamples\sum_{s=1}^{\nsamples} X_s.
\]
We bound $\|X_s\|_2 \le \|A_s\|_2+\|\mM\|_2\le \dim+1\le 2\dim$, so we may take $R\defeq 2\dim$.
\[
\E[X_s^2]
=\E[(A_s-\mM)^2]
=\E[A_s^2]-\mM^2
\preceq \E[A_s^2]
=\dim \mI,
\]
hence
\[
\sigma^2 \defeq \left\|\sum_{s=1}^{\nsamples}\E[X_s^2]\right\|_2 \le \nsamples \dim.
\]
Matrix Bernstein (for sums of independent mean-zero self-adjoint matrices) then yields that with probability at least $1-\delta$,
\[
\left\|\sum_{s=1}^{\nsamples} X_s\right\|_2
\le
C\left(\sqrt{\sigma^2\log(2\dim/\delta)} + R\log(2\dim/\delta)\right)
\le
C\left(\sqrt{\nsamples \dim\log(2\dim/\delta)} + \dim\log(2\dim/\delta)\right).
\]
Dividing by $\nsamples$ gives
\[
\|\hM-\mM\|_2
\le
C\left(
\sqrt{\frac{\dim\log(2\dim/\delta)}{\nsamples}}
\;+\;
\frac{\dim\log(2\dim/\delta)}{\nsamples}
\right).
\]
Finally, $\|\hM\|_2\le \|\mM\|_2+\|\hM-\mM\|_2$ and $\|\mM\|_2=1$, proving the claim.
\end{proof}

Substituting the above gives the following.
\begin{corollary}
\label{lem:phase1_astar_stability}
Fix $\delta\in(0,1)$ and stepsize $\lr>0$.
Let $B_{\nsamples,\dim,\delta} = 1 + C\left(
\sqrt{\frac{\dim\log(2\dim/\delta)}{\nsamples}}
\;+\;
\frac{\dim\log(2\dim/\delta)}{\nsamples}
\right)$.
If
\[
    a_\star \le \min\left\{1,\ \frac{1}{2\lr\,B_{\nsamples,\dim,\delta}}\right\},
\]
then with probability at least $1-\delta$ the stability event
\[
\lr\,|\at[t]|\,\|\hM\|_2 \le \tfrac12
\qquad\text{for all } t < T_\star \defeq \min\{t:\ |\at[t]|\ge a_\star\}
\]
holds.
\end{corollary}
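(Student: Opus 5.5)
The plan is to reduce the corollary to a single application of Lemma~\ref{lem:final_Mhat_opnorm}, which bounds $\|\hM\|_2$ uniformly. The key observation is that the stability event only involves $\hM$, which is fixed throughout Phase~1, and the scalar $|\at[t]|$, which by definition of $T_\star$ stays below $a_\star$ for all $t<T_\star$. So no dynamical argument is needed: the time-uniform statement follows from a single high-probability bound on the random matrix $\hM$, with no union bound over $t$.

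\emph{Step 1.} Let $\mathcal B$ be the event of Lemma~\ref{lem:final_Mhat_opnorm} with the given $\delta$. On $\mathcal B$ we have $\|\hM\|_2\le B_{\nsamples,\dim,\delta}$, and $\Pr[\mathcal B]\ge 1-\delta$. This probability is over the Phase-1 samples only, so it is independent of the initialization.

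\emph{Step 2.} For $t<T_\star$, the definition $T_\star=\min\{t: |\at[t]|\ge a_\star\}$ gives $|\at[t]|<a_\star$ directly. This requires nothing about the dynamics. Clipping to $[-1,1]$ cannot increase $|\at[t]|$ beyond $1$, and $a_\star\le 1$ anyway. Combining with Step~1, on $\mathcal B$,
\[
\lr\,|\at[t]|\,\|\hM\|_2 \le \lr\, a_\star\, B_{\nsamples,\dim,\delta} \le \lr\, B_{\nsamples,\dim,\delta}\cdot \frac{1}{2\lr B_{\nsamples,\dim,\delta}} = \tfrac12,
\]
using the hypothesis $a_\star\le 1/(2\lr B_{\nsamples,\dim,\delta})$.

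\emph{Step 3.} The inequality in Step~2 holds simultaneously for all $t<T_\star$ on the single event $\mathcal B$, so the stability event has probability at least $1-\delta$. In the same way, $\lr a_\star\|\hM\|_2\le 1/2$ holds on $\mathcal B$, which is the form of the condition used in Lemma~\ref{lem:final_a_reaches_astar}.

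There is no real obstacle here; the only subtlety is recognizing that the bound must hold for every $t<T_\star$ while the event is defined once. This is resolved because $\hM$ does not depend on $t$ in Phase~1, and $|\at[t]|$ is controlled deterministically by the stopping rule rather than by any probabilistic estimate.
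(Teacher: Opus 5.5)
Your proposal is correct and is the same argument the paper has in mind: it simply substitutes the high-probability bound $\|\hM\|_2\le B_{\nsamples,\dim,\delta}$ from Lemma~\ref{lem:final_Mhat_opnorm}, together with $|\at[t]|<a_\star$ for all $t<T_\star$, which follows from the definition of the stopping time, so a single event covers all $t$. Your added remark is also accurate: the same event yields the time-free condition $\lr a_\star\|\hM\|_2\le\tfrac12$, which is the form Lemma~\ref{lem:final_a_reaches_astar} actually uses.
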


\subsection{Analysis of Phase 2}

In Phase 2, we replace $\hM$ by the population matrix $\mM=e_1e_2^\top+e_2e_1^\top$.
Its spectrum is explicit:
let
\[
u_+ \defeq \frac{e_1+e_2}{\sqrt2},\qquad
u_- \defeq \frac{e_1-e_2}{\sqrt2},
\]
then $\mM u_+=u_+$, $\mM u_-=-u_-$, and $\mM v=0$ for all $v\perp \mathrm{span}\{e_1,e_2\}$.

We show that in Phase 2, $w$ converges quickly to one of $u_+, u_-$ following a power iteration on $\mM$.
\footnote{The upper bound on $T_2$ is likely improvable to $O(\log(1/\at[0]))$.
}
\begin{lemma}[Population contraction]
\label{lem:final_phase2_alignment_varying_a}
    Assume $\lr\le \tfrac12$, $\at[0]\neq0$, and $\sign(\at[0])=\sign(\qt[0])$.
    Consider projected updates
    \[
        \at[t+1]=\mathrm{clip}_{[-1,1]}\!\left(\at[t]+\tfrac{\lr}{2}\qt[t]\right),
        \qquad
        \wt[t+1]=\frac{\wt[t]+\lr \at[t]\mM\wt[t]}{\|\wt[t]+\lr \at[t]\mM\wt[t]\|_2},
        \qquad
        \qt[t]\defeq (\wt[t])^\top\mM\wt[t].
    \]

    Let
    \[
        u_a \defeq \frac{e_1+\sign(\at[0])e_2}{\sqrt2},
        \quad
        u_{-a} \defeq \frac{e_1-\sign(\at[0])e_2}{\sqrt2},
        \qquad
        \alpha_t\defeq |\langle \wt[t],u_a\rangle|,
        \qquad
        r_t \defeq \frac{\sqrt{1-\alpha_t^2}}{\alpha_t}.
    \]
    Then:
    \begin{enumerate}[leftmargin=*]
    \item \textbf{Sign stability and monotonicity.} For all $t\ge0$, $\sign(\at[t])=\sign(\qt[t])=\sign(\at[0])$, and $|\at[t]|$ is non-decreasing.
    \item \textbf{Alignment contraction.} For all $t\ge0$,
    \[
        r_{t+1}\ \le\ \frac{1}{1+\lr|\at[t]|}\,r_t
        \ \le\ \frac{1}{1+\lr|\at[0]|}\,r_t.
    \]
    \end{enumerate}
    Consequently, after
    \[
        T_2 \defeq \left\lceil \frac{2}{\lr|\at[0]|}\log\Big(\frac{1}{\alpha_0^2\varepsilon}\Big)\right\rceil
    \]
    steps we have $\alpha_{T_2}^2\ge 1-\varepsilon$ for any $\varepsilon\in(0,1/2)$.
\end{lemma}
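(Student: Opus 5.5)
The plan is to work in the eigenbasis of $\mM$, where the Phase~2 update becomes a diagonal power iteration and every claim reduces to tracking three scalar quantities. Write $s\defeq\sign(\at[0])\in\{\pm1\}$. A direct check gives $\mM u_a = s\,u_a$, since $\mM(e_1+se_2)=e_2+se_1=s(e_1+se_2)$ using $s^2=1$. Similarly $\mM u_{-a}=-s\,u_{-a}$, and $\mM v=0$ for $v\perp\mathrm{span}\{e_1,e_2\}$. So I decompose
\[
\wt[t]=c_t u_a + d_t u_{-a} + v_t,\qquad v_t\perp \mathrm{span}\{e_1,e_2\},\qquad c_t^2+d_t^2+\|v_t\|_2^2=1 .
\]
Then $\qt[t]=s(c_t^2-d_t^2)$ and $\alpha_t=|c_t|$. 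This gives
\[
r_t^2=\frac{d_t^2+\|v_t\|_2^2}{c_t^2}.
\]
Before normalization, the update multiplies $c_t$ by $1+\lr\at[t]s$, multiplies $d_t$ by $1-\lr\at[t]s$, and leaves $v_t$ unchanged. Normalization rescales all three components by the same factor, so it drops out of every ratio used below.

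\textbf{Step 1 (sign stability, by induction).} The hypothesis $\sign(\qt[0])=s\neq0$ means $c_0^2>d_0^2$, which in particular gives $\alpha_0>0$. Suppose $\sign(\at[t])=s$. Then $\at[t]s=|\at[t]|\le1$, and $\lr|\at[t]|\le\frac12$, so the two multipliers are $1+\lr|\at[t]|$ and $1-\lr|\at[t]|\in[\frac12,1]$. Hence the ratio $c^2/d^2$ can only increase, so $c_{t+1}^2>d_{t+1}^2$ and $\sign(\qt[t+1])=s$. For the outer weight, $\at[t]+\frac{\lr}{2}\qt[t]$ has sign $s$ and absolute value at least $|\at[t]|$. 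Clipping to $[-1,1]$ preserves the sign, and since $|\at[t]|\le1$ it keeps $|\at[t+1]|\ge|\at[t]|$. This proves item~1.

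\textbf{Step 2 (contraction).} Using the multipliers from Step~1,
\[
r_{t+1}^2=\frac{(1-\lr|\at[t]|)^2d_t^2+\|v_t\|_2^2}{(1+\lr|\at[t]|)^2c_t^2}\le \frac{r_t^2}{(1+\lr|\at[t]|)^2},
\]
where the inequality uses $(1-\lr|\at[t]|)^2\le1$. Monotonicity of $|\at[t]|$ from Step~1 then gives the second inequality in item~2.

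\textbf{Step 3 (iteration count).} Iterating Step~2 yields $r_T^2\le(1+\lr|\at[0]|)^{-2T}r_0^2\le(1+\lr|\at[0]|)^{-2T}/\alpha_0^2$. Since $\alpha_T^2=1/(1+r_T^2)\ge1-r_T^2$, it suffices to make $r_T^2\le\varepsilon$, that is,
\[
2T\log(1+\lr|\at[0]|)\ge\log\!\left(\frac{1}{\alpha_0^2\varepsilon}\right).
\]
Set $x=\lr|\at[0]|\le\frac12$; then $\log(1+x)\ge x/2$. Plugging in $T=T_2\ge\frac{2}{x}\log\frac{1}{\alpha_0^2\varepsilon}$ gives $2T\log(1+x)\ge Tx\ge\log\frac{1}{\alpha_0^2\varepsilon}$, as required.

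The only delicate point is the induction in Step~1. The contraction bound needs $\sign(\at[t])=s$ at every step, so that $u_a$ is the growing direction. That sign, in turn, needs $\qt[t]$ to keep sign $s$, and this is exactly what the monotonicity of $c_t^2/d_t^2$ supplies. The condition $\lr\le\frac12$, together with the clipping of $|a|$ to at most $1$, keeps $1-\lr|\at[t]|$ positive, so no component ever flips sign or grows in the wrong direction.
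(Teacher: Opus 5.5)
Your proof is correct. Steps~2 and~3 follow the paper's argument: the same eigen-decomposition $\wt[t]=c_tu_a+d_tu_{-a}+v_t$, the same contraction via $(1-\lr|\at[t]|)^2\le 1$, and the same bound $\log(1+x)\ge x/2$.

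The difference is in the sign-stability part. The paper applies \Cref{lem:final_q_monotone} with $\hM=\mM$. That is the general fact, proved by a variance/third-moment argument over the spectrum of $\hM$, that under $\lr|a|\,\|\hM\|_2\le\frac12$ the increment $\qt[t+1]-\qt[t]$ has the sign of $\at[t]$. You instead read sign stability directly off the explicit spectrum of $\mM$. You write $\qt[t]=s(c_t^2-d_t^2)$ and observe that the ratio $c_t^2/d_t^2$ is multiplied by $\bigl((1+\lr|\at[t]|)/(1-\lr|\at[t]|)\bigr)^2\ge 1$ at each step.

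What each route buys:
\begin{itemize}
\item Your argument is more elementary and self-contained for this particular matrix.
\item The paper's argument reuses the Phase~1 lemma, which it needs anyway, since the empirical $\hM$ has no explicit eigenbasis.
\item The paper's lemma also gives the slightly stronger conclusion that $|\qt[t]|$ is nondecreasing, which this lemma does not need.
\end{itemize}

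Your Step~3 is also more complete than the paper's one-line justification. You make explicit that $\alpha_0>0$ (from $c_0^2>d_0^2$), that $r_0^2\le 1/\alpha_0^2$, and that $\alpha_T^2=1/(1+r_T^2)\ge 1-r_T^2$.
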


\begin{proof}
Because $\lr\le 1/2$ and $|\at[t]|\le1$, 
the stability condition of Lemma~\ref{lem:final_q_monotone} (i.e., $\lr|\at[t]|\|\mM\|_2\le\tfrac12$ )
holds with $\hM=\mM$ for every step.
Hence $\qt[t+1]-\qt[t]$ has the same sign as $\at[t]$ (or is $0$).
Since $\sign(\at[0])=\sign(\qt[0])$ and
\[
\at[t+1]=\at[t]+\tfrac{\lr}{2}\qt[t]
\quad\text{as long as clipping is inactive,}
\]
the signs of $\at[t]$ and $\qt[t]$ cannot flip; moreover $|\at[t]|$ is nondecreasing, and clipping preserves the sign once $|\at[t]|$ hits $1$.
This proves (1).

For (2), decompose $\wt[t]$ as
\[
    \wt[t]=c_t u_a + b_t u_{-a} + v_t,
\]
where $v_t\perp\mathrm{span}\{e_1,e_2\}$.
Then $(\mI+\lr a\mM)u_a=(1+\lr|a|)u_a$, $(\mI+\lr a\mM)u_{-a}=(1-\lr|a|)u_{-a}$, and $(\mI+\lr a\mM)v=v$.
Thus before normalization,
\[
(\mI+\lr a\mM)\wt[t] = (1+\lr|a|)c_t u_a + (1-\lr|a|)b_t u_{-a} + v_t.
\]
After normalization, ratios between the orthogonal component and the $u_a$ component is
\begin{align}
    r_{t+1}
    =& \frac{\sqrt{b_{t+1}^2 + \|v_{t+1}\|^2}}{|c_{t+1}|} 
    = \frac{\sqrt{(1-\lr |\at[t]|)^2b_t^2+\|v_t\|^2}}{(1+\lr|\at[t]|)|c_t|}
    \leq \frac{\sqrt{b_t^2+\|v_t\|^2}}{(1+\lr|\at[t]|)|c_t|}
    = \frac{r_t}{1 + \lr |\at[t]|}.
\end{align}
Combined with (1), this shows that $r_t$ contracts by at least a factor of $\frac{1}{1+\lr|\at[0]|}$, which is strictly smaller than 1 since $\at[0]\neq 0$.

The convergence time $T_2$ follows from $\log(1+\lr|a|)\ge \frac{\lr|a|}{1+\lr|a|}\ge \frac{\lr|a|}{2}$, since $\lr |a| \leq 1$.

\end{proof}

\subsection{Combining both phases}

We have shown that $T_\star \le \left\lceil \frac{2(a_\star-|\at[0]|)}{\lr q_\star}\right\rceil$ (\Cref{lem:final_a_reaches_astar})
and $T_2 \le \frac{2}{\lr|a_\star|}\log\Big(\frac{1}{\alpha_0^2\varepsilon}\Big)$ (\Cref{lem:final_phase2_alignment_varying_a}).
To reason about the overall time $T_\star + T_2$, it remains to check how $\alpha_0$ depends on $a_\star$,
which in turn depends on how much $w$ moves during Phase 1.

In the following, we will first bound $w$'s drift (\Cref{lem:final_w_drift_phase1}) which will then allow us to relate $a_\star$ and $\alpha_0$ (\Cref{lem:alpha0_lb_from_astar}),
and present the final convergence bound in \Cref{app:final_convergence}.

\subsubsection{\texorpdfstring{$\wt[t]$}{w} grows slowly}

We first need to bound how much $w$ drifts in phase 1.
We show that under the assumptions of Lemma~\ref{lem:final_a_reaches_astar}, the input weight $\wt[t]$ changes little up to time $T_\star$.

\begin{lemma}[Control of inner weight drift up to $T_\star$]
\label{lem:final_w_drift_phase1}
Under the assumptions of Lemma~\ref{lem:final_a_reaches_astar},
\begin{equation}
\label{eq:final_w_drift_bound}
\|\wt[{T_\star}]-\wt[0]\|_2
\le
\frac{8\|\hM\|_2}{q_\star} a_\star(a_\star-|\at[0]|)_+
\;+\;
4\lr\|\hM\|_2 a_\star.
\end{equation}

\end{lemma}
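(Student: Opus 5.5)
We bound the drift by summing the one-step movements of the inner weight over $t<T_\star$, and then bound the number of steps and the step sizes using \Cref{lem:final_a_reaches_astar}.

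\textbf{Per-step movement.} Fix $t<T_\star$, write $w=\wt[t]$, $a=\at[t]$, and $\tilde w=(\mI+\lr a\hM)w$. Then
\[
\wt[t+1]-w=\frac{\tilde w}{\|\tilde w\|_2}-w .
\]
We have $\|\tilde w - w\|_2\le \lr|a|\|\hM\|_2\le 1/2$ by the stability condition, so $\|\tilde w\|_2\in[1/2,3/2]$. The elementary bound
\[
\Bigl\|\frac{\tilde w}{\|\tilde w\|}-w\Bigr\|\le \|\tilde w-w\|+\bigl|\|\tilde w\|-1\bigr|\le 2\|\tilde w-w\|
\]
then gives
\[
\|\wt[t+1]-\wt[t]\|_2\le 2\lr|\at[t]|\,\|\hM\|_2\le 2\lr a_\star\|\hM\|_2 .
\]
The last inequality uses $|\at[t]|<a_\star$ for $t<T_\star$.

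\textbf{Summing.} By the triangle inequality,
\[
\|\wt[T_\star]-\wt[0]\|_2\le 2\lr\|\hM\|_2\sum_{t<T_\star}|\at[t]|\le 2\lr\|\hM\|_2\, a_\star T_\star .
\]
Now insert the bound \eqref{eq:final_Tstar_bound}. Using $\lceil x\rceil\le x+1$,
\[
T_\star\le \frac{2(a_\star-|\at[0]|)_+}{\lr q_\star}+1 .
\]
Substituting, the drift is at most
\[
\frac{4\|\hM\|_2}{q_\star}a_\star(a_\star-|\at[0]|)_+ \;+\; 2\lr\|\hM\|_2 a_\star .
\]
This is within the claimed bound \eqref{eq:final_w_drift_bound}: the constants $8$ and $4$ there leave a factor-2 slack.

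\textbf{Alternative route.} A slightly sharper argument would use the identity $\lr|\at[t]|\,|\qt[t]|\cdot\frac{2}{|\qt[t]|}$ together with $|\at[t+1]|-|\at[t]|=\frac{\lr}{2}|\qt[t]|$ and $|\qt[t]|\ge q_\star$. Then
\[
\sum_t \lr|\at[t]|\le \frac{2}{q_\star}\sum_t |\at[t]|\bigl(|\at[t+1]|-|\at[t]|\bigr)\le \frac{2}{q_\star}a_\star(a_\star-|\at[0]|)_+ ,
\]
plus the final overshoot step bounded by $\lr a_\star\|\hM\|_2$. This yields the same form of bound, and either route suffices.

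\textbf{Main obstacle.} The only delicate point is the normalization step. We need $\|\tilde w\|_2$ bounded away from $0$ so that renormalizing at most doubles the displacement. This is exactly what the stability condition $\lr a_\star\|\hM\|_2\le 1/2$ ensures, and it holds for every $t<T_\star$ by \Cref{lem:final_a_reaches_astar}. The other ingredient from that lemma is that the signs of $\at[t]$ do not flip and clipping is inactive, so the $T_\star$ bound applies.
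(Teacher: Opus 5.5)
Your proof is correct and follows the paper's argument. You bound each normalized step by a constant times $\lr|\at[t]|\,\|\hM\|_2$, sum using the crude bound $\sum_{t<T_\star}|\at[t]|\le T_\star a_\star$, and substitute the ceiling bound on $T_\star$ from \Cref{lem:final_a_reaches_astar}. The only difference is that you prove the normalization step with constant $2$, via $\bigl\|\tilde w/\|\tilde w\|_2-\tilde w\bigr\|_2=\bigl|\|\tilde w\|_2-1\bigr|$, where the paper quotes a looser constant $4$, so you arrive at half the stated bound.
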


\begin{proof}
Write the pre-normalization iterate as
\[
\twt[t+1]=\wt[t]+\lr \at[t] \hM \wt[t],
\qquad
\wt[t+1]=\twt[t+1]/\|\twt[t+1]\|_2.
\]
Let $\ut[t]\defeq \lr \at[t] \hM \wt[t]$, so $\twt[t+1]=\wt[t]+\ut[t]$.
For $t<T_\star$ we have $|\at[t]|\le a_\star$, hence
\[
\|\ut[t]\|_2 \le \lr |\at[t]| \|\hM\|_2 \le \lr a_\star\|\hM\|_2 \le \frac12.
\]
For any unit vector $w$ and any $u$ with $\|u\|_2\le 1/2$, one has the standard normalization Lipschitz bound
\[
\left\|\frac{w+u}{\|w+u\|_2}-w\right\|_2 \le 4\|u\|_2,
\]
which we apply with $(w,u)=(\wt[t],\ut[t])$ to get
\[
\|\wt[t+1]-\wt[t]\|_2 \le 4\|\ut[t]\|_2 \le 4\lr|\at[t]|\|\hM\|_2.
\]
Summing over $t=0,1,\dots,T_\star-1$ yields
\[
\|\wt[T_\star]-\wt[0]\|_2
\le
4\lr\|\hM\|_2\sum_{t<T_\star}|\at[t]|.
\]
Using the crude bound $\sum_{t<T_\star}|\at[t]|\le T_\star a_\star$, which is sufficient for the final scaling,
together with
\eqref{eq:final_Tstar_bound} gives
\[
\sum_{t<T_\star}|\at[t]|
\le
a_\star\left(\frac{2(a_\star-|\at[0]|)_+}{\lr q_\star}+1\right)
=\frac{2a_\star(a_\star-|\at[0]|)_+}{\lr q_\star}+a_\star.
\]
Plugging in  yields
\[
\|\wt[T_\star]-\wt[0]\|_2
\le
4\lr\|\hM\|_2\left(\frac{2a_\star(a_\star-|\at[0]|)_+}{\lr q_\star}+a_\star\right)
=
\frac{8\|\hM\|_2}{q_\star} a_\star(a_\star-|\at[0]|)_+
+4\lr\|\hM\|_2 a_\star,
\]
which is \eqref{eq:final_w_drift_bound}.
\end{proof}

\subsubsection{Connecting \texorpdfstring{$\alpha_0$ and $a_\star$}{a-0 and a-star}}

\begin{lemma}[Lower bound on $\alpha_0^2$ in terms of $a_\star$]
\label{lem:alpha0_lb_from_astar}
Let $u\defeq (e_1+\sign(\at[T_\star])e_2)/\sqrt2$ and define
\[
\alpha_0 \defeq |\langle \wt[T_\star],u\rangle|.
\]
On the event $\|\wt[T_\star]-\wt[0]\|_2\le \varepsilon_{\mathrm{drift}}$,
we have
\[
\alpha_0^2 \ge \bigl(|\langle \wt[0],u\rangle|-\varepsilon_{\mathrm{drift}}\bigr)_+^2.
\]
Under the assumptions of Lemma~\ref{lem:final_w_drift_phase1}, we may take
\[
\varepsilon_{\mathrm{drift}}
\defeq
\frac{8\|\hM\|_2}{q_\star} a_\star(a_\star-|\at[0]|)_+
\;+\;
4\lr\|\hM\|_2 a_\star,
\]
so $\alpha_0^2$ is explicitly lower bounded in terms of $a_\star$.
\end{lemma}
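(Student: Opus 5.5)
The first claim is a deterministic consequence of the reverse triangle inequality. Write $\langle \wt[T_\star],u\rangle = \langle \wt[0],u\rangle + \langle \wt[T_\star]-\wt[0],u\rangle$. Since $u$ is a unit vector, Cauchy--Schwarz gives $|\langle \wt[T_\star]-\wt[0],u\rangle|\le \|\wt[T_\star]-\wt[0]\|_2\le \varepsilon_{\mathrm{drift}}$. Hence
\[
\alpha_0=|\langle \wt[T_\star],u\rangle|\ \ge\ |\langle \wt[0],u\rangle|-\varepsilon_{\mathrm{drift}}.
\]
The left side is nonnegative, so we may replace the right side by its positive part. Both sides are then nonnegative, and squaring preserves the inequality, which gives $\alpha_0^2\ge(|\langle \wt[0],u\rangle|-\varepsilon_{\mathrm{drift}})_+^2$.

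For the second claim, we only need to check that the stated $\varepsilon_{\mathrm{drift}}$ is a valid bound on $\|\wt[T_\star]-\wt[0]\|_2$. Under the assumptions of Lemma~\ref{lem:final_a_reaches_astar}, which are the assumptions of Lemma~\ref{lem:final_w_drift_phase1}, that lemma gives exactly \eqref{eq:final_w_drift_bound}. This is precisely the displayed expression for $\varepsilon_{\mathrm{drift}}$, so the event $\|\wt[T_\star]-\wt[0]\|_2\le\varepsilon_{\mathrm{drift}}$ holds on the same event, and the first part applies.

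One subtlety needs care: $u$ is defined using $\sign(\at[T_\star])$ rather than $\sign(\at[0])$. This does not matter for the inequality itself, because $u$ is fixed once $\wt[T_\star]$ is fixed and the argument above works for any fixed unit vector. For later use, Lemma~\ref{lem:final_a_reaches_astar} shows that $\sign(\at[t])$ is constant for $t<T_\star$. The step to $T_\star$ adds $\tfrac{\lr}{2}\qt[T_\star-1]$, which has the same sign, so $\sign(\at[T_\star])=\sign(\at[0])=\sign(\qt[0])$. Therefore $u$ coincides with the $u_a$ of Lemma~\ref{lem:final_phase2_alignment_varying_a} at the start of Phase~2.

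There is no real obstacle. The only point requiring attention is taking the positive part before squaring, since the difference $|\langle \wt[0],u\rangle|-\varepsilon_{\mathrm{drift}}$ can be negative.
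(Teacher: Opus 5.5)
Your proof is correct and follows the same route as the paper: apply Cauchy--Schwarz with the unit vector $u$ and the reverse triangle inequality, then plug in the drift bound from Lemma~\ref{lem:final_w_drift_phase1}. You are slightly more careful than the paper in taking the positive part before squaring, and your side remark that $\sign(\at[T_\star])=\sign(\at[0])$ is correct, though this lemma does not need it.
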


\begin{proof}
By Cauchy--Schwarz,
$|\langle \wt[T_\star],u\rangle-\langle \wt[0],u\rangle|
\le
\|\wt[T_\star]-\wt[0]\|_2\|u\|_2
=
\|\wt[T_\star]-\wt[0]\|_2$.
This implies $|\langle \wt[T_\star],u\rangle|\ge |\langle \wt[0],u\rangle|-\varepsilon_{\mathrm{drift}}$ on the event.
The explicit choice of $\varepsilon_{\mathrm{drift}}$ is \eqref{eq:final_w_drift_bound}.
\end{proof}

\begin{lemma}[Constant-probability lower bound on random initialization alignment]
\label{lem:alpha_init_random_lb}
Let $u\in\R^\dim$ be any fixed unit vector and let $\wt[0]$ be uniform on the unit sphere.
Then there exists a universal constant $p_{\mathrm{align}}>0$ such that for all $\dim\ge 2$,
\[
\Pr\!\left[\,|\langle \wt[0],u\rangle|\ \ge\ \frac{1}{2\sqrt{\dim}}\,\right]\ \ge\ p_{\mathrm{align}}.
\]
\end{lemma}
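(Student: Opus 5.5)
By rotation invariance of the uniform distribution on the sphere, we may take $u=e_1$, so the claim concerns $|w_1|$ for $w=\wt[0]$. We write $w=g/\|g\|_2$ with $g\sim\gN(0,\mI_\dim)$. Then $|w_1|\ge \frac{1}{2\sqrt\dim}$ holds whenever both $|g_1|\ge c_1$ and $\|g\|_2\le 2c_1\sqrt{\dim}$ hold, for any constant $c_1>0$. We pick $c_1=1$, so the two events are $|g_1|\ge 1$ and $\|g\|_2^2\le 4\dim$.

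The first event has probability $\Pr[|g_1|\ge 1]=2(1-\Phi(1))\approx 0.317$. For the second, Markov's inequality gives $\Pr[\|g\|_2^2>4\dim]\le \E\|g\|_2^2/(4\dim)=1/4$. A union bound on the complements then yields
\[
\Pr\!\left[|g_1|\ge 1,\ \|g\|_2^2\le 4\dim\right]\ \ge\ 0.317-0.25>0.06 .
\]
This is uniform in $\dim\ge 2$, so we can take $p_{\mathrm{align}}=2(1-\Phi(1))-\tfrac14$.

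The only point requiring care is that the union-bound margin is positive, and the numbers above show it is. If a cleaner constant were wanted, there are two alternatives. One is to use independence of $g_1$ and $\|g_{2:\dim}\|_2^2$, bounding $\|g\|_2^2\le g_1^2+\|g_{2:\dim}\|_2^2$ and conditioning on $|g_1|\in[1,2]$. The other is the exact law $w_1^2\sim\mathrm{Beta}(\tfrac12,\tfrac{\dim-1}{2})$, in the same spirit as the Beta computation in \Cref{lem:final_q0_lower_bound}. Neither is needed, and I do not expect any real obstacle here.
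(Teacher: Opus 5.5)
Your proof is correct and follows the paper's argument. The paper also rotates $u$ to $e_1$, writes $\wt[0]=g/\|g\|_2$, and lower bounds $\Pr\bigl[\{|g_1|\ge 1\}\cap\{\|g\|_2\le 2\sqrt{\dim}\}\bigr]$ by $\Pr[|g_1|\ge 1]-\Pr[\|g\|_2>2\sqrt{\dim}]$; the only difference is that the paper cites exponential decay of the norm tail in $\dim$, whereas your Markov bound of $1/4$ gives the explicit constant $p_{\mathrm{align}}=2(1-\Phi(1))-\tfrac14>0$ uniformly in $\dim$, with no separate treatment of small $\dim$.
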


\begin{proof}
Write $\wt[0]=g/\|g\|_2$ for $g\sim\mathcal N(0,\mI)$ and rotate so that $u=e_1$.
Then $|\langle \wt[0],u\rangle|=|g_1|/\|g\|_2$.
On the event $\{|g_1|\ge 1\}\cap\{\|g\|_2\le 2\sqrt{\dim}\}$ we have $|g_1|/\|g\|_2\ge 1/(2\sqrt{\dim})$.
$\Pr\left[\{|g_1|\ge 1\}\cap\{\|g\|_2\le 2\sqrt{\dim}\} \right]\geq \Pr \left[|g_1|\ge 1\right]-\Pr \left[\|g\|_2> 2\sqrt{\dim}\right]$. The former one has constant probability and the latter one decays exponentially with $d$, so the intersection has probability at least some universal constant $p_{\mathrm{align}}>0$.
\end{proof}

\begin{lemma}[Constant-probability simultaneous phase-1 bootstrap and population sign alignment]
\label{lem:final_joint_good_alignment}
Let
\[
    u_\pm \defeq \frac{e_1\pm e_2}{\sqrt2},
    \qquad
    P_{12}\defeq u_+u_+^\top+u_-u_-^\top,
    \qquad
    q_{\mathrm{pop}}(w)\defeq w^\top\mM w.
\]

There exist universal constants $c_0>0$ and $p_{\mathrm{all}}>0$ such that, with
\[
    \mathcal P_0
    \defeq
    \left\{
        |q_{\mathrm{pop}}(\wt[0])|\ge \frac{3}{4\dim},\quad
        \|P_{12}\wt[0]\|_2\le \frac{1}{\sqrt{\dim}}
    \right\}
\]
and
\[
    \mathcal G_{\mathrm{sign}}
    \defeq
    \left\{
        \sign(\at[0])=\sign(\qt[0])=\sign(q_{\mathrm{pop}}(\wt[0])),
        \quad
        |\qt[0]|\ge \frac{c_0}{\sqrt{\nsamples}}
    \right\},
\]
we have
\[
    \Pr\big[\mathcal G_{\mathrm{sign}}\cap\mathcal P_0\big]
    \ge p_{\mathrm{all}}.
\]
In particular, on this event, the empirical sign used to grow $a$ in phase~1 is already the population sign that will be needed at the start of phase~2.

\end{lemma}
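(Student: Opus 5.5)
The plan is to condition on $\wt[0]$, isolate a constant-probability set of good $\wt[0]$ (giving $\mathcal P_0$), and on that set prove a \emph{one-sided} anti-concentration bound for the sampling fluctuation of $\qt[0]$ in the direction of $q_{\mathrm{pop}}(\wt[0])$. The sign of $\at[0]$ then contributes an independent factor $\frac12$, as in Lemma~\ref{lem:final_sign_match}. The three probabilities multiply to $p_{\mathrm{all}}$.

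\emph{Step 1: $\mathcal P_0$ has constant probability.} Write $\wt[0]=g/\|g\|_2$ with $g\sim\mathcal N(0,\mI)$. Since $\|P_{12}w\|_2^2=w_1^2+w_2^2$ and $q_{\mathrm{pop}}(w)=2w_1w_2$, consider the event
\[
\{\,g_1^2,\,g_2^2\in[\beta,\tfrac13]\,\}\cap\{\,\|g\|_2^2\in[\tfrac{2d}{3},\,\tfrac{4d}{3}]\,\}
\]
for a suitable constant $\beta$. On it, $w_1^2+w_2^2\le \frac{2/3}{2d/3}=\frac1d$, and $|2w_1w_2|\ge \frac{2\beta}{(4/3)d}=\frac{3\beta}{2d}$. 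Taking $\beta=\frac12$ would give exactly $\frac{3}{4d}$ but is incompatible with the upper bound $\frac13$. So I will instead widen the window on $\|g\|_2^2$: take $g_i^2\in[\beta,\gamma]$ and $\|g\|_2^2\in[(1-\epsilon)d,(1+\epsilon)d]$, and tune the constants. If the precise constant $\frac34$ still cannot be met, I would note that only some $c/d$ is needed downstream.

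The event on $(g_1,g_2)$ has constant Gaussian probability. The norm event fails with probability $e^{-\Omega(d)}$, and it is essentially independent of $(g_1,g_2)$, since $\sum_{i\ge3}g_i^2$ is independent of them. For small $d$, any fixed positive probability suffices. This gives $\Pr[\mathcal P_0]\ge p_3>0$. Note that $\mathcal P_0\subseteq\mathcal E=\{w_1^2+w_2^2\le\frac12\}$ from Lemma~\ref{lem:final_q0_lower_bound} whenever $d\ge2$.

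\emph{Step 2: one-sided fluctuation given $w\in\mathcal P_0$.} Fix $w$ and write $\qt[0]=\mu(w)+\xi$, where $\mu(w)=q_{\mathrm{pop}}(w)$ and $\xi=\frac1\nsamples\sum_s(Z_s-\mu)$ has mean zero. Step 2 of Lemma~\ref{lem:final_q0_lower_bound} gives $\E[\xi^2\mid w]=\sigma^2(w)/\nsamples\ge \frac{3}{4\nsamples}$. Since $\xi$ is a polynomial of degree at most $4$ in Rademachers, Bonami--Beckner gives $\E[\xi^4\mid w]\le 3^8\,\E[\xi^2\mid w]^2$.

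For a mean-zero variable, $\E[\xi_+]=\frac12\E|\xi|$, and by Hölder $\E|\xi|\ge (\E\xi^2)^{3/2}/(\E\xi^4)^{1/2}\ge 3^{-4}(\E\xi^2)^{1/2}$. Apply Paley--Zygmund to $\xi_+$, using $\E[\xi_+^2]\le\E\xi^2$:
\[
\Pr\Bigl[\xi\ge \tfrac12\E[\xi_+]\Bigr]\ \ge\ \frac{(\E\xi_+)^2}{4\,\E\xi^2}\ \ge\ \frac{1}{4\cdot 4\cdot 3^8}=:p_2 .
\]
On this event, $\xi\ge c_0'/\sqrt\nsamples$ with $c_0'=\frac{\sqrt3}{4\cdot 3^4}$. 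The same argument applies to $\xi_-$.

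Now suppose $\mu(w)>0$ and $\xi\ge c_0'/\sqrt\nsamples$. Then $\qt[0]\ge \mu+\xi\ge c_0'/\sqrt\nsamples>0$, so $\sign(\qt[0])=\sign(q_{\mathrm{pop}})$ and $|\qt[0]|\ge c_0'/\sqrt\nsamples$. The case $\mu(w)<0$ is symmetric, using $\xi_-$. On $\mathcal P_0$ we have $\mu\neq0$, so this case split is exhaustive. We set $c_0=c_0'$.

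\emph{Step 3: combine.} The initialization $\at[0]$ is independent of $(\wt[0],\hM)$ and symmetric, so conditioning as in Lemma~\ref{lem:final_sign_match} gives
\[
\Pr[\mathcal G_{\mathrm{sign}}\cap\mathcal P_0]\ \ge\ \tfrac12\, p_2\, p_3=:p_{\mathrm{all}} .
\]

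The main obstacle is Step 2. Because $\nsamples\le d^2$, the population signal $|\mu|\sim 1/d$ can be far smaller than the sampling fluctuation $1/\sqrt\nsamples$, so sign agreement cannot come from concentration around $\mu$. It must instead come from a one-sided anti-concentration bound for the mean-zero fluctuation $\xi$, and the hypercontractive moment comparison is what makes that bound uniform in $d$ and $\nsamples$. A secondary nuisance is matching the exact constant $\frac{3}{4d}$ in Step 1.
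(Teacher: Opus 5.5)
Your proposal is correct and follows essentially the paper's route: a constant-probability population event for $\wt[0]$, then a one-sided anti-concentration bound for the centered fluctuation from the Bonami--Beckner fourth-moment bound together with $\E[\xi_+]=\tfrac12\E|\xi|$ (your Paley--Zygmund step on $\xi_+$ is the same Cauchy--Schwarz argument the paper writes out), then a factor $\tfrac12$ from the sign of $\at[0]$; the only real difference is that the paper obtains $\mathcal P_0$ via the radial--angular decomposition in the $u_\pm$ basis ($r^2\in[\tfrac{9}{10d},\tfrac1d]$, $|\cos 2\theta|\ge\tfrac56$) rather than Gaussian coordinates. There are two small slips, both harmless: in Step~1 the window on $\|g\|_2^2$ must be narrowed rather than widened, and any $\epsilon<\tfrac17$ does attain the exact constant $\tfrac{3}{4d}$ (e.g.\ $\epsilon=\tfrac1{10}$ with $g_1^2,g_2^2\in[0.4125,0.45]$); and in Step~2 the threshold $\tfrac12\E[\xi_+]$ gives $c_0'=\tfrac{\sqrt3}{8\cdot 3^4}$ rather than $\tfrac{\sqrt3}{4\cdot 3^4}$.
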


\begin{proof}
Write $z_\pm\defeq \langle \wt[0],u_\pm\rangle$ and $r^2\defeq z_+^2+z_-^2=\|P_{12}\wt[0]\|_2^2$.  Conditional on $r$, the angle $(z_+,z_-)/r$ is uniform on the unit circle, and $r^2\sim\mathrm{Beta}(1,(\dim-2)/2)$.
Consider the event
\[
    \mathcal R
    \defeq
    \left\{
        \frac{9}{10\dim}\le r^2\le \frac1\dim,
        \qquad
        |\cos(2\theta)|\ge \frac56
    \right\},
    \qquad
    (z_+,z_-)=r(\cos\theta,\sin\theta).
\]
On $\mathcal R$,
\[
    |q_{\mathrm{pop}}(\wt[0])|
    =|z_+^2-z_-^2|
    =r^2|\cos(2\theta)|
    \ge \frac{3}{4\dim},
    \qquad
    \|P_{12}\wt[0]\|_2=r\le \frac1{\sqrt{\dim}},
\]
so $\mathcal R\subseteq\mathcal P_0$.  The radial probability of $\{9/(10\dim)\le r^2\le 1/\dim\}$ is bounded below by a universal constant for all $\dim\ge3$ after decreasing the constant to cover the finitely many small dimensions, and the angular event $\{|\cos(2\theta)|\ge5/6\}$ also has universal positive probability.  Hence $\Pr[\mathcal P_0]\ge p_{\mathrm{pop}}>0$.

Fix any $w\in\mathcal P_0$ and set $\mu\defeq q_{\mathrm{pop}}(w)$.  For one phase--1 sample, let $Z\defeq y(x^\top w)^2$, so that $\E[Z\mid w]=\mu$ and $\qt[0]=\nsamples^{-1}\sum_{s=1}^{\nsamples}Z_s$.  Since $\|P_{12}w\|_2^2\le1/\dim\le1/2$, the variance lower bound in Lemma~\ref{lem:final_q0_lower_bound} gives $\Var(Z\mid w)\ge3/4$.  The same hypercontractive fourth-moment bound used in Lemma~\ref{lem:final_q0_lower_bound}, applied to $\sqrt{\nsamples}(\qt[0]-\mu)$, gives a universal fourth-moment upper bound.

We use the following elementary one-sided consequence of these two moment bounds: if $X$ is mean zero, $\E [X^2]=\sigma^2$, and $\E [X^4]\le K\sigma^4$, then there are constants $c_K,p_K>0$ depending only on $K$ such that $\Pr[X\ge c_K\sigma]\ge p_K$.  Indeed, writing $X_+=\max\{X,0\}$ and $X_-=\max\{-X,0\}$, the identity $\E[X_+]=\E [X_-]$ and interpolation between $L_1,L_2,L_4$ norms give $\E[X_+]\ge \sigma/(2^{3/2}\sqrt K)$.  Therefore, with $\theta\defeq 1/(2^{5/2}\sqrt K)$,
\[
    \E[X_+]
    \le \theta\sigma+\big(\E [X_+^2]\big)^{1/2}\Pr[X\ge\theta\sigma]^{1/2}
    \le \theta\sigma+\sigma\Pr[X\ge\theta\sigma]^{1/2},
\]
which implies $\Pr[X\ge\theta\sigma]\ge \theta^2$.  Thus one may take $c_K=\theta$ and $p_K=\theta^2$.
Applying this to $X\defeq\sign(\mu)\sqrt{\nsamples}(\qt[0]-\mu)$, and using $\sign(\mu)\sqrt{\nsamples}\,\qt[0]=X+|\mu|\sqrt{\nsamples}\ge X$, gives, after decreasing $c_0$ if necessary, a universal $p_{\mathrm{one}}>0$ such that
\[
    \Pr\left[
        \sign(\mu)\,\qt[0]\ge \frac{c_0}{\sqrt{\nsamples}}
        \ \middle|\ \wt[0]=w
    \right]
    \ge p_{\mathrm{one}}.
\]
Equivalently, conditional on $w\in\mathcal P_0$, the empirical quadratic form has the same sign as the population quadratic form and has magnitude at least $c_0/\sqrt{\nsamples}$ with probability at least $p_{\mathrm{one}}$.
Finally, $\at[0]$ is independent of the samples and is symmetric about zero, so conditional on $(\wt[0],\hM)$ the event $\sign(\at[0])=\sign(\qt[0])$ contributes an additional factor $1/2$.  Therefore
\[
    \Pr\big[\mathcal G_{\mathrm{sign}}\cap\mathcal P_0\big]
    \ge \frac12 p_{\mathrm{pop}}p_{\mathrm{one}}
    \defeq p_{\mathrm{all}}>0.
\]
\end{proof}


\begin{lemma}[Phase-1 drift preserves the population sign]
\label{lem:final_phase1_population_sign_transfer}
Define $q_{\mathrm{pop}}(w)\defeq w^\top\mM w$ and $P_{12}\defeq u_+u_+^\top+u_-u_-^\top$ as in \Cref{lem:final_joint_good_alignment}.
Suppose
\[
    |q_{\mathrm{pop}}(\wt[0])|\ge \frac{3}{4\dim},
    \qquad
    \|P_{12}\wt[0]\|_2\le \frac1{\sqrt{\dim}},
    \qquad
    \|\wt[T_\star]-\wt[0]\|_2\le \frac1{4\sqrt{\dim}}.
\]
Then
\[
    \sign\big(q_{\mathrm{pop}}(\wt[T_\star])\big)
    =
    \sign\big(q_{\mathrm{pop}}(\wt[0])\big).
\]
Moreover, if $s\defeq\sign(q_{\mathrm{pop}}(\wt[0]))$ and $u_s\defeq(e_1+s e_2)/\sqrt2$, then
\[
    |\langle \wt[T_\star],u_s\rangle|
    \ge
    \left(\frac{\sqrt3}{2}-\frac14\right)\frac1{\sqrt{\dim}}
    \ge \frac1{2\sqrt{\dim}}.
\]
\end{lemma}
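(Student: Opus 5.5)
The plan is to reduce everything to the two-dimensional coordinates $z_\pm(w)\defeq\langle w,u_\pm\rangle$. The key identity is $q_{\mathrm{pop}}(w)=w^\top\mM w=z_+(w)^2-z_-(w)^2$, which follows from the eigendecomposition $\mM=u_+u_+^\top-u_-u_-^\top$ used in the Phase 2 analysis. Since $\mM$ vanishes off $\mathrm{span}\{e_1,e_2\}$, we also have $q_{\mathrm{pop}}(w)=(P_{12}w)^\top\mM(P_{12}w)$. So only the projections $p_0\defeq P_{12}\wt[0]$ and $p_1\defeq P_{12}\wt[T_\star]$ matter. Because $P_{12}$ is an orthogonal projection, $\|p_1-p_0\|_2\le\|\wt[T_\star]-\wt[0]\|_2\le\frac{1}{4\sqrt\dim}$.

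For the sign statement, I would bound the change in $q_{\mathrm{pop}}$ directly. Using the symmetry of $\mM$,
\[
q_{\mathrm{pop}}(\wt[T_\star])-q_{\mathrm{pop}}(\wt[0]) = (p_1-p_0)^\top\mM(p_1+p_0).
\]
Since $\|\mM\|_2=1$, this is at most $\|p_1-p_0\|_2\,(\|p_0\|_2+\|p_1\|_2)$ in absolute value. The hypothesis gives $\|p_0\|_2\le 1/\sqrt\dim$, and the triangle inequality gives $\|p_1\|_2\le \frac{1}{\sqrt\dim}+\frac{1}{4\sqrt\dim}$. Hence the change is at most
\[
\frac{1}{4\sqrt\dim}\cdot\frac{9}{4\sqrt\dim}=\frac{9}{16\dim}<\frac{3}{4\dim}\le|q_{\mathrm{pop}}(\wt[0])|,
\]
so $q_{\mathrm{pop}}$ cannot cross zero and its sign is preserved.

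For the alignment statement, let $s=\sign(q_{\mathrm{pop}}(\wt[0]))$. Then $z_s(\wt[0])^2=z_{-s}(\wt[0])^2+|q_{\mathrm{pop}}(\wt[0])|\ge\frac{3}{4\dim}$, where $z_s$ denotes $z_+$ or $z_-$ according to $s$ and $u_s$ matches the definition in the statement. This gives $|\langle\wt[0],u_s\rangle|\ge\frac{\sqrt3}{2\sqrt\dim}$. By Cauchy--Schwarz, exactly as in \Cref{lem:alpha0_lb_from_astar}, we get $|\langle\wt[T_\star],u_s\rangle|\ge\frac{\sqrt3}{2\sqrt\dim}-\frac{1}{4\sqrt\dim}$. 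Finally, $\frac{\sqrt3}{2}-\frac14\approx0.616\ge\frac12$. No step seems genuinely hard. The only point needing care is the sign step, where one must use the projected norms $\|p_0\|_2$ and $\|p_1\|_2$ rather than $\|w\|_2=1$, since the latter would give an $O(1/\sqrt\dim)$ perturbation that is too crude.
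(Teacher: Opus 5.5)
Your proof is correct and matches the paper's argument: your factorization $(p_1-p_0)^\top \mM (p_1+p_0)$ is a regrouping of the paper's expansion $2(w^{(0)})^\top\mM\Delta+\Delta^\top\mM\Delta$, which uses $\|\mM w^{(0)}\|_2=\|P_{12}w^{(0)}\|_2$, and both give the same margin $\frac{9}{16d}<\frac{3}{4d}$. Your alignment step, via $z_s^2=z_{-s}^2+|q_{\mathrm{pop}}(w^{(0)})|$ followed by Cauchy--Schwarz, is the same as the paper's.
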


\begin{proof}
Let $\Delta\defeq\wt[T_\star]-\wt[0]$.  Since $\|\mM\|_2=1$ and $\|\mM\wt[0]\|_2=\|P_{12}\wt[0]\|_2$, we have
\[
\begin{aligned}
\left|q_{\mathrm{pop}}(\wt[T_\star])-q_{\mathrm{pop}}(\wt[0])\right|
&=
\left|2(\wt[0])^\top\mM\Delta+\Delta^\top\mM\Delta\right| \\
&\le
2\|P_{12}\wt[0]\|_2\|\Delta\|_2+\|\Delta\|_2^2 \\
&\le
\frac{1}{2\dim}+\frac{1}{16\dim}
=
\frac{9}{16\dim}
<
\frac{3}{4\dim}.
\end{aligned}
\]
Thus the perturbation is smaller than the initial population margin, so the sign of $q_{\mathrm{pop}}$ is preserved.

For the alignment claim, write $z_s\defeq\langle\wt[0],u_s\rangle$ and $z_{-s}\defeq\langle\wt[0],u_{-s}\rangle$.  Since $s\,q_{\mathrm{pop}}(\wt[0])=z_s^2-z_{-s}^2\ge 3/(4\dim)$, we have $|z_s|\ge \sqrt3/(2\sqrt{\dim})$.  Cauchy-Schwarz gives
\[
    |\langle \wt[T_\star],u_s\rangle|
    \ge
    |\langle \wt[0],u_s\rangle|-
    \|\wt[T_\star]-\wt[0]\|_2
    \ge
    \left(\frac{\sqrt3}{2}-\frac14\right)\frac1{\sqrt{\dim}}
    \ge \frac1{2\sqrt{\dim}}.
\]
\end{proof}

\subsubsection{Final Convergence Bound}
\label{app:final_convergence}

Fix $\varepsilon\in(0,1/2)$ and $\delta\in(0,1)$.
Run phase~1 on a fixed batch of size $\nsamples$ using updates~\eqref{eq:final_a_update}--\eqref{eq:final_w_update} until time
\[
T_1 \defeq \min\{t:\ |\at[t]|\ge a_\star\}.
\]

Let $p_{\mathrm{all}}>0$ be the universal constant from Lemma~\ref{lem:final_joint_good_alignment}. Then, by intersecting Lemma~\ref{lem:final_joint_good_alignment} with the operator-norm event $\{\|\hM\|_2\le B_{\nsamples,\dim,\delta}\}$, we get
\[
    \Pr\big[\mathcal G_{\mathrm{sign}} \cap \mathcal P_0 \cap \{\|\hM\|_2\le B_{\nsamples,\dim,\delta}\}\big]
    \ge p_{\mathrm{all}}-\delta.
\]
On this event, the following hold:
\begin{enumerate}[leftmargin=*]
    \item \textbf{Phase~1 time.} We have
    \[
    T_1\le \left\lceil \frac{2(a_\star-|\at[0]|)_+}{\lr\,q_\star}\right\rceil
    \le \left\lceil \frac{2a_\star}{\lr\,c_0}\sqrt{\nsamples}\right\rceil.
    \]
    \item \textbf{Phase~2 alignment.} Switch to population gradients ($\hM\gets \mM$), and run the population contraction on $w$ for
    \[
    T_2 \defeq \left\lceil \frac{2}{\lr a_\star}\log\Big(\frac{16\dim}{\varepsilon}\Big)\right\rceil
    \]
    steps.
    
    By Lemma~\ref{lem:final_w_drift_phase1} and the above choice of $a_\star$, we have
    \[
        \|\wt[T_1]-\wt[0]\|_2\le \varepsilon_{\mathrm{drift}}\le \frac{1}{4\sqrt{\dim}}.
    \]
    Since $\mathcal G_{\mathrm{sign}}$ gives
    $\sign(\at[0])=\sign(q_{\mathrm{pop}}(\wt[0]))$ and phase~1 preserves $\sign(\at[t])$ up to $T_1$, Lemma~\ref{lem:final_phase1_population_sign_transfer} gives the missing population sign condition
    \[
        \sign(\at[T_1])
        =
        \sign\big((\wt[T_1])^\top\mM\wt[T_1]\big).
    \]
    The same lemma also gives
    \[
        \alpha_0^2
        \defeq
        \left|\left\langle \wt[T_1],\frac{e_1+\sign(\at[T_1])e_2}{\sqrt2}\right\rangle\right|^2
        \ge \frac{1}{4\dim}.
    \]
    Therefore Lemma~\ref{lem:final_phase2_alignment_varying_a}, applied from the phase--2 starting point and using $|\at[T_1]|\ge a_\star$, yields $\alpha_{T_2}^2\ge 1-\varepsilon$, where $\alpha_t\defeq |\langle \wt[t],u\rangle|$ with $u=(e_1+\sign(\at[T_1])e_2)/\sqrt2$.

\end{enumerate}

\paragraph{Interpreting the Result}
Let
\[
B_{\nsamples,\dim,\delta}
\defeq
1 + C\left(
\sqrt{\frac{\dim\log(2\dim/\delta)}{\nsamples}}
\;+\;
\frac{\dim\log(2\dim/\delta)}{\nsamples}
\right)
\]
be the deterministic bound from Lemma~\ref{lem:final_Mhat_opnorm}, so that $\Pr[\|\hM\|_2\le B_{\nsamples,\dim,\delta}]\ge 1-\delta$.
Let $c_0>0$ be the universal constant from Lemma~\ref{lem:final_joint_good_alignment}, chosen small enough that Lemma~\ref{lem:final_q0_lower_bound} also applies, and set $q_\star\defeq c_0/\sqrt{\nsamples}$.
Choose
\[
a_\star
\defeq
\min\left\{
1,\;
\frac{1}{2\lr B_{\nsamples,\dim,\delta}},\;
\frac{1}{32\lr B_{\nsamples,\dim,\delta}\sqrt{\dim}},\;
\sqrt{\frac{c_0}{64\,B_{\nsamples,\dim,\delta}\sqrt{\nsamples}\sqrt{\dim}}}
\right\}.
\]
In particular, for $\dim \leq \nsamples \leq \dim^2$,
\[
    a_\star = \sqrt{\frac{c_0}{64\,B_{\nsamples,\dim,\delta}\sqrt{\nsamples}\sqrt{\dim}}}
    = O\left(\frac{1}{(\nsamples \dim)^{1/4}}\right),
\]
which yields 
\begin{align}
T_1 \ \lesssim\ \frac{\nsamples^{1/4}}{\lr \dim^{1/4}},
\qquad
T_2 \ \lesssim\ \frac{(\nsamples \dim)^{1/4}}{\lr}\log\!\Big(\frac{\dim}{\varepsilon}\Big).
\end{align}

The total number of steps needed decreases as $\nsamples$ gets smaller. The gain of the two-phase schedule is that it avoids entering phase~2 with a \emph{too small} outer gain $|a|$:
since the $w$-update is scaled by $a_t$, small $|a_t|$ slows representation learning even if $w_0$ has typical random alignment.
Phase~1 increases $|a_t|$ using the stronger fixed-batch bootstrap signal $|\qt[0]|\sim 1/\sqrt{N}$, whereas the analogous
population bootstrap signal at random initialization is only $|w^\top \mM w|=\Theta(1/\dim)$.


\subsection{Proof of \texorpdfstring{\Cref{cor:random_label}}{Corollary 2}: training first phase on random labels}

We prove \Cref{cor:random_label}, which provides the convergence for a modified 2-phase training where the first phase uses random labels $y$ drawn i.i.d. uniformly on $\{-1, +1\}$.
The proof largely follows that of \Cref{thm:2phase}, and we highlight modifications below.

\paragraph{Phase 1: small-set training with random labels}
We state the random-label versions of the constant-probability lower bound for $|\qt[0]|$ (\Cref{lem:final_q0_lower_bound}) and the matrix Bernstein bound for $\hM$ (\Cref{lem:final_Mhat_opnorm}).
\begin{corollary}
\label{cor:random_label_signal}
    Assume $\wt[0]$ is uniform on the unit sphere and the label $y$ for each sample is drawn uniformly from $\{-1, 1\}$.
    Then there exist universal constants $c_r>0$ such that for all $\nsamples\ge 1$ and all $\dim\ge 3$,
    \[
    \Pr\left[ |\qt[0]|\ \ge\ \frac{c_r}{\sqrt{\nsamples}} \right]\ \ge\ p_{PZ}(c_r),
    \]
    where $p_{PZ}(c)\defeq (1 - 1/\sqrt{2}) \cdot \frac{(1-\frac{4}{3}c^2)^2}{3^8}$ as in \Cref{lem:final_q0_lower_bound}.
\end{corollary}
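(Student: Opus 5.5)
The plan is to rerun the proof of \Cref{lem:final_q0_lower_bound} verbatim with the random labels. The only places the parity structure entered were the conditional mean $\mu(w)$ and the variance lower bound, and both become simpler. Fix $w=\wt[0]$ and set $Z\defeq y(x^\top w)^2$, where now $y\sim\Unif\{\pm1\}$ is independent of $x$. Then $\qt[0]=\frac1\nsamples\sum_s Z_s$ with i.i.d.\ $Z_s$ conditional on $w$. The first step is the mean: by independence and symmetry of $y$, $\mu(w)=\E[y]\,\E[(x^\top w)^2]=0$ for every $w$.

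The second step is the variance, and it no longer requires restricting to a good event $\mathcal E$. Since $y^2\equiv1$, we have $\E[Z^2\mid w]=\E[(x^\top w)^4]=3\|w\|_2^4-2\sum_i w_i^4\ge 1$, so $\Var(Z\mid w)\ge 1$ for every unit $w$. Hence
\[
\E[(\qt[0])^2\mid w]\ge \frac1\nsamples .
\]
This bound is at least $\frac{3}{4\nsamples}$, which is what was used before, so any $c_r\le c_0$ (for instance $c_r=\sqrt{3/8}$) works.

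The third step is the fourth-moment bound, which I expect to be the only point needing care. Conditional on $w$, $\qt[0]$ is now a polynomial in the independent Rademacher variables $\{x_i^{(s)}\}\cup\{y^{(s)}\}$. The labels $y^{(s)}$ are extra Rademacher inputs, and each monomial $y^{(s)}x_i^{(s)}x_j^{(s)}$ has degree at most $3$ after multilinearization. This is no larger than the degree-$4$ bound used in \Cref{lem:final_q0_lower_bound}, so the Bonami--Beckner inequality again gives
\[
\E[(\qt[0])^4\mid w]\le 3^{8}\,\E[(\qt[0])^2\mid w]^2 .
\]
Applying Paley--Zygmund to $(\qt[0])^2$ with $\theta=\tfrac12$ then yields
\[
|\qt[0]|\ge \frac{1}{\sqrt{2\nsamples}}\ge \frac{c_r}{\sqrt\nsamples}
\]
with conditional probability at least $\frac{1}{4\cdot 3^8}$, for every $w$.

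Finally, integrate over $w$. Because the conditional bound holds for all $w$ rather than only on $\mathcal E$, the unconditional probability is at least $\frac{1}{4\cdot3^8}$. This exceeds $p_{PZ}(c_r)=(1-2^{-1/2})\frac{(1-\frac43c_r^2)^2}{3^8}$ for the stated choice of $c_r$, which gives the claim; keeping the factor $(1-1/\sqrt2)$ simply matches the statement's constant conservatively.
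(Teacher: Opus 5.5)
Your proposal is correct and is essentially the paper's own argument: zero conditional mean, $\E[(\qt[0])^2\mid w]\ge 1/\nsamples$ for every $w$ (so the restriction to $\mathcal E$ disappears), the Bonami--Beckner fourth-moment bound, and Paley--Zygmund with $\theta=\tfrac12$, which gives probability at least $\frac{1}{4\cdot 3^8}\ge p_{PZ}(c_r)$; your remark that the random labels are just additional independent Rademacher inputs, keeping the degree at most $3$, justifies a step the paper leaves implicit, and the paper uses $c_r=1/\sqrt2$ where you use $c_r=\sqrt{3/8}$, both of which work. The only loose phrase is ``any $c_r\le c_0$ works'': with $\theta=\tfrac12$ fixed this fails for small $c_r$, since $p_{PZ}(c_r)\to(1-2^{-1/2})/3^8>\frac{1}{4\cdot 3^8}$ as $c_r\to0$, so a general $c_r$ needs $\theta=c_r^2$ instead, but your concrete choice is unaffected.
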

\begin{proof}
The calculation follows the same way as \Cref{lem:final_q0_lower_bound}. 
The conditional mean and second moment are 
\begin{align*}
&\mu(w) = \E[Z\mid w]
=\E\left[y\Big(\sum_{i=1}^{\dim} w_i x_i\Big)^2\right]
=\E[y]\E\left[\Big(\sum_{i=1}^{\dim} w_i x_i\Big)^2\right]
=0 \\
&\E[Z^2\mid w] = \E\big[(x^\top w)^4\mid w\big] \ge 1
\end{align*}
respectively. Therefore, we have 
\[
\E[(\qt[0])^2\mid w] = \frac{1}{\nsamples} \E[Z^2 \mid w] + \frac{1}{\nsamples^2} \sum_{i\neq j} \E[Z_i \mid w]\E[Z_j \mid w] \geq \frac{1}{\nsamples}.
\]
Likewise, applying the hypercontractivity inequality and Paley-Zygmund to the nonnegative random variable $Y\defeq (\qt[0])^2$,
we can get
\[
\Pr\left[|\qt[0]|\ge \frac{1}{\sqrt{2}}\frac{1}{\sqrt{\nsamples}}\right]
\ge p_r
\]
\end{proof}

\begin{corollary}[Bound for random label $\|\hM\|_2$]
\label{cor:random_label_matrix_norm}
For $y$ uniformly sampled from $\{-1, +1\}$, for any $\delta\in(0,1)$, with probability at least $1-\delta$,
\[
\|\hM\|_2
\le
O\left(
\sqrt{\frac{\dim\log(2\dim/\delta)}{\nsamples}}
\;+\;
\frac{\dim\log(2\dim/\delta)}{\nsamples}
\right)
\]
\end{corollary}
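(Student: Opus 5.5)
The plan is to mirror the proof of \Cref{lem:final_Mhat_opnorm} almost verbatim. The one structural change is that under random labels the population matrix vanishes. Write $A_s \defeq y^{(s)}x^{(s)}x^{(s)\top}$ with $y^{(s)}$ uniform on $\{\pm1\}$ and independent of $x^{(s)}$. Then $\E[A_s]=\E[y]\,\E[xx^\top]=0$, so the summands $X_s\defeq A_s$ are already mean zero and $\hM=\frac1\nsamples\sum_s X_s$.

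Next we check the two Bernstein parameters. The uniform bound is $\|X_s\|_2=\|x^{(s)}\|_2^2=\dim$, so we may take $R\defeq\dim$. For the variance, $y^2\equiv1$ gives $X_s^2=\|x^{(s)}\|_2^2\,x^{(s)}x^{(s)\top}=\dim\,x^{(s)}x^{(s)\top}$, hence $\E[X_s^2]=\dim\,\mI$ and $\sigma^2=\|\sum_s\E[X_s^2]\|_2=\nsamples\dim$.

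Matrix Bernstein for independent mean-zero self-adjoint $\dim\times\dim$ matrices then gives, with probability at least $1-\delta$,
\[
\Bigl\|\sum_{s}X_s\Bigr\|_2\le C\Bigl(\sqrt{\nsamples\dim\log(2\dim/\delta)}+\dim\log(2\dim/\delta)\Bigr).
\]
Dividing by $\nsamples$ yields exactly the stated bound. There is no additive $1$, because $\|\E[\hM]\|_2=0$ rather than $\|\mM\|_2=1$; that is the only difference from \Cref{lem:final_Mhat_opnorm}.

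None of these steps is a real obstacle; the verification is routine. The one point to state explicitly is the independence of $y$ from $x$. Mean-zeroness of $A_s$ uses it, while the variance computation needs only $y^2=1$.
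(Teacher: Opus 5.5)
Your proposal is correct and takes the same approach as the paper. The paper's proof simply reruns the matrix Bernstein argument of \Cref{lem:final_Mhat_opnorm} with $\mM=\E[yxx^\top]=0$, so the additive $1$ disappears; your explicit values $R=\dim$ and $\sigma^2=\nsamples\dim$, and your remark that independence of $y$ from $x$ is what makes the summands mean zero, supply the details the paper leaves implicit.
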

\begin{proof}
    The calculation is the same as \Cref{lem:final_Mhat_opnorm}, with $M := \E[yxx^\top] =0$ because $y$ is uniformly random. 
\end{proof}

Replacing \Cref{lem:final_q0_lower_bound} and \Cref{lem:final_Mhat_opnorm} with \Cref{cor:random_label_signal} and \Cref{cor:random_label_matrix_norm}, we set $q_\star\defeq c_r/\sqrt{\nsamples}$, and the first phase follows the true label case.

\paragraph{Phase 2: population training with real labels}
We give the random label version of the simultaneous sign alignment as in \Cref{lem:final_joint_good_alignment}. 
\begin{lemma}[Random-label bootstrap and population sign alignment]
\label{lem:random_label_joint_good_alignment}
Suppose that in phase~1 the labels are independent random signs
$
    \xi^{(s)}\sim \Unif\{-1,+1\},
$
independent of the inputs, and define the random-label empirical matrix
\[
    \widetilde \mM
    \defeq
    \frac1{\nsamples}\sum_{s=1}^{\nsamples}
    \xi^{(s)} x^{(s)}x^{(s)\top}.
\]
Let
\[
   \qt[0]\defeq {\wt[0]}^\top \widetilde{\mM}\,\wt[0],
   \quad
   q_{\mathrm{pop}}(w)\defeq w^\top\mM w.
\]
There exist universal constants $c_r>0$ and
$p_{\mathrm{all}}^{\mathrm{rand}}>0$ such that, with
\[
    \mathcal P_0^{\mathrm{rand}}
    \defeq
    \left\{
        \sign(\at[0])\,q_{\mathrm{pop}}(\wt[0])
        \ge \frac{3}{4\dim},
        \quad
        \|P_{12}\wt[0]\|_2\le \frac{1}{\sqrt{\dim}}
    \right\}
\]
and
\[
    \mathcal G_{\mathrm{sign}}^{\mathrm{rand}}
    \defeq
    \left\{
        \sign(\at[0])=\sign(\qt[0]),
        \quad
        |\qt[0]|\ge \frac{c_r}{\sqrt{\nsamples}}
    \right\},
\]
we have
\[
    \Pr\big[
        \mathcal G_{\mathrm{sign}}^{\mathrm{rand}}
        \cap
        \mathcal P_0^{\mathrm{rand}}
    \big]
    \ge p_{\mathrm{all}}^{\mathrm{rand}}.
\]
\end{lemma}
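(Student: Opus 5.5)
The plan is to exploit the independence structure of the random-label setting: the labels $\xi^{(s)}$ are independent of $\wt[0]$, $\at[0]$ and the inputs. As a result, the population event $\mathcal P_0^{\mathrm{rand}}$, which depends only on $(\at[0],\wt[0])$, and the empirical event $\mathcal G_{\mathrm{sign}}^{\mathrm{rand}}$ can be handled by conditioning. I will first condition on $\wt[0]=w$ and then handle $\at[0]$ through symmetry.

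\textbf{Step 1: the population event.} I will lower bound $\Pr[\mathcal P_0^{\mathrm{rand}}]$. Reuse the event $\mathcal R$ from the proof of \Cref{lem:final_joint_good_alignment}: it has universal probability $p_{\mathrm{pop}}>0$ and implies both $|q_{\mathrm{pop}}(\wt[0])|\ge \frac{3}{4\dim}$ and $\|P_{12}\wt[0]\|_2\le 1/\sqrt{\dim}$. Since $\at[0]$ is symmetric and independent of $\wt[0]$, we have $\sign(\at[0])=\sign(q_{\mathrm{pop}}(\wt[0]))$ with conditional probability $1/2$. Hence
\[
\Pr[\mathcal R\cap\{\sign(\at[0])=\sign(q_{\mathrm{pop}}(\wt[0]))\}]\ge \tfrac12 p_{\mathrm{pop}},
\]
and this event is contained in $\mathcal P_0^{\mathrm{rand}}$.

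\textbf{Step 2: the empirical event, conditionally.} Fix $(\at[0],w)$ in the event from Step 1, and let $s=\sign(\at[0])$. We need $\Pr[s\,\qt[0]\ge c_r/\sqrt{\nsamples}\mid w,\at[0]]\ge p_{\mathrm{one}}$ for a universal constant. The key observation is symmetry. The quantity $\qt[0]=\nsamples^{-1}\sum_s \xi^{(s)}(x^{(s)\top}w)^2$ is symmetric in distribution conditional on $w$, because flipping all $\xi^{(s)}$ preserves the law. Therefore
\[
\Pr[s\,\qt[0]\ge c/\sqrt{\nsamples}]=\tfrac12\Pr[|\qt[0]|\ge c/\sqrt{\nsamples}],
\]
where we may take $\Pr[\qt[0]=0]$ to be irrelevant since $c>0$. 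For the magnitude, I will use the conditional second moment from \Cref{cor:random_label_signal}, namely $\E[(\qt[0])^2\mid w]=\E[(x^\top w)^4]/\nsamples\ge 1/\nsamples$. This is combined with the hypercontractive bound $\E[(\qt[0])^4\mid w]\le 3^{8}\E[(\qt[0])^2\mid w]^2$; the $\xi$'s are Rademacher too, so $\qt[0]$ is a degree-$5$ polynomial and the constant becomes $3^{10}$. Paley–Zygmund then gives a universal $p_r$ with $|\qt[0]|\ge 1/\sqrt{2\nsamples}$. Taking $c_r=1/\sqrt2$ yields $p_{\mathrm{one}}=p_r/2$. Crucially, this holds for every fixed $w$, not just on $\mathcal E$, because the mean is zero.

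\textbf{Step 3: combine.} Integrate Step 2 over the event of Step 1; the conditioning is legitimate since $\xi,x$ are independent of $(\at[0],\wt[0])$. This gives
\[
\Pr[\mathcal G_{\mathrm{sign}}^{\mathrm{rand}}\cap\mathcal P_0^{\mathrm{rand}}]\ge \tfrac12 p_{\mathrm{pop}}\cdot\tfrac12 p_r=:p_{\mathrm{all}}^{\mathrm{rand}}.
\]

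The main obstacle is bookkeeping rather than analysis. In the true-label case, the sign of $\at[0]$ was tied to the empirical sign, whereas here it must match both the empirical sign and the independent population sign. The exact symmetry of $\qt[0]$ under label flipping is what decouples these two requirements, and I would double-check the hypercontractivity degree count with the extra $\xi$ variables.
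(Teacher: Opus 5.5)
Your proposal is correct and follows the paper's proof essentially step for step. The population event is the radial--angular event with an extra factor $1/2$ from the symmetry of $\at[0]$, and the empirical event comes from Paley--Zygmund on $(\qt[0])^2$ plus the exact label-flip symmetry of $\qt[0]$ for another factor $1/2$, giving $p_{\mathrm{pop}}p_r/4$. Your degree count is conservative rather than wrong: after multilinearization $(x^\top w)^2$ has degree $2$, so $\qt[0]$ has degree at most $3$, and any fixed degree still yields a universal constant.
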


\begin{proof}
The population part follows exactly as in Lemma~\ref{lem:final_joint_good_alignment}. Namely, writing
\[
    z_\pm\defeq \langle \wt[0],u_\pm\rangle,
    \qquad
    r^2\defeq z_+^2+z_-^2=\|P_{12}\wt[0]\|_2^2,
\]
the same radial--angular argument gives a universal constant
$p_{\mathrm{pop}}>0$ such that, with probability at least $p_{\mathrm{pop}}$,
\[
    |q_{\mathrm{pop}}(\wt[0])|
    \ge \frac{3}{4\dim},
    \qquad
    \|P_{12}\wt[0]\|_2\le \frac1{\sqrt{\dim}}.
\]
Since $\at[0]$ is independent of $\wt[0]$ and symmetric about zero, with an
additional probability factor $1/2$ we also have
\[
    \sign(\at[0])q_{\mathrm{pop}}(\wt[0])
    =
    |q_{\mathrm{pop}}(\wt[0])|
    \ge \frac{3}{4\dim}.
\]
Thus $\mathcal P_0^{\mathrm{rand}}$ holds with probability at least
$p_{\mathrm{pop}}/2$.

It remains to control the random-label empirical bootstrap.
Conditional on $\wt[0]$, for a single phase-1 sample, define
\[
    Z\defeq \xi (x^\top \wt[0])^2 .
\]
As in the proof of Corollary~\ref{cor:random_label_signal}, the
Paley--Zygmund argument applied to $(\qt[0])^2$ gives universal constants
$c_r,p_r>0$ such that
\[
    \Pr\!\left[
        |\qt[0]|\ge \frac{c_r}{\sqrt{\nsamples}}
        \,\middle|\, \wt[0]
    \right]
    \ge p_r .
\]
Moreover, by the symmetry of the random labels, the conditional distribution
of $\qt[0]$ is symmetric about zero. Hence, conditional on $\wt[0]$ and
$\at[0]$,
\[
    \Pr\!\left[
        \sign(\qt[0])=\sign(\at[0]),
        \quad
        |\qt[0]|\ge \frac{c_r}{\sqrt{\nsamples}}
        \,\middle|\, \wt[0],\at[0]
    \right]
    \ge \frac{p_r}{2}.
\]

Combining this conditional event with $\mathcal P_0^{\mathrm{rand}}$ gives
\[
    \Pr\!\left[
        \mathcal G_{\mathrm{sign}}^{\mathrm{rand}}
        \cap
        \mathcal P_0^{\mathrm{rand}}
    \right]
    \ge
    \frac{p_{\mathrm{pop}}p_r}{4}
    \defeq p_{\mathrm{all}}^{\mathrm{rand}}>0 .
\]
\end{proof}

Replacing \Cref{lem:final_joint_good_alignment} with \Cref{lem:random_label_joint_good_alignment}, the second phase analysis follows from \Cref{lem:final_phase2_alignment_varying_a,lem:final_phase1_population_sign_transfer}. Choose
\[
a_\star
\defeq
\min\left\{
1,\;
\frac{1}{2\lr B_{\nsamples,\dim,\delta}},\;
\frac{1}{32\lr B_{\nsamples,\dim,\delta}\sqrt{\dim}},\;
\sqrt{\frac{c_r}{64\,B_{\nsamples,\dim,\delta}\sqrt{\nsamples}\sqrt{\dim}}}
\right\},
\]
where 
\[
B_{\nsamples,\dim,\delta}
\defeq
C\left(
\sqrt{\frac{\dim\log(2\dim/\delta)}{\nsamples}}
\;+\;
\frac{\dim\log(2\dim/\delta)}{\nsamples}
\right)
\]
in this setting. When $\dim<\nsamples<\dim^2$, $B_{\nsamples, \dim,\delta}\simeq \sqrt{\dim/\nsamples}$, $a_\star$ is upper bounded as $a_\star \lesssim \frac{1}{\sqrt{d}}$.
Combining the first and second phase, we have the total number of steps bounded as
\[
T(a_*)=T_1+T_2 \lesssim \frac{a_*\sqrt{\nsamples}}{\lr} + \frac{2}{\lr a_*}\log\Big(\frac{d}{\varepsilon}\Big).
\]

$T$ is a monotonically decreasing function in $a_\star$ when $a_\star \lesssim \frac{1}{\nsamples^{1/4}}$. Therefore,
\begin{itemize}
    \item for $\nsamples < \dim^2, a_\star=O(\frac{1}{\sqrt{\dim}})$, and this gives $T=O\left(\frac{\sqrt{\nsamples}}{\lr \sqrt{\dim}} + \frac{\sqrt{\dim}}{\lr} \log\Big(\frac{\dim}{\varepsilon}\Big)  \right)$.
    \item for $\nsamples=\dim^2$, $a_\star=O(\frac{1}{\sqrt{\dim}})=\frac{1}{\nsamples^{1/4}}$, we obtain $T=O\left(\frac{\nsamples^{1/4}}{\lr}+\frac{\nsamples^{1/4}}{\lr} \log\Big(\frac{\dim}{\varepsilon}\Big)\right)$.
\end{itemize}

\section{Experiment details and additional results}
\label{app:expr_details}

\subsection{Experiment details}

We report the architectures used for each task.
\begin{itemize}[leftmargin=*]
    \item \textit{Single-Index Model (SIM)}:
    The link function for SIM is degree 3 Hermite polynomial and dimension $n=\{40, 50\}$.
    The default MLP in the experiments has 2 layers and hidden dimension 64, with ReLU activation function \footnote{GELU for SIM and sigmoid for parity also have the similar results.} and batch size 128 for mini-batch training. 
    
    We train Transformers (encoder-type, i.e. no causal masking) with 2 layers and 4 heads and embedding dimension 64 
    with the fixed batch size $128$, a simple 2-phase repeat can accelerate training.

    \item \textit{Sparse parity}:
    MLP experiments default to 2-layer MLP with hidden dimension 64 and ReLU activation.
    For \Cref{fig:heatmaps}, each heatmap was created with roughly 42 million training runs done on a single A100 in four hours.
    
    Transformer experiments are using encoder-only structure (i.e., without causal masking) to preserve a permutation-invariant structure of the parity task.
    The default Transformer has 2 layers, dimension 256, and 8 heads.

    \item \textit{Modular addition}:
    Modular addition runs use a 4-phase training strategy, where the dataset size increases in each phase.
    Experiments are performed with 2-layer decoder-only Transformers (i.e., with causal masking).

    \item \textit{In-context linear regression}: 
    We use Transformers 2 layers and 4 heads and embedding dimension 64 on the in-context linear regression task with the number of context examples $k=15$ and dimension $n=4$.
    During training, the loss is computed on the last token.
\end{itemize}


\begin{figure}
    \centering
    \includegraphics[width=0.47\linewidth]{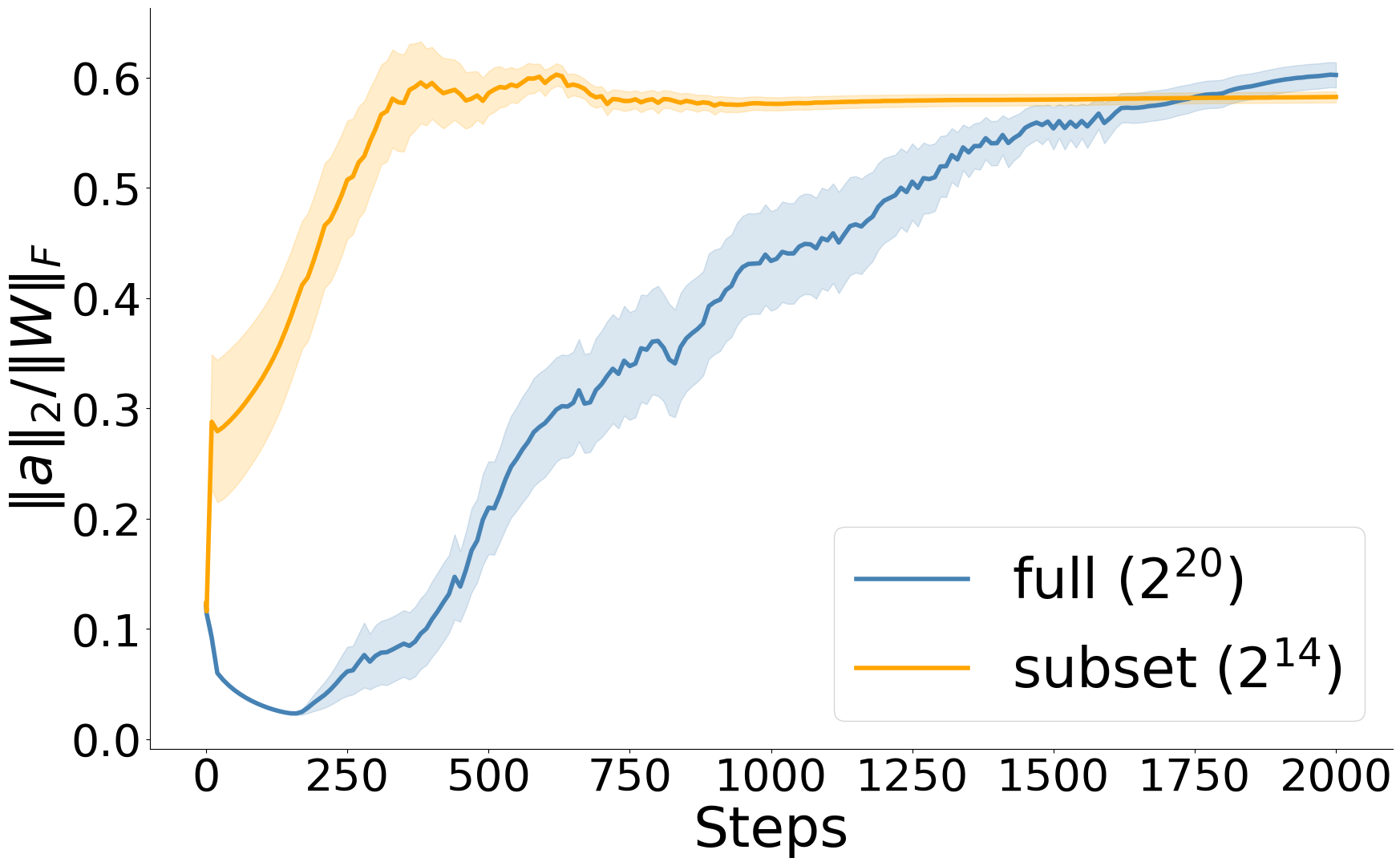}
    \includegraphics[width=0.49\linewidth]{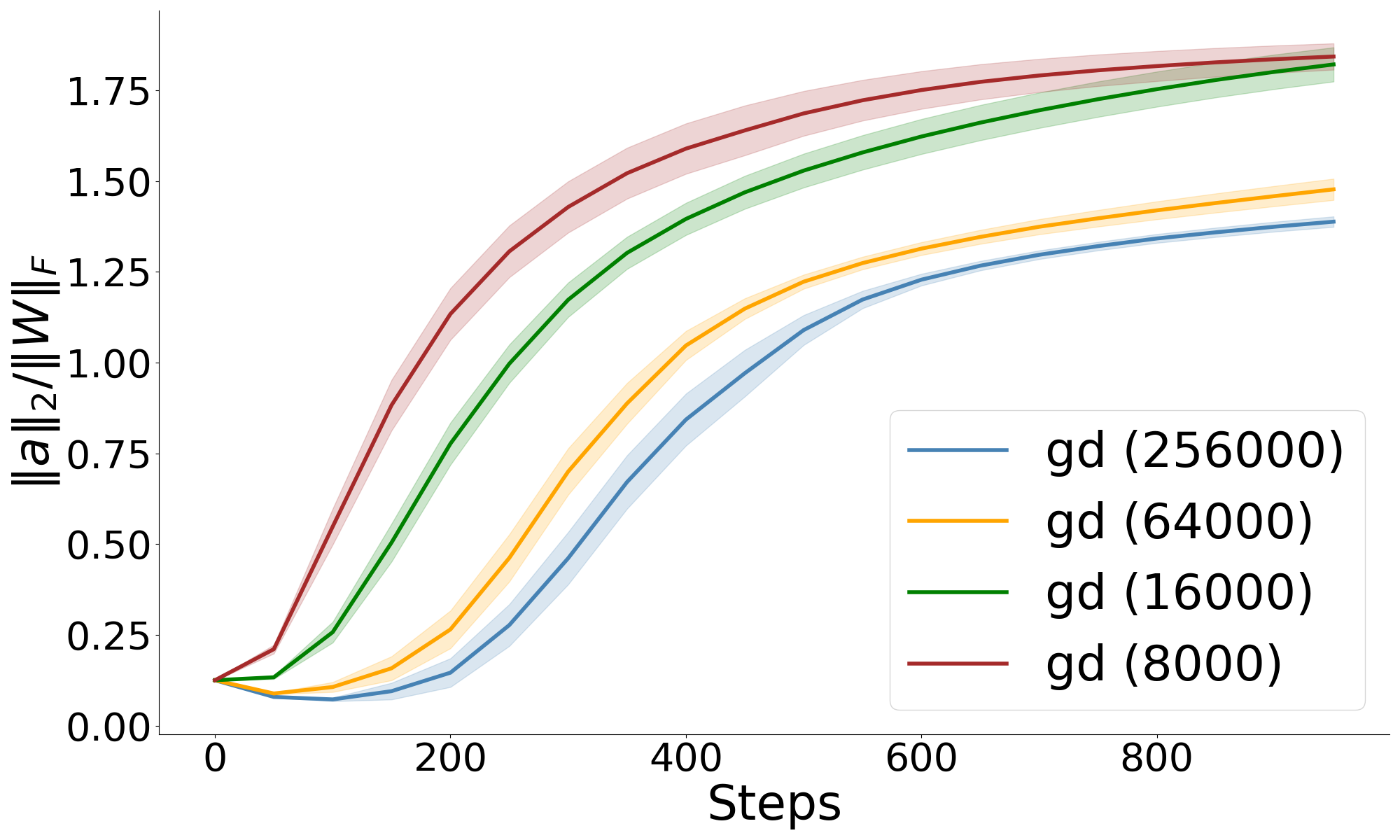}
    \caption{\textbf{Layer norm ratio $\|a\|_2/\|W\|_F$ increases.}
    Results are shown for MLP on (20, 6)-parity and SIM trained with gradient descent. 
    }
    \label{fig:observed_aw_ratio}
\end{figure}

\begin{figure}
    \centering
    \includegraphics[width=0.4\linewidth]{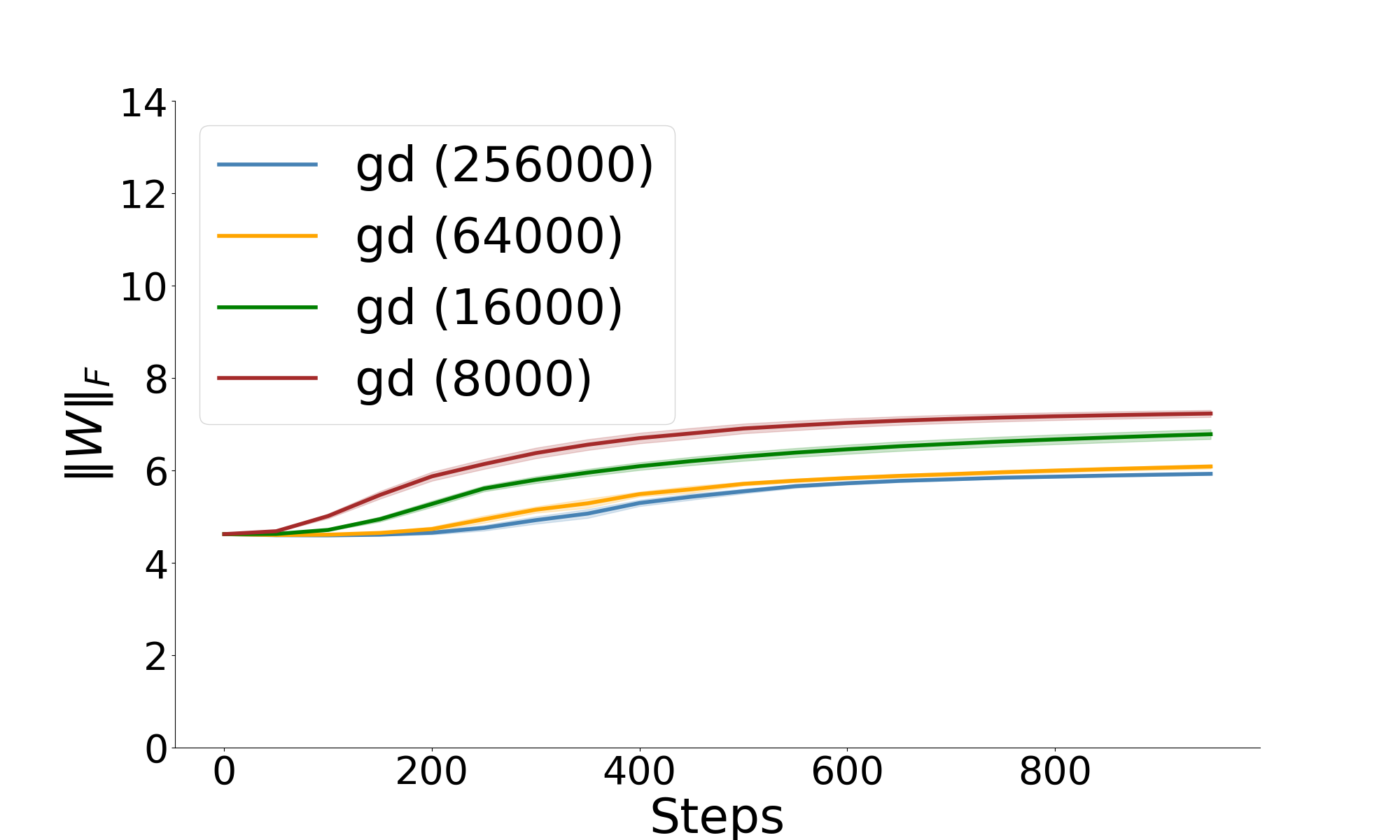}
    \hspace{1em}
    \includegraphics[width=0.4\linewidth]{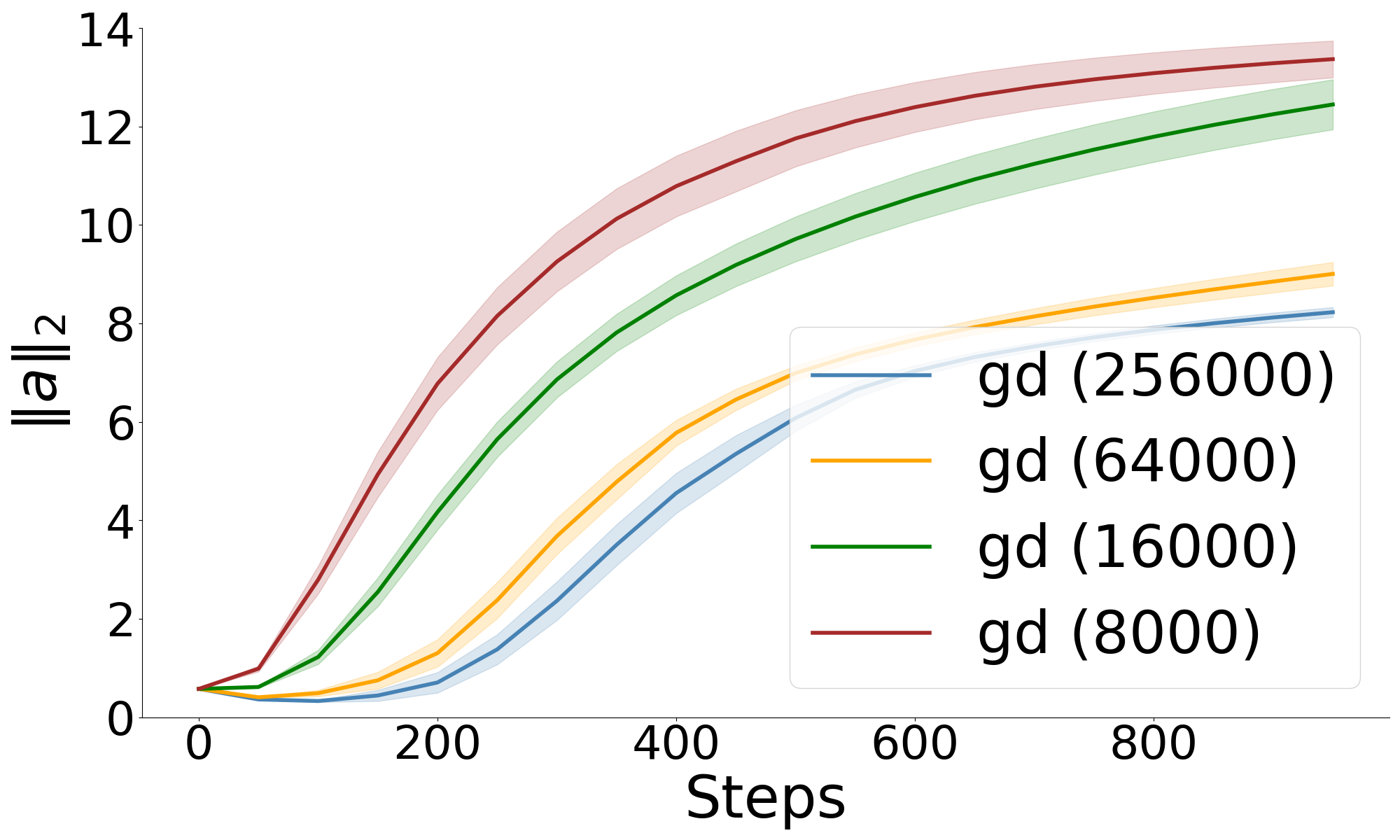}
    \caption{\textbf{Layer norm growth during training}. 
    Results are shown for MLP on SIM trained with gradient descent. 
    }
    \label{fig:layer_norm_growth}
\end{figure}

\paragraph{Hyperparameters}
We sweep the hyperparameters separately for each task and training setup.
We tune the learning rate for SGD, and tune both the learning rate and $\beta_2$ for AdamW, with a fixed $\beta_1=0.9$.
Learning rates are swept at multiplicative intervals of no wider than 1.2x.
We consider $\beta_2 \in \{0.8, 0.9, 0.95, 0.999\}$.
For mini-batch training, MLP experiments use batch size 128, and Transformer experiments use batch size 32, unless specified.
Parity and SIM experiments use a test set of size 4096 and 5000 respectively.

For model initialization, all weights are initialized with Pytorch defaults.
In particular, linear weights are initialized as $W_{ij} \sim \text{Unif}[-1/\sqrt{\dim_{\text{in}}}, 1/\sqrt{\dim_{\text{in}}]}$.
For MLP, we additionally experiment with Gaussian initialization (i.e., $W_{ij} \sim \gN(0, 1/\sqrt{\dim_{\text{in}}})$) whose standard deviation differs from that of the uniform distribution differs by a factor of $\sqrt{3}$; we get the same conclusions.
For Transformer, attention is computed as $a_{i,j} \propto \exp(\frac{q_i^\top k_j}{\sqrt{d}})$, where $q_i, k_j \in \R^d$.
For experiments with RMSNorm, we use RMSNorm with a learnable scale parameter.

\begin{remark}[Learning rate for sparse parity]
    Sparse parity has a special structure that the population gradient of the first layer weight reveals information about the support~\citep{barak2022hidden},
    hence training with the full population can in theory leverage this information and converge quickly.
    However, the population gradient signal is exponentially small, hence leveraging the signal requires an exponentially large learning rate (on the order of $\dim^\dimsparse$) which is infeasible due to numerical limitations and movements from the second layer.
    Our experiments confirm this and we did not see strict speedup from increasing the learning rate.
\end{remark}

\paragraph{Data use}
We provide the phase schedule used in multi-phase training.
\begin{itemize}[leftmargin=*]
    \item SIM uses a 2-phase schedule. For GD on MLP, the first phase takes $100$ steps on a dataset of size $8000$ and the second phase takes $900$ steps on a dataset of size $64000$. Ablation results with other dataset sizes are shown in \Cref{fig:vary_data_size}. For SGD on MLP, the first phase uses $0.01$ fraction of the total amount of data seen during the online run for $100$ steps, and the second phase is online training. For transformer, the first phase uses $0.005$ fraction of the total amount of data seen during the online run for $800$ steps, and the second phase is online training.

    \item Parity uses a 6-phase schedule, where each phase uses $\{0.001,0.002,0.005,0.01,0.02,0.1\}$ of the total amount of data seen during the online run, with $\{100,50,20,10,10,4\}$ epochs respectively.
    Ablation results with auto-scheduling is shown in \Cref{fig:parity_multi_phase_auto}.
        
    \item In-context linear regression uses uses a 4-phase schedule, where each phase uses $\{0.005,0.02,0.05,0.1\}$ of the total amount of data seen during the online run, with each phase running $\{1500, 1500, 1500, 10500\}$ steps respectively.

    \item Mod addition uses a 4-phase schedule, where each phase uses $\{0.005,0.02,0.05,0.1\}$ of the total amount of data seen during the online run, with $\{20,5,4,6\}$ epochs respectively.

\end{itemize}

Some of our experiments with data repetition use sampling with replacement for faster data loading, which differs from the common multi-epoch training where each epoch samples without replacement.
As a remark, sampling with or without replacement correspond to different algorithms.
For example, \cite{lin2025improved} showed that for linear regression, the former is equivalent (in terms of sample complexity) to gradient descent, whereas the latter is closer to online SGD which can be better or worse than GD depending on the problem structure~\citep{wu2025risk}.
In our experiments though, we do not notice an empirical difference between the two based, hence we use them interchangeably.

\subsection{Additional empirical results}

\subsubsection{More setups with the small-vs-large gap}
\label{app:additional_results}

We report more setups where the small-vs-large gap is observed.

\paragraph{Full parity}
We consider learning the full parity where $\dim=\dimsparse$.
This is a trivial task in the SQ sense and does not have a sparse structure, hence the explanations in \Cref{sec:prior_theory}
do not apply.
However, the small-vs-large gap is still present, for both MLP and Transformers (\Cref{fig:parity_dense}).

\begin{figure}
    \centering
    \begin{subfigure}[t]{0.4\linewidth}
        \includegraphics[width=\linewidth]{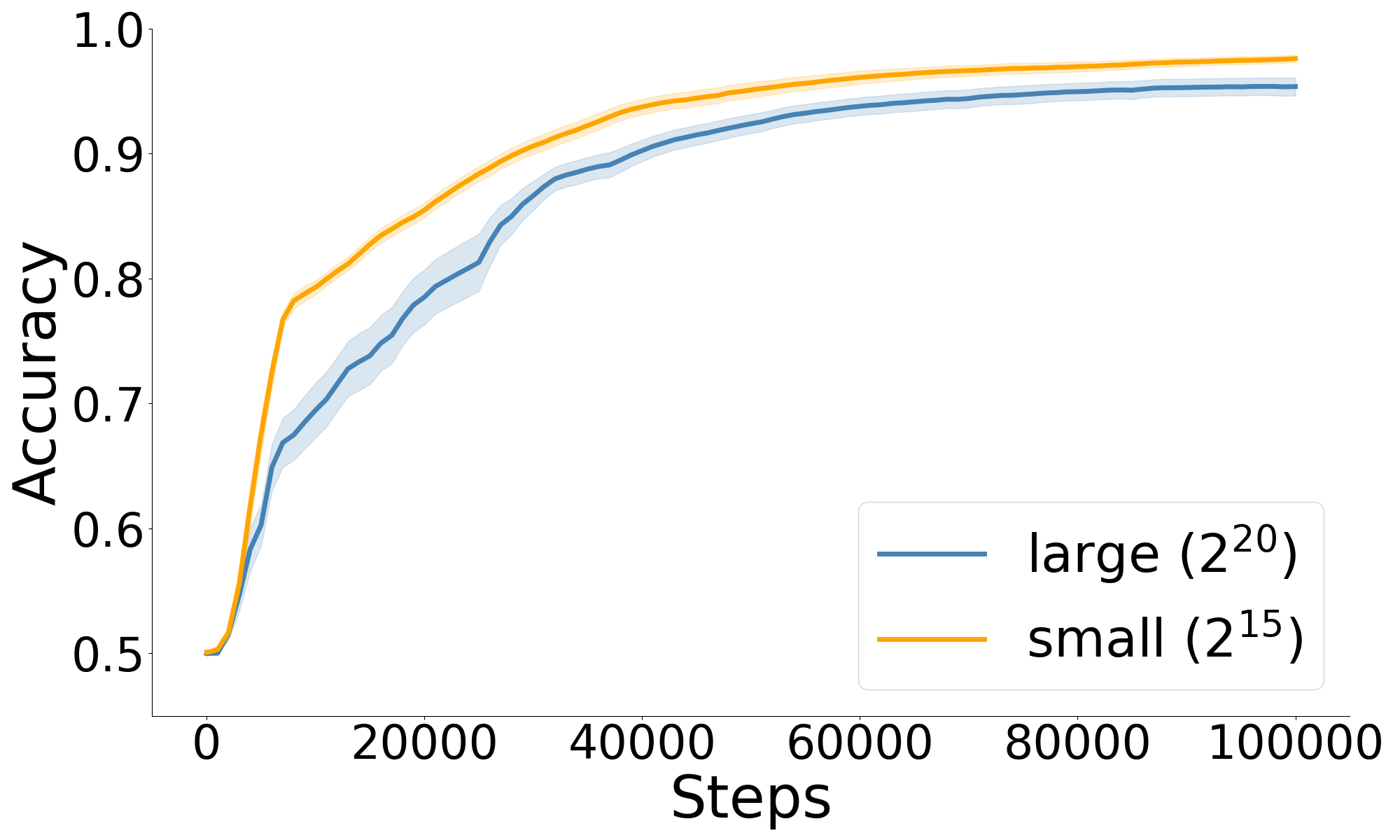}
        \caption{(20, 20)-parity, MLP}
    \end{subfigure}
    \begin{subfigure}[t]{0.4\linewidth}
        \includegraphics[width=\linewidth]{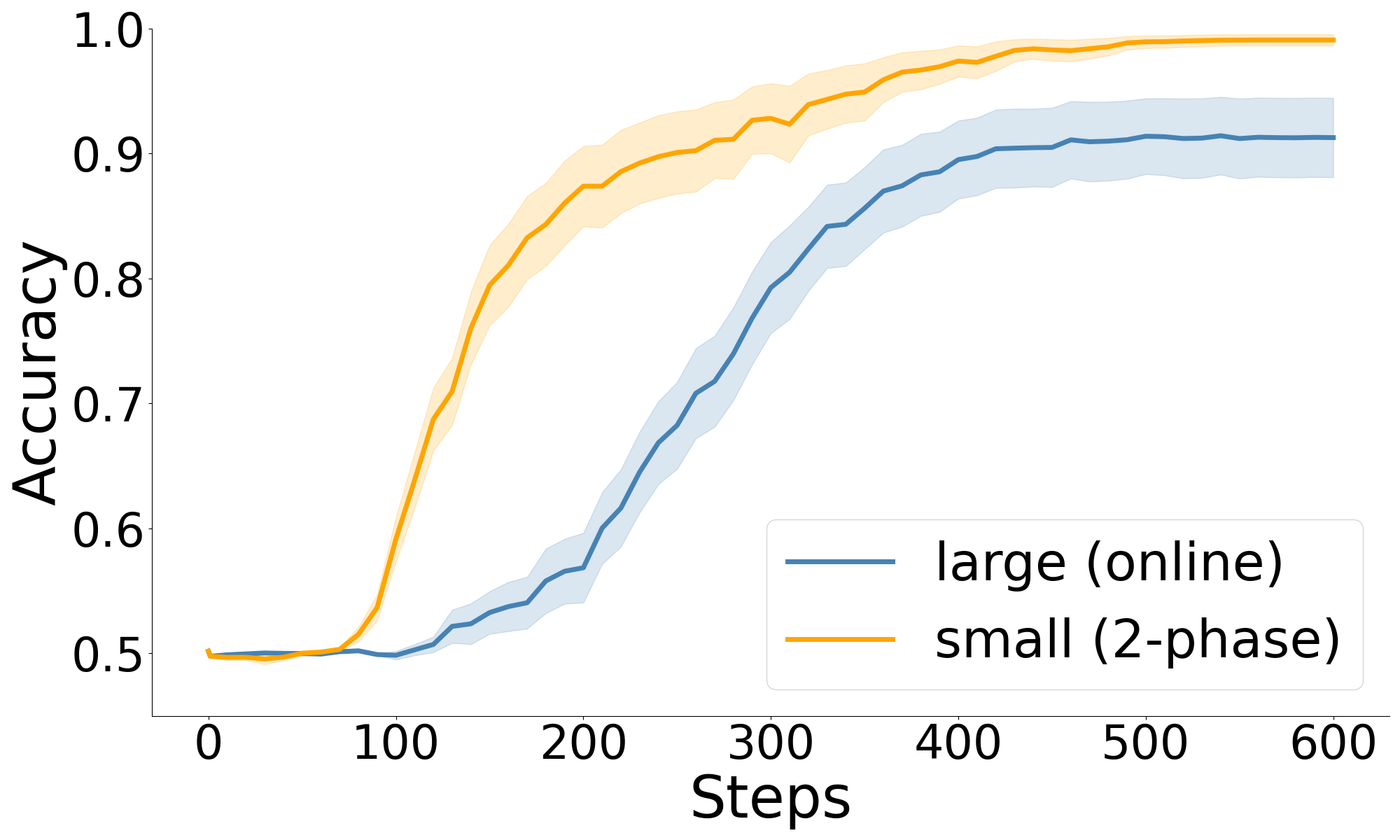}
        \caption{(10, 10)-parity, Transformer}
    \end{subfigure}
    \caption{\textbf{Small-vs-large gap exists for dense parity}.
    Results are shown for \textit{(Left)} $(20, 20)$-parity with MLP and \textit{(Right)} $(10, 10)$-parity with Transformer.
    Both are trained with full-batch gradient descent.
    }
    \label{fig:parity_dense}
\end{figure}


\paragraph{$\mu P$ results}
As discussed in \Cref{sec:mlp_layerwise}, a proper initialization scheme can bridge the small-vs-large gap, which prompts the question of how to choose initialization.
A natural candidate is the $\mu P$ parameterization~\citep{yang2020feature,yang2022tensor}.
As shown in \Cref{fig:muP_MLP}, $\mu P$ bridges the gap in 2-layer MLPs for parity but cannot close the gap for SIM.
Identifying the right scheme is an interesting direction for future work.

\begin{figure}
    \centering
    \includegraphics[width=0.46\linewidth]{figs/muP_comparison_d20k6_m64.png}
    \includegraphics[width=0.4\linewidth]{figs/jingwen_figs/sim_mlp_mup.png}
    \caption{\textbf{Comparison to $\mu P$} \citep{yang2022tensor}.
    $\mu P$ and the $\initscale$ scaling both close the small-vs-large gap in 2-layer width-64 MLPs.
    }
    \label{fig:muP_MLP}
\end{figure}

\paragraph{Transformer using full-batch updates}
The small-vs-large gap is observed on Transformers trained with full-batch updates using AdamW (\Cref{fig:transformer_parity_gd}),
demonstrating that the gradient variance explanation in \Cref{sec:prior_theory} is insufficient.
Due to memory constraint, we use a smaller input dimension ($\dim=10$) than the MLP experiments ($\dim=20$ in \Cref{fig:mlp}).

\paragraph{Transformer using mini-batch updates, with dataset input biased removed}
As discussed \Cref{sec:prior_theory}, the small-vs-large gap cannot be explained by a strong \textit{input} bias from a smaller dataset, as the gap persists even when the input bias is removed (\Cref{fig:bias_removed_parity}). 
The same conclusion holds for Transformers \Cref{fig:repeat_with_data_bias_removed_transformer}.

\begin{figure}
    \centering
    \begin{subfigure}[t]{0.4\linewidth}
        \includegraphics[width=\linewidth]{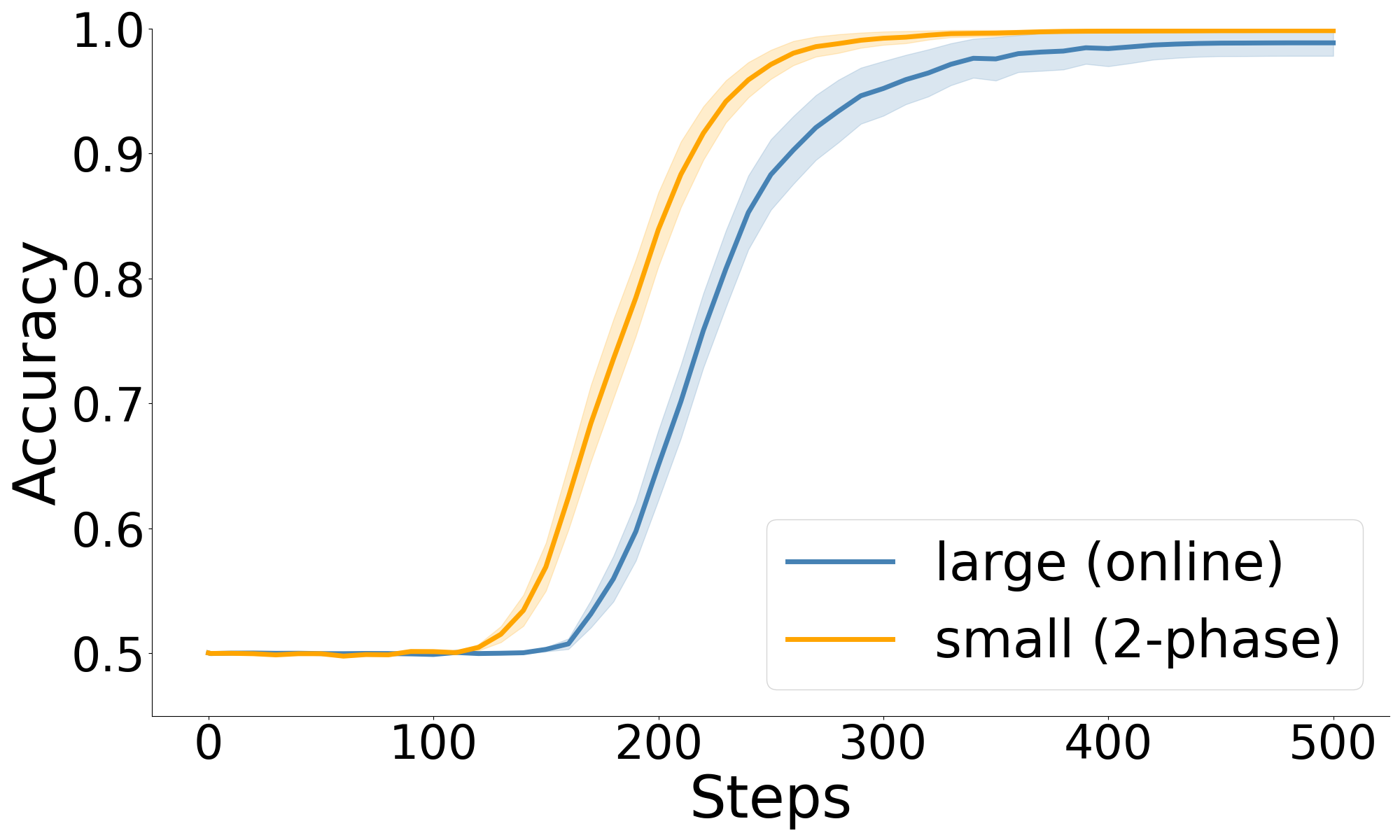}
        \caption{(10, 6)-parity}
    \end{subfigure}
    \begin{subfigure}[t]{0.4\linewidth}
        \includegraphics[width=\linewidth]{figs/gd_transformer_d10_k10.png}
        \caption{(10, 10)-parity}
    \end{subfigure}
    \caption{\textbf{Small-vs-large gap is observed in Transformer full-batch training}.
    Results are on (10, 6)-parity (left) and (10, 10)-parity (right).
    }
\label{fig:transformer_parity_gd}
\end{figure}

\ificmlformat

    \begin{figure}
        \centering
        \begin{subfigure}[t]{0.4\linewidth}
            \centering
            \includegraphics[width=\linewidth]{figs/MLP_GD_d20k6_remove_input_bias.png}
            \caption{$(20, 6)$-parity, accuracy}
            \label{fig:bias_removed_parity}
        \end{subfigure}
        \quad
        \begin{subfigure}[t]{0.4\linewidth}
            \centering
            \includegraphics[width=\linewidth]{figs/MLP_GD_d20k6_add_input_bias.png}
            \caption{$(20, 6)$-parity, accuracy}
            \label{fig:bias_added_parity}
        \end{subfigure}
        \caption{\textbf{Small-vs-large gap is not explained by input distribution biases.}
        \textit{(Left)} Removing input biases does not affect the performance of training on a small set (size $2^{14}$).
        Removing biases means requiring $\E[x]=0$, or additionally requiring $\E[y]=0$ and $\E[x|y]=0$.
        \textit{(Right)} Introducing biases to the large set does not bridge the small-vs-large gap.
        The biases are taken from the empirical distribution of an size-$2^{m}$ set, for varying $m$.
        When biased with $m=14$, large-set training still has a performance gap to training on the small set of size $2^{14}$.
        The maximum speedup would require $m=5$, which is much smaller than the actual small set size and not sufficient for learning.
        Similar results are shown for SIM and with Transformers (\Cref{fig:repeat_with_data_bias_removed_transformer}).
        }
        \label{fig:repeat_with_data_bias}
    \end{figure}
\fi

\begin{figure}
    \centering
    \begin{subfigure}[t]{0.4\linewidth}
        \centering
        \includegraphics[width=\linewidth]{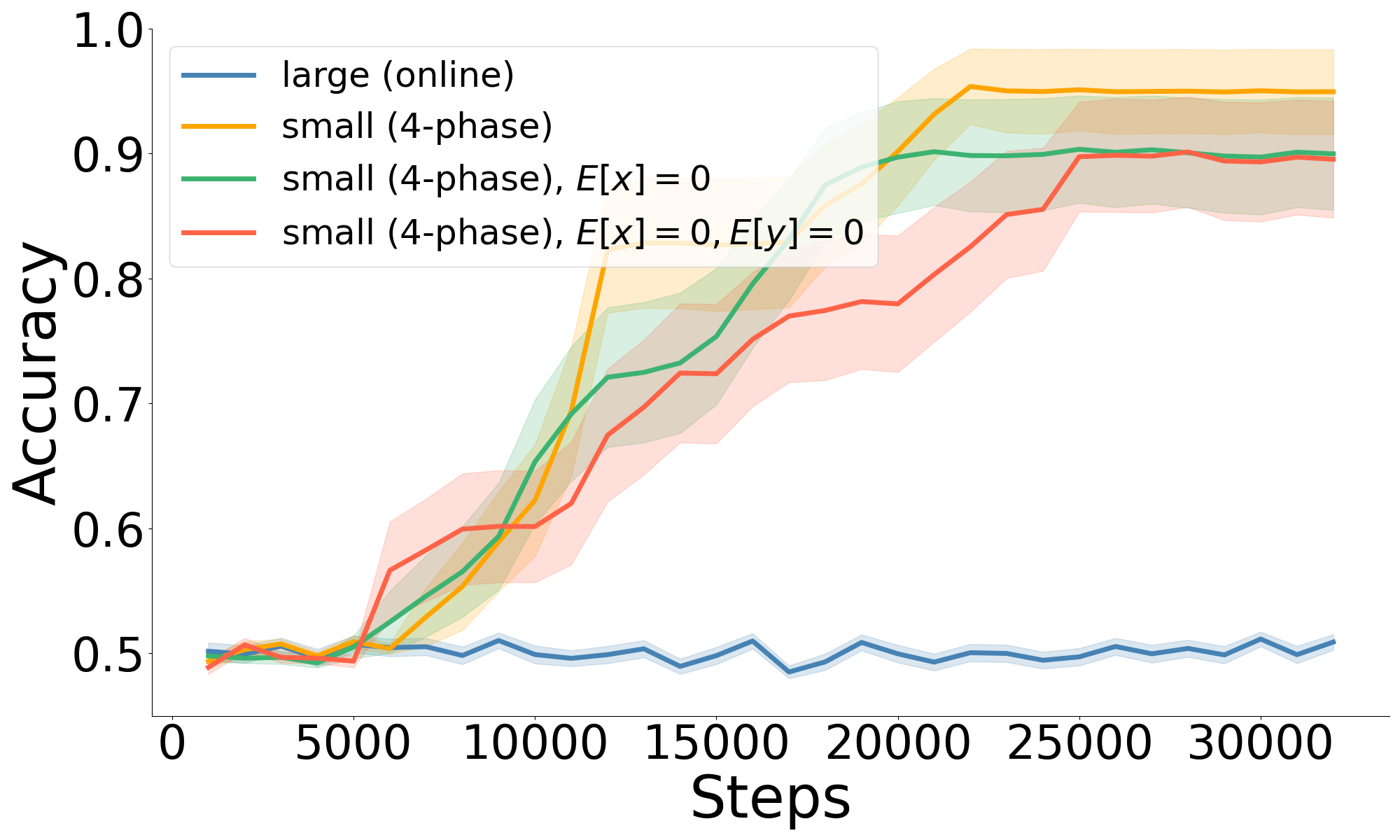}
        \caption{$(20, 6)$-parity, accuracy}
        \label{fig:biased_removed_parity}
    \end{subfigure}
    \hspace{1em}
    \begin{subfigure}[t]{0.4\linewidth}
        \centering
        \includegraphics[width=\linewidth]{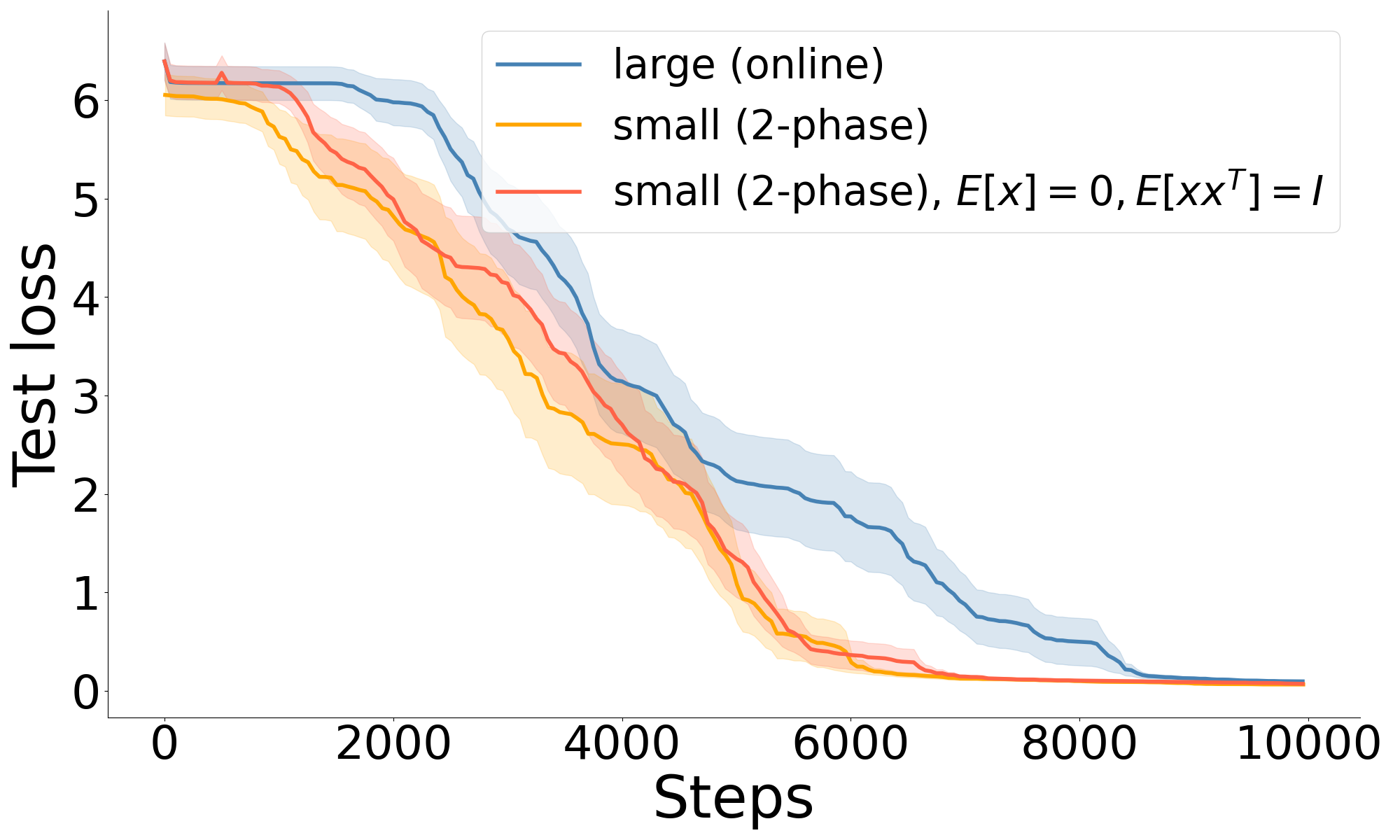}
        \caption{SIM with $\dim=50$, loss}
        \label{fig:biased_removed_sim}
    \end{subfigure}
    \caption{\textbf{Repetition remains superior with dataset bias removed}.
    Results are based on Transformer with mini-batch updates
    and are consistent with the MLP results in \Cref{fig:bias_removed_parity}.
    }
    \label{fig:repeat_with_data_bias_removed_transformer}
\end{figure}

\begin{figure}
    \centering
    \includegraphics[width=0.6\linewidth]{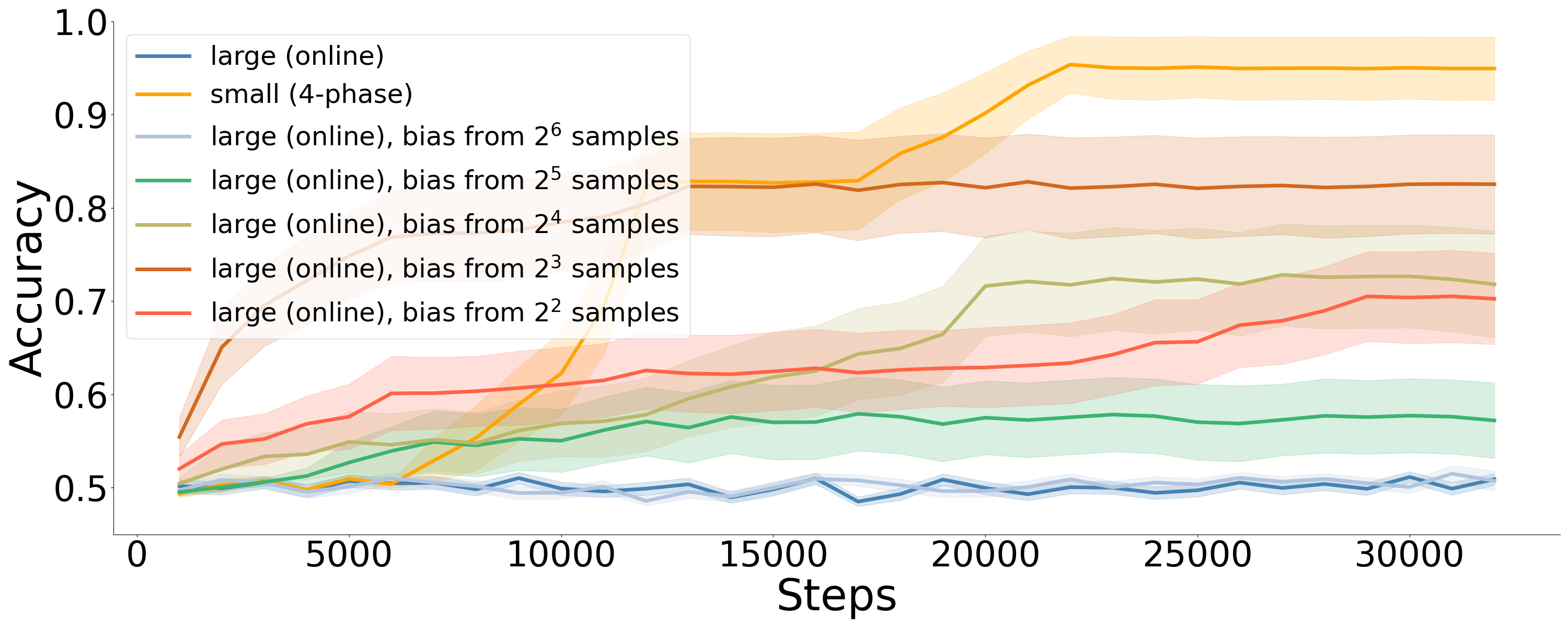}
    \caption{\textbf{Biasing online training does not bridge the speed gap.}
    Results are based on Transformer with mini-batch updates and are consistent with the MLP results (\Cref{fig:bias_added_parity}).
    For sparse parity ($\dim=20, \dimsparse=6$), biasing the Bernoulli distribution with the empirical mean of $2^i$ samples (for $i \in \{2,3,4,5,6\}$) makes online training faster for certain values of $i$ (best at $i=3$).
    However, to reach similar speedup as given by training on smaller datasets (marked as ``4-phase''),
    the amount of bias required for large set (i.e., online) training would require an extremely small dataset size.
    }
    \label{fig:online_biased_d20_k6_transformer}
\end{figure}

\subsubsection{Ablation studies}
\label{app:ablation}

\paragraph{Choosing the small dataset size}
We show ablation on the size of small-set training.
A proper dataset provides learning speedup without incurring severe overfitting, as shown in \Cref{fig:vary_data_size} for parity and single-index model (SIM).
Note that for SIM, using a smaller dataset can lead to speedup initially but a worse loss at convergence.
Hence our main results on SIM (e.g. \Cref{fig:mlp}) adopts 2-phase training, where we first train on a small dataset (of size 8192) and then switch to a larger dataset (of size 256000).
We recommend such multi-phase in general to obtain both learning speedup and generalization benefits of using the full dataset.

\begin{figure}
    \centering
    \begin{subfigure}[t]{0.43\linewidth}
        \includegraphics[width=\linewidth]{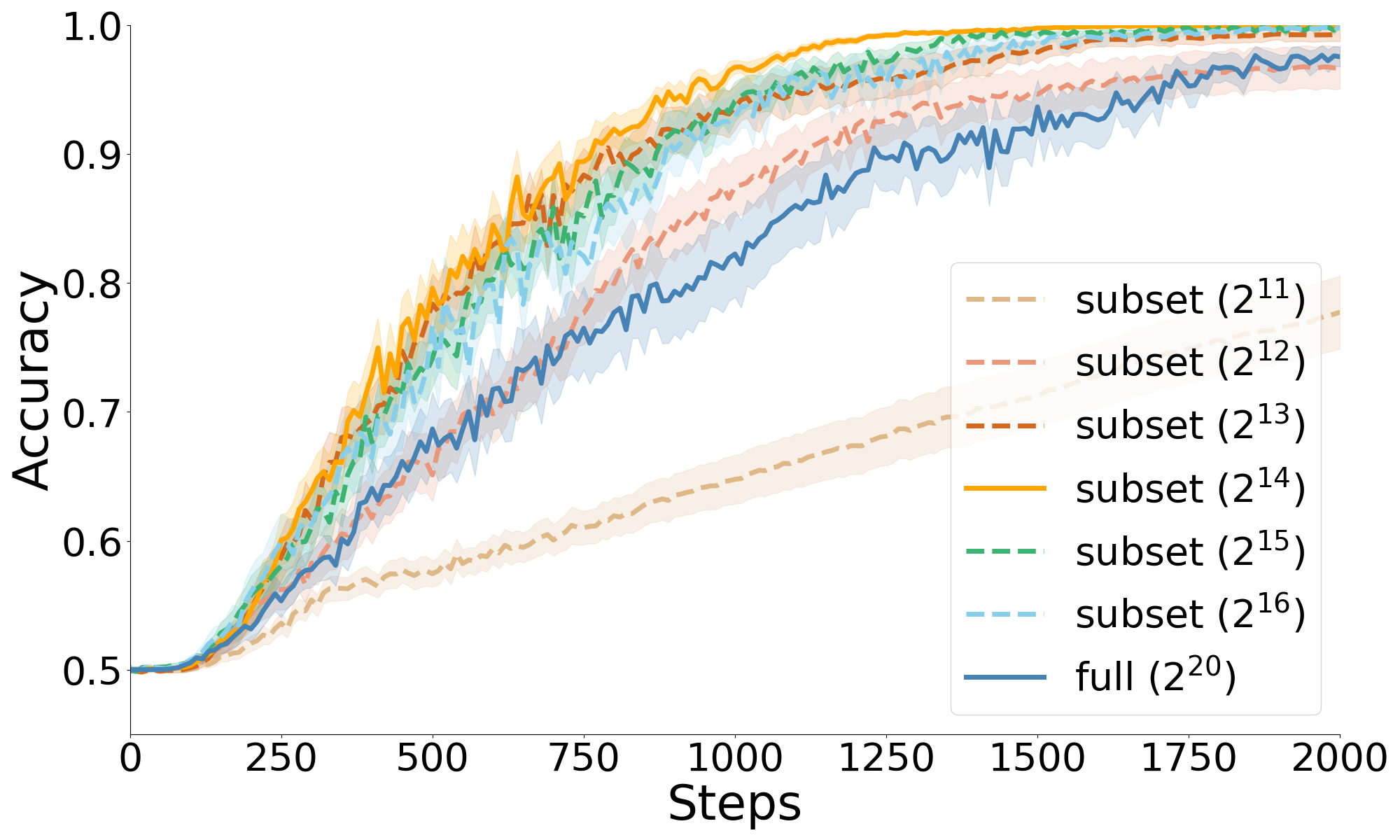}
        \caption{(20, 6)-parity}
    \end{subfigure}
    \centering
    \begin{subfigure}[t]{0.43\linewidth}
        \includegraphics[width=\linewidth]{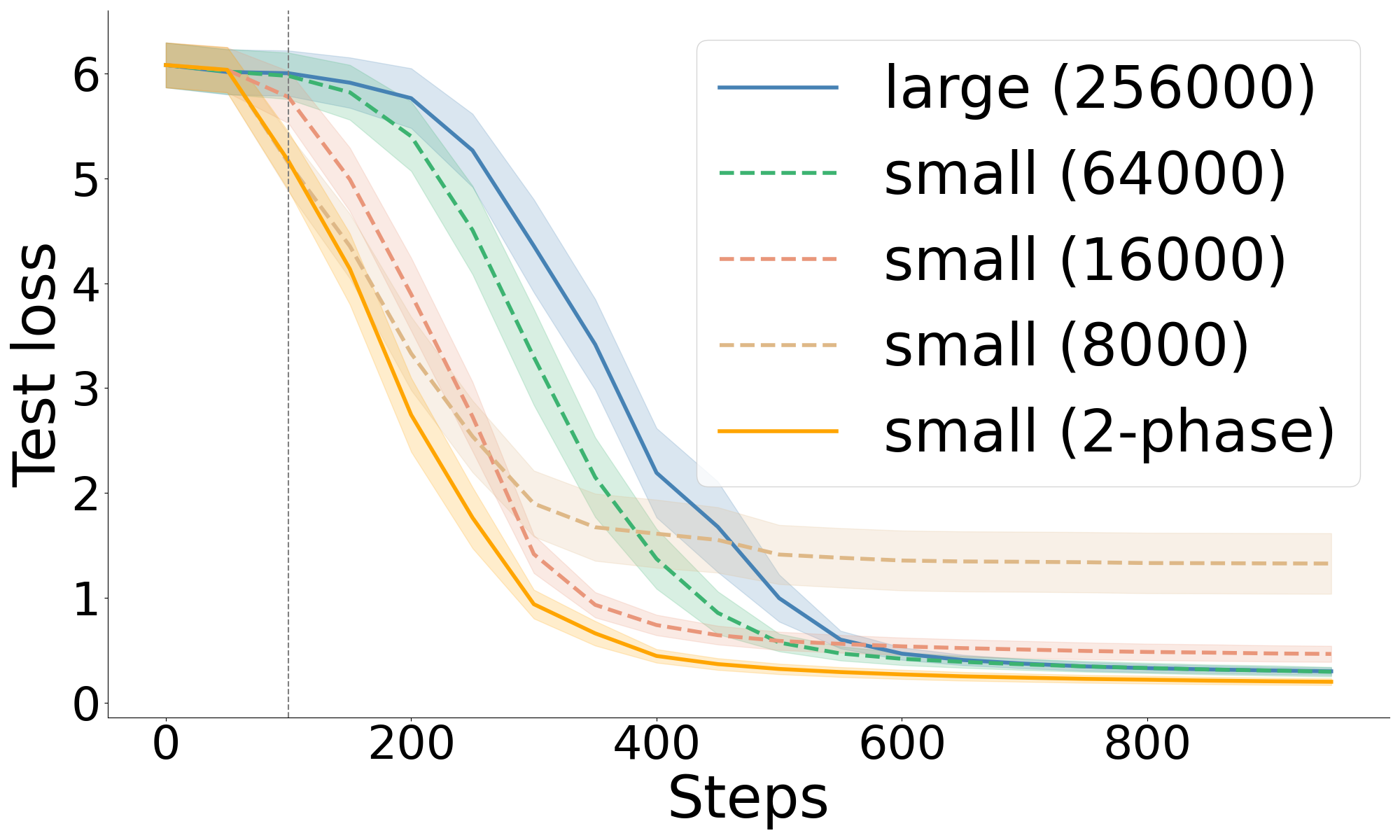}
        \caption{SIM}
    \end{subfigure}
    \caption{\textbf{Varying the small dataset size}.
    Results shown on MLP with full-batch training, for parity (left) and single-index model (right).}
    \label{fig:vary_data_size}
\end{figure}

\paragraph{Auto-scheduling for multi-phase training}
As mentioned in 
We also consider an alternative auto-scheduling, which the phase sizes and durations are determined automatically. 
Specifically, there are 6 phases, where the first and last phase is of size 1/320 and 1/50 of the amount of data seen during the online run.
The intermediate dataset sizes are distributed geometrically.
Each phase advances to the next one either when the training accuracy reaches 75\%, or when 50 epochs have elapsed. 
As shown in \Cref{fig:parity_multi_phase_auto}, this auto-scheduling achieves comparable performance to the 6-phase scheduling described above and is much faster than online training.

\begin{figure}
    \centering
    \includegraphics[width=0.5\linewidth]{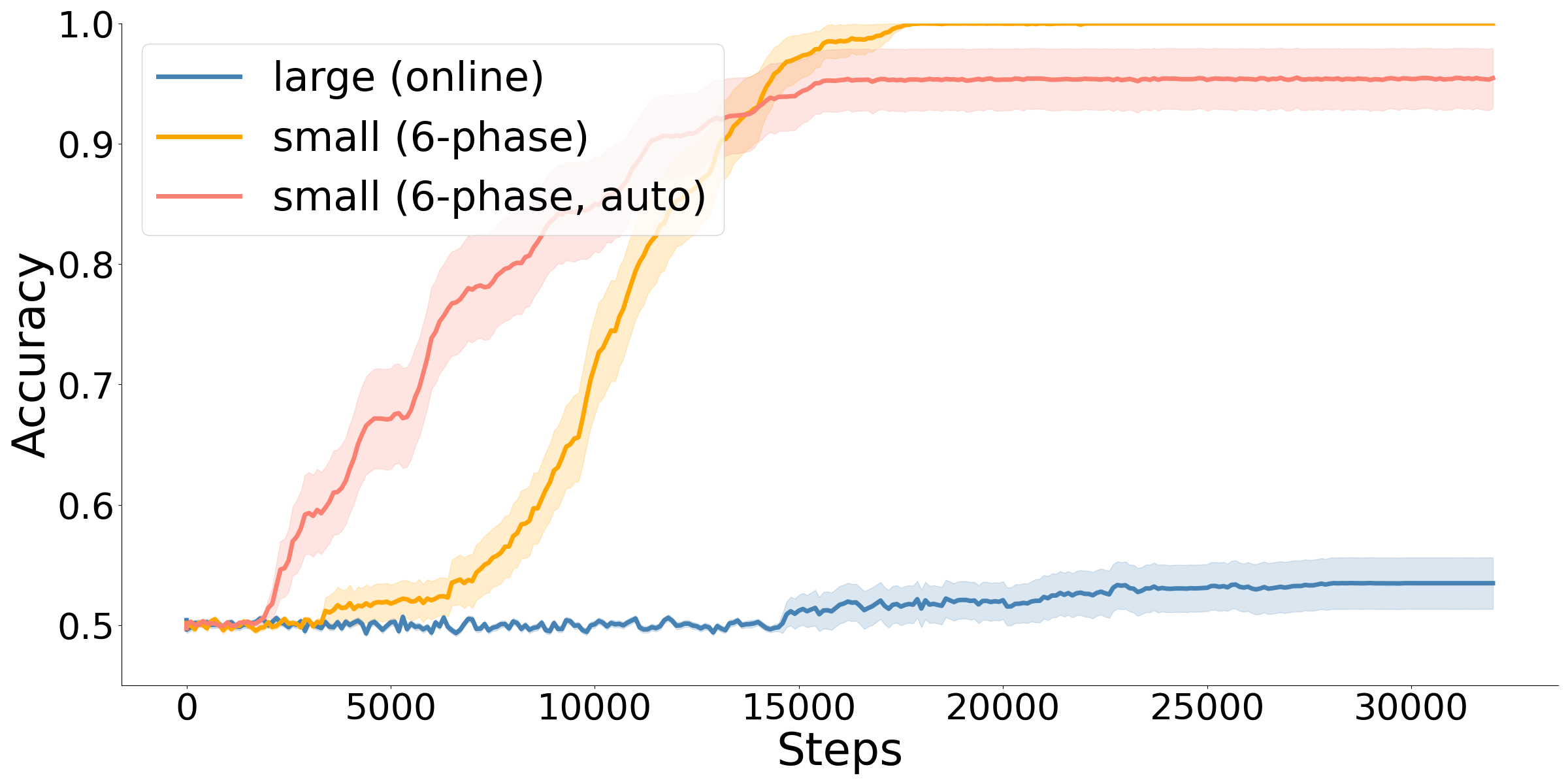}
    \caption{\textbf{Auto-scheduling for multi-phase learning}, where the dataset sizes across phases is distributed geometrically, and the phase duration is determined automatically based on the training accuracy.
    Such auto-scheduling (red) is comparable to manually selected phase scheduling (yellow), both much faster than online training.}
    \label{fig:parity_multi_phase_auto}
\end{figure}

\paragraph{Speedup from random-label training}
Our work attributes the small-vs-large gap to the layer balancing effects enabled by small-set repetitions.
As discussed in \Cref{sec:random_label}, one strong empirical evidence for this is that training on a small set of samples with \textit{random labels} also leads to accelerated learning.
We now \Cref{fig:random_label_mod_add} provide additional evidence on mod addition, learned with Transformers with mini-batch updates using AdamW.
We train the model first on a small subset where the labels are randomly permuted, and then switch to online training for the remaining time.
Specifically, the number of samples seen during the first phase is 0.5\% of the second phase, repeated for 50 epochs, i.e., the random label phase takes up 20\% of the total training time.
As shown in \Cref{fig:random_label_mod_add}, such initial small-set random-label training speeds up learning compared to training directly with online batches with true labels.

\begin{figure}
    \centering
    \includegraphics[width=0.5\linewidth]{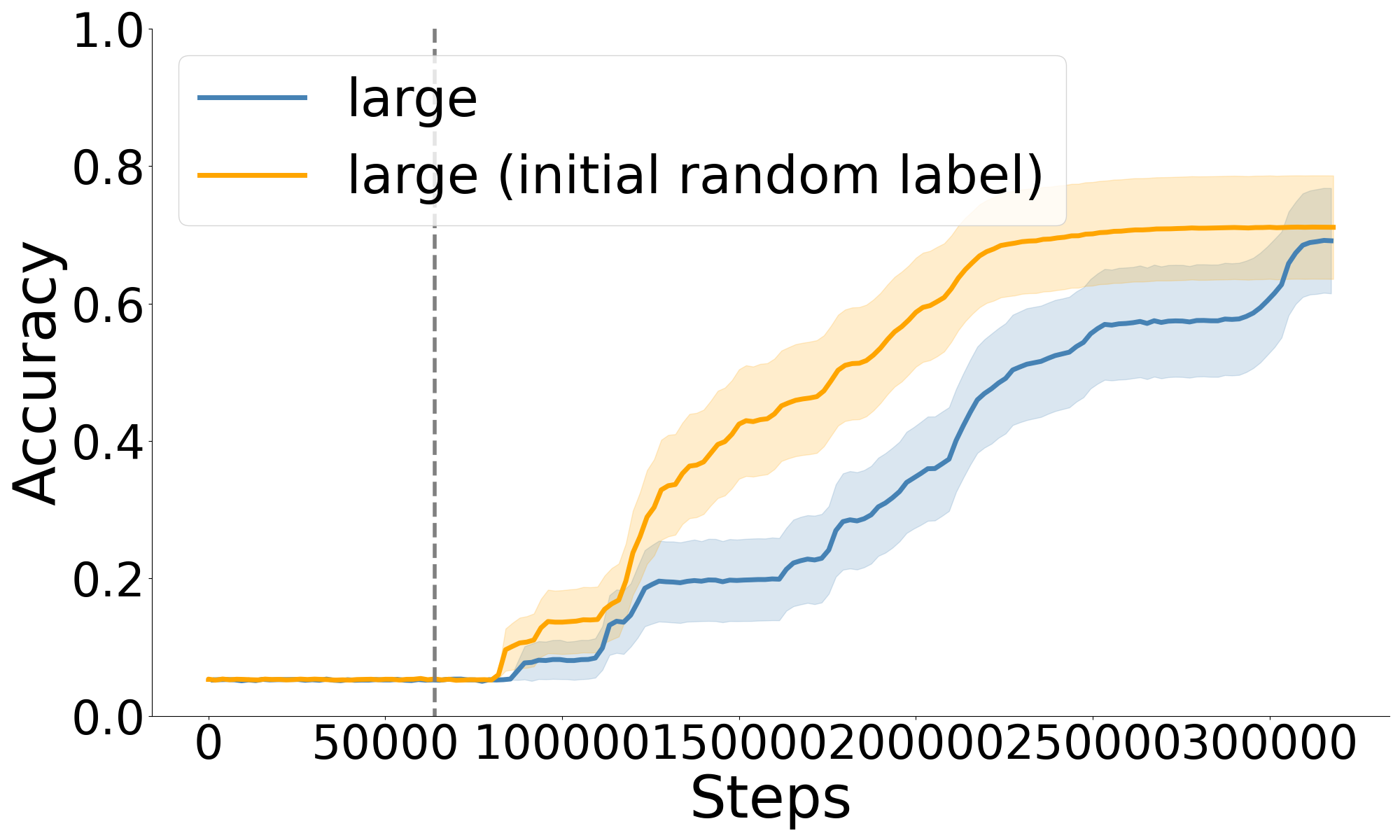}
    \caption{\textbf{Small-set training with random labels speeds up learning for mod addition}, complementing results in \Cref{fig:random_label}.
    Compared to training directly on online batches with true labels (blue curve),
    adding an initial phase of repeating a small set with \textit{random} labels help speed up learning.
    The gray dashed line marks the switch from training on a small set with random labels to training on online batches with true labels.
    Since random labels provide no learning signal, this result confirms that layer balancing is the main effect of small-set repetition.
    Results are shown on mod addition learned using Transformer with mini-batch updates.
    }
    \label{fig:random_label_mod_add}
\end{figure}

\paragraph{MLP initialization across widths}
Recall from \Cref{sec:mlp_layerwise} that proper initialization can shrink or even eliminate the small-vs-large gap.
\Cref{sec:mlp_layerwise} discusses two alternative initialization schemes to the default standard initialization, namely $\mu P$ and 1-dimension simplification with an $\alpha$-scaling (i.e., dividing the first layer initialization standard deviation by $\alpha$, and multiplying the second layer's by $\alpha$).
\Cref{fig:muP_MLP} shows the results at width 64, where both $\mu P$ and the $\alpha$-scaling help narrow the small-vs-large gap.
\Cref{fig:muP_across_m_parity} shows that for parity, $\mu P$ cannot close the gap at $\width=32$ but shows no gap for width 64 or above,
which may be partly due to the effect of increasing width which reduces the gap even under the standard parameterization (\Cref{fig:mlp_width}).
We hypothesize this may be because $\width=32$ is too small that it deviates too much from the infinite-width limit that $\mu P$ is designed for.
However, $\mu P$ does not close the gap for SIM for the maximum width (1024) we tested (\Cref{fig:muP_across_m_SIM}). 
Further, we find that the optimal $\alpha$-scaling to stay constant across widths (\Cref{fig:init_scale_across_m}).

\begin{figure}[t]
    \centering
    \includegraphics[width=0.24\linewidth]{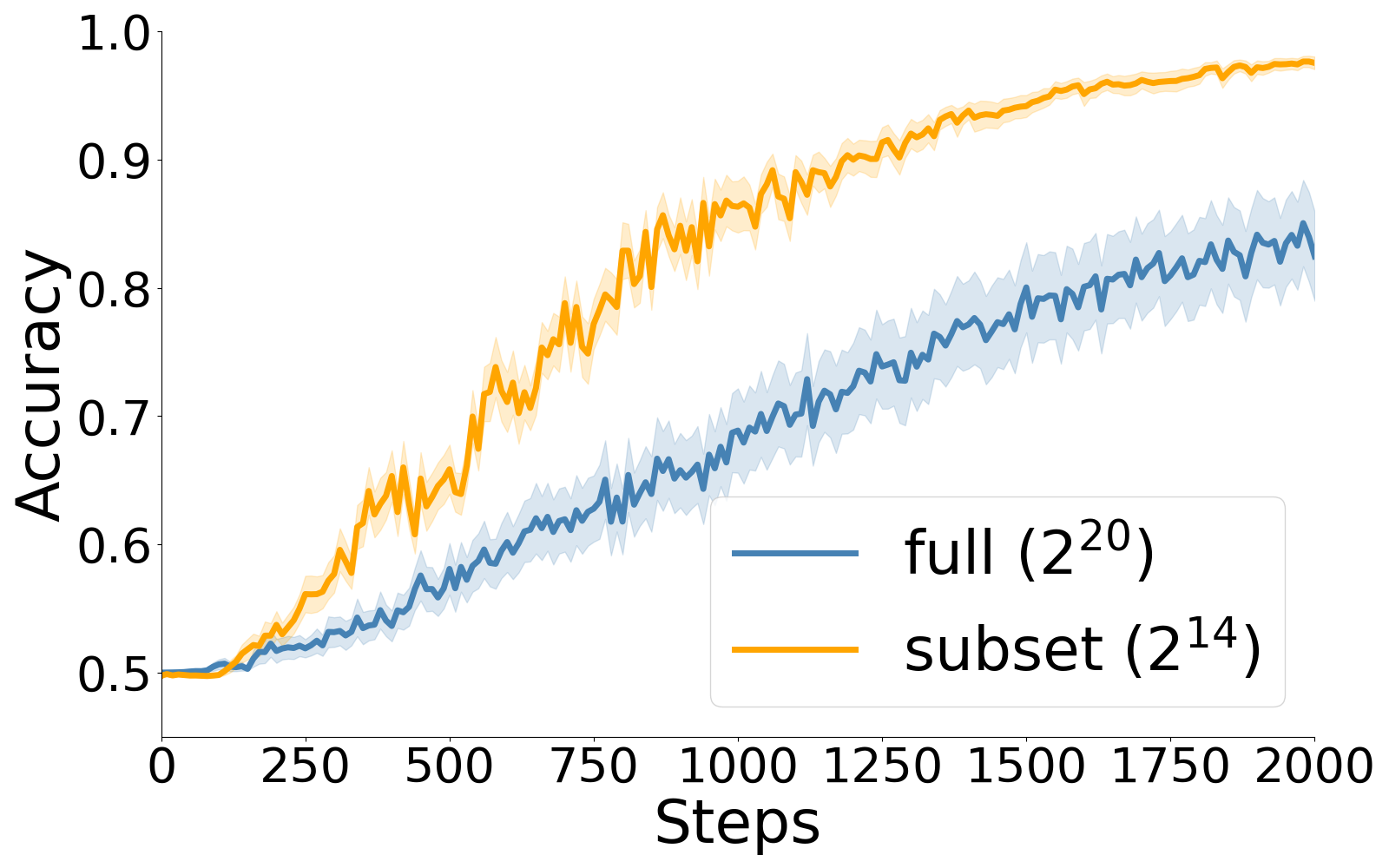}
    \includegraphics[width=0.24\linewidth]{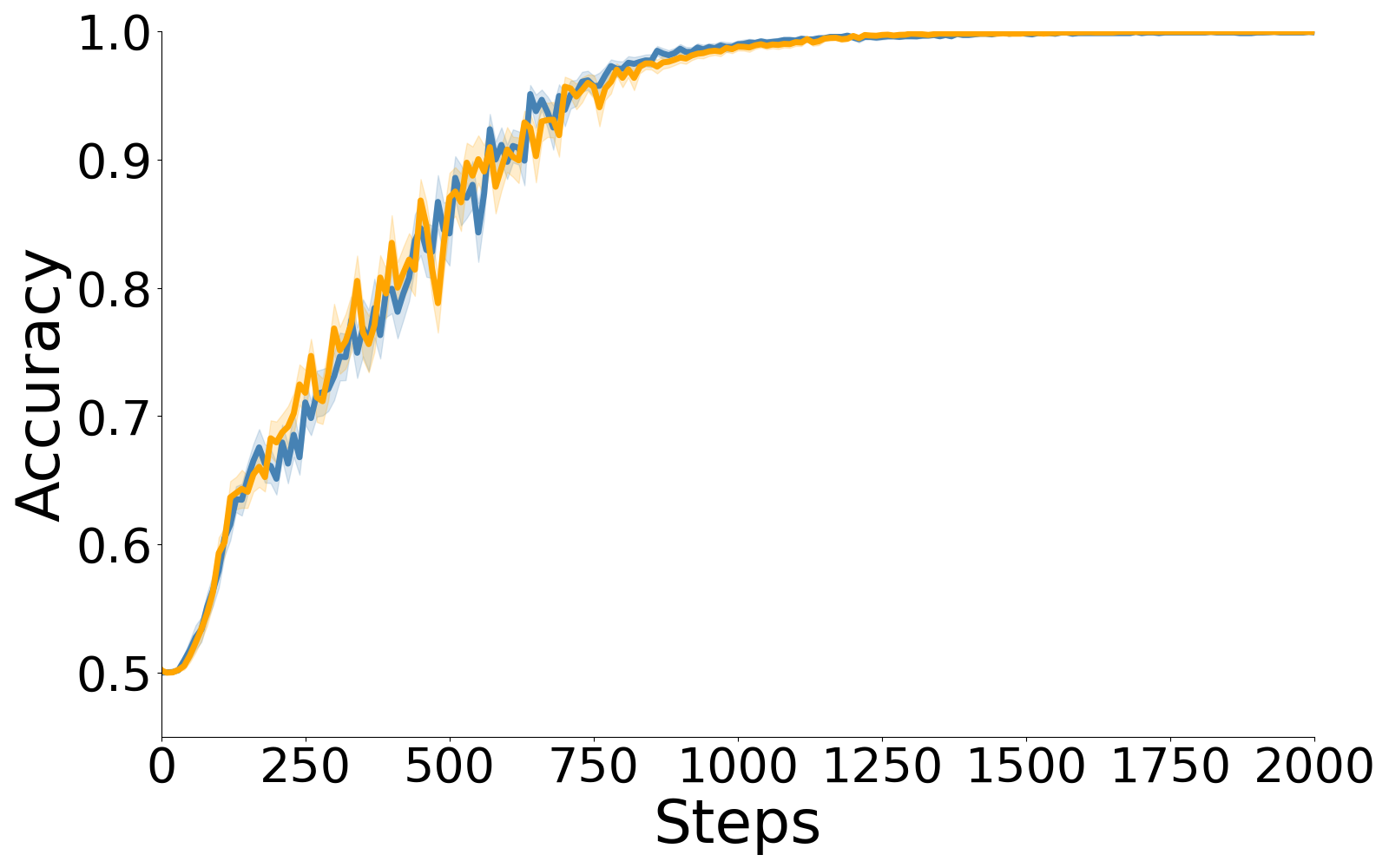}
    \includegraphics[width=0.24\linewidth]{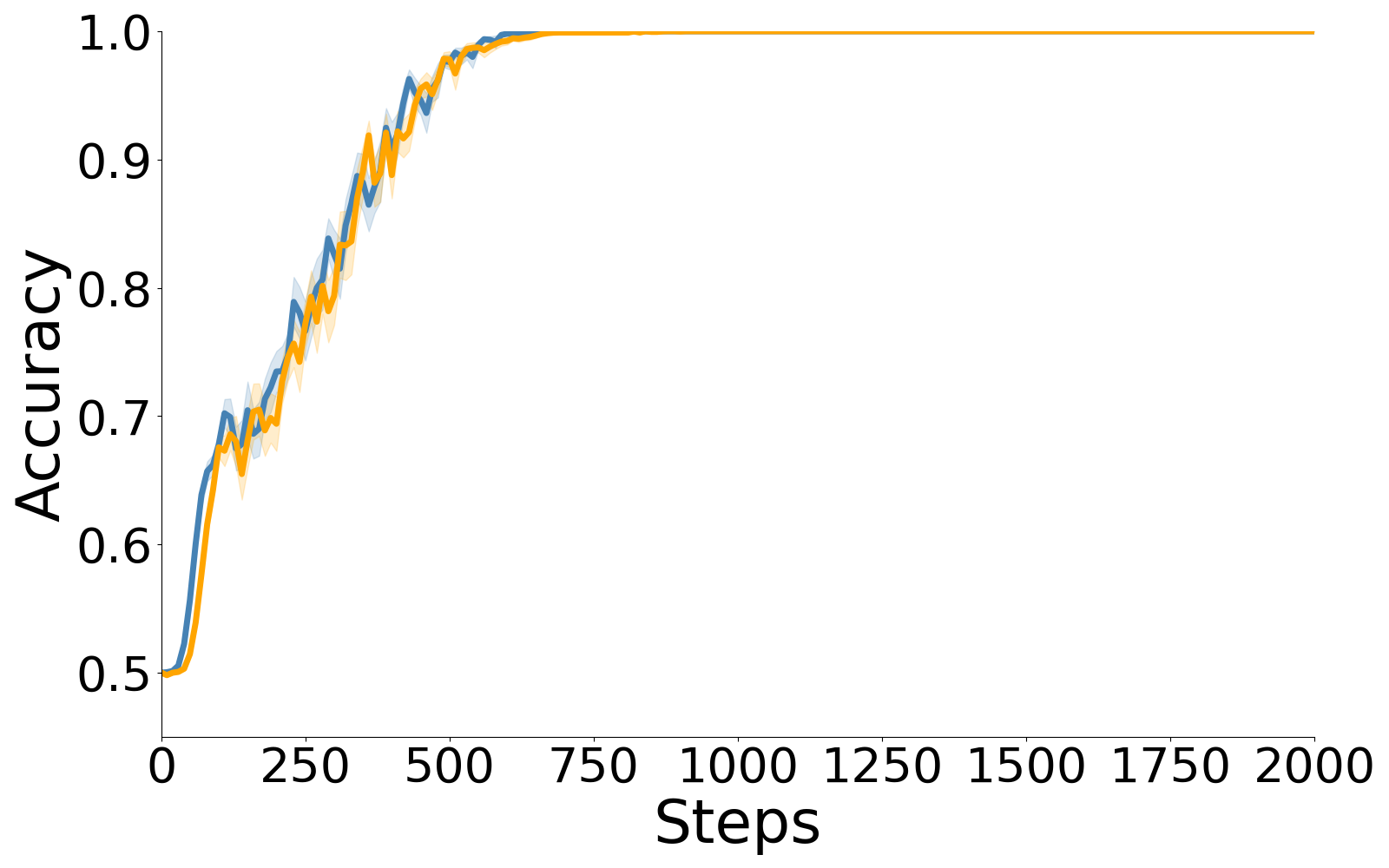}
    \includegraphics[width=0.24\linewidth]{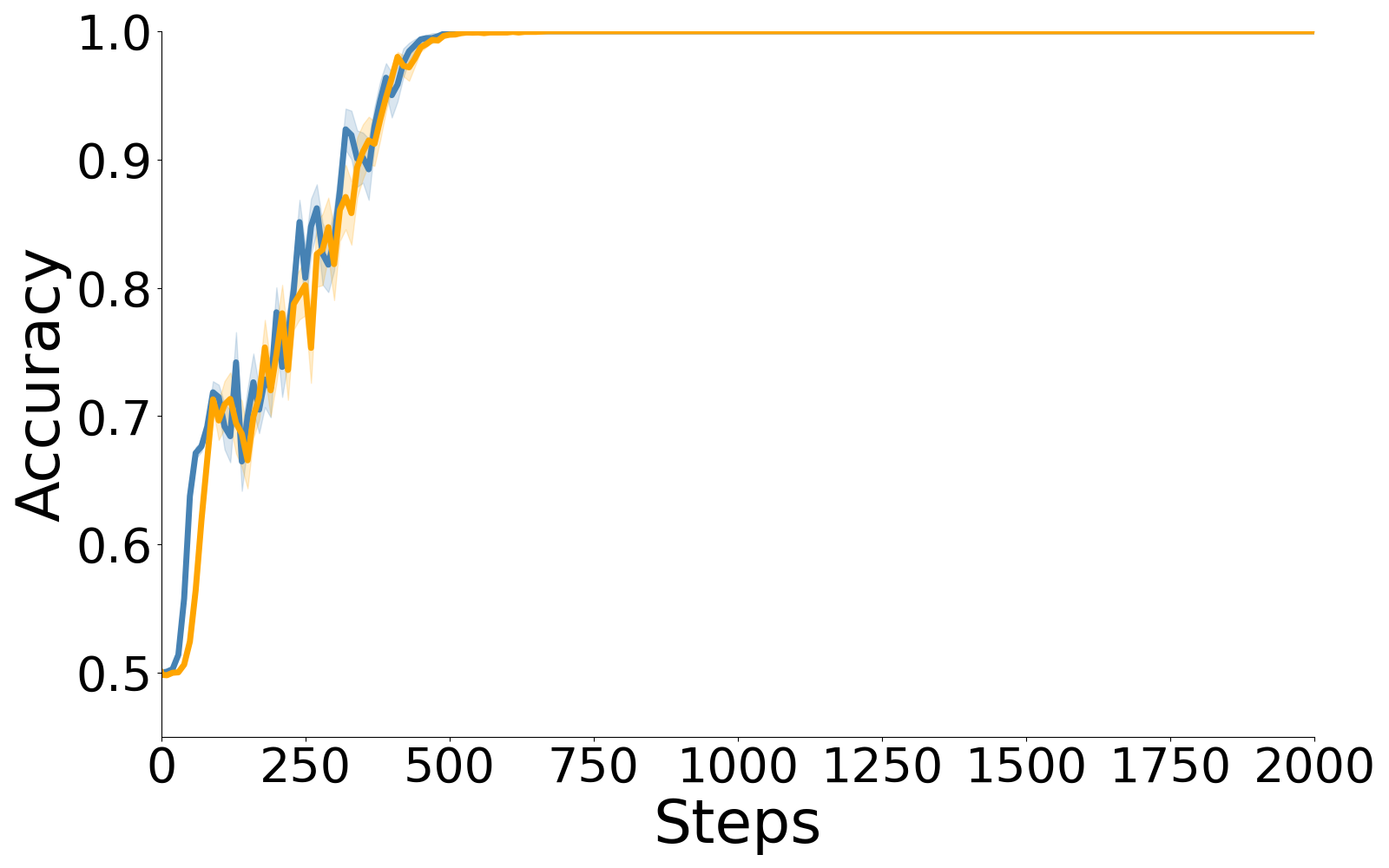}
    \caption{\textbf{$\mu P$ across model widths for parity.}
    Results are for 2-layer MLP on (20, 6)-parity trained with (full-batch) GD from $\mu P$ initialization, at various widths $m \in \{32, 64, 256, 1024\}$.
    $\mu P$ suffices to close the small-vs-large gap for width $\geq 64$.
    }
    \label{fig:muP_across_m_parity}
\end{figure}

\begin{figure}[t]
    \centering
    \includegraphics[width=0.33\linewidth]{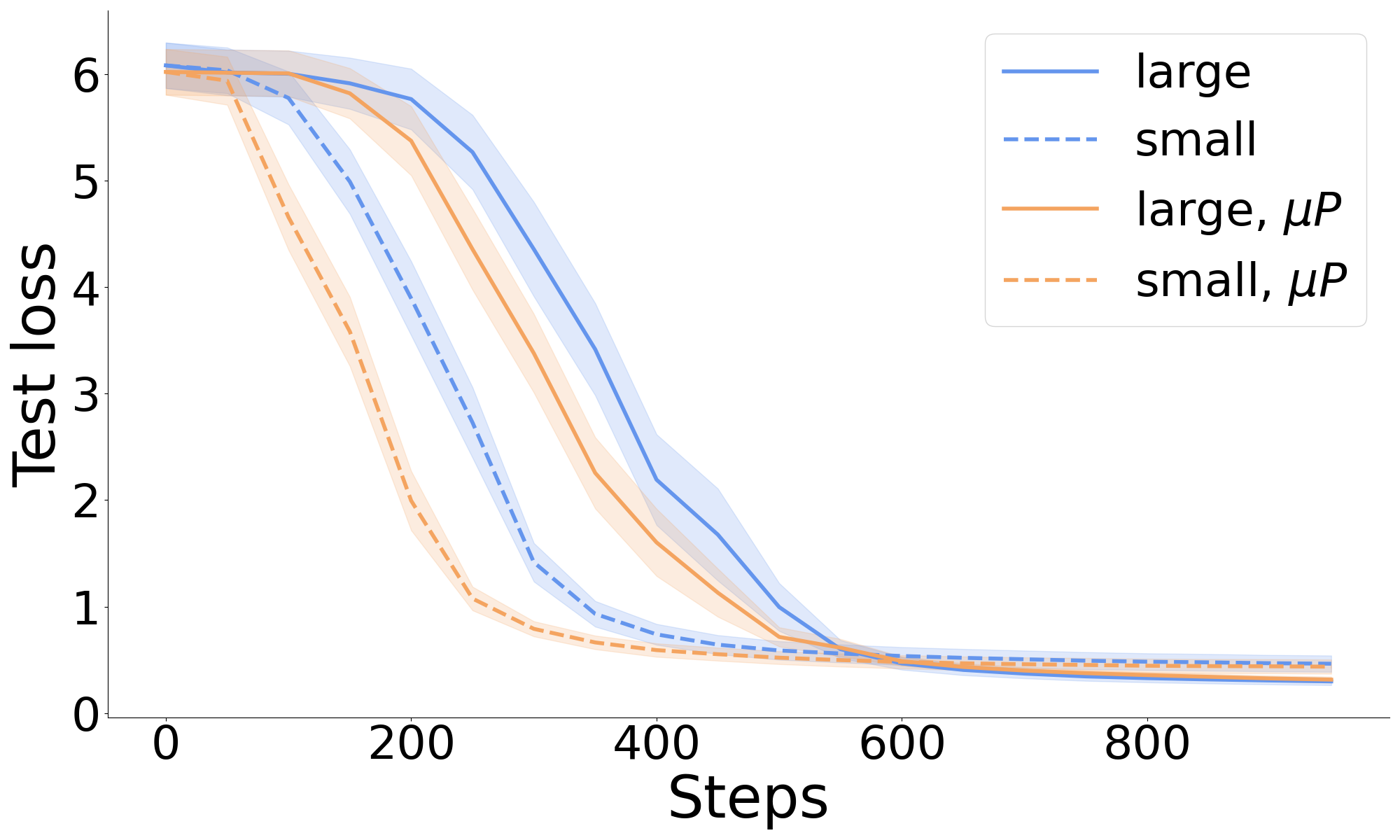}
    \includegraphics[width=0.33\linewidth]{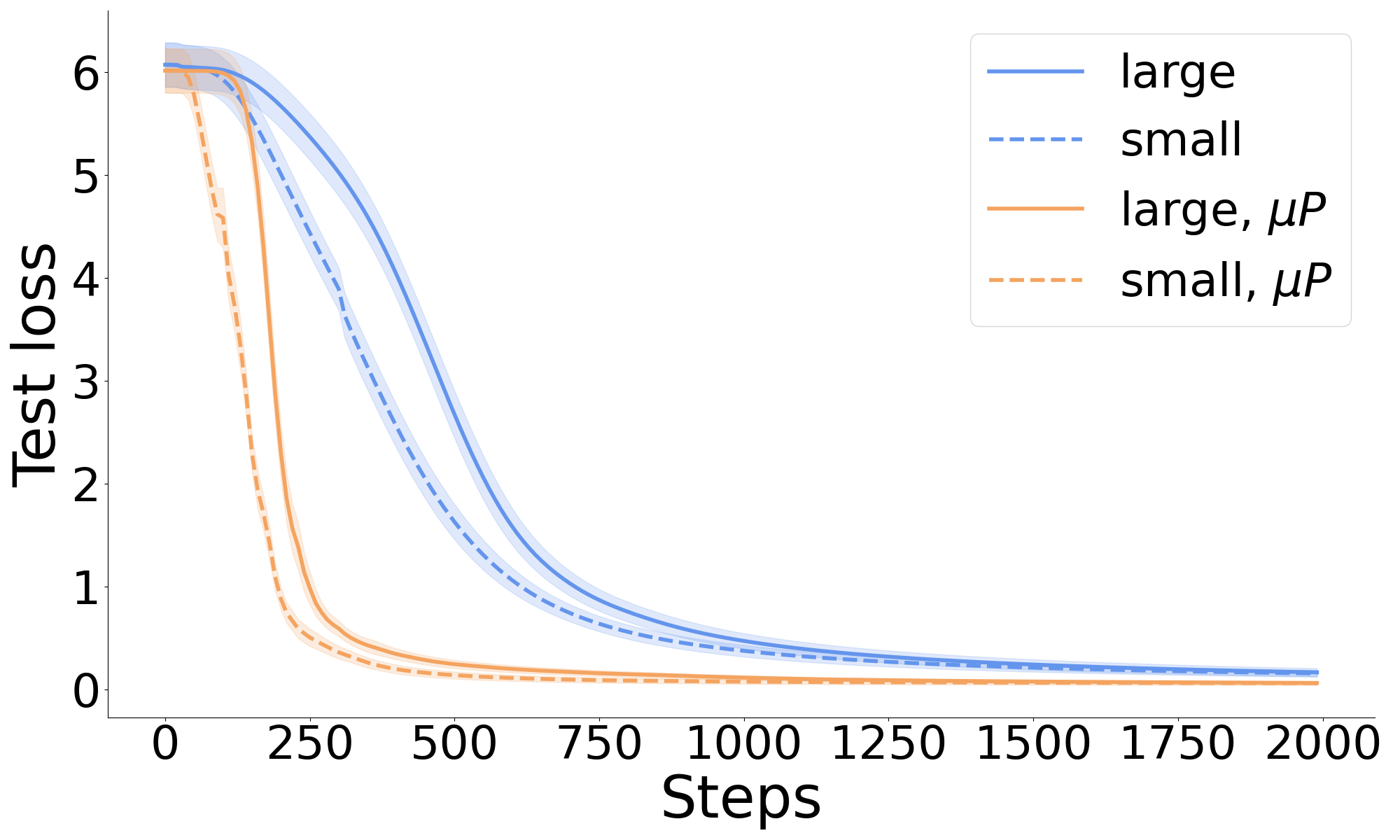}
    \caption{\textbf{$\mu P$ across model widths for SIM.}
    Results are for 2-layer MLP on SIM trained with (full-batch) GD from $\mu P$ initialization, at various widths $m \in \{64, 1024\}$.
    $\mu P$ doesn't close the small-vs-large gap for SIM.
    }
    \label{fig:muP_across_m_SIM}
\end{figure}

\begin{figure}[ht!]
    \centering
    \includegraphics[width=0.24\linewidth]{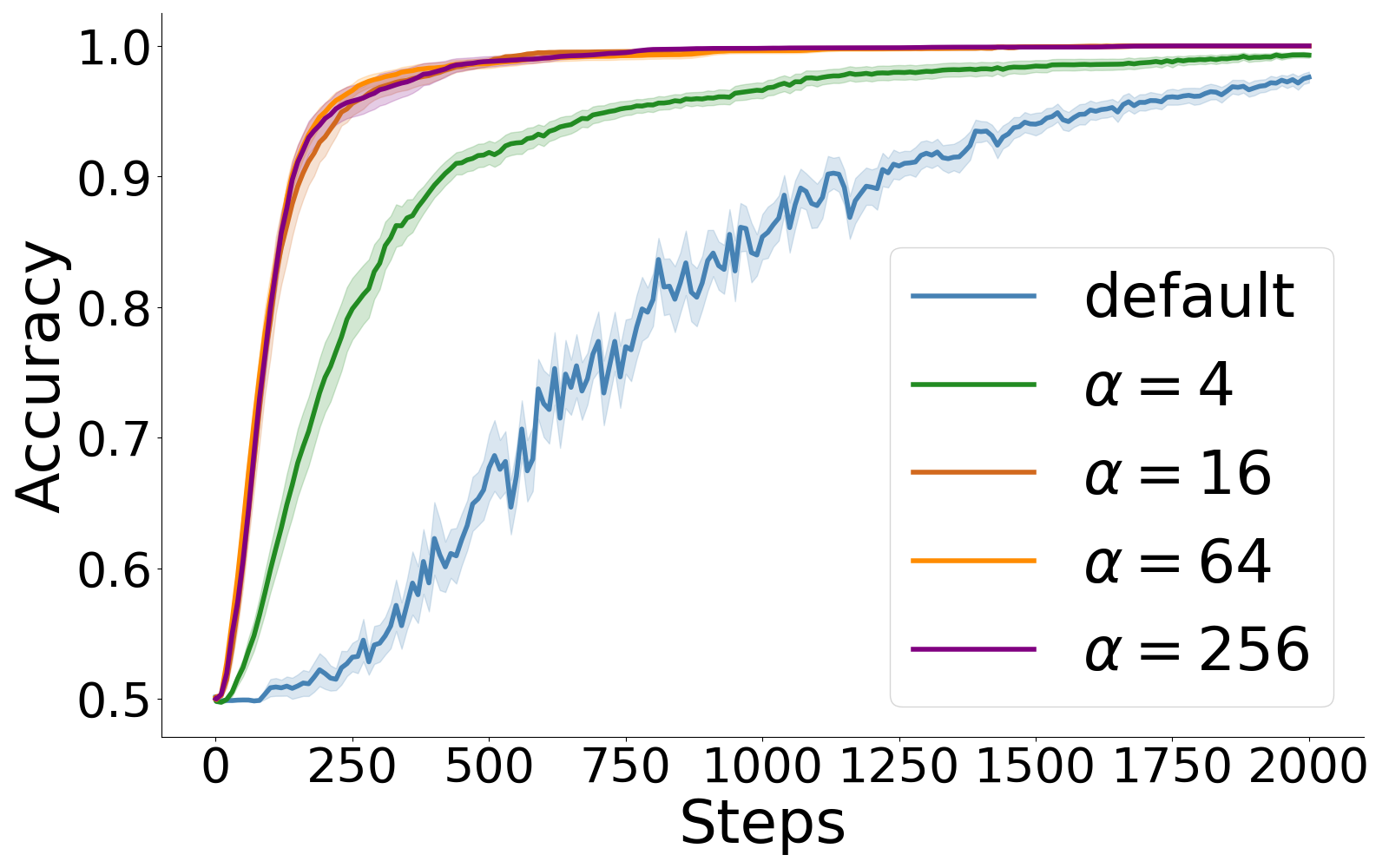}
    \includegraphics[width=0.24\linewidth]{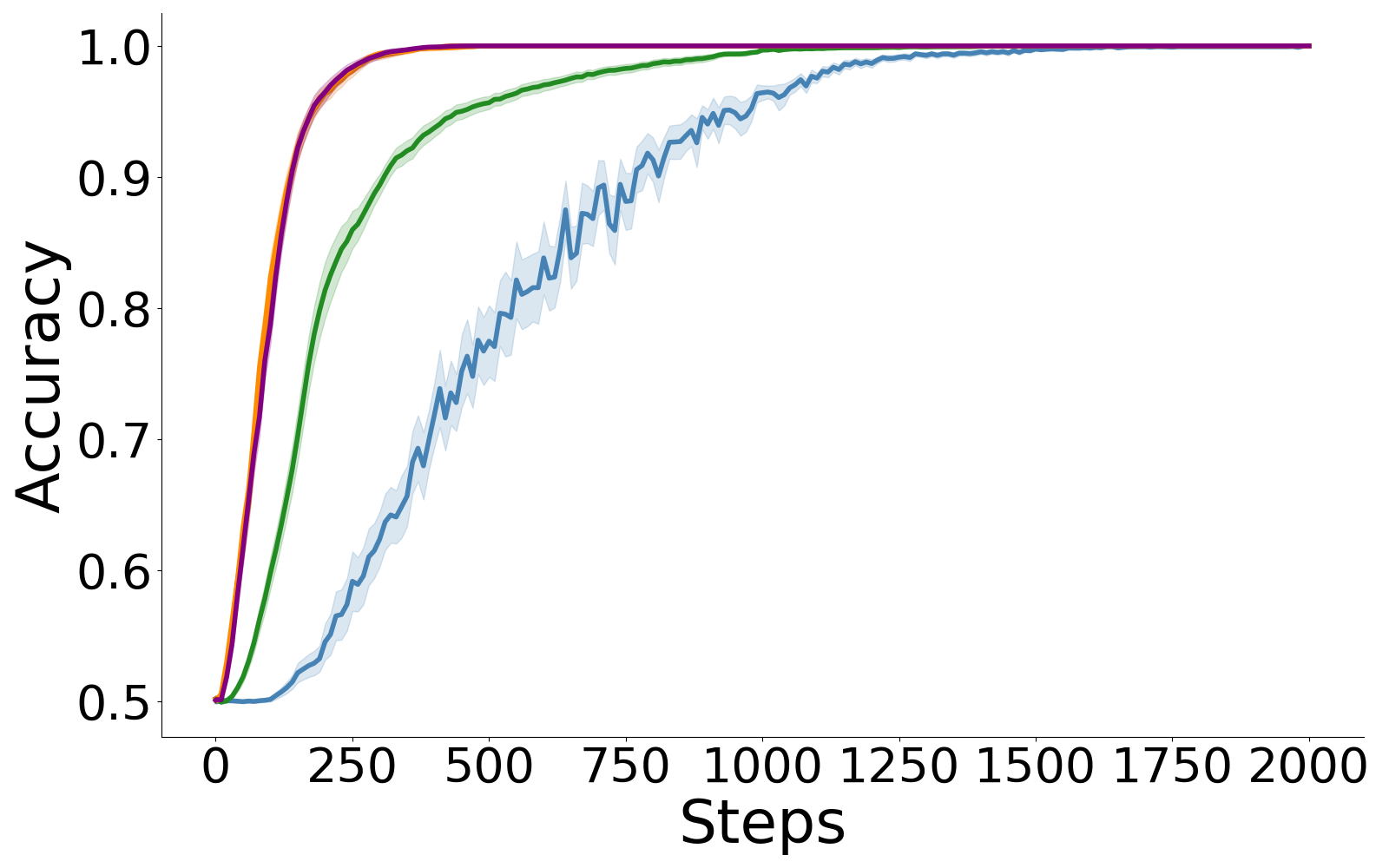}
    \includegraphics[width=0.24\linewidth]{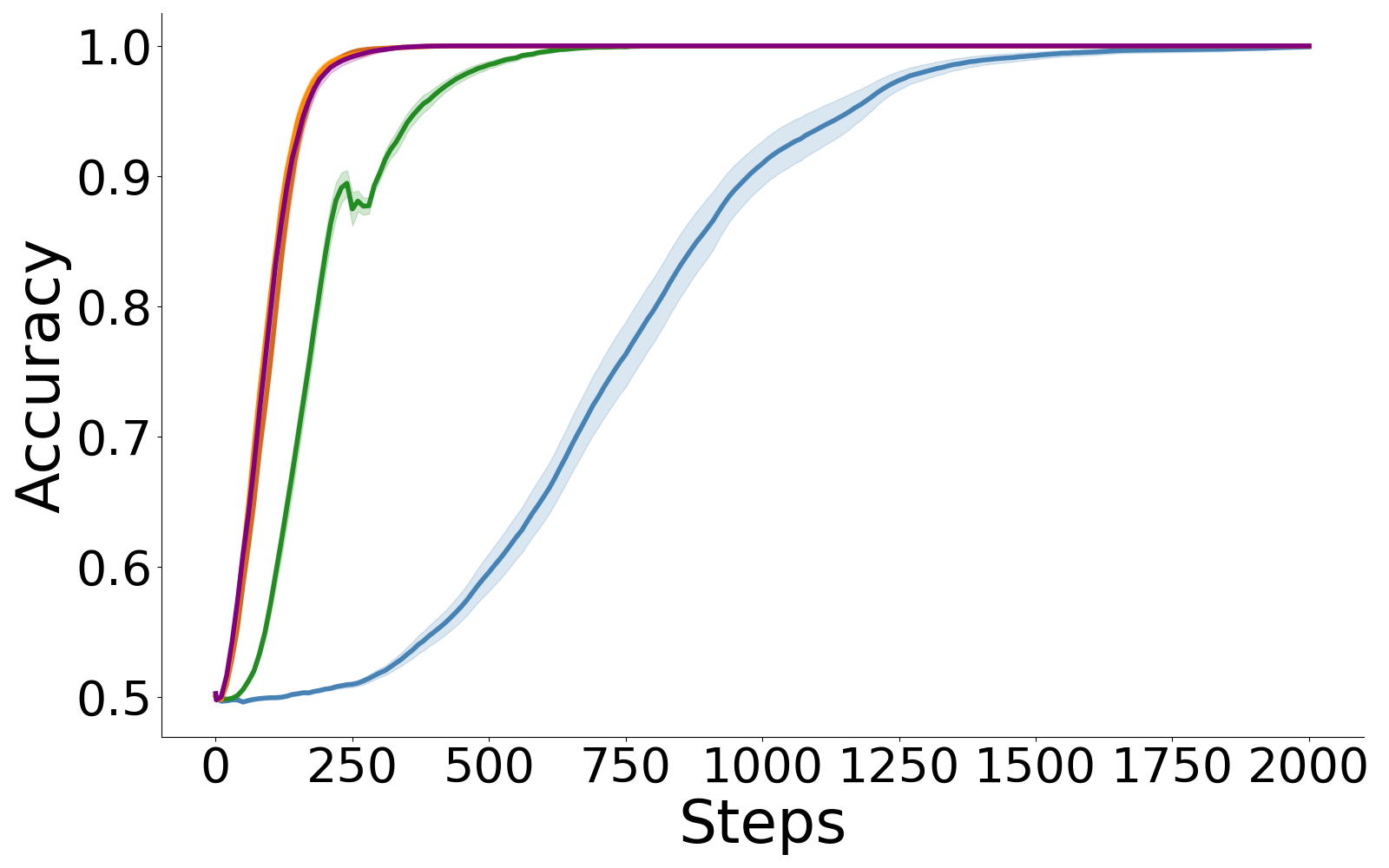}
    \includegraphics[width=0.24\linewidth]{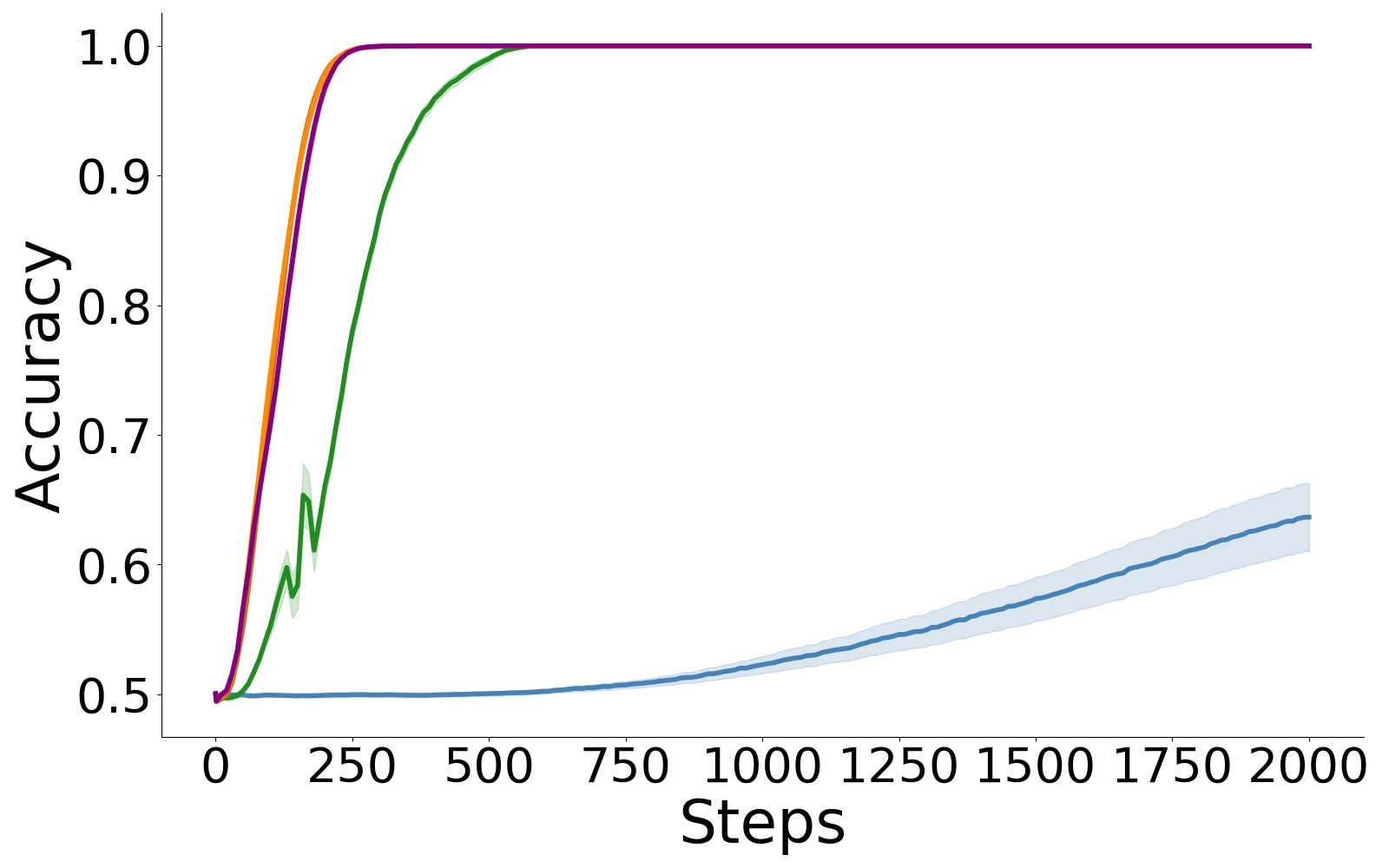}
    \caption{\textbf{Initialization scale holds constant across width.}
    Results are for MLP on (20, 6)-parity trained with (full-batch) GD on $\nsamples=2^{14}$ samples, at various widths $m \in \{32, 64, 256, 1024\}$.
    }
    \label{fig:init_scale_across_m}
\end{figure}

\begin{figure}[t]
    \begin{subfigure}[t]{0.33\linewidth}
        \centering
        \includegraphics[width=\linewidth]{figs/jingwen_figs/mlp_gd_m64_sim.png}
        \caption{Width 64}
    \end{subfigure}
    \begin{subfigure}[t]{0.33\linewidth}
        \centering
        \includegraphics[width=\linewidth]{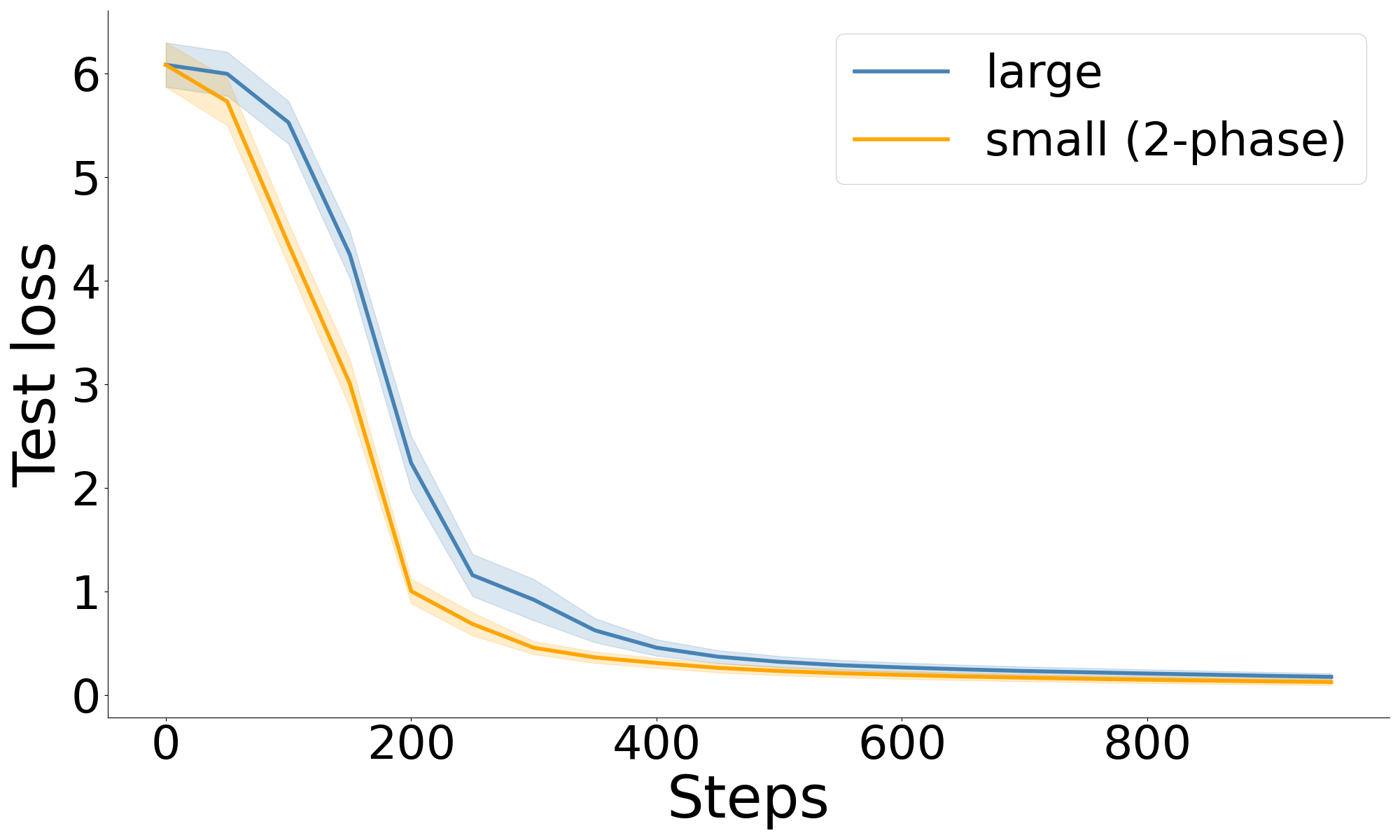}
        \caption{Width 256}
    \end{subfigure}
    \begin{subfigure}[t]{0.33\linewidth}
        \centering
        \includegraphics[width=\linewidth]{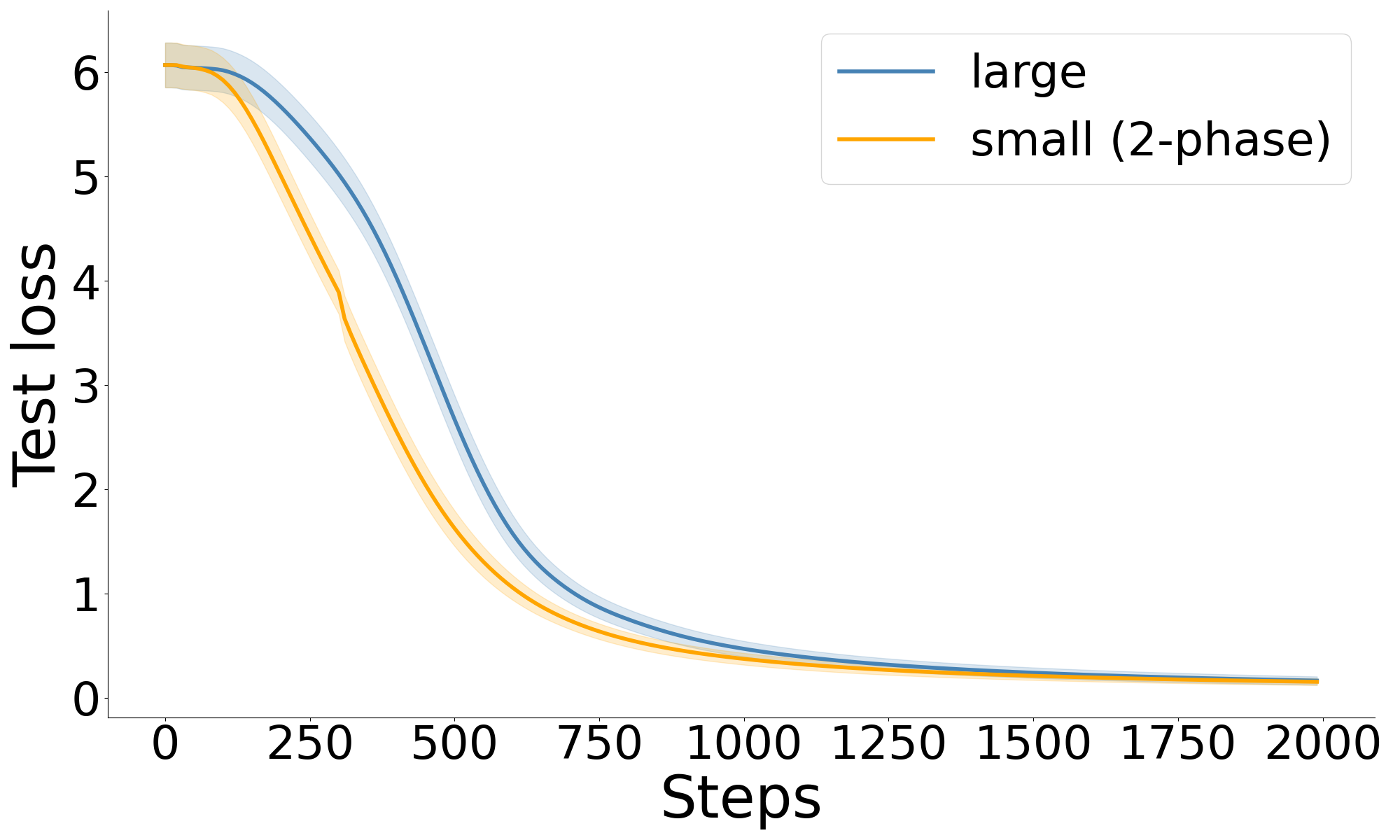}
        \caption{Width 1024}
    \end{subfigure}
    \caption{\textbf{Increasing width reduces the small-vs-large gap}.
    Results are from 2-layer MLP with full-batch updates on SIM, where we vary the model width.
    }
    \label{fig:mlp_width_sim}
\end{figure}

\begin{figure*}[t]
    \centering
    \begin{minipage}{0.24\linewidth}
        \centering
        \includegraphics[width=\linewidth]{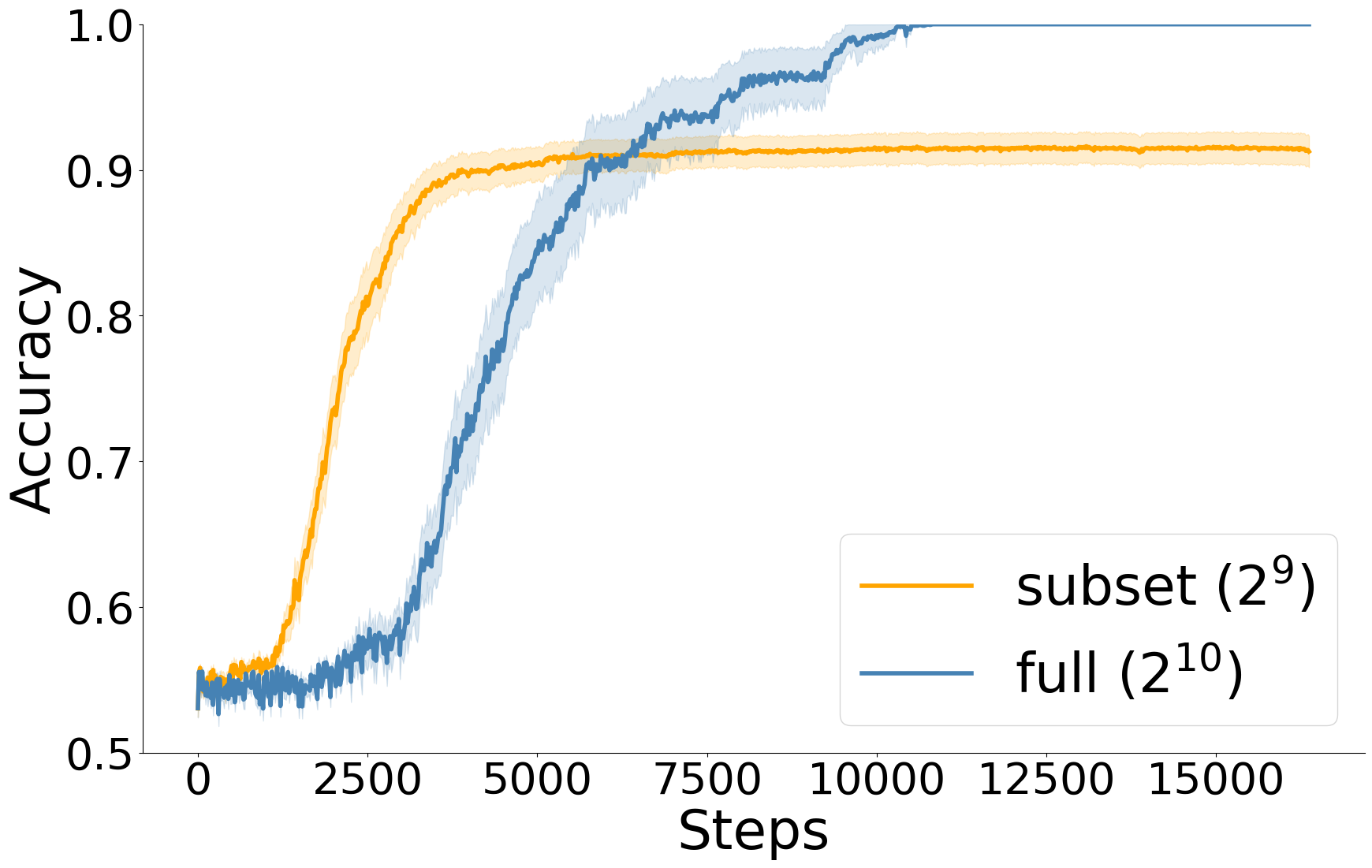}
    \end{minipage}
    \hfill
    \begin{minipage}{0.24\linewidth}
        \centering
        \includegraphics[width=\linewidth]{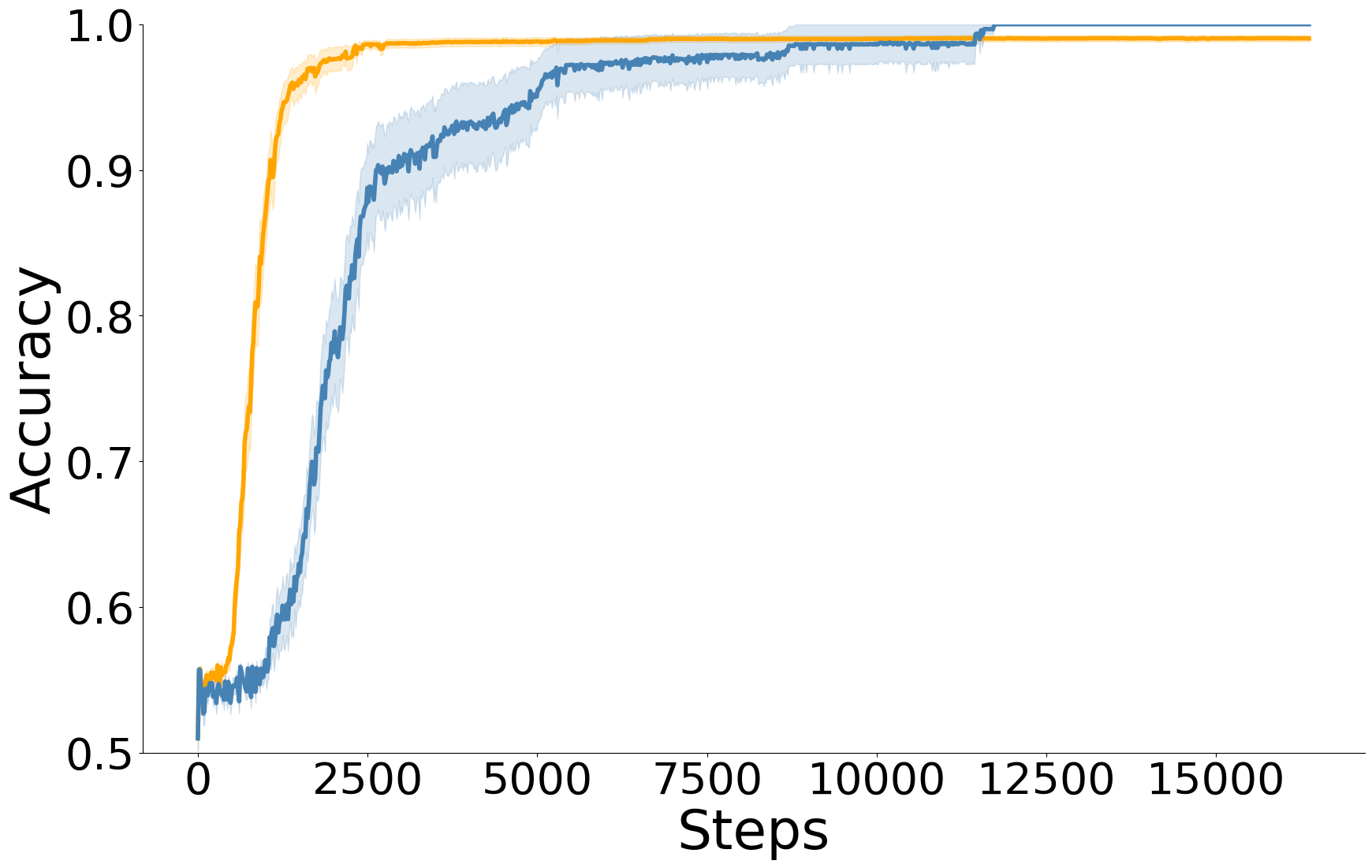}
    \end{minipage}
    \hfill
    \begin{minipage}{0.24\linewidth}
        \centering
        \includegraphics[width=\linewidth]{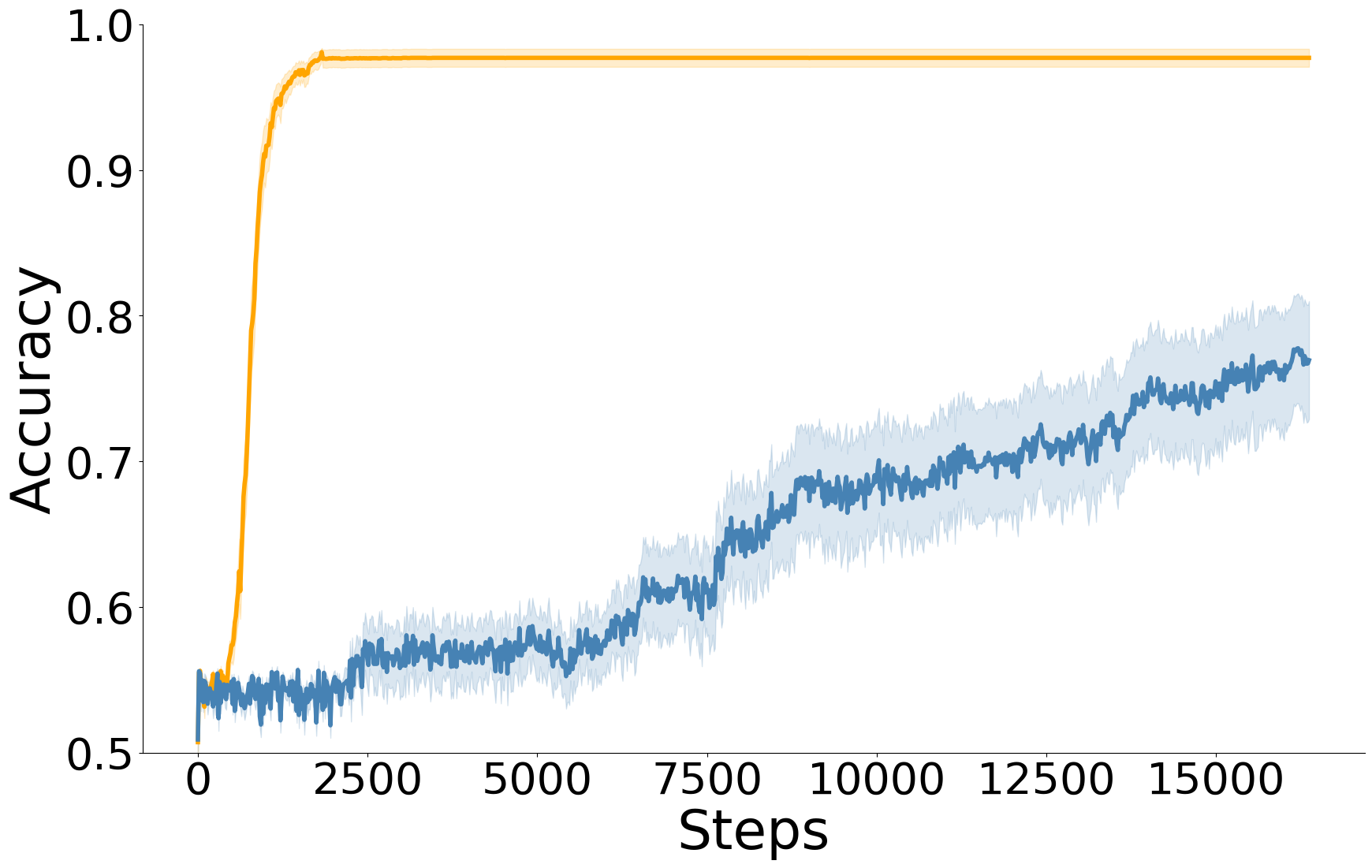}
    \end{minipage}
    \hfill
    \begin{minipage}{0.24\linewidth}
        \centering
        \includegraphics[width=\linewidth]{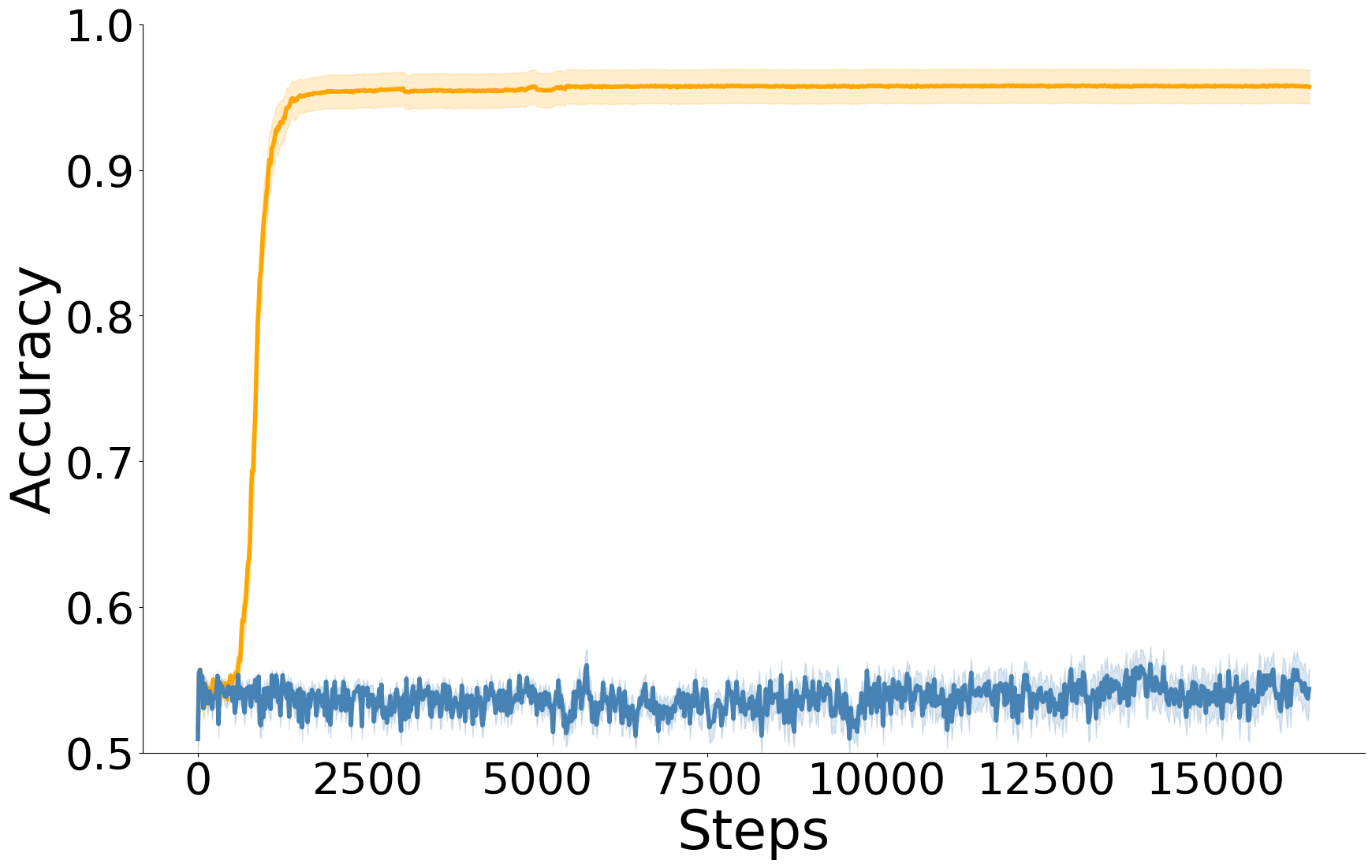}
    \end{minipage}
    \caption{Transformer on $(10, 6)$-parity, across varying depths (2, 4, 6, 8).}
    \caption{\textbf{Increasing depth widens the small-vs-large gap}.
    Results are shown on Transformer with mini-batch Adam updates, complementing results in \Cref{fig:mlp_depth}.
    }
    \label{fig:transformer_depth}
\end{figure*}

\paragraph{Effect of Transformer QK normalization}
In \Cref{sec:transformer_scaling}, we showed that QK normalization shrinks the small-vs-large gap, for parity and SIM.
Specifically, QK normalization significantly improves large-set training for both mini-batch (\Cref{fig:transformer_scaling_parity:parity_qk_norm}) and full-batch (\Cref{fig:transformer_parity_gd_qk_norm}) training.
However, such improvement is not universal.

First, QK normalization worsens 4-phase training (\Cref{fig:transformer_scaling_parity:parity_qk_norm}).
A closer investigation suggests that this is due to worse overfitting.
As shown in \Cref{fig:transformer_qk_norm_overfit}, training with QK normalization allows to fit the training set more quickly, while the validation accuracy remains low.
Moreover, QK normalization can even worsen \textit{online} training for some tasks, such as ICL (\Cref{fig:transformer_scaling:icl_qk_norm}) and mod addition (\Cref{fig:mod_qk_norm}).
A mechanism understanding of these effects is left as future work.

\begin{figure}
\centering
\begin{subfigure}[t]{0.45\linewidth}
    \includegraphics[width=\linewidth]{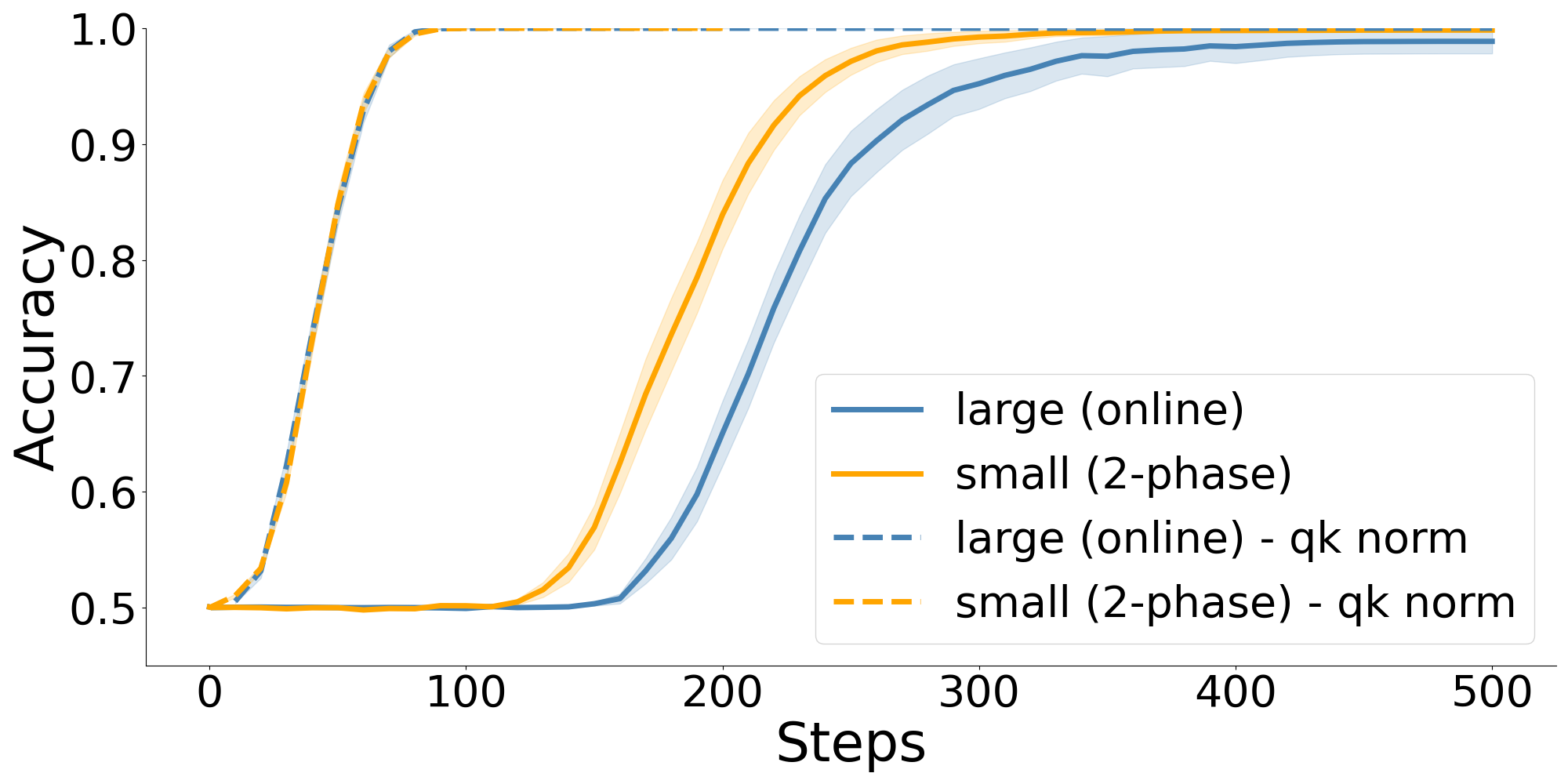}
    \caption{(20, 6)-parity, full-batch updates
    }
    \label{fig:transformer_parity_gd_qk_norm}
\end{subfigure}
\begin{subfigure}[t]{0.45\linewidth}
    \includegraphics[width=\linewidth]{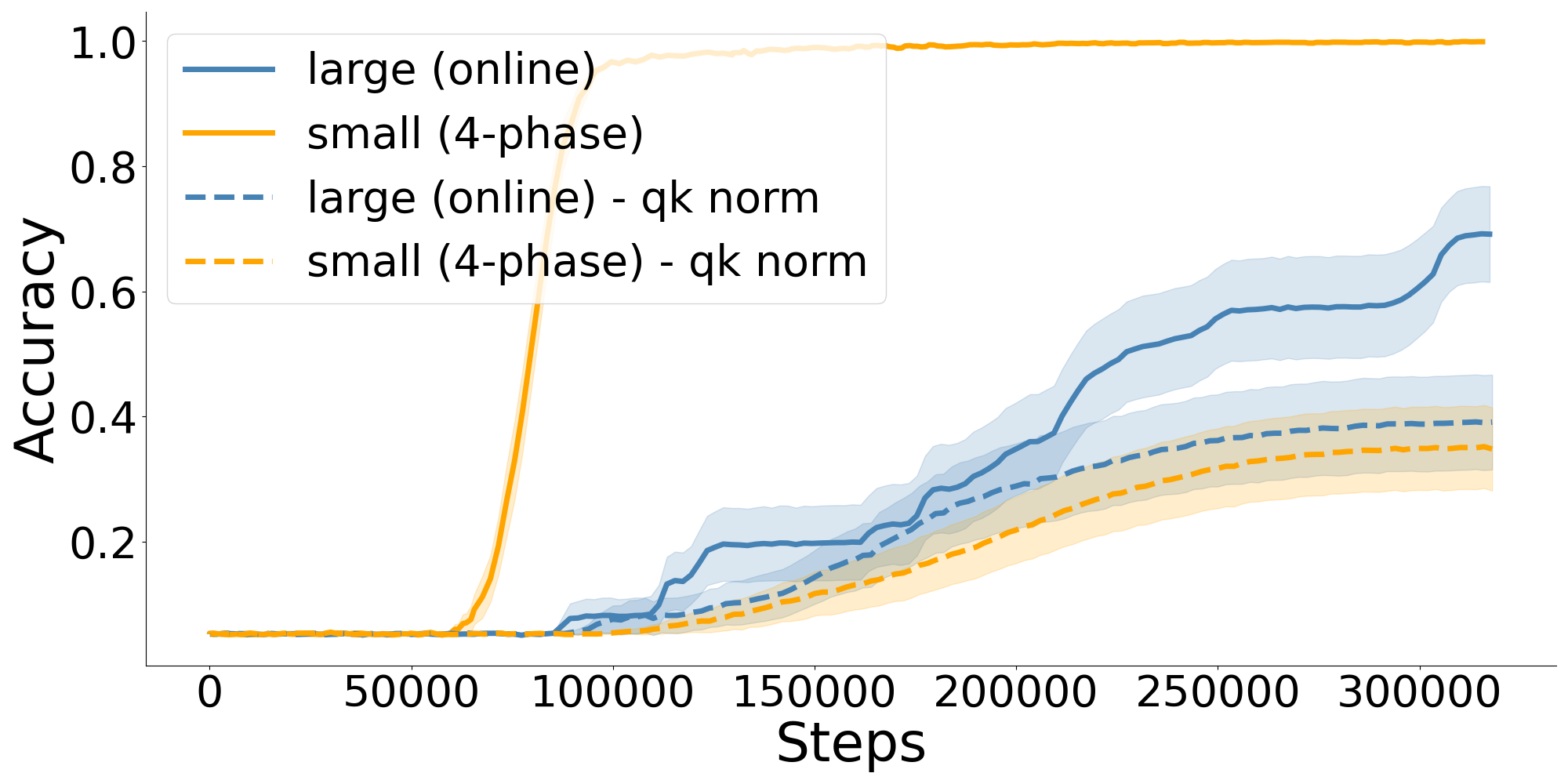}
    \caption{mod addition
    }
    \label{fig:mod_qk_norm}
\end{subfigure}
\caption{\textbf{Additional results on Transformer with QK normalization.}
QK normalization \textit{(Left)} removes the small-vs-large gap for parity with full-batch training, and \textit{(Right)} worsens the training of mod addition for both online (``large'') and repeated (``small'') samples.
}
\end{figure}

\begin{figure}
    \centering
    \begin{subfigure}[t]{0.4\linewidth}
        \includegraphics[width=\linewidth]{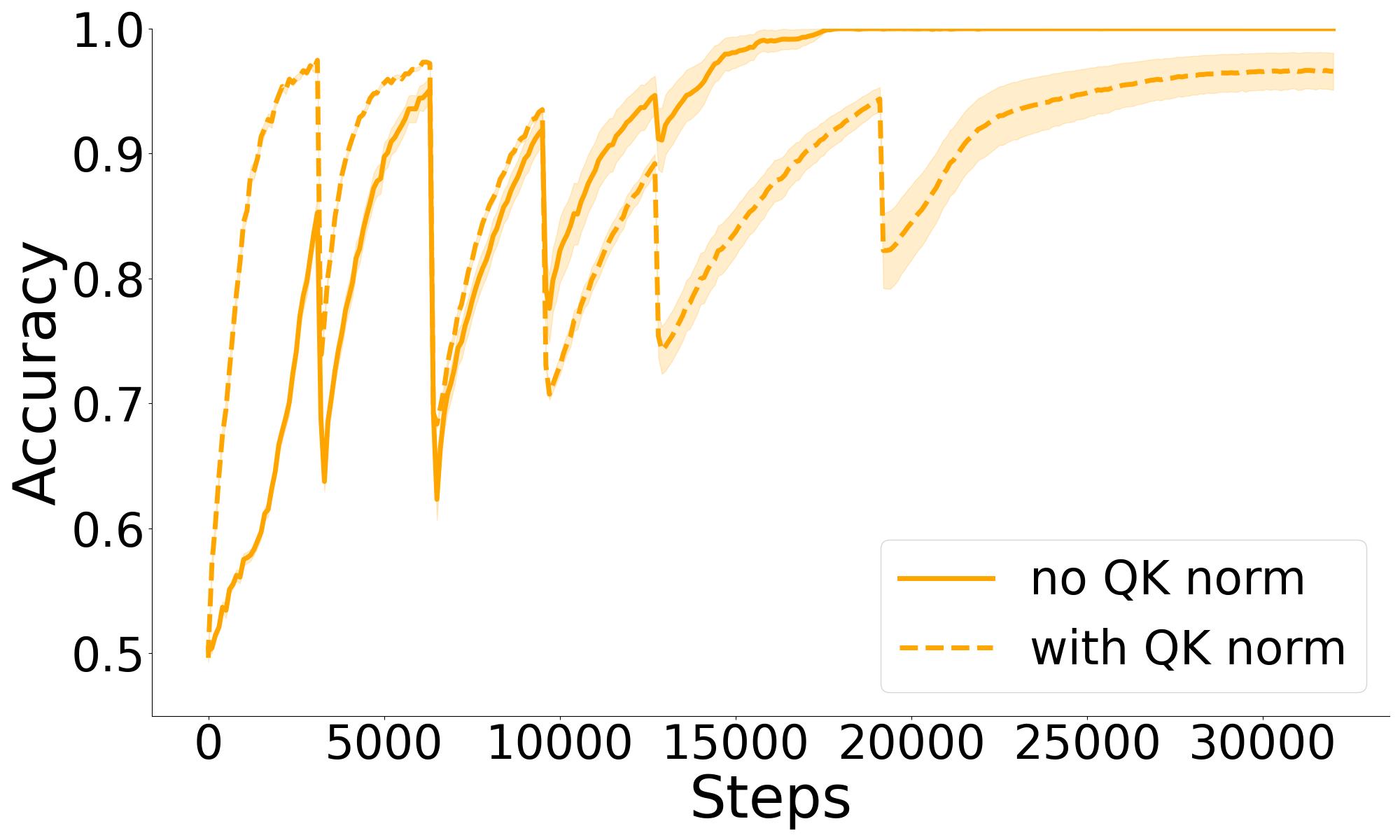}
        \caption{Train accuracy}
    \end{subfigure}
    \begin{subfigure}[t]{0.4\linewidth}
        \includegraphics[width=\linewidth]{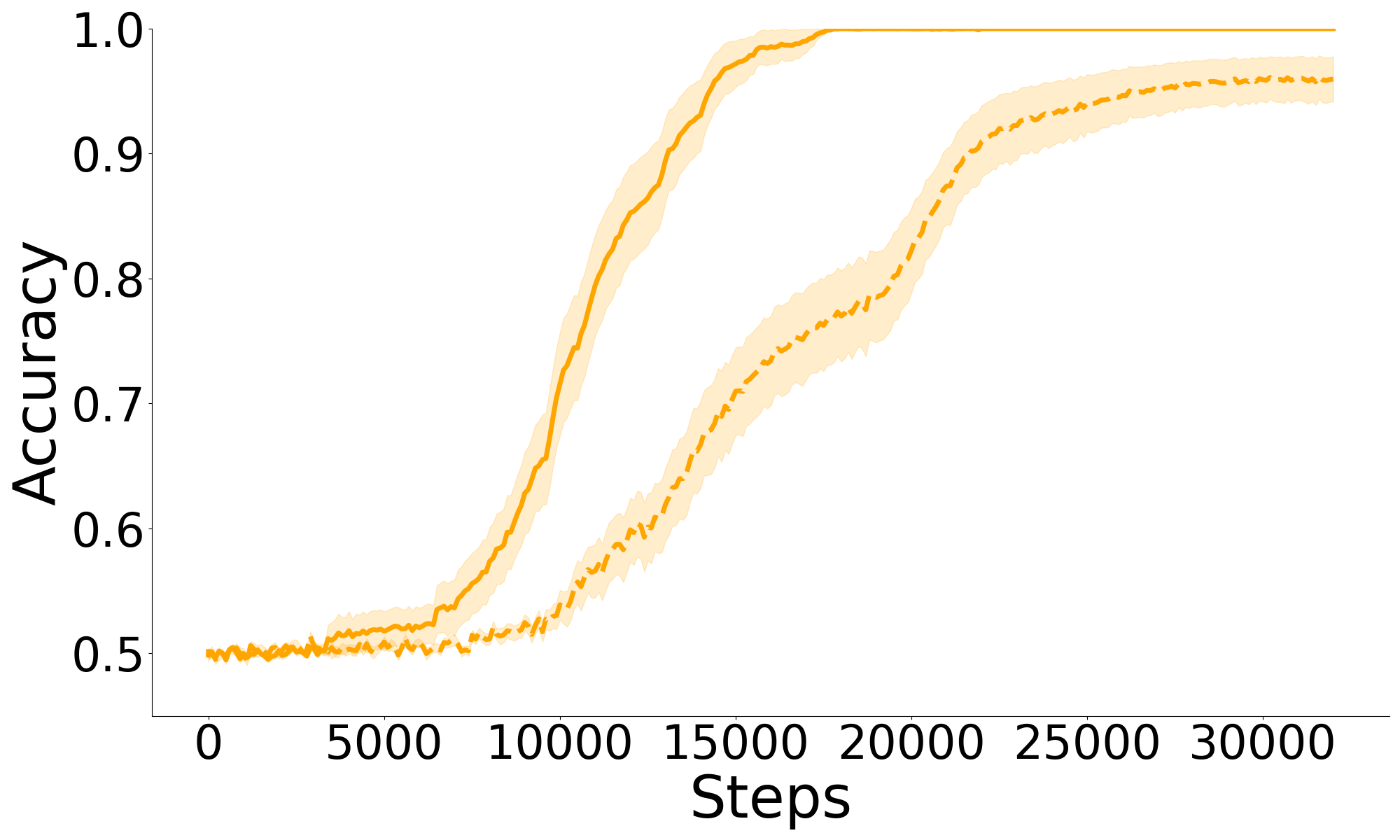}
        \caption{Test accuracy}
    \end{subfigure}
    \caption{\textbf{QK slows down small-set training.}
    Results are shown for (20, 6)-parity with mini-batch updates.
    As shown in the train accuracy plot (left), QK normalization overfits to the training set quickly in the first two phases, but struggles to fit later phases where the training set sizes are larger.
    }
    \label{fig:transformer_qk_norm_overfit}
\end{figure}

\paragraph{Effect of adaptive optimizers}
 
We view the small-vs-large gap as related to the relative balance across layers, as supported both theoretically in \Cref{sec:theory_bias} and empirically in \Cref{sec:empirical_evidence}.
As an implication, the gap should be less pronounced when using adaptive optimizers such as AdamW, which are much less sensitive to layer scale than naive (stochastic) gradient descent.
Indeed, we find that AdamW closes the gap on MLP (\Cref{fig:adam_mlp}), across tasks and depths.
However, AdamW does not close the gap in Transformers: all Transformer experiments were conducted using AdamW and yet the small-vs-large gap persists.
Hence, a full characterization of the gap is more intricate than our current explanation and likely needs to be architecture-aware.

\begin{figure}[t]
    \centering
    \begin{subfigure}{0.24\linewidth}
        \centering
        \includegraphics[width=\linewidth]{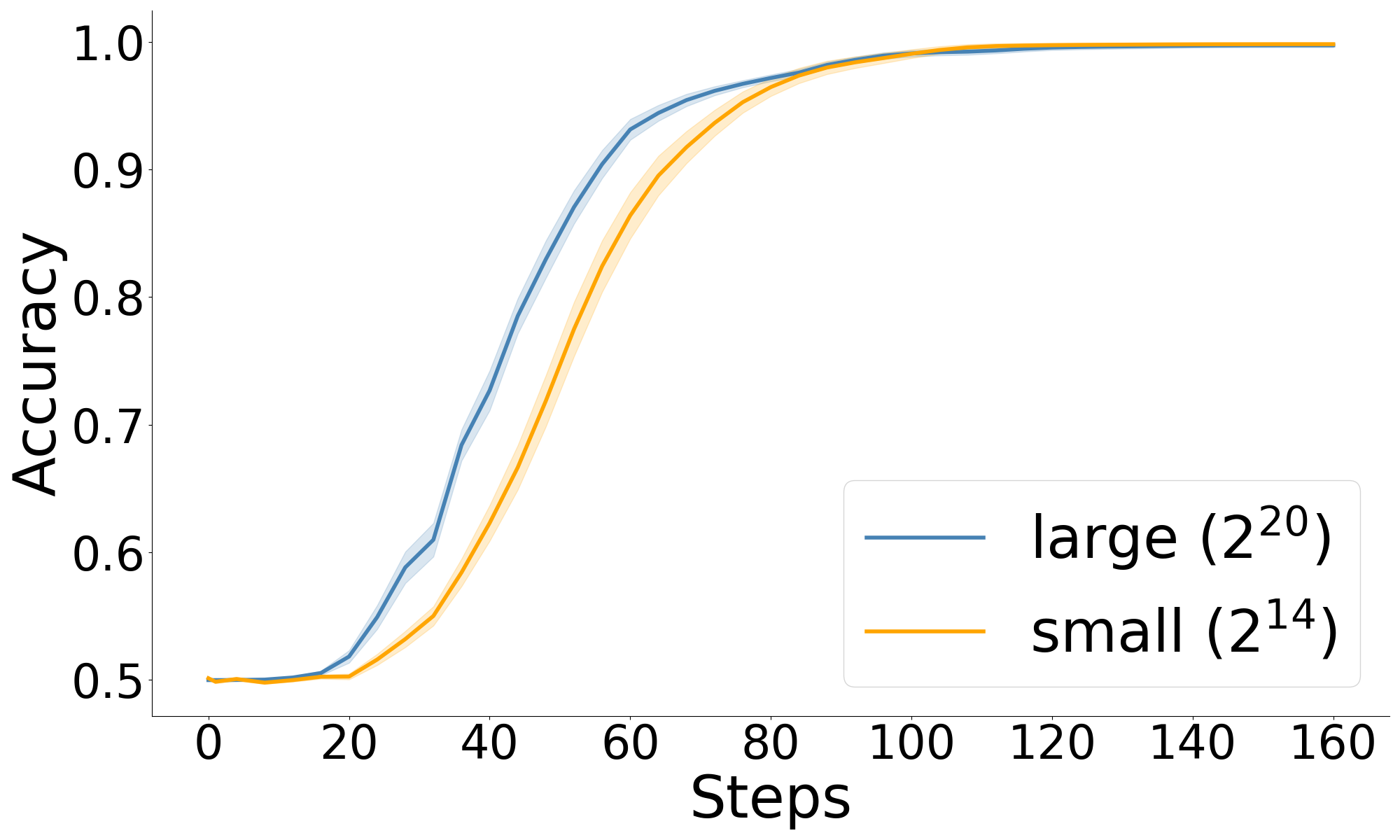}
        \caption{(20, 6)-parity, depth $2$}
    \end{subfigure}
    \begin{subfigure}{0.24\linewidth}
        \centering
        \includegraphics[width=\linewidth]{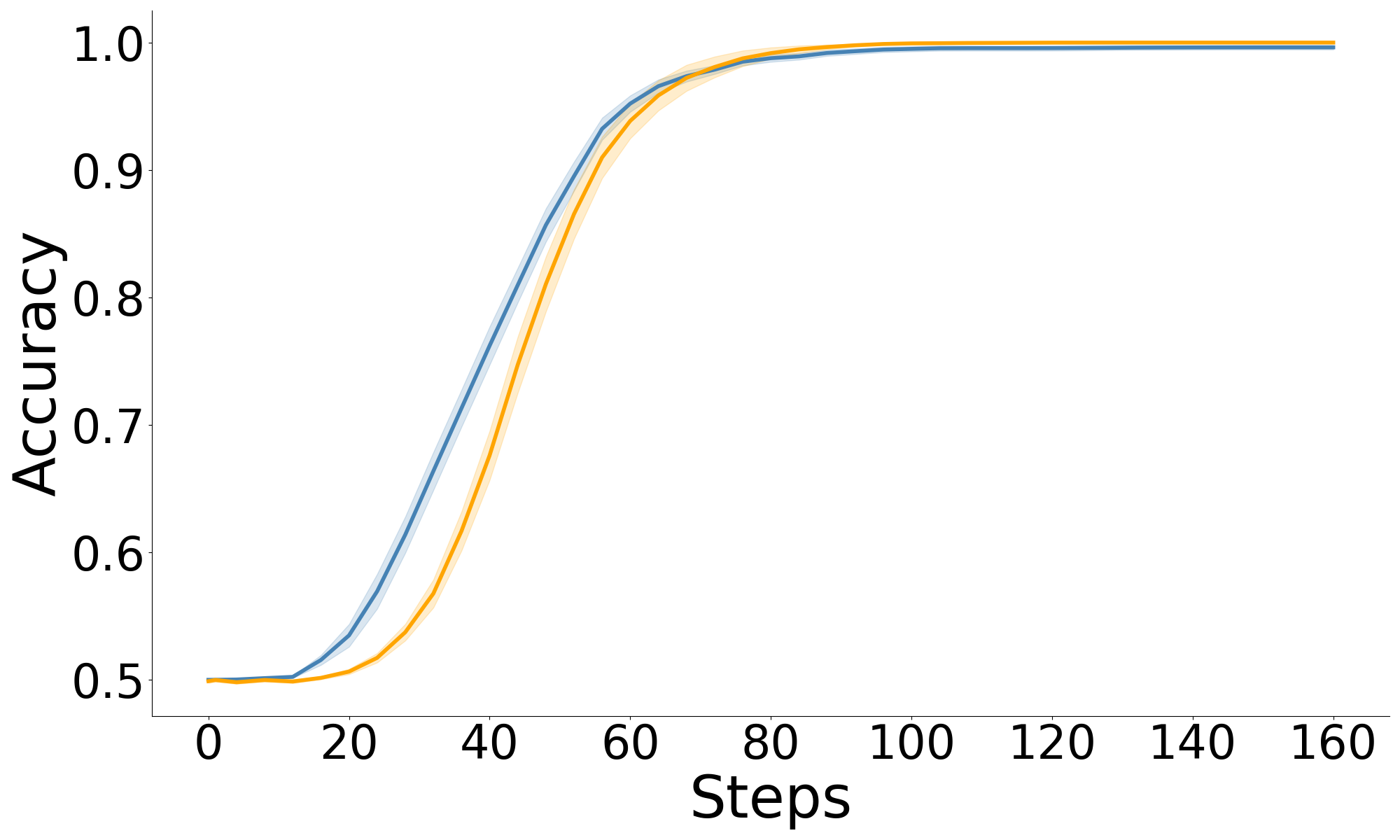}
        \caption{(20, 6)-parity, depth $4$}
    \end{subfigure}
    \begin{subfigure}{0.24\linewidth}
        \centering
        \includegraphics[width=\linewidth]{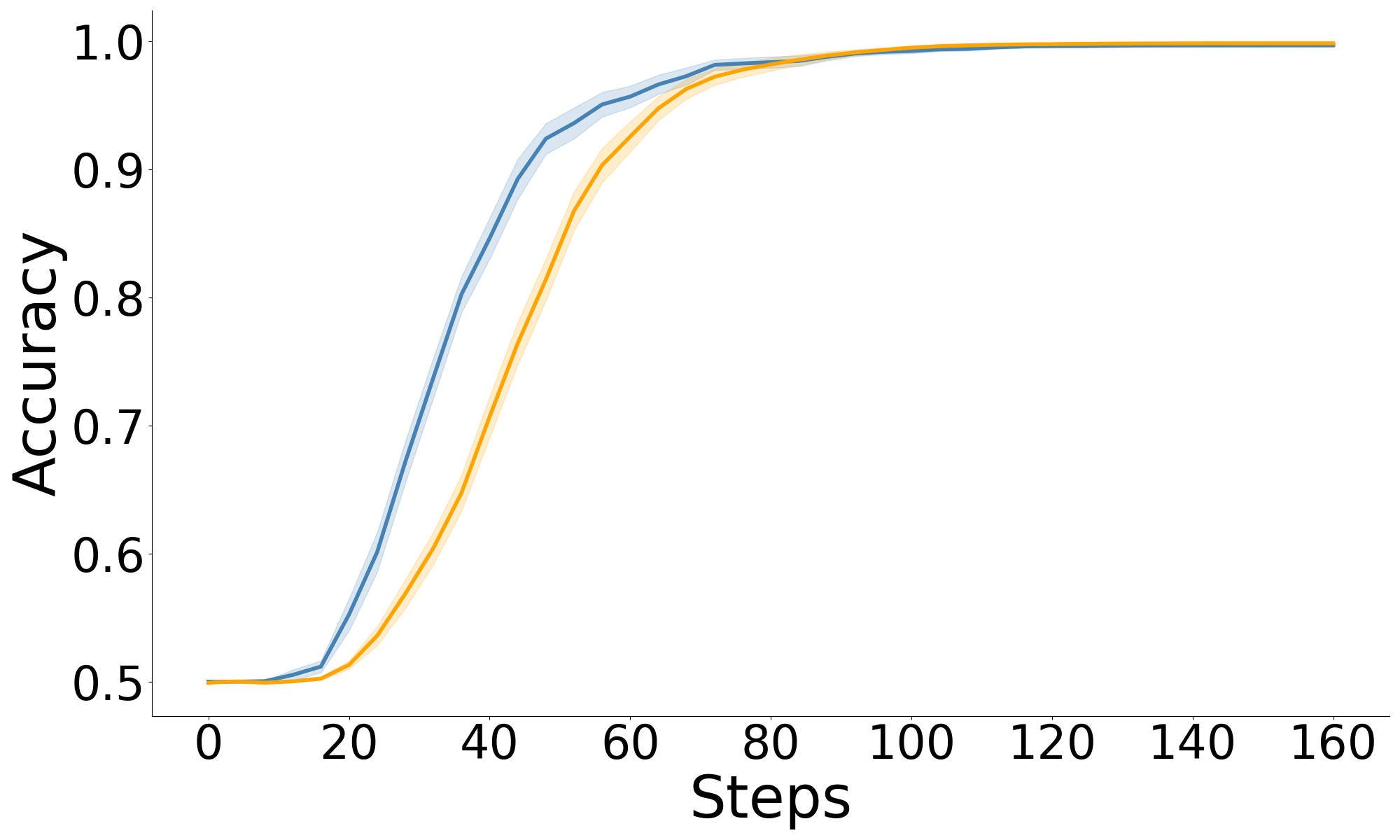}
        \caption{(20, 6)-parity, depth $6$}
    \end{subfigure}
    \begin{subfigure}{0.24\linewidth}
        \centering
        \includegraphics[width=\linewidth]{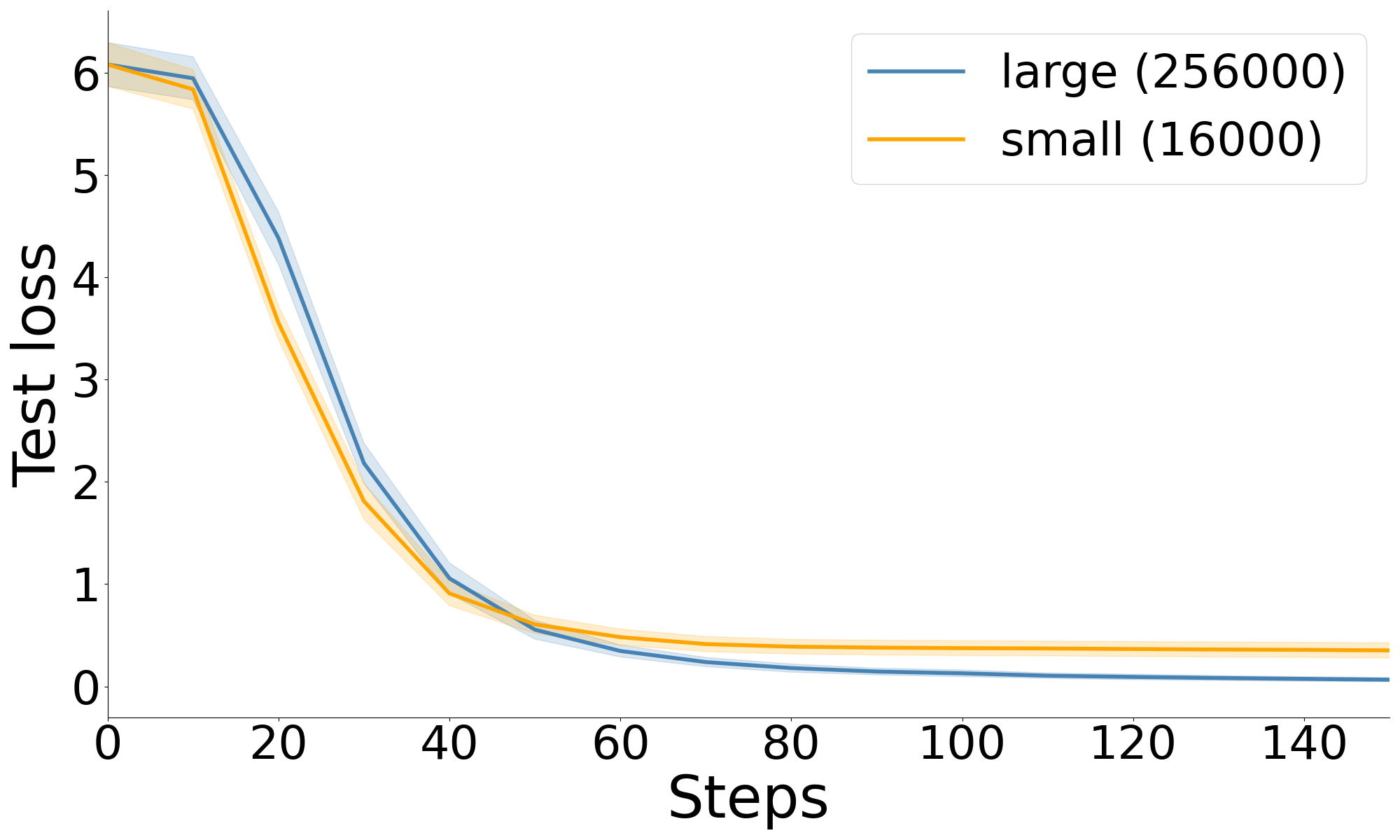}
        \caption{SIM, depth $2$}
    \end{subfigure}

    \caption{\textbf{Adam removes the small-vs-large gap in MLP},
    across tasks and model depths.
    Results are shown for MLP with GD updates.
    }
    \label{fig:adam_mlp}
\end{figure}

\paragraph{Which Transformer parameter benefits more from small-set training?}
We perform ablation on mini-batch training where a part of the model is updated using online data, while the rest is updated using batches repeatedly sampled from a fixed, small dataset.
\Cref{fig:transformer_online_param} shows results for (20, 6)-parity using the default 2-layer Transformer (\Cref{app:expr_details}).
The parameter $\Wv$ seems to benefit the most from small-set training, as switching to its updates to online hurts the performance the most.
For $\Wq, \Wk$

\begin{figure}[t]
    \centering
    \includegraphics[width=0.45\linewidth]{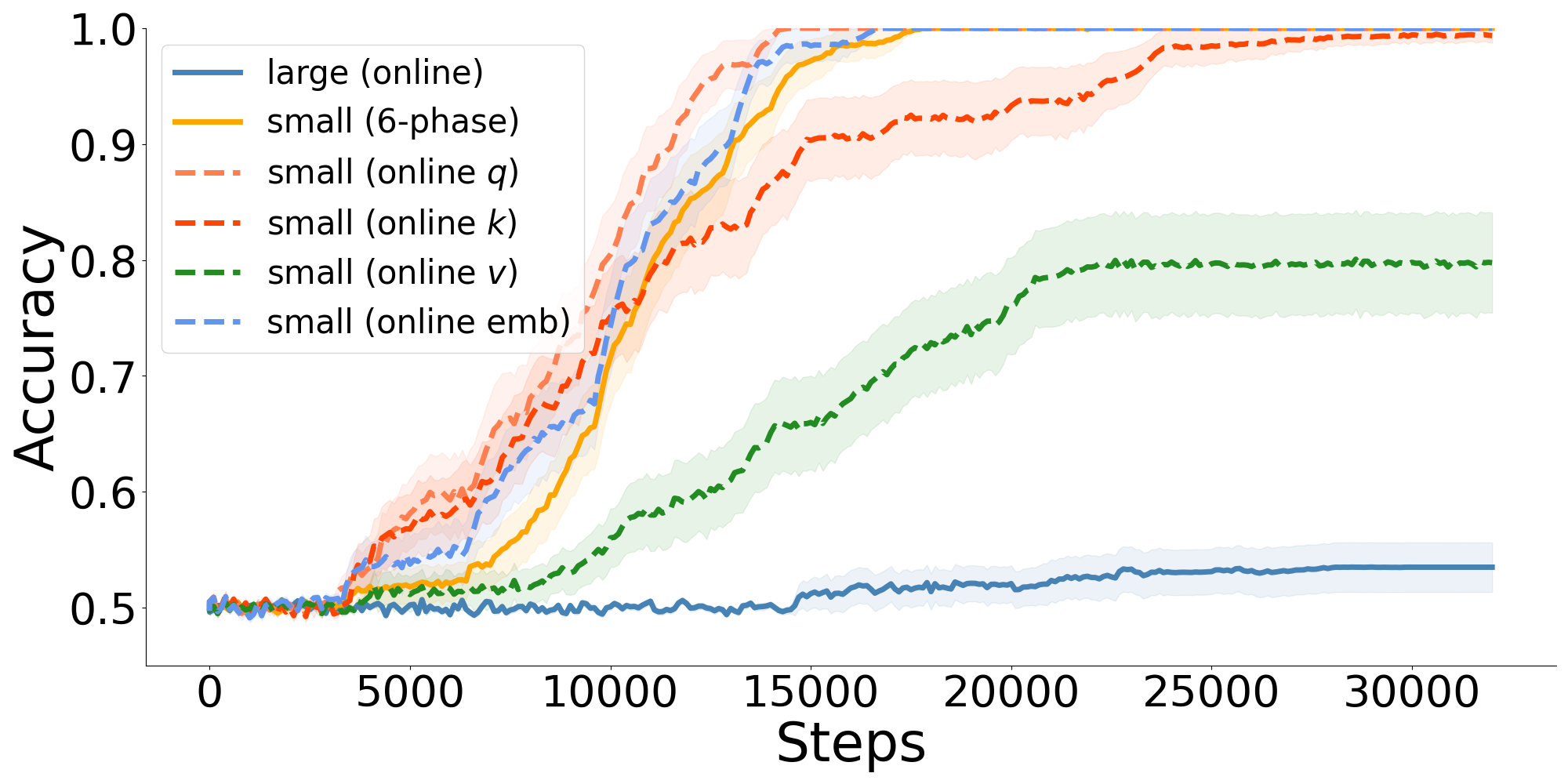}
    \includegraphics[width=0.45\linewidth]{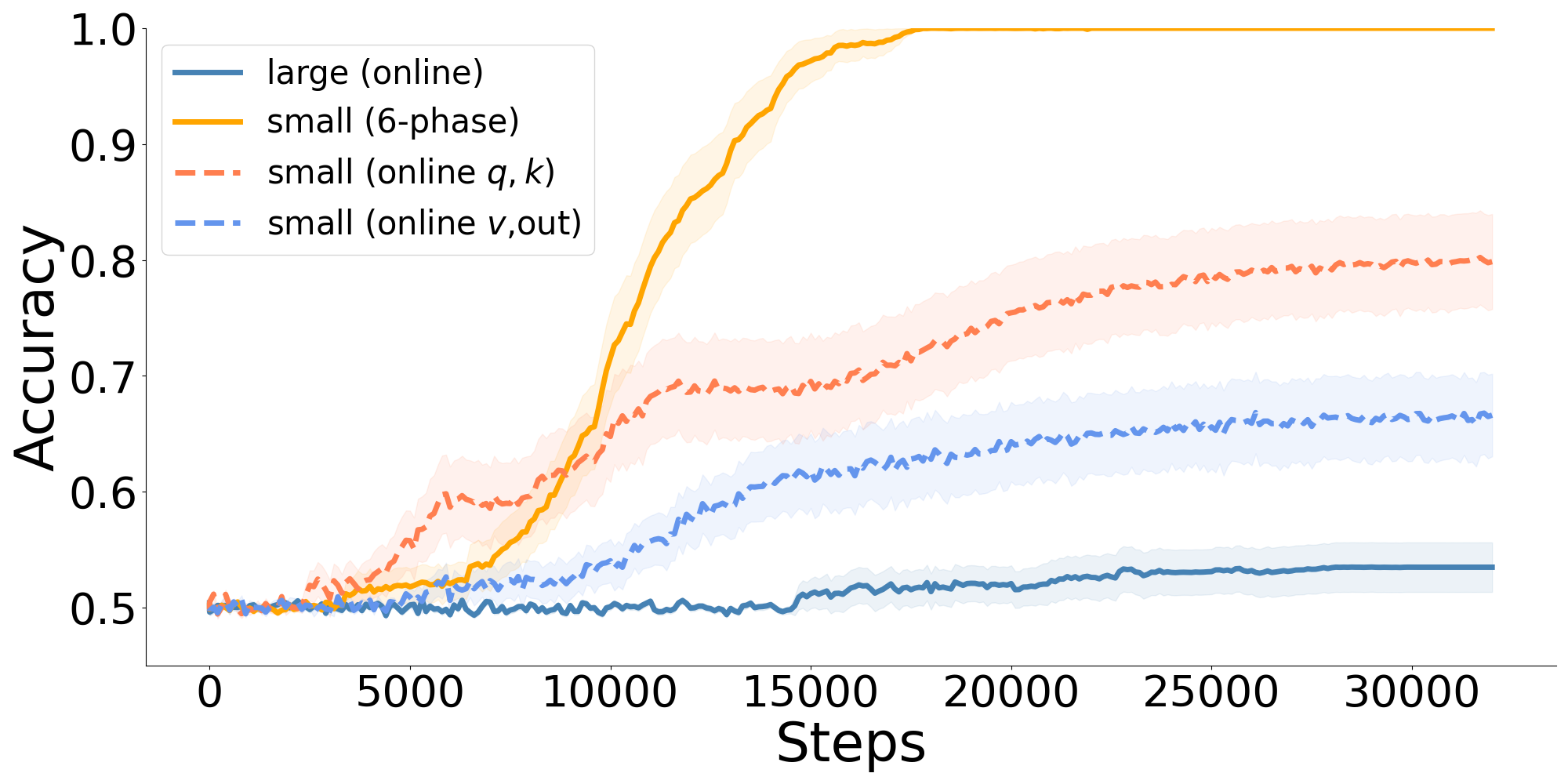}
    \caption{\textbf{Parameter-wise ablations of small-set training}
    We train a Transformer on (20, 6)-parity using 6-phase mini-batch updates, except for specific parameters which are updated using online batches.
    Among single parameters (\textit{Left}), $\Wv$ relies on small-set training the most, whereas the effects on $\Wq, \Wk$ are mild.
    When using online updates on a pair of parameter (\textit{Right}), online updates on $\Wq,\Wk$ jointly leads to a significant slowdown.
    }
    \label{fig:transformer_online_param}
\end{figure}

\end{document}